\title[Sample Complexity for Linear CMDP]{Sample Complexity Bounds for Linear Constrained \\ MDPs with a Generative Model}
\def\ps@jmlrtps{\let\@mkboth\@gobbletwo
 \def\@oddhead{}%
 \let\@evenhead\@empty
 \def\@oddfoot{}%
 \let\@evenfoot\@oddfoot}
\newcommand{\svktb}[1]{{V}^{\pi^{\prime}_{#1}}_{\Box}}
\newcommand{\vbkr}{{V}^{\bar{\pi}}_{r}}
\newcommand{\vbkc}{{V}^{\bar{\pi}}_{c}}
\newcommand{\vssr}{{V}^{\pi^{*+}}_{r}}
\newcommand{\vssc}{{V}^{\pi^{*+}}_{c}}
\newcommand{\vsr}{{V}^{\pi^*}_{r}}
\newcommand{\vsc}{{V}^{\pi^*}_{c}}
\newcommand{\vkb}{{V}^{\pi^*_k}_{\Box}}
\newcommand{\vkr}{{V}^{\pi^*_k}_{r}}
\newcommand{\vkc}{{V}^{\pi^*_k}_{c}}
\newcommand{\vktb}{{V}^{\pi_{T}}_{\Box}}
\newcommand{\vktr}{{V}^{\pi_{T}}_{r}}
\newcommand{\vktc}{{V}^{\pi_{T}}_{c}}
\newcommand{\hvkb}{\hat{V}^{t}_{\Box}}
\newcommand{\hvkd}{\hat{V}^{t}_{\diamond}}
\newcommand{\hvkib}{\hat{V}^{i}_{\Box}}
\newcommand{\hvkid}{\hat{V}^{i}_{\diamond}}
\newcommand{\hvkttb}[1]{\hat{V}^{#1}_{\Box}}
\newcommand{\bvkb}{\bar{V}^{T}_{\Box}}
\newcommand{\bvkd}{\bar{V}^{T}_{\diamond}}
\newcommand{\bvkr}{\bar{V}^{T}_{r}}
\newcommand{\bvkc}{\bar{V}^{T}_{c}}
\newcommand{\bvktb}{\bar{V}^{t}_{\Box}}
\newcommand{\bvktd}{\bar{V}^{t}_{\diamond}}
\newcommand{\bvkttb}[1]{\bar{V}^{#1}_{\Box}}
\newcommand{\hqkib}{\hat{Q}^{i}_{\Box}}
\newcommand{\hqkid}{\hat{Q}^{i}_{\diamond}}
\newcommand{\hqktb}[1]{\hat{Q}^{#1}_{\Box}}
\newcommand{\hqktd}[1]{\hat{Q}^{#1}_{\diamond}}
\newcommand{\bmvd}{\bar{\mathcal{V}}^{\bar{\pi}}_{\diamond}}
\newcommand{\bmvr}{\bar{\mathcal{V}}^{\bar{\pi}}_{r}}
\newcommand{\bmvc}{\bar{\mathcal{V}}^{\bar{\pi}}_{c}}
\newcommand{\hmvkd}{\hat{\mathcal{V}}^{t}_{\diamond}}
\newcommand{\hmvkid}{\hat{\mathcal{V}}^{i}_{\diamond}}
\newcommand{\bmvkd}{\bar{\mathcal{V}}^{T}_{\diamond}}
\newcommand{\bmvktd}{\bar{\mathcal{V}}^{t}_{\diamond}}
\newcommand{\hmqkid}{\hat{\mathcal{Q}}^{i}_{\diamond}}
\newcommand{\hmvkttd}[1]{\hat{\mathcal{V}}^{#1}_{\diamond}}
\newcommand{\bmvkttd}[1]{\bar{\mathcal{V}}^{#1}_{\diamond}}
\newcommand{\hmqktd}[1]{\hat{\mathcal{Q}}^{#1}_{\diamond}}
\newcommand{\norm}[1]{\left\|#1 \right\|}
\newcommand{\norminf}[1]{\left\|#1\right\|_{\infty}}
\newcommand{\cA}{\mathcal{A}}
\newcommand{\cB}{\mathcal{B}}
\newcommand{\cS}{\mathcal{S}}
\renewcommand{\epsilon}{\varepsilon}
\newcommand{\const}[1]{V_c^{#1}(\rho)}
 \newcommand{\reward}[1]{V_r^{#1}(\rho)}
 \newcommand{\rewardq}[1]{Q_{r}^{#1}}
\newcommand{\constq}[1]{Q_{c}^{#1}}
\newcommand{\piopt}{\pi^*}
\newcommand{\PP}{\mathbb{P}}
\newcommand{\cP}{\mathcal{P}}
\crefname{assumption}{Assumption}{Assumptions}
\declaretheorem[name=Assumption]{assumption}
\Crefname{equation}{Eq.}{Eqs.}
\Crefname{section}{Sec.}{Secs.}
\Crefname{appendix}{App.}{Apps.}
\Crefname{algorithm}{Alg.}{Algs.}
\Crefname{assumption}{Assn.}{Assns.}
\Crefname{theorem}{Thm.}{Thms.}
\def\thm@space@setup{\thm@preskip=1ex
\thm@postskip=1ex}
\DeclareMathOperator*{\argmin}{arg\ min}
\DeclareMathOperator*{\argmax}{arg\ max}
\begin{document}

\maketitle

\begin{abstract}
We consider infinite-horizon $\gamma$-discounted (linear) constrained Markov decision processes (CMDPs) where the objective is to find a policy that maximizes the expected cumulative reward subject to expected cumulative constraints. Given access to a generative model, we propose to solve CMDPs with a primal-dual framework that can leverage any black-box unconstrained MDP solver. For linear CMDPs with feature dimension $d$, we instantiate the framework by using mirror descent value iteration (\texttt{MDVI})~\citep{kitamura2023regularization} an example MDP solver. We provide sample complexity bounds for the resulting CMDP algorithm in two cases: (i) relaxed feasibility, where small constraint violations are allowed, and (ii) strict feasibility, where the output policy is required to exactly satisfy the constraint. For (i), we prove that the algorithm can return an $\epsilon$-optimal policy with high probability by using $\tilde{O}\left(\frac{d^2}{(1-\gamma)^4\epsilon^2}\right)$ samples. For (ii), we show that the algorithm requires $\tilde{O}\left(\frac{d^2}{(1-\gamma)^6\epsilon^2\zeta^2}\right)$ samples, where $\zeta$ is the problem-dependent Slater constant that characterizes the size of the feasible region. Furthermore, we prove a lower-bound of $\Omega\left(\frac{d^2}{(1-\gamma)^5\epsilon^2\zeta^2}\right)$ for the strict feasibility setting. We note that our upper bounds under both settings exhibit a near-optimal dependence on $d$, $\epsilon$, and $\zeta$. Finally, we instantiate our framework for tabular CMDPs and show that it can be used to recover near-optimal sample complexities in this setting.
\end{abstract}

\begin{keywords}%
  Reinforcement Learning, Sample Complexity, Constrained MDPs, Linear Function Approximation
\end{keywords}

\section{Introduction}
\label{sec:introduction}

Reinforcement learning (RL)~\citep{sutton1998reinforcement} is a machine learning paradigm aimed at building learning agents capable of making sequential decisions in an (unknown) environment. RL algorithms have found applications in games such as Atari~\citep{mnih2015human} or Go~\citep{silver2016mastering}, robot manipulation tasks~\citep{tan2018sim,zeng2020tossingbot}, clinical trials~\citep{schaefer2005modeling} and more recently, aligning large language models to human preferences~\citep{shao2024deepseekmath,ouyang2022training}. Typical RL algorithms only focus on optimizing an unconstrained objective, although in many real-world applications, agents are often required to not only maximize cumulative rewards but also to satisfy constraints imposed by safety, fairness, or resource usage. RL with such side-constraints is typically formulated within the framework of constrained Markov decision processes (CMDPs)~\citep{altman1999constrained}, where the goal is to optimize an expected reward function while ensuring that the expected cumulative cost (or utility) satisfies a given threshold. For example, in wireless sensor networks~\citep{buratti2009overview, julian2002qos}, the agent aims to deploy a policy that maximizes the bitrate with a constraint on its average power consumption.

Given the practical importance of constrained RL, there is a vast literature~\citep{efroni2020exploration,zheng2020constrained,qiu2020upper,brantley2020constrained,kalagarla2020sample,yu2021provably,ding2021provably, gattami2021reinforcement,miryoosefi2022simple,mondal2024sample} that aims to obtain a near-optimal policy in unknown tabular CMDPs with finite states and actions. These works simultaneously tackle the exploration, estimation and planning problems and aim to minimize the regret and constraint violation in the online setting. On the other hand, recent works~\citep{hasan2021model, wei2021provably,bai2021achieving, vaswani2022near} consider an easier, but even more fundamental problem of obtaining a near-optimal policy with access to a simulator or \emph{generative model}~\citep{kearns1999finite,kakade2003sample,agarwal2020model, sidford2018near, yang2019sample}. In particular, these works assume that the agent has access to a sampling oracle (the generative model) that returns a sample of the next state when given any state-action pair as input. Depending on the application of interest, such a generative model is often available either directly for the task at hand (for example, in Atari games where the aim is to win the game) or as an proxy to the task (for example, the CARLA simulator~\citep{dosovitskiy2017carla} for training autonomous vehicles). Moreover, from a theoretical perspective, since the generative model setting removes the need for exploration it has been used to characterize the statistical complexity of obtaining near-optimal policies for (C)MDPs~\citep{azar2013minimax,agarwal2020model,li2020breaking,vaswani2022near}. In particular, for CMDPs,~\citet{vaswani2022near} established near-optimal upper and lower-bounds on the sample complexity in two settings: (i) relaxed feasibility, where small constraint violations are allowed, and (ii) strict feasibility, where the output policy is required to exactly satisfy the constraint. For tabular CMDPs, the proposed algorithms and resulting bounds depend on the cardinality of the state-action space, and hence do not apply to modern applications involving large or infinite state spaces. Consequently, it is essential to develop provably efficient algorithms that can incorporate function approximation and go beyond the tabular case.

For unconstrained MDPs, the linear MDP assumption~(e.g., \citep{yang2019sample, jin2020provably}) is a common formalization to analyze algorithms that have access to state-action features and can incorporate linear function approximation. The assumption implies that both the rewards and transition probabilities (approximately) lie in the span of the given $d$-dimensional feature representation, and can be used to obtain sample complexity bounds independent of the size of the state-action space. Unconstrained linear MDPs have been extensively studied in the context of both finite-horizon regret minimization~\citep{jin2020provably,hu2022nearly,weisz2022confident, sherman2023rate,liu2023optimistic} and with access to a generative model~\citep{kitamura2023regularization,taupin2023best}. Following the linear MDP literature, recent works consider CMDPs with linear function approximation~\citep{jain2022towards, ding2021provably, miryoosefi2022simple, ghosh2022provably, ghosh2024towards, liu2022policy, tian2024confident} and assume that (in addition to the rewards and transition probabilities), the costs or utilities can also be expressed using the given features. However, all previous work on linear CMDPs considers the online regret minimization setting and the statistical complexity of the problem remains unclear. Motivated by~\citet{vaswani2022near}, we aim to \textit{study the sample complexity of solving linear CMDPs with access to a generative model}. In particular, we make the following contributions.

\textbf{(1) Generic primal-dual algorithm framework}: In~\cref{sec_Methodology}, we provide a generic primal-dual algorithmic framework (\cref{alg_CMDPL}) that can be used to achieve both the \textit{relaxed} and \textit{strict} feasibility objectives, for both \textit{tabular} and \textit{linear} CMDPs. As model-based approaches~\citep{vaswani2022near} are not applicable in the linear CMDP setting, \cref{alg_CMDPL} is designed to be model-free and relies on three black-box subroutines: a \texttt{DataCollection} procedure, a black-box \texttt{MDP-Solver} and a \texttt{PolicyEvaluation} oracle. We prove a meta-theorem (\cref{thm_main2}) to quantify the sample complexity of \cref{alg_CMDPL} in terms of that of the \texttt{MDP-Solver} and \texttt{PolicyEvaluation} oracle.  

\textbf{(2) Instantiating the framework for linear CMDPs:} In~\cref{sec:instant-mdp}, we instantiate the linear \texttt{MDP-Solver} with a variant of the mirror-descent value iteration (\texttt{MDVI}) algorithm~\citep{kozuno2022kl, kitamura2023regularization}. In contrast to the existing \texttt{MDVI} variants, the proposed~\cref{alg_MDVICLMDP} does not use entropy regularization and outputs a stationary policy, thus simplifying the algorithm design. We develop a new theoretical analysis for~\cref{alg_MDVICLMDP} and characterize its sample complexity for solving unconstrained linear MDPs. In~\cref{sec:instant-eval}, we instantiate the \texttt{PolicyEvaluation} oracle with least-squares policy evaluation (\cref{alg_CVIL}) and analyze the sample complexity required to evaluate the performance of a (data-dependent) policy. 

\textbf{(3) Upper-bound on sample complexity for linear CMDPs:} In~\cref{sec:all-linear}, we leverage our meta-theorem and analyze the sample complexity for the resulting CMDP algorithm that uses~\cref{alg_MDVICLMDP,alg_CVIL}. In particular, if $d$ is the dimension of the feature mapping, we prove that the proposed algorithm requires no more than $\tilde{O}\left(\frac{d^2}{(1-\gamma)^4\epsilon^2}\right)$ samples to obtain an $\epsilon$-optimal policy in the relaxed feasibility setting. Since the lower-bound on the sample complexity for solving unconstrained linear MDP is ${\Omega}\left(\frac{d^2}{(1-\gamma)^3\epsilon^2}\right)$~\citep{weisz2022confident}, our sample complexity achieves the near-optimal dependence on $d$ and $\epsilon$, and is away from the lower bound by atmost a multiplicative factor of $\tilde O\left(\nicefrac{1}{1-\gamma}\right)$. Under strict feasibility, our algorithm requires no more than $\tilde{O}\left(\frac{d^2}{(1-\gamma)^6\epsilon^2\zeta^2}\right)$ samples, where $\zeta$ is the problem-dependent Slater constant that characterizes the size of the feasible region and dictates the difficulty of the problem. Given the lower-bounds for tabular CMDPs in~\citet{vaswani2022near}, we conjecture that the dependence on $d$, $\epsilon$, and $\zeta$ in our bounds is tight, with suboptimality arising only in the multiplicative dependence on $O(\nicefrac{1}{1-\gamma})$. To the best of our knowledge, \textit{these are the first such sample complexity bounds with the near-optimal dependence on both $d$ and $\epsilon$ }. In~\cref{app:gss}, we alternatively instantiate the linear \texttt{MDP-Solver} to be the G-Sampling-and-Stop (\texttt{GSS}) algorithm~\citep{taupin2023best} and analyze the sample complexity of the resulting CMDP algorithm, thus demonstrating the flexibility of our framework.  

\textbf{(4) Lower-bound on sample complexity for linear CMDPs:} In~\cref{sec:all-linear}, we prove a problem-dependent ${\Omega}\left(\frac{d^2}{(1-\gamma)^5\epsilon^2\zeta^2}\right)$ lower-bound on the sample-complexity in the strict feasibility setting. Our results thus demonstrate that the proposed algorithm is near-optimal in terms of $d$, $\epsilon$, and $\zeta$, with a suboptimality only in the multiplicative dependence on $H$. The lower bound also indicates that under strict feasibility solving linear CMDPs is inherently more difficult than solving unconstrained linear MDPs, and the problem difficulty increases as the size of the feasible region (measured in terms of $\zeta$) decreases. To the best of our knowledge, it is the first result characterizing the difficulty of solving linear CMDPs with access to a generative model. 

\textbf{(5) Sample complexity bounds for tabular CMDPs:} Finally, in~\cref{sec_tabularCMDP}, we utilize our framework for tabular CMDPs. In particular, we instantiate~\cref{alg_CMDPL} with tabular variants of~\cref{alg_MDVICLMDP,alg_CVIL} (obtained by setting $d = SA$ and considering one-hot features) and analyze the resulting CMDP algorithm. Under the relaxed and strict feasibility settings, the resulting algorithm attains sample complexity bounds of $\tilde{O}\left(\frac{|\mathcal{S}||\mathcal{A}|}{(1-\gamma)^3 \epsilon^2}\right)$ and $\tilde{O}\left(\frac{|\mathcal{S}||\mathcal{A}|}{(1-\gamma)^5 \epsilon^2 \zeta^2}\right)$, respectively. These results match the near-optimal bounds attained by the model-based algorithm in~\citet{vaswani2022near}, and improve upon the sample-complexity of the model-free approach proposed in~\citet{bai2021achieving}.

\section{Problem Formulation}
\label{sec:problem}

An infinite-horizon discounted constrained tabular Markov decision process (CMDP)~\citep{altman1999constrained} is denoted by $\mathcal{M}$, and is defined by the tuple $\langle \cS, \cA, \cP, r, c, b, \rho, \gamma \rangle$ where $\cS$ is the set of states, $\cA$ is the action set, $\cP : \cS \times \cA \rightarrow \Delta_\cS$ is the transition probability function, $\rho \in \Delta_{\cS}$ is the initial distribution of states and $\gamma \in [0, 1)$ is the discount factor. The primary reward to be maximized is denoted by $r: \cS \times \cA \rightarrow [0,1]$, whereas the constraint reward is denoted by $c: \cS \times \cA \rightarrow [0,1]$\footnote{These ranges for $r$ and $c$ are chosen for simplicity. Our results can be easily extended to handle other ranges.}. If $\Delta_{\cA}$ denotes the simplex over the action space, the expected discounted return or \emph{reward value function} of a stationary, stochastic policy\footnote{The performance of an optimal policy in a CMDP can always be achieved by a stationary, stochastic policy~\citep{altman1999constrained}. On the other hand, for an MDP, it suffices to only consider stationary, deterministic policies~\citep{puterman2014markov}.} $\pi: \cS \rightarrow \Delta_{\cA}$ is defined as $\reward{\pi} = \mathbb{E}_{s_0, a_0, \ldots} \Big[\sum_{t=0}^\infty \gamma^t r(s_t, a_t)\Big]$, where $s_0 \sim \rho, a_t \sim \pi( \cdot | s_t),$ and $s_{t+1} \sim \cP( \cdot | s_t, a_t)$. For each state-action pair $(s,a)$ and policy $\pi$, the reward action-value function is defined as $\rewardq{\pi}: \cS \times \cA \rightarrow \mathbb{R}$, and satisfies the relation: $V_{r}^{\pi}(s) = \langle \pi(\cdot | s), \rewardq{\pi}(s,\cdot) \rangle$, where $V_{r}^{\pi}(s)$ is the reward value function when the starting state is equal to $s$. Analogously, the \emph{constraint value function} and constraint action-value function of policy $\pi$ is denoted by $\const{\pi}$ and $\constq{\pi}$ respectively. Throughout, it will be convenient to present our results in terms of the effective horizon $H := \nicefrac{1}{(1 - \gamma)}$. 

In addition to the tabular CMDPs with a finite state-action space, we also consider linear~\citep{jin2020provably} CMDPs where the state space can be large or possibly infinite. In this case, we assume access to a feature representation $\phi$ such that $r, c$ and the transition probabilities $\cP$ (approximately) lie in the span of the given $d$-dimensional feature representation. 
\begin{assumption}[Linear Constrained MDP]
\label{asm_32}
For the CMDP \( \mathcal{M} \) with the state-action space \( \mathcal{S} \times \mathcal{A} \), we have access to a known feature map \( \phi : \mathcal{S} \times \mathcal{A} \to \mathbb{R}^d \) that satisfies the following condition: there exist vectors \( \psi_r, \psi_c \in \mathbb{R}^d \) and signed measures \( \mu := (\mu_1, \dots, \mu_d) \) on \( \mathcal{S} \) such that \( P(\cdot | s, a) = \langle \phi(s, a), \mu \rangle \) for any \( (s, a) \in \mathcal{S} \times \mathcal{A} \), \( r =  \langle \phi, \psi_r \rangle \), and \( c = \langle \phi,  \psi_c \rangle\). Let \( \Phi := \{ \phi(s, a) : (s, a) \in \mathcal{S} \times \mathcal{A} \} \subset \mathbb{R}^d \) be the set of all feature vectors. We assume that \( \Phi \) is compact and spans \( \mathbb{R}^d \).
\end{assumption}
The objective is to return a policy that maximizes $\reward{\pi}$, while ensuring that $\const{\pi} \geq b$. Formally, 
\begin{align}
\max_{\pi} \reward{\pi} \quad \text{s.t.} \quad  \const{\pi} \geq b.
\label{eq:true-CMDP}
\end{align}
The optimal stochastic policy for the above CMDP is denoted by $\piopt$ and the corresponding reward value function is denoted by $\reward{*}$. We also define $\zeta := \max_{\pi} \const{\pi} - b > 0$ as the problem-dependent quantity referred to as the Slater constant~\citep{ding2021provably,bai2021achieving}. The Slater constant is a measure of the size of the feasible region and determines the difficulty of solving~\cref{eq:true-CMDP}. 

For simplicity of exposition, we assume that the rewards $r$ and constraint rewards $c$ are known, but the transition probabilities $\cP$ are unknown. We note that assuming the knowledge of the rewards does not affect the leading terms of the sample complexity since learning these is an easier problem~\citep{azar2013minimax, sidford2018near}. Following~\citet{azar2013minimax,vaswani2022near}, we assume access to a \emph{generative model} or simulator that allows the agent to obtain samples from the $\cP(\cdot|s,a)$ distribution for any $(s,a)$. 

\begin{definition}[Generative Model]
\label{def_genm}
A {generative model} $\mathsf{Gen}$ for an MDP is an oracle that, given any state-action pair $(s, a)$, returns an independent sample of the next state $s' \sim P(\cdot \mid s, a)$. 
\end{definition}

Assuming access to such a generative model, we aim to characterize the sample complexity (number of times $\mathsf{Gen}$ is queried) required to return a near-optimal policy $\bar{\pi}$. Specifically, given a target error $\epsilon > 0$, we consider two different definitions of optimality. \\
\textbf{Relaxed feasibility}: We require $\bar{\pi}$ to achieve an approximately optimal reward value, while allowing it to have a small constraint violation. Formally, we aim to find a $\bar{\pi}$ such that,
\begin{align}
V^{\bar{\pi}}_r(\rho) \geq V^{\pi^*}_r(\rho) - \epsilon \quad \text{and} \quad V^{\bar{\pi}}_c(\rho) \geq b - \epsilon. \label{fml_RF}
\end{align}
\textbf{Strict feasibility}: We require $\bar{\pi}$ to achieve an approximately optimal reward value, while simultaneously demanding zero constraint violation. Formally,we aim to find a $\bar{\pi}$ such that,
\begin{align}
V^{\bar{\pi}}_r(\rho) \geq V^{\pi^*}_r(\rho) - \epsilon \quad \text{and} \quad V^{\bar{\pi}}_c(\rho) \geq b. \label{fml_SF}
\end{align}
In the next section, we design a generic algorithmic framework to achieves these objectives.

\section{A Generic Framework for Solving CMDPs}
\label{sec_Methodology}

We first present a generic primal-dual algorithmic framework for solving CMDPs, and subsequently present a meta-theorem that quantifies its sample-complexity in the relaxed and strict feasibility settings. For this, we frame the CMDP problem in~\cref{eq:true-CMDP} as an equivalent saddle-point problem,
\begin{align}
    \max_{\pi}\min_{\lambda\geq 0} \left[ V_r^{\pi}(\rho) + \lambda \left( V_c^{\pi}(\rho) - b \right)  \right] \,,
\label{eq:saddle}
\end{align}
where, $\lambda$ is the Lagrange multiplier. The solution to~\cref{eq:saddle} is $(\pi^*, \lambda^*)$ where $\pi^*$ is the optimal policy to the CMDP and $\lambda^*$ is the optimal Lagrange multiplier. We solve~\cref{eq:saddle} iteratively, by alternatively updating the policy (primal variable) and the Lagrange multiplier (dual variable)~\citep{ding2021provably,vaswani2022near}.

\begin{algorithm}[H]
\SetAlgoLined
\DontPrintSemicolon
\caption{Primal-dual CMDP framework with a generative model}
\label{alg_CMDPL}

\KwIn{$r$ (rewards), $c$ (constraint rewards), $b'$ (constraint RHS), $U$ (projection upper bound), $K$ (number of iterations), $\eta$ (step-size), $\lambda_0 = 0$ (initialization), $\mathsf{Gen}$ (generative model), $\mathcal{C}$ (subset of $\mathcal{S}\times\mathcal{A}$), $N$ (sample size for each $(s,a)$ pair in $\mathcal{C}$), $\phi$ (feature map).}
\KwOut{Mixture policy $\bar{\pi} = \frac{1}{K} \sum_{k=0}^{K-1}\pi_{k}$.}

\BlankLine

\SetKwProg{Proc}{procedure}{}{end}
\Proc{\texttt{CMDPF}($r,c,b^{\prime},U,K,\eta, \mathsf{Gen},\mathcal{C},N,\phi$)}{
    Data collection procedure to populate buffer: $\mathcal{B} = \texttt{DataCollection}(\mathsf{Gen}, \mathcal{C},N).$ \\
    \For{$k = 0, \dots,K-1$}{ 
        \ \ Updating the primal variable: Let $\pi_{k}=$ \texttt{MDP-Solver}$(r+\lambda_kc, \mathcal{B}, \phi)$.  \\
        Policy Evaluation: Let $\hat{V}^{k}_c=$ \texttt{PolicyEvaluation}$(\pi_{k}, c, \mathcal{B}, \phi)$. \\
        Updating the dual variable: $\lambda_{k+1} = \PP_{[0,U]} \left[\lambda_k - \eta \, (\hat{V}^{k}_c(\rho) - b')\right]$.
    }
}
\end{algorithm}

\begin{remark}
The model-based framework for solving tabular CMDPs in~\citet{vaswani2022near} does not directly extend to the linear MDP setting for the following reasons. In the PAC-RL framework (with access to a simulator), model-based methods build a model for the transition matrix by sampling next states for each state-action pair. In the linear (C)MDP setting, the transition kernel and reward functions are assumed to be approximated by a linear function. Specifically, the $SA\times S$ transition matrix can be factorized into a known $SA \times d$ matrix of features and an unknown $d \times S$ matrix to be estimated. Hence, naively estimating the transition matrix will require $O(S)$ samples, resulting in high sample complexity.
\end{remark}
The primal and dual updates in~\cref{alg_CMDPL} rely on three oracles, which we instantiate subsequently. 

\noindent \textbf{Data Collection Oracle}: We first describe the mechanism of the \texttt{DataCollection} oracle in~\cref{alg_CMDPL}. This oracle takes as input a generative model $\mathsf{Gen}$, a subset of state-action pairs $\mathcal{C} \subseteq \mathcal{S} \times \mathcal{A}$, and a sample size $N$. For each $(s,a) \in \mathcal{C}$, it queries the generative model $\mathsf{Gen}$ to obtain $N$ independent next-state samples $(s^{\prime}_i)_{i=1}^N$ from the distribution $\mathsf{Gen}(\cdot \mid s,a)$. It then stores the resulting triplets $(s, a, s'_i)_{i=1}^N$ in a buffer $\mathcal{B}$. After all state-action pairs in $\mathcal{C}$ are processed, the buffer $\mathcal{B}$ contains $N$ samples for each pair and is returned as the output.

\noindent \textbf{MDP-Solver}: The primal update at iteration $k$ uses the \texttt{MDP-Solver}, which takes as input a buffer $\mathcal{B}$ of samples and returns a policy $\pi_k$ satisfying the following assumption.
\begin{assumption}
\label{asmp_b1}
We have access to a black-box algorithm \texttt{MDP-Solver}$(\Box,\mathcal{B},\phi)$ for which the inputs are the feature map $\phi$ and an arbitrary but bounded reward function $\Box {\in [0,R]}$, and the output is a policy $\tilde{\pi}$ satisfying the following condition with probability $1-\delta$,
\begin{align*}
\max_\pi V^{\pi}_{\Box}(\rho) - V^{\tilde{\pi}}_{\Box}(\rho) \leq {R} \, f_{\mathrm{mdp}}(\mathcal{B})  \,, 
\end{align*} 
where, $f_{\mathrm{mdp}}(\mathcal{B})$ denotes an upper bound on the sub-optimality when given access to buffer $\mathcal{B}$. 
\end{assumption}

\noindent \textbf{Policy Evaluation Oracle}: The dual update at iteration $k$ in~\cref{alg_CMDPL} is given as:
$$\lambda_{k+1} = \PP_{[0,U]} \left[\lambda_k - \eta \, (\hat{V}^{k}_c(\rho) - b')\right]\,,$$
where $\PP_{[0,U]}$ denotes the projection onto the interval $[0,U]$, and $b'$ is a relaxed constraint parameter that depends on $b$, $f_{\mathrm{mdp}}$ and the problem setting (relaxed or strict). The term $\hat{V}^{k}_c$ is an estimate of $V^{\pi_k}_c$, computed via the \texttt{PolicyEvaluation} oracle which satisfies following assumption.
\begin{assumption}
\label{asmp_b2}
We have access to a black-box algorithm \texttt{PolicyEvaluation}$(\pi,\diamond,\mathcal{B},\phi)$ for which the inputs are a possibly data-dependent (one that depends on the buffer $\cB$) policy $\pi$, the feature map $\phi$, a reward function $\diamond \in [0,1]$, and the output is a value function $\hat{V}_{\diamond}$ satisfying the following condition with probability $1-\delta$,
\begin{align*}
|\hat{V}_{\diamond}(\rho) - V^{\pi}_{\diamond}(\rho)| \leq f_{\mathrm{eva}}(\mathcal{B}) \,,
\end{align*}
where, $f_{\mathrm{eva}}(\mathcal{B})$ denotes an upper bound on the sub-optimality when given access to buffer $\mathcal{B}$. 
\end{assumption}

\begin{remark}
We note that while there are existing methods that update the dual variable using empirical utility value functions obtained from the MDP solver, this approach does not result in near-optimal statistical guarantees. This is because the MDP solver can only imply a concentration guarantee on the value function for the combined reward function $r + \lambda_kc$ and not for the individual value functions $V_r$ and $V_c$. Proving near-optimal statistical guarantees necessitates the development of the proposed policy evaluation subroutine. Alternative ways to handle such concentration include using a uniform concentration bound~\citep{ghosh2022provably} (i.e., building an $\epsilon$-cover over the function class and using a union bound). However, such an analysis results in a sub-optimal $O(d^3)$ dependence on the dimension. Furthermore, in order to guarantee concentration, \citet{ghosh2022provably} needs to guarantee Lipschitzness in the policy update. Our technique does not require this, and consequently, we can use greedy policy updates that are not Lipschitz.
\end{remark}

After $K$ iterations of primal and dual updates,~\cref{alg_CMDPL} returns a mixture policy $\bar{\pi}$ which is a policy drawn uniformly at random from the set $\{\pi_0,\dots,\pi_{K-1} \}$. Given access to these oracles, we state a meta-theorem (proved in~\cref{appendix_mainthm}) to characterize the sub-optimality of the algorithm. 
\begin{restatable}{theorem}{MainresultL}
\label{thm_main2}
Suppose~\cref{asmp_b1,asmp_b2} hold and let $f(\mathcal{B}) := \max\{ f_{\mathrm{mdp}}(\mathcal{B}), f_{\mathrm{eva}}(\mathcal{B}) \}$. For $\delta \in (0,1)$,~\cref{alg_CMDPL}  with $U=\frac{2}{\zeta(1-\gamma)}$, $\eta= \frac{U(1-\gamma)}{\sqrt{K}}$, $K=\frac{U^2}{[f(\mathcal{B}]^2(1-\gamma)^2}$ and $b^{\prime}=b-2f(\mathcal{B})$, returns a mixture policy $\bar{\pi}$ satisfying the following condition with probability $1-\delta$,
\begin{align*}
\vbkr(\rho) \geq \vsr(\rho) - 4f(\mathcal{B}) \quad \text{,} \quad \vbkc(\rho) \geq b - 6f(\mathcal{B}) . \quad (\textbf{Relaxed Feasibility Setting})
\end{align*} 
With the same algorithm parameters, but with $b^{\prime} = b+4f(\mathcal{B})$ for $f(\mathcal{B})\leq \frac{\zeta}{6}$,~\cref{alg_CMDPL} returns a mixture policy $\bar{\pi}$ satisfying the following condition with probability $1-\delta$,
\begin{align*}
\vbkr(\rho) \geq \vsr(\rho) - \frac{16f(\mathcal{B})}{\zeta (1-\gamma)} \quad \text{,} \quad \vbkc(\rho) \geq b. \quad (\textbf{Strict Feasibility Setting})
\end{align*} 
\end{restatable}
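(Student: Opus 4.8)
The plan is to read \cref{alg_CMDPL} as online gradient descent (OGD) on the dual variable of the saddle-point problem \cref{eq:saddle}, with the primal iterates supplied as near-optimal responses by the \texttt{MDP-Solver}. Write $f := f(\mathcal{B})$ and $\bar\lambda := \tfrac1K\sum_{k=0}^{K-1}\lambda_k$, and note that by linearity of the value function in the reward the mixture obeys $\vbkr(\rho)=\tfrac1K\sum_k V_r^{\pi_k}(\rho)$ and $\vbkc(\rho)=\tfrac1K\sum_k V_c^{\pi_k}(\rho)$. I would first record two per-iteration facts. (i) Primal near-optimality: since $V^{\pi}_{r+\lambda_k c}=V^{\pi}_r+\lambda_k V^{\pi}_c$ and $r+\lambda_k c\in[0,1+\lambda_k]$, \cref{asmp_b1} with $\Box=r+\lambda_k c$ and $R=1+\lambda_k\le 1+U$ gives, for every comparator $\pi$,
\[
V_r^{\pi}(\rho)+\lambda_k V_c^{\pi}(\rho)-V_r^{\pi_k}(\rho)-\lambda_k V_c^{\pi_k}(\rho)\;\le\;(1+\lambda_k)\,f .
\]
(ii) Evaluation error: \cref{asmp_b2} gives $|\hat V_c^{k}(\rho)-V_c^{\pi_k}(\rho)|\le f$, so the update direction $g_k:=\hat V_c^{k}(\rho)-b'$ differs from the true subgradient $V_c^{\pi_k}(\rho)-b'$ by at most $f$. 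Then I would run the standard descent-lemma telescoping on $\lambda_{k+1}=\PP_{[0,U]}[\lambda_k-\eta g_k]$ to obtain, for every fixed $\lambda\in[0,U]$,
\[
\frac1K\sum_{k=0}^{K-1} g_k(\lambda_k-\lambda)\;\le\;\frac{U^2}{2\eta K}+\frac{\eta}{2K}\sum_{k=0}^{K-1}g_k^2 ,
\]
and since $|g_k|=O(H)$, substituting $\eta=\tfrac{U(1-\gamma)}{\sqrt K}$ and $K=\tfrac{U^2}{f^2(1-\gamma)^2}$ makes the right-hand side $O(f)$ for all $\lambda\in[0,U]$ simultaneously. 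This dual-regret bound drives the whole argument.

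For the relaxed reward guarantee I would set $\lambda=0$ in the dual regret, use (i) with $\pi=\piopt$ (feasible, so $\lambda_k(V_c^{\piopt}-b)\ge0$), and add the two averaged inequalities. The crucial point is that the $\lambda_k$-weighted constraint terms $\tfrac1K\sum_k\lambda_k(V_c^{\pi_k}-b)$ produced by the primal bound are cancelled by the identical terms appearing in the dual regret, with the shift $b'=b-2f$ supplying exactly the needed slack; consequently the $(1+\lambda_k)$ factor does \emph{not} survive, no $U$-dependence appears, and I obtain $\vbkr(\rho)\ge\vsr(\rho)-4f$. For the relaxed constraint guarantee I would instead take $\lambda=U$ and invoke strong duality of the CMDP: under Slater's condition the optimal multiplier satisfies $\lambda^*\le \tfrac{\vsr(\rho)}{\zeta}\le \tfrac{1}{\zeta(1-\gamma)}=\tfrac U2$ and $\vbkr(\rho)+\lambda^*(\vbkc(\rho)-b)\le \vsr(\rho)$. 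These upper-bound the excess reward $\vbkr(\rho)-\vsr(\rho)$ by $\lambda^*\,[\,b-\vbkc(\rho)\,]_+$, so the violation $C:=[\,b-\vbkc(\rho)\,]_+$ obeys $(U-\lambda^*)\,C\le O(fU)$; dividing by $U-\lambda^*\ge U/2$ cancels the $U$ and yields $\vbkc(\rho)\ge b-6f$.

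For strict feasibility I would rerun both arguments with the tightened right-hand side $b'=b+4f$. The hypothesis $f\le\zeta/6$ ensures $b'<b+\zeta$, so the shifted constraint is still strictly feasible (its Slater constant $\zeta-4f>0$) and the duality machinery continues to apply. The positive margin forces the dual dynamics to over-satisfy the \emph{true} constraint: repeating the $\lambda=U$ violation analysis, the extra $+4f$ shift enters with a favourable sign and is designed to exactly dominate the $O(f)$ residual violation, giving $\vbkc(\rho)\ge b$ with no slack. The price is paid on the reward side, where the $b'$ shift destroys the cancellation used in the relaxed reward bound; the surviving factor $\bar\lambda\le U=\tfrac{2}{\zeta(1-\gamma)}$ multiplies the per-step error and produces $\vbkr(\rho)\ge\vsr(\rho)-\tfrac{16f}{\zeta(1-\gamma)}$.

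The main obstacle is the constraint-violation bound, particularly in the strict case. The reward bound and the dual regret are routine once the cancellation is spotted, but turning ``small dual regret'' into the hard guarantee $\vbkc(\rho)\ge b$ requires carefully tracking how the margin $4f$ interacts with the (possibly large) averaged multiplier $\bar\lambda$ and with strong duality, and verifying that the Slater bound $\lambda^*\le U/2$ is precisely what makes division by $U-\lambda^*$ produce the stated constants. Making the numerical constants ($4$, $6$, and $16$) come out cleanly, rather than merely $O(f)$ and $O\!\big(\tfrac{f}{\zeta(1-\gamma)}\big)$, is the delicate bookkeeping I expect to be the crux.
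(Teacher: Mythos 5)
Your treatment of the relaxed-feasibility case and of the dual-regret machinery matches the paper's proof essentially step for step (primal near-optimality with $R=1+\lambda_k$, evaluation error, projected-gradient telescoping, $\lambda=0$ for the reward and $\lambda=U$ plus the Lagrangian bound $V_r^{\bar\pi}+\lambda^*(V_c^{\bar\pi}-b)\le V_r^{\pi^*}$ for the violation), and your observation that the shift $b'=b-2f$ exactly absorbs the $2f\lambda_k$ term is the same cancellation the paper exploits. Your strict-feasibility \emph{reward} bound also works, by a mildly different route: you keep $\pi^*$ as comparator and let the residual $6f\bar\lambda\le 6fU$ survive into the bound, whereas the paper switches the comparator to $\pi^{*+}\in\arg\max_\pi V_r^\pi(\rho)$ s.t.\ $V_c^\pi(\rho)\ge b+6f(\mathcal B)$ and then invokes a sensitivity lemma ($|V_r^{\pi^*}-V_r^{\pi^{*+}}|\le 12f\lambda^*$); both give $O\!\big(\tfrac{f}{\zeta(1-\gamma)}\big)$.

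The genuine gap is in the strict-feasibility \emph{constraint} bound. You propose to ``rerun the $\lambda=U$ violation analysis'' with $b'=b+4f$ and claim the $+4f$ margin dominates an $O(f)$ residual. But with comparator $\pi^*$ (feasible only at level $b$), the per-iteration inequality is $V_r^{\pi^*}-\hat V_r^k\le\lambda_k(\hat V_c^k-b')+6f\lambda_k+2f$, so the averaged saddle inequality at $\lambda=U$ carries an extra $6f\bar\lambda\le 6fU$ on the right-hand side. After applying strong duality and dividing by $U-\lambda^*\ge U/2$, the residual violation is at least $12f$, which the $4f$ margin cannot cover; one gets only $\bar{\mathcal V}_c(\rho)\ge b-O(f)$, not $\ge b+f$. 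No rescaling of the margin $cf$ fixes this, since the residual grows as $2(c+2)f$ while the margin grows as $cf$. The paper's resolution --- and the step missing from your plan --- is to run the entire primal comparison against $\pi^{*+}$, which satisfies $V_c^{\pi^{*+}}\ge b+6f=b'+2f$ and therefore makes the residual vanish identically, leaving $[b'-\bar{\mathcal V}_c(\rho)]_+\le 3f$, hence $\bar{\mathcal V}_c(\rho)\ge b+f$ and $V_c^{\bar\pi}(\rho)\ge b$ after the evaluation error. This is also the true role of the hypothesis $f\le\zeta/6$: it guarantees that a policy feasible at level $b+6f$ exists, not merely that the level $b+4f$ is attainable as your ``$\zeta-4f>0$'' remark suggests.
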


The above theorem implies that, provided we can adequately control the terms $f_{\mathrm{mdp}}(\mathcal{B})$ and $f_{\mathrm{eva}}(\mathcal{B})$ via the three oracle procedures, both the relaxed feasibility condition~\eqref{fml_RF} and the strict feasibility conditions~\eqref{fml_SF} can be satisfied. Furthermore, we note that similar to~\citep{vaswani2022near}, the error for the strict feasibility setting is inflated by an $O\left(\frac{1}{\zeta \, (1-\gamma)}\right)$ factor. 

Hence, in the next section, we instantiate the subroutines \texttt{DataCollection}, \texttt{MDP-Solver} and \texttt{PolicyEvaluation} such that the quantities $f_{\mathrm{mdp}}(\mathcal{B})$ and $f_{\mathrm{eva}}(\mathcal{B})$ are sufficiently small.

\section{Instantiating the Framework for Linear Constrained MDPs}
\label{sec_linearCMDP}

We first describe the construction of the coreset $\mathcal{C}$, which serves as input to the \texttt{DataCollection} procedure. We then introduce a model-free algorithm, \texttt{LS-MDVI}, as an instantiation of the \texttt{MDP-Solver}. Finally, we present \texttt{LS-PE}, which serves as the instantiation of the \texttt{PolicyEvaluation} subroutine.

\subsection{Data Collection via Core Set Construction}
\label{sec:instant-data}

Recall that the \texttt{DataCollection} procedure requires as input a subset of $\mathcal{S} \times \mathcal{A}$. In the linear setting, we provide a coreset $\mathcal{C}$ as this input. We now describe the construction of the coreset \citep{lattimore2020learning, kitamura2023regularization}. The key properties of the coreset are that it has few elements (independent of the cardinality of $\cS$ and $\cA$), while the features corresponding to the $(x,b) \in \mathcal{C}$ provide a good coverage of the feature space. For a distribution $\tilde \rho$ over $\mathcal{S} \times \mathcal{A}$, let $G \in \mathbb{R}^{d \times d}$ and $g(\tilde\rho) \in \mathbb{R}$ be defined as:
\begin{align*}
G := \sum_{(x,b) \in \mathcal{C}} \tilde{\rho}(x, b) \, \phi(x, b) \phi(x, b)^\top 
\qquad \text{and} \qquad
g(\tilde\rho) := \max_{(s,a) \in \mathcal{S} \times \mathcal{A}} \langle \phi(s,a), G^{-1} \phi(s,a) \rangle .
\end{align*}
We refer to $\tilde \rho$ as the design, $G$ as the corresponding design matrix, and define the coreset of $\tilde\rho$ as its support, $\mathcal{C} := \text{Supp}(\tilde\rho)$. The task of identifying a design that minimizes $g$ is known as the \textit{$G$-optimal design problem}. We assume that we can construct near-optimal experimental design. 
\begin{assumption}[Optimal Design]
We have access to an oracle called \texttt{ComputeOptimalDesign} which returns $\tilde\rho$, $\mathcal{C}$ and $G$ such that
$g(\tilde\rho) \leq 2d$ and the coreset of $\tilde\rho$ has size at most $\tilde{O}(d)$.
\end{assumption}
Such a design can be obtained using the Frank-Wolfe algorithm~\citep{todd2016minimum} described in~\cref{appendix_A}. Accordingly, we first use the \texttt{ComputeOptimalDesign} procedure to construct $\tilde{\rho}$, $\mathcal{C}$, and the associated design matrix $G$, and then utilize the resulting coreset $\mathcal{C}$ to collect data. For each state-action pair in $\mathcal{C}$, we collect $N$ independent samples and store them in the buffer $\mathcal{B}$. Hence, the total sample complexity is $N \, |\mathcal{C}|$. In the subsequent section, it is convenient to consider $\mathcal{B}$ as a union of $T$ disjoint subsets $B_0 \cup \cdots \cup B_{T-1}$, where each $B_i$ consists of $M$ independent samples for every state-action pair in $\mathcal{C}$. Consequently, we have $N = T \, M$.

\subsection{Instantiating the \texttt{MDP-Solver}: Least-Squares Mirror Descent Value Iteration}
\label{sec:instant-mdp}

We now introduce a model-free algorithm referred to as least-squares mirror descent value iteration (\texttt{LS-MDVI}) which serves as an instantiation of the \texttt{MDP-Solver}. 

\texttt{LS-MDVI} is a generalization of \texttt{MDVI}~\citep{geist2019theory, vieillard2020leverage, kozuno2022kl} to the linear function approximation setting and is related to the algorithm proposed in~\citet{kitamura2023regularization}. In particular, \texttt{LS-MDVI} corresponds to a limiting case of policy mirror descent~\citep{lan2023policy} when the KL regularization tends to zero (or equivalently, the step-size tends to infinity). This results in a value iteration method which we describe below. 

Define $\mathcal{H}(\pi(\cdot|s))$ as the entropy of the policy $\pi$ in state $s$ and $\mathrm{KL}(\pi(\cdot|s)||\pi'(\cdot|s))$ as the KL divergence between policies $\pi(\cdot|s)$ and $\pi'(\cdot|s)$ in state $s$. With a slight abuse of notation, we consider $\pi$ to be an operator such that $(\pi Q)(s):= \sum_{a\in \mathcal{A}}\pi(a|s)Q(s,a)$. At iteration $t \in [T]$, \texttt{LS-MDVI} requires the corresponding action-value function to update the policy. Specifically, if $\tau$ is the strength of the KL regularization and $\kappa$ is the entropy regularization coefficient s.t. $\alpha=\frac{\tau}{\tau+\kappa}$,  $\beta=\frac{1}{\tau+\kappa}$, given $Q^{t+1}$ for some reward function, the entropic mirror descent and \texttt{LS-MDVI} updates can be written as: 

\begin{align*}
& \textbf{Entropic Mirror Descent}: \pi_{t+1}(a|s) \propto [\pi_{t}(a|s)]^{\alpha} \exp\left(\beta Q^{t+1}(s, a)\right), \\
&  \hspace{25ex} V^{t+1}(s) = (\pi_{t+1} Q^{t+1})(s) - \tau \mathrm{KL}(\pi_{t+1}(\cdot|s)\| \pi_{t}(\cdot|s)) + \kappa \mathcal{H} (\pi_{t+1}(\cdot|s)). \\
& \textbf{LS-MDVI}: \pi_{t+1}(\cdot|s) = \argmax_{a} \sum_{i=0}^{t+1} Q^i(s,a)  \; \text{;} \; V^{t+1}(s) = \Big(\pi_{t+1} \, \sum_{i=0}^{t+1} Q^{i} \Big)(s) - \Big(\pi_{t} \, \sum_{i=0}^{t} Q^{i}\Big)(s). 
\end{align*}
Starting from entropic mirror descent, for $\kappa = 0$ and as $\tau \to 0$, implying $\alpha = 1$, we recover the \texttt{LS-MDVI} update (see~\citep[App. B]{kozuno2022kl} for the derivation). 

\begin{remark}
In contrast,~\citet{kitamura2023regularization} consider both $\kappa \to 0$, $\tau \to 0$ while keeping $\alpha$ fixed and effectively consider an entropy-regularized update. This proposed change simplifies the algorithm design for \texttt{LS-MDVI}. Furthermore, while the algorithm in~\citet{kitamura2023regularization} produces non-stationary policies, \texttt{LS-MDVI} outputs a stationary policy.     
\end{remark}

Next, we present~\cref{alg_MDVICLMDP} which implements the above \texttt{LS-MDVI} update, but uses the linear CMDP structure and the data collected in the buffer $\cB$ to estimate $Q^{t+1}$. Specifically, $\theta^{t+1}$ estimation in~\cref{alg_MDVICLMDP} corresponds to the $Q^{t+1}$ estimation using linear regression and the last line in~\cref{alg_MDVICLMDP} corresponds to the above update. Similar to approximate value iteration, the $\hat{Q}^{t+1}$ update depends on $\hat{V}^{t}$ via the Bellman equation, however, $\pi_{t+1}$ depends on $\tilde{Q}^{t+1}$, the ``soft'' Q function formed by using the estimates up to iteration $t+1$.

\begin{algorithm}[H]
\SetAlgoLined
\DontPrintSemicolon
\caption{Least-Squares Mirror Descent Value Iteration (\texttt{LS-MDVI})}
\label{alg_MDVICLMDP}

\KwIn{$T$ (number of iterations), $M$ (number of next-state samples obtained per state-action pair in each iteration), $\Box$ (rewards in MDP), $\mathcal{B} = \mathcal{B}_0 \cup \cdots \cup \mathcal{B}_{T-1}$ (Buffer), $\tilde{\rho}$ (design), $\mathcal{C}$ (coreset), $\phi$ (feature map).}
\KwOut{$\pi_{T}$ where $\forall s \in \mathcal{S}$, $\pi_{T}(\cdot|s) \in \argmax_{a} \tilde{Q}_{\Box}^T(s,a).$}
\textbf{Initialize:} $\hat{V}^0_{\Box} = \mathbf{0}$, $\tilde{\theta}^0_{\Box} = \mathbf{0}.$

\BlankLine

\SetKwProg{Proc}{procedure}{}{end}
\Proc{\texttt{LS-MDVI}($T, M, \Box, \mathcal{B}, \mathcal{C}, \phi$)}{
    \For{$t = 0, \dots, T-1$}{ 
        \ \ $\forall (s,a) \in \mathcal{C}:$ Access $ (s,a,s^{\prime}_{m})_{m=1}^{M}$ from the buffer $\mathcal{B}_t$.\\
        Define regression target $\hat{Q}^{t+1}_{\Box} (s, a) := \Box (s, a) + \gamma  {\textstyle \frac{1}{M} \sum\nolimits_{m=1}^M} \hat{V}^t_{\Box}(s^{\prime}_m)$.\\
        $\theta_{\Box}^{t+1} = \underset{\theta \in \mathbb{R}^d}{\argmin} {\textstyle\sum\nolimits_{(x,b)\in \mathcal{C}}} {\tilde{\rho}(x,b)} ( \langle\phi(x,b),\theta \rangle - \hat{Q}^{t+1}_{\Box}(x,b) )^2$.\\
        Define $\tilde{Q}^{t+1}_{\Box} := \langle \phi, {\textstyle\sum\nolimits_{i=0}^{t+1}}\theta^i_{\Box} \rangle;\hat{V}^{t+1}_{\Box} (s) := {\underset{{a}}{\max}} \left\{ \tilde{Q}^{t+1}_{\Box} (s,a) \right\} - {\underset{{a}}{\max}} \left\{  \tilde{Q}^{t}_{\Box} (s,a)\right\}$.
    }
}
\end{algorithm}

In each iteration $t \in [T]$, \cref{alg_MDVICLMDP} uses the buffer $B_t$ consisting of $M$ samples per state-action pair in $\mathcal{C}$. However, since $\hat{V}^{t+1}$ and $\tilde{Q}^{t+1}$ depend on all the past $\theta^{i}$ vectors and hence, on the data collected in the previous iterations, the algorithm can effectively leverage all the data in $\cB$. Furthermore, using the difference between the consecutive $\tilde{Q}$ functions can be viewed as a form of variance reduction. This enables us to prove an $O(1/\sqrt{N})$ concentration result for $\tilde{Q}^{T}$. Moreover, since the \texttt{DataCollection} procedure constructs a coreset which ensures good coverage across the feature space, the resulting sample complexity is independent of the size of the state-action space. Formally, in~\cref{sec_LP1}, we prove the following sub-optimality bound for $\Box = r + \lambda_k \, c$ at iteration $k$ of~\cref{alg_CMDPL}. 
\begin{restatable}{lemma}{OPGuaranteeaL}
\label{lem_optGL}
For a fixed $\epsilon \in (0,1]$, $\delta \in (0,1)$ , and any $k \in [K]$, when using~\cref{alg_MDVICLMDP} at iteration $k$ of~\cref{alg_CMDPL} with $\Box = r+\lambda_k c$, $M= \tilde{O}\left( \frac{dH^2}{\epsilon} \right)$ and $T = {O}\left( \frac{H^2}{\epsilon} \right)$, the output policy $\pi_{T}$ satisfies the following condition with probability $1-\delta$,
\begin{align*}
\max_\pi V^{\pi}_{r+\lambda_k c}(\rho) -  V^{\pi_{T}}_{r+\lambda_k c}(\rho) \leq {O}((1+\lambda_k)\epsilon) 
\end{align*}
\end{restatable}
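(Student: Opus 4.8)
The plan is to follow the mirror-descent value-iteration error analysis of \citep{kozuno2022kl, kitamura2023regularization}, adapted to (i) the linear / least-squares setting, (ii) a stationary output policy, and (iii) a reward $\Box = r + \lambda_k c$ whose range is $[0,1+\lambda_k]$. Because of this range, every value function scales with $(1+\lambda_k)$, so I would first normalize $\Box$ to lie in $[0,1]$, prove the guarantee with an error budget of $\epsilon$, and multiply back by $(1+\lambda_k)$ at the end; this is precisely the source of the $(1+\lambda_k)$ factor in the statement. The overall target is to split the suboptimality $\max_\pi V^\pi_\Box(\rho) - V^{\pi_T}_\Box(\rho)$ into a \emph{bias} term that decays with the number of iterations $T$ and a \emph{statistical} term that decays with the per-iteration sample size $M$, and then to balance the two.

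First I would control the per-iteration regression error. By \cref{asm_32} the Bellman target $\Box + \gamma P \hat V^t_\Box$ lies exactly in the span of $\phi$ (since $P(\cdot\mid s,a)=\langle\phi(s,a),\mu\rangle$ and $\Box=\langle\phi,\psi_r+\lambda_k\psi_c\rangle$), so there is a true parameter $\theta^{t+1}_*$ with $\langle\phi,\theta^{t+1}_*\rangle = \Box + \gamma P\hat V^t_\Box$, and the only error in $\theta^{t+1}$ comes from replacing $P\hat V^t_\Box$ by its $M$-sample Monte-Carlo estimate on the coreset $\mathcal{C}$. Writing the weighted least-squares solution explicitly, $\theta^{t+1}-\theta^{t+1}_* = G^{-1}\sum_{(x,b)\in\mathcal{C}}\tilde\rho(x,b)\,\phi(x,b)\,\xi_t(x,b)$, where $\xi_t(x,b)$ is the mean-zero Monte-Carlo noise at $(x,b)$. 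The pointwise error at an arbitrary $(s,a)$ is $\phi(s,a)^\top G^{-1}\sum_{(x,b)}\tilde\rho(x,b)\,\phi(x,b)\,\xi_t(x,b)$, which I would bound by a Bernstein/Hoeffding argument over the $M$ independent samples, using $g(\tilde\rho)=\max_{(s,a)}\phi(s,a)^\top G^{-1}\phi(s,a)\le 2d$ to control the coefficients, together with $\abs{\mathcal{C}}=\tilde O(d)$ and a union bound over the coreset. This yields a per-iteration regression error of order $\sqrt d/\sqrt M$ times the prevailing value range (at most $H$ after normalization), holding with high probability.

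The crux is to accumulate these per-iteration errors \emph{without} paying a factor of $T$. Here the \texttt{LS-MDVI} structure is essential: since $\tilde Q^{t+1}_\Box=\langle\phi,\sum_{i\le t+1}\theta^i_\Box\rangle$ and $\hat V^{t+1}_\Box=\max_a\tilde Q^{t+1}_\Box-\max_a\tilde Q^t_\Box$ telescopes, the final suboptimality is governed by a discounted \emph{average} $\tfrac1T\sum_t(\cdots)\xi_t$ of the per-iteration noises rather than their sum. Because each iteration $t$ draws a fresh independent batch $\mathcal{B}_t$, the $\xi_t$ form a martingale-difference sequence, and a Freedman/Azuma concentration delivers a $1/\sqrt T$ gain: the accumulated statistical error is of order $H\cdot\tfrac{1}{\sqrt T}\cdot\tfrac{\sqrt d\,H}{\sqrt M}=\sqrt d\,H^2/\sqrt{TM}=\sqrt d\,H^2/\sqrt N$, where one factor of $H$ is the value range entering the per-iteration noise $\sqrt d\,H/\sqrt M$ and the other is the error-propagation factor through the MDP. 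I expect this to be the main obstacle, since the $\max_a$ operation makes $\hat V^t_\Box$ a nonlinear, data-dependent function of the past (the policy at iteration $t$ depends on $\mathcal{B}_0,\dots,\mathcal{B}_{t-1}$), so I must set up the martingale carefully, relate the max-of-sums iterates to a clean average of independent noises, and simultaneously track the $1/(1-\gamma)$ propagation inherited from the MDVI recursion.

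Finally I would combine the two pieces. The MDVI bias decays as $O(H^2/T)$ (the $1/T$ rate of mirror descent, rather than the geometric rate of plain value iteration), so choosing $T=O(H^2/\epsilon)$ makes it $O(\epsilon)$; choosing $M=\tilde O(dH^2/\epsilon)$ makes $N=TM=\tilde O(dH^4/\epsilon^2)$ and hence the statistical term $\sqrt d\,H^2/\sqrt N=O(\epsilon)$. A union bound over the $T$ iterations (and the coreset) at level $\delta$ gives the high-probability guarantee, and un-normalizing multiplies the whole bound by $(1+\lambda_k)$, yielding $\max_\pi V^\pi_{r+\lambda_k c}(\rho)-V^{\pi_T}_{r+\lambda_k c}(\rho)\le O((1+\lambda_k)\epsilon)$ as claimed.
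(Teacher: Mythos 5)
Your proposal follows essentially the same route as the paper's proof: decompose the suboptimality through the averaged empirical value $\bar V^T_\Box=\tfrac{1}{T}(\pi_T\tilde Q^T_\Box)$ into an $O(H^2(1+\lambda_k)/T)$ mirror-descent bias and an $\tilde O(H^2(1+\lambda_k)\sqrt{d/(TM)})$ statistical term, where the latter is obtained exactly as you describe --- the telescoping structure reduces the accumulated error to an average of martingale-difference regression noises over the fresh batches $\mathcal{B}_t$, controlled via Azuma--Hoeffding and the Kiefer--Wolfowitz bound $\sqrt{2d}\max_{\mathcal{C}}|\cdot|$ from the $G$-optimal design (the paper's Lemmas \ref{lem_sumtheta}, \ref{lem_thetaconcentL}, \ref{lem_vmvL}). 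The one detail you assert rather than prove is that the iterates satisfy $\|\hat V^t_\Box\|_\infty\le O(H(1+\lambda_k))$; this requires a separate induction (the paper's Lemma \ref{lem_rangeVL}) that only closes because your choice $M=\tilde O(dH^2/\epsilon)\ge\tilde O(dH^2)$ prevents the $\sqrt{2d}$ extrapolation factor from blowing up the range.
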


Hence, with a buffer $\mathcal{B}$ of size $T \, M |\mathcal{C}| = \tilde{O}\left( \frac{d^2 H^4}{\epsilon^2} \right)$,~\cref{alg_MDVICLMDP} guarantees an optimality gap of $f_{\mathrm{mdp}}(\mathcal{B}) = O(\epsilon)$, thereby satisfying~\cref{asmp_b1}. We note that the entropy-regularized variant of the above linear \texttt{MDVI} algorithm ~\citep{kitamura2023regularization} also attains a similar guarantee but for a non-stationary policy output by the corresponding algorithm. Furthermore, in contrast to~\cref{lem_optGL}, the guarantee in~\citet{kitamura2023regularization} only holds for a more restricted range of $\epsilon \in (0,1/H]$. In the next section, we instantiate the \texttt{PolicyEvaluation} oracle. 

\subsection{Instantiating the \texttt{PolicyEvaluation} oracle: Least-Squares Policy Evaluation}
\label{sec:instant-eval}

To understand the need for an explicit \texttt{PolicyEvaluation} oracle, note that in each iteration $k$, we can prove that~\cref{alg_MDVICLMDP} ensures a concentration guarantee for the value function corresponding to $r + \lambda_k c$. However, this does not directly imply a concentration guarantee on the individual value functions corresponding to the reward and constraint rewards. This is in contrast to model-based approaches~\citep{vaswani2022near}  for tabular CMDPs that guarantee concentration for the empirical transition probabilities, and use that to ensure concentration for both the reward and constraint reward value functions. However, since such model-based approaches cannot be used for linear MDPs, we require an additional algorithm that can compute the empirical value functions satisfying~\cref{asmp_b2}. To that end, we present \cref{alg_CVIL} that can be used as an instantiation of the \texttt{PolicyEvaluation} oracle in ~\cref{alg_CMDPL}. The algorithm is also based on least-squares and uses the same coreset constructed in~\cref{sec:instant-data}. Furthermore, we note that~\cref{alg_CVIL} can be viewed as a special case of~\cref{alg_MDVICLMDP} for a fixed policy.

\begin{algorithm}[H]
\SetAlgoLined
\DontPrintSemicolon
\caption{Least-Squares Policy Evaluation (\texttt{LS-PE})}
\label{alg_CVIL}

\KwIn{$T$ (number of iterations), $M$ (number of next-state samples obtained per state-action pair in each iteration), $\diamond$ (either r or c), $\mathcal{B} = \mathcal{B}_0 \cup \cdots \cup \mathcal{B}_{T-1}$ (Buffer), $\pi$ (policy to be evaluated), \\ $\tilde{\rho}$ (design), $\mathcal{C}$ (coreset), $\phi$ (feature map).}
\KwOut{$\bar{\mathcal{V}}^T_{\diamond}(\rho) = \frac{1}{T}\sum_{i=1}^T \hat{\mathcal{V}}^i_{\diamond} (\rho)$.}
\textbf{Initialize:} Define $\hat{\mathcal{V}}^0_{\diamond} = \mathbf{0}$.

\BlankLine

\SetKwProg{Proc}{procedure}{}{end}
\Proc{\texttt{LS-PE}($T$, $M$, $\diamond$, $\mathcal{B}$, $\pi$, $\tilde{\rho}, \mathcal{C}, \phi$)}{
    \For{$t = 0,1,2 \dots,T-1$}{ 
        \ \ $\forall (s,a) \in \mathcal{C}:$ Access $(s,a,s^{\prime}_m)_{m=1}^{M}$ from the buffer $\mathcal{B}_t$. \\
        Define regression target $\hat{\mathcal{Q}}^{t+1}_{\diamond} (s, a) := \diamond (s, a) + \gamma \frac{1}{M} \sum_{m=1}^M \hat{\mathcal{V}}^t_{\diamond}(s^{\prime}_m)$. \\
        $\omega_{\diamond}^{t+1} = \argmin_{\omega \in \mathbb{R}^d} \sum_{(x,b)\in \mathcal{C}} {\tilde{\rho}(x,b)}(\langle \phi(x,b), \omega \rangle- \hat{\mathcal{Q}}^{t+1}_{\diamond}(x,b))^2$.\\
        Define $\hat{\mathcal{V}}_{\diamond}^{t+1}(s) := ({\pi} \, \langle \phi, {\omega}_{\diamond}^{t+1} \rangle)(s)$. 
    }
}
\end{algorithm}

\begin{remark}
There exists prior CMDP literature that uses PE as a subroutine~\citep{efroni2020exploration,ding2021provably}, but our PE subroutine differs in several aspects. First, ~\citet{efroni2020exploration,ding2021provably} use value functions estimated via model-based policy evaluation to update both the primal and dual variables; we utilize the PE subroutine solely for updating the dual variable. This decoupling enables a more reductionist algorithm and modular proof. Moreover, the resulting algorithmsin~\citet{efroni2020exploration,ding2021provably} do not achieve the near-optimal regret even for tabular CMDPs (in terms of their dependence on both $S$ and $H$), and consequently sample-complexity in the generative model setting. In contrast, our method leverages averaging within PE and returns an averaged value function, which plays a crucial role in attaining the near-optimal (in $d$ and $\epsilon$ for the linear setting and in all parameters for the tabular setting) sample complexity. An additional technical challenge lies in integrating the PE-induced error into the primal-dual analysis (i.e., Lemma \ref{lem_pd}) without losing statistical complexity.
\end{remark}

Note that \texttt{LS-PE} uses a fixed dataset (the buffer $\cB$) to evaluate a fixed policy, and is similar to the policy evaluation algorithms in offline reinforcement learning~\citep{duan2020minimax}. The theoretical guarantees for such offline algorithms depend on the quality of the dataset, measured in terms of metrics such as coverage or concentrability. However, in our case, we curate the dataset and choose the buffer $\cB$ such that it has good coverage properties that allow for fine-grained control on the algorithm's sub-optimality. In particular, we prove the following result in~\cref{sec_LP2}.

\begin{restatable}{lemma}{OPGuaranteebL}
\label{lem_optG2L}
For a fixed $\epsilon \in (0,{1}]$, $\delta \in (0,1)$, \cref{alg_CVIL} with $M= \tilde{O}\left( \frac{dH^2}{\epsilon} \right)$ and $T = {O}\left( \frac{H^2}{\epsilon} \right)$, the output $\bar{\mathcal{V}}^T_{\diamond}$ satisfies the following condition with probability $1-\delta$,
\begin{align*}
|\bar{\mathcal{V}}^T_{\diamond}(\rho) - V^{\pi}_{\diamond}(\rho) | \leq {O}\left( \epsilon \right).
\end{align*}
\end{restatable}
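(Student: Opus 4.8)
The plan is to exploit the fact that \cref{alg_CVIL} is approximate value iteration for the fixed-policy Bellman operator, so that its output $\bar{\mathcal{V}}^T_\diamond$ differs from $V^\pi_\diamond$ only through (i) a geometric value-iteration bias and (ii) statistical noise from the finite-sample Bellman backups. Writing $\mathcal{T}^\pi$ for the policy-$\pi$ Bellman operator of reward $\diamond$ and $P^\pi$ for the induced state-to-state kernel, I would first observe that under \cref{asm_32} the population backup $\diamond(s,a)+\gamma\,\mathbb{E}_{s'\sim P(\cdot|s,a)}[\hat{\mathcal{V}}^t_\diamond(s')]$ is \emph{exactly} linear in $\phi(s,a)$, with parameter $w^{t+1}_\diamond=\psi_\diamond+\gamma\int\hat{\mathcal{V}}^t_\diamond\,d\mu$. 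Hence the weighted least-squares solve on Line 5 incurs \emph{no} approximation error and recovers $w^{t+1}_\diamond$ up to the sampling noise $\xi^{t+1}(x,b):=\hat{\mathcal{Q}}^{t+1}_\diamond(x,b)-\langle\phi(x,b),w^{t+1}_\diamond\rangle$, a mean-zero average of $M$ fresh samples from batch $\mathcal{B}_t$.

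Next I would set up the error recursion. Defining $e^t:=\hat{\mathcal{V}}^t_\diamond-V^\pi_\diamond$ and the per-iteration noise $\epsilon^{t}(s):=\sum_a\pi(a|s)\langle\phi(s,a),\omega^{t}_\diamond-w^{t}_\diamond\rangle$, the updates give $e^{t+1}=\gamma P^\pi e^t+\epsilon^{t+1}$ with $e^0=-V^\pi_\diamond$. Unrolling and averaging as in the output, $\bar{\mathcal{V}}^T_\diamond(\rho)-V^\pi_\diamond(\rho)=\tfrac1T\sum_{t=1}^T\langle\rho,(\gamma P^\pi)^t e^0\rangle+\tfrac1T\sum_{j=1}^T\langle\nu_j,\epsilon^j\rangle$, where $\nu_j:=\big(\sum_{l=0}^{T-j}(\gamma P^\pi)^l\big)^\top\rho$ is a nonnegative measure of total mass at most $H$. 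The bias term is bounded by $\tfrac1T\sum_t\gamma^t\|V^\pi_\diamond\|_\infty\le H^2/T$, which is $O(\epsilon)$ for $T=O(H^2/\epsilon)$.

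For the noise term, the crucial point is that the batches $\mathcal{B}_0,\dots,\mathcal{B}_{T-1}$ are disjoint, so conditioned on $\mathcal{F}_{j-1}=\sigma(\mathcal{B}_0,\dots,\mathcal{B}_{j-2})$ (which determines $\hat{\mathcal{V}}^{j-1}_\diamond$, hence $w^j_\diamond$ and $\nu_j$) the quantity $X_j:=\langle\nu_j,\epsilon^j\rangle$ is mean-zero; thus $\{X_j\}$ is a martingale difference sequence. Using the closed form $\omega^j_\diamond-w^j_\diamond=G^{-1}\sum_{(x,b)}\tilde\rho(x,b)\phi(x,b)\xi^j(x,b)$ I would write $X_j=\sum_{(x,b)\in\mathcal{C}}\tilde\rho(x,b)\langle\bar\phi_j,G^{-1}\phi(x,b)\rangle\,\xi^j(x,b)$ for $\bar\phi_j:=\sum_{s,a}\nu_j(s)\pi(a|s)\phi(s,a)$. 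The $G$-optimal design guarantee $g(\tilde\rho)\le 2d$ then controls the feature factors: $\|\bar\phi_j\|^2_{G^{-1}}\le 2dH^2$, and the identity $\sum_{(x,b)}\tilde\rho(x,b)\langle\bar\phi_j,G^{-1}\phi(x,b)\rangle^2=\|\bar\phi_j\|^2_{G^{-1}}$ gives a conditional variance of order $dH^4/M$ per step (using $\mathrm{Var}(\xi^j(x,b))\le\|\hat{\mathcal{V}}^{j-1}_\diamond\|^2_\infty/M$ and $\|\hat{\mathcal{V}}^{j-1}_\diamond\|_\infty=O(H)$). Feeding $\sum_j\mathrm{Var}(X_j)=O(TdH^4/M)$ into a Freedman-type inequality yields $\tfrac1T|\sum_jX_j|=\tilde O\big(\sqrt{dH^4/(TM)}\big)$, which is $O(\epsilon)$ precisely when $T=O(H^2/\epsilon)$ and $M=\tilde O(dH^2/\epsilon)$.

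The main obstacle I expect is this last concentration step: a naive Azuma/union bound over iterations and over $(s,a)$ loses a $\sqrt d$ factor and gives only $\tilde O(\sqrt d\,\epsilon)$, so a \emph{variance-aware} (Bernstein/Freedman) martingale bound is essential, and it must be paired with the exact design identity above so that the $d$ from the variance cancels against the $d$ in $M$. A secondary technicality is establishing the high-probability boundedness $\|\hat{\mathcal{V}}^j_\diamond\|_\infty=O(H)$ for all $j$ that the variance proxy requires; I would obtain this inductively (clipping the estimates to $[0,H]$ if needed) together with a union bound over the $T$ iterations. Since \cref{alg_CVIL} is the fixed-policy specialization of \cref{alg_MDVICLMDP}, this analysis parallels that of \cref{lem_optGL}, with the fixed policy $\pi$ replacing the greedy $\max_a$ and thereby removing the need for any uniform-over-policies covering argument.
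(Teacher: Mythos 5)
Your proposal is correct and reaches the same rate $\tilde{O}\bigl(H^2\sqrt{d/(TM)} + H^2/T\bigr)$ as the paper, but via a genuinely different decomposition of the noise term. The paper applies the value-difference lemma and the telescoping identity (\cref{lem_sumthetaM}) to push the time average \emph{inside} the least-squares operator $W$ (using its linearity), then invokes the KW extrapolation bound (\cref{lem_43}) to reduce everything to $\sqrt{2d}\,\max_{(x,b)\in\mathcal{C}}$ of the time-averaged residual, which is a $TM$-term bounded martingale handled by plain Azuma--Hoeffding (\cref{lem_concentrate,lem_thetaconcentLM}). You instead unroll the error recursion $e^{t+1}=\gamma P^{\pi}e^{t}+\epsilon^{t+1}$, keep the per-iteration errors, and control the weighted sum $\sum_j\langle\nu_j,\epsilon^j\rangle$ as a martingale over iterations with a variance-aware (Freedman) bound plus the exact design identity for $\|\bar\phi_j\|^2_{G^{-1}}$. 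Both are valid; your route tracks the actual test direction $\bar\phi_j$ rather than taking a sup over all features, though here this buys nothing extra. One claim in your writeup is off: a Hoeffding-based argument does \emph{not} lose a $\sqrt{d}$ factor. The paper's union bound is only over the coreset $\mathcal{C}$ (size $\tilde O(d)$, so a log cost), and the $\sqrt{2d}$ enters once through the KW bound; the key is that the within-batch $1/\sqrt{M}$ averaging is captured by treating all $TM$ samples as a single martingale, not by a Bernstein argument. So Freedman is a workable alternative, not an essential ingredient. Finally, the boundedness $\|\hat{\mathcal{V}}^j_\diamond\|_\infty\le 2H$ that your variance proxy needs is established inductively in the paper (\cref{lem_rangeVL2}) under the condition $M\ge\tilde{O}(dH^2)$, which your parameter choice satisfies since $\epsilon\le 1$; no clipping is required.
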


Hence, with a buffer $\mathcal{B}$ of size $T \, M |\mathcal{C}| = \tilde{O}\left( \frac{d^2 H^4}{\epsilon^2} \right)$, ~\cref{alg_CVIL} guarantees an optimality gap of $f_{\mathrm{eva}}(\mathcal{B}) = O(\epsilon)$, thereby satisfying~\cref{asmp_b2}. 

\subsection{Putting everything together}
\label{sec:all-linear}

We have seen that~\cref{alg_MDVICLMDP,alg_CVIL} use the buffer $\mathcal{B}$ constructed by the \texttt{DataCollection} procedure to provide control over the terms $f_{\mathrm{mdp}}(\mathcal{B})$ and $f_{\mathrm{eva}}(\mathcal{B})$ appearing in~\cref{thm_main2}. Combining these results, we prove the following corollary in~\cref{appendix_maincor}.
\begin{restatable}{corollary}{MainresultLC}
\label{thm_main2C}
Using \texttt{LS-MDVI} (~\cref{alg_MDVICLMDP}) and \texttt{LS-PE} (~\cref{alg_CVIL}) as instantiations of the \texttt{MDP-Solver} and \texttt{PolicyEvaluation} in \cref{alg_CMDPL} and using the \texttt{DataCollection} oracle described in~\cref{sec:instant-data} has the following guarantee: for a fixed $\epsilon \in (0,{1}]$, $\delta \in (0,1)$, \cref{alg_CMDPL} with $\tilde{O}\left( \frac{d^2 H^4}{\epsilon^2} \right)$ samples, $U=O\left(\frac{1}{\zeta(1-\gamma)}\right)$, $\eta= \frac{U(1-\gamma)}{\sqrt{K}}$, $K=O\left(\frac{1}{\epsilon^2 \, (1-\gamma)^2} \right)$, and $b^{\prime}=b-O(\epsilon)$, returns a mixture policy $\bar{\pi}$ satisfying the following condition with probability $1-\delta$,
\begin{align*}
\vbkr(\rho) \geq \vsr(\rho) - O(\epsilon), \quad \text{and} \quad \vbkc(\rho) \geq b - O(\epsilon).
\end{align*} 
With the same algorithm parameters, but with $b^{\prime} = b + O(\epsilon)$ and $\tilde{O}\left( \frac{d^2 H^6}{\zeta^2\epsilon^2} \right)$ samples, \cref{alg_CMDPL} returns a mixture policy $\bar{\pi}$ satisfying the following condition with probability $1-\delta$,
\begin{align*}
\vbkr(\rho) \geq \vsr(\rho) - O({\epsilon}), \quad \text{and} \quad \vbkc(\rho) \geq b.
\end{align*} 
\end{restatable}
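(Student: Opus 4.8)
The plan is to combine the two oracle-level guarantees, \cref{lem_optGL,lem_optG2L}, with the meta-theorem \cref{thm_main2}. First I would verify that the \texttt{LS-MDVI} and \texttt{LS-PE} instantiations meet \cref{asmp_b1,asmp_b2} with $f_{\mathrm{mdp}}(\mathcal{B}) = O(\epsilon)$ and $f_{\mathrm{eva}}(\mathcal{B}) = O(\epsilon)$. For the \texttt{MDP-Solver}, at iteration $k$ the reward supplied to \cref{alg_MDVICLMDP} is $\Box = r + \lambda_k c \in [0, 1+\lambda_k]$ with $\lambda_k \le U$; matching the scale $R = 1+\lambda_k$ in \cref{asmp_b1}, \cref{lem_optGL} gives $\max_\pi V^{\pi}_{\Box}(\rho) - V^{\pi_T}_{\Box}(\rho) \le O((1+\lambda_k)\epsilon)$, so that after normalizing by $R$ we obtain $f_{\mathrm{mdp}}(\mathcal{B}) = O(\epsilon)$ uniformly in $\lambda_k$. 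For the \texttt{PolicyEvaluation} oracle, $\diamond = c \in [0,1]$ and \cref{lem_optG2L} directly yields $f_{\mathrm{eva}}(\mathcal{B}) = O(\epsilon)$. Both lemmas require a buffer of size $T M |\mathcal{C}| = \tilde{O}(d^2 H^4/\epsilon^2)$, and I would set $f(\mathcal{B}) = \max\{f_{\mathrm{mdp}}(\mathcal{B}), f_{\mathrm{eva}}(\mathcal{B})\} = O(\epsilon)$.

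Feeding $f(\mathcal{B}) = O(\epsilon)$ into \cref{thm_main2} with the prescribed $U, \eta, K, b'$ immediately gives the relaxed-feasibility conclusion $\vbkr(\rho) \ge \vsr(\rho) - 4 f(\mathcal{B}) = \vsr(\rho) - O(\epsilon)$ and $\vbkc(\rho) \ge b - 6 f(\mathcal{B}) = b - O(\epsilon)$. The key accounting point is that the buffer $\mathcal{B}$ is collected once in Line 2 of \cref{alg_CMDPL} and then reused by every primal and dual update; hence the total number of generative-model queries equals the single buffer size $\tilde{O}(d^2 H^4/\epsilon^2)$ and does not grow with the iteration count $K$, which governs only the computational cost.

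For strict feasibility I would rerun the identical argument with a rescaled accuracy. Since \cref{thm_main2} inflates the reward suboptimality to $\tfrac{16 f(\mathcal{B})}{\zeta(1-\gamma)}$, I set the internal accuracy parameter of \cref{lem_optGL,lem_optG2L} to $\epsilon' = \Theta(\epsilon\,\zeta\,(1-\gamma))$, giving $f(\mathcal{B}) = O(\epsilon\,\zeta\,(1-\gamma))$. This choice meets the theorem's precondition $f(\mathcal{B}) \le \zeta/6$ (using $\epsilon(1-\gamma) \le 1$) and drives $\tfrac{16 f(\mathcal{B})}{\zeta(1-\gamma)}$ down to $O(\epsilon)$, so that $\vbkr(\rho) \ge \vsr(\rho) - O(\epsilon)$ and $\vbkc(\rho) \ge b$ with $b' = b + 4 f(\mathcal{B})$. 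Substituting $\epsilon'$ into the buffer size $\tilde{O}(d^2 H^4/(\epsilon')^{2})$ and using $(1-\gamma)^{-2} = H^2$ yields $\tilde{O}\left(\frac{d^2 H^4}{\epsilon^2\zeta^2(1-\gamma)^2}\right) = \tilde{O}\left(\frac{d^2 H^6}{\epsilon^2 \zeta^2}\right)$, as claimed.

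The main obstacle is that a single buffer is reused across all $K$ iterations while the oracle inputs, namely the dual iterates $\lambda_k$ and the primal policies $\pi_k$, are themselves functions of $\mathcal{B}$. I would address this in two ways. First, the \texttt{PolicyEvaluation} guarantee must hold for data-dependent policies, which is precisely what \cref{asmp_b2} (and the covering-based proof of \cref{lem_optG2L}) provides; likewise \cref{lem_optGL} must hold uniformly over $\lambda \in [0,U]$. Second, a union bound over the $O(K)$ oracle calls contributes only $\log(K/\delta)$ factors that are absorbed into $\tilde{O}(\cdot)$, so all primal and dual updates are simultaneously accurate on a single event of probability $1-\delta$. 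The one remaining subtlety is to confirm that the reward scale $R = 1+\lambda_k$ is correctly factored out of \cref{lem_optGL}; this is what keeps $f_{\mathrm{mdp}}(\mathcal{B})$ at $O(\epsilon)$ rather than the much larger $O(U\epsilon)$, and is essential for the relaxed bound to be $O(\epsilon)$ despite $U = O(1/(\zeta(1-\gamma)))$.
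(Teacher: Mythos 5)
Your proposal is correct and follows essentially the same route as the paper's proof: both establish $f(\mathcal{B}) = O(\epsilon)$ from Lemmas~\ref{lem_optGL} and~\ref{lem_optG2L} with a buffer of size $\tilde{O}(d^2H^4/\epsilon^2)$, invoke Theorem~\ref{thm_main2}, and for strict feasibility rescale $\epsilon$ by $O(\zeta(1-\gamma))$ (checking the precondition $f(\mathcal{B}) \le \zeta/6$ via $\epsilon \le 1$ and $1-\gamma \le 1$), which inflates the sample complexity by $\frac{1}{\zeta^2(1-\gamma)^2}$. Your additional remarks on the $R = 1+\lambda_k$ normalization, the single reused buffer, and the union bound over the $K$ oracle calls are consistent with how the paper handles these points in Assumptions~\ref{asmp_b1}--\ref{asmp_b2} and the proof of Lemma~\ref{lem_pd}.
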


Hence, the total sample complexity required to achieve the relaxed feasibility objective in~\cref{fml_RF} and the strict feasibility objective in~\cref{fml_SF} is $\tilde{O}\left( \frac{d^2 H^4}{\epsilon^2} \right)$ and $\tilde{O}\left( \frac{d^2 H^6}{\epsilon^2 \zeta^2} \right)$ respectively. Since the lower bound for unconstrained linear MDPs is $\Omega\left( \frac{d^2 H^3}{\epsilon^2} \right)$ \citep{weisz2022confident}, our sample complexity achieves the optimal dependence on $d$ and $\epsilon$ in the relaxed setting. 

\paragraph{Flexibility of \cref{alg_CMDPL}.} Note that instead of \texttt{LS-MDVI}, we can use other unconstrained linear MDP solvers. For example, the G-Sampling and Stop (\texttt{GSS}) algorithm from~\citet{taupin2023best} uses a different \texttt{DataCollection} procedure and algorithm to return an $\epsilon$-optimal policy. It requires $\tilde{O}\left( \frac{d^2 H^4}{\epsilon^2} \right)$ samples to do so, thus matching the sample complexity of \texttt{LS-MDVI}. We describe this algorithm in detail and formally instantiate~\cref{alg_CMDPL} in~\cref{app:gss}. 

\subsection{Lower Bound}
\label{sec:all-linear}
As shown by \citet{vaswani2022near}, under the relaxed feasibility setting the statistical complexity of solving CMDPs matches that of unconstrained MDPs. In contrast, solving CMDPs under the strict setting is more challenging. In the tabular case, \citet{vaswani2022near} established a lower bound using a linear-programming argument. In this section, we establish a lower bound for solving linear CMDPs under strict feasibility. The proof technique of the following result is simpler and fundamentally different from that of \citet{vaswani2022near}. The proof is presented in Section~\ref{sec_LB}.

\begin{restatable}{theorem}{LB}
Let $\delta\in(0,0.08]$, $\gamma\in[7/12,1)$, $H=1/(1-\gamma)$, $\epsilon\in(0,0.002)$, $\zeta\in (0,49/2280)$, $b\in[H/2,H]$, and $d\geq 6$. There exists a class of linear constrained MDPs such that any $(\epsilon,\delta)$-sound algorithm requires $\Omega\left(d^2 H^5/\epsilon^2\zeta^2\right)$ samples from the generative model in the worst case.
\end{restatable}
Thus, the dependence on $d$, $\epsilon$, $\zeta$ in our bounds for the strict feasibility setting is also tight, with a suboptimality arising only in the multiplicative dependence on $H$.\\

\begin{figure}[t!]
\begin{minipage}[b]{0.73\textwidth}
    \centering
    \begin{tikzpicture}[-latex, auto, node distance = 3.5 cm and 3.5 cm, on grid, semithick, state/.style={circle, draw, minimum width = 1.25cm}]
      \node[state] (A) {$s_0$};
      \node[state] (B) [right = of A] {$s_1$};
      \node[state] (C) [below = of A] {$o$};
      \node[state] (D) [below = of B] {$z$};
      \path (A) edge [bend right = 15] node[align=center, above = 0.05 cm] {$r=1$ \\ $c=u$} (B);
      \draw [->] (A) to[in=140, out=230, looseness=3.5]  node[align=center, left = 0.05 cm] {$r=1$ \\ $c=u$} (A);
      \draw [->] (B) to[in=310, out=50, looseness=4.0] node[align=center, right = 0.05 cm] {$r=0$ \\ $c=0$} (B);
      \path (C) edge [bend right = 15] node[align=center, left = 0.05 cm] {$r=0$ \\ $c=0$} (A);
      \path (C) edge [bend right = 15] node[align=center, above = 0.1 cm] {$r=0$ \\ $c=0$} (D);
      \draw [->] (D) to[in=310, out=50, looseness=4.0] node[align=left, right = 0.05 cm] {$r=0$ \\ $c=(b+\zeta)(1+\gamma)/\gamma$} (D);
    \end{tikzpicture}
        \end{minipage}\hfill
    \begin{minipage}[b]{0.27\textwidth} 
        \captionof{figure}{The lower bound instance consists of CMDPs with four states. $o$ is the fixed starting state. At state $o$, taking an action will either transition to state $s_0$ or to the ``safe" state $z$. At state $s_0$, taking an action will either transition to state $s_1$ or stay in $s_0$. States $z$ and $s_1$ are absorbing.}
        \label{fig:parameter-update}
    \end{minipage}
\end{figure}


\subsection{Discussion}
\paragraph{Why variance-weighted least squares fails in our setting.} For unconstrained linear MDPs,~\citet{kitamura2023regularization} provide an alternative entropy-regularized algorithm that constructs coresets that depend on the estimated empirical variance in the value function. The resulting algorithm uses variance-weighted least squares and is able to attain the near-optimal $O\left( \frac{d^2 H^3}{\epsilon^2} \right)$ sample complexity for unconstrained linear MDPs. To the best of our knowledge, this is the only algorithm that can achieve such an optimal bound. Unfortunately, using such an idea for linear CMDPs fails. This is because in the linear CMDP setting, since the MDP reward function $r + \lambda_k \, c$ (and hence the MDP value function) change in every iteration $k$ of~\cref{alg_CMDPL}, using variance-aware coresets implies that we need to construct a distinct coreset in every such iteration. This prevents the resulting algorithm from reusing data similar to~\cref{alg_CMDPL}, and actually increases the corresponding sample complexity. Resolving this issue and attaining the optimal dependence on $H$ is an important direction for future work.

\paragraph{Comparing with online methods.} In order to further contextualize our results, we use the state-of-the-art regret guarantees for the finite-horizon online setting~\citep{ghosh2022provably} and use the reduction in~\citet{bai2021achieving} to our problem setting. The reduction implies that the algorithm in~\citep{ghosh2022provably} (designed and analyzed for the more difficult online regret minimization) results in an $\tilde{O}\left( \frac{d^3 H^4}{\epsilon^2} \right)$ and $\tilde{O}\left( \frac{d^3 H^6}{\epsilon^2 \zeta^2} \right)$ sample complexity for the relaxed and strict settings respectively. Hence, our results have a better dimension dependence. Interestingly, the analysis in~\citep{ghosh2022provably} has a worse dependence on $d$ because it uses a uniform concentration argument to get a handle on the concentration for the individual value functions corresponding to the (constraint) rewards. Recall that in~\cref{sec:instant-eval}, we encountered a similar issue and resolved it by using policy evaluation. We believe that our technique might be useful even for online regret minimization.

\paragraph{Comparing with offline methods.} Since we store all the samples at the beginning (see~\cref{alg_CMDPL}), an alternative approach is to treat these stored samples as an offline dataset, and use a standard offline RL algorithm~\citep{neu2024offline,hong2024primal} for the CMDP setting. It is worth noting that the sample complexity for such a reduction from the offline constrained RL setting can be substantially worse or even vacuous. In particular, the result on linear CMDP (e.g., Theorem 9) in~\citet{hong2024primal} depends on the concentrability coefficient $C^*$ (see Assumption 2 therein), which can become unbounded if we do not collect data that covers the optimal action in every state. We note that constructing the $O(d^2)$ size coreset as in our paper does not directly ensure such coverage for each optimal action. When using the result from~\citet{hong2024primal}, this can lead to unbounded values of $C^*$ and a vacuous bound. For the sake of argument, even if we assume that $C^*$ is bounded by an absolute constant (independent of $d$ and $H$), the bounds for the offline CMDP setting in~\citet{hong2024primal} are worse than ours. Specifically,~\citet[Theorem 9]{hong2024primal} is established under the relaxed feasibility setting; however, their sample complexity bound exhibits a quadratic dependence on the Slater constant, while our bounds in this setting do not have such a dependence. It is known that the Slater constant is only relevant in the strict feasibility regime~\citep{vaswani2022near}. Additionally, the dependence on the feature dimension $d$ is $d^3$ in~\citet{hong2024primal}, whereas our approach achieves the near-optimal dependence of $d^2$.

\section{Conclusion}
\label{sec:discussion}

Given access to a generative model, we proposed a generic primal-dual framework for reducing the (linear) CMDP problem to the (linear) MDP problem. Using (linear) \texttt{MDVI} as the \texttt{MDP-Solver} enabled us to obtain sample complexity bounds for both tabular and linear CMDPs with either $O(\epsilon)$ or zero constraint violation. We obtained the first near-optimal (in $d$, $\epsilon$, and $\zeta$) guarantees for linear CMDPs, whereas for tabular CMDPs, we matched the existing near-optimal guarantees. We also provide a lower bound for solving linear CMDPs under strict feasibility. For linear CMDPs, improving the dependence of the sample complexity on the effective horizon $H$ is an important direction for future work.


\bibliography{ref}

\newpage
\appendix

\newcommand{\appendixTitle}{%
\vbox{
    \centering
	\hrule height 4pt
	\vskip 0.2in
	{\LARGE \bf Supplementary material}
	\vskip 0.2in
	\hrule height 1pt 
}}
\appendixTitle

\startcontents 
\printcontents{}{1}{} 

\allowdisplaybreaks

\section{An Instantiation of \texttt{ComputeOptimalDesign}}
\label{appendix_A}

In this section, we present an instantiation of the \texttt{ComputeOptimalDesign} oracle using the \texttt{Frank-Wolfe} algorithm~\citep{todd2016minimum}.

\begin{algorithm}[H]
\SetAlgoLined
\DontPrintSemicolon
\caption{\texttt{InitializeDesign}}
\label{alg_ID}

Choose an arbitrary nonzero $c_0 \in \mathbb{R}^d$.\\
\textbf{Output: }{$\tilde{\rho}$.}

\BlankLine

\SetKwProg{Proc}{procedure}{}{end}
\Proc{\texttt{InitializeDesign}}{
    \For{$j = 0,1,2 \dots,d-1$}{ 
        $(\bar{s}_j, \bar{a}_j) = \argmax_{(s, a) \in \mathcal{S} \times \mathcal{A}} c_j^{\top} \phi(s, a)$.\\
        $(s_j, a_j) = \argmin_{(s, a) \in \mathcal{S} \times \mathcal{A}} c_j^{\top} \phi(s, a)$.\\
        $x_j = \phi(\bar{s}_j, \bar{a}_j) - \phi(s_j, a_j)$.\\
        Choose an arbitrary nonzero $c_{j+1}$ orthogonal to $x_0, \dots, x_j$. \\
    }
    \ Let $\mathcal{Z} := \{(\bar{s}_j, \bar{a}_j), (s_j, a_j) \mid j = 0, \dots, d - 1\}$.\\
    Choose $\tilde{\rho}$ to put equal weight on each of the distinct points of $\mathcal{Z}$.
}
\end{algorithm}

Below we present the classical \texttt{Frank-Wolfe} algorithm for experimental design.

\begin{algorithm}[H]
\SetAlgoLined
\DontPrintSemicolon
\caption{\texttt{Frank-Wolfe}}
\label{alGW}

\KwIn{$\epsilon^{FW}$. (Tolerance for algorithm)}
\KwOut{$\tilde{\rho}, \mathcal{C}, G$. (Coreset, optimal design and covariance matrix)}
\textbf{Initialize:} Define $\hat{\mathcal{V}}^0_{\diamond} = \mathbf{0}$.

\BlankLine

\SetKwProg{Proc}{procedure}{}{end}
\Proc{\texttt{Frank-Wolfe}($\epsilon^{FW}$)}{
    \ \ $\tilde{\rho}=$ \texttt{InitializeDesign} by Algorithm \ref{alg_ID}. \\
    Define $\mathcal{U} : \tilde\rho \mapsto \text{diag}(\tilde\rho) \in \mathbb{R}^{|\mathcal{S}||\mathcal{A}| \times |\mathcal{S}||\mathcal{A}|}$, where $\text{diag}(\tilde{\rho})$ is a diagonal matrix with elements of $\tilde\rho$.\\
    For $(s, a) \in \mathcal{S} \times \mathcal{A}$, let $\Phi \in \mathbb{R}^{|\mathcal{S}||\mathcal{A}| \times d}$ be a matrix such that its $(s|\mathcal{A}| + a)$th row is $\phi(s, a)$.\\
    Define $\mathcal{I} : \tilde\rho \mapsto (\Phi^{\top} \, \mathcal{U}(\tilde\rho) \, \Phi)^{-1}$. (defines the inverse of the covariance matrix)\\
    Let $\nu : (s, a, \tilde\rho) \mapsto \phi(s, a)^{\top} \mathcal{I}(\tilde\rho) \phi(s, a)$. (measures the variance proxy for $(s,a)$)\\
    Let $\delta : \tilde\rho \mapsto \max_{(s, a) \in \mathcal{S} \times \mathcal{A}} (\nu(s, a, \tilde\rho) - d)/d$ (computes the relative difference between the worst-case variance and $d$)
        
    \While{$\delta(\tilde\rho) > \epsilon^{FW}$}{ 
        \ \ Let $(x, b) := \arg \max_{(s, a) \in \mathcal{S} \times \mathcal{A}} \nu(s, a, \tilde\rho)$.\\
        Let $\eta^* := (\nu(x, b, \tilde\rho) - d) / ((d - 1) \nu(x, b, \tilde\rho))$.\\
        $\tilde\rho(x, b) \gets \tilde\rho(x, b) + \eta*$.\\
        $\tilde\rho \gets \tilde\rho / (1 + \eta^*)$.
    }
    \ \ \ Let $\mathcal{C} := \left\{(s, a) \mid \nu(s, a, \tilde\rho) \geq d \left( 1 + \frac{\delta(\tilde\rho) d}{2} - \sqrt{\delta(\tilde\rho) (d - 1) + \frac{\delta(\tilde \rho)^2 d^2}{4}} \right) \right\}$. \\ \ \ \ (form the coreset containing state-action pairs with sufficiently high variance value)\\
    \vspace{1ex}
    Let $G := \sum_{(x, b) \in \mathcal{C}} {\tilde\rho(x, b)} \phi(x, b) \phi(x, b)^{\top}$. (calculate the corresponding covariance matrix)
}
\end{algorithm}

The subroutine \texttt{InitializeDesign} returns an initial design to be used in \texttt{Frank-Wolfe}. \texttt{InitializeDesign} is a deterministic procedure for constructing a core set of state-action pairs that provides good coverage of the feature space in linear MDPs. The algorithm sequentially identifies informative directions in the feature space by iteratively computing difference vectors between state-action pairs with maximal and minimal feature projections along a given search direction. The algorithm iteratively updates the search direction to be orthogonal to the span of the previously discovered directions. Specifically, the vector $c_j \in \mathbb{R}^d$ is an auxiliary direction vector used to sequentially identify maximally informative state-action pairs. The next vector $c_{j+1}$ is then chosen to be orthogonal to all previous $x_0,\dots,x_j$ ensuring that the design explores linearly independent directions in feature space.  The resulting set of state-action pairs is then used as the support for a design distribution in regression.

\section{Table of Notation}
\label{Appendix_Table-Linear}

\begin{table}[h]
\label{Table1}
\centering
\begin{tabular}{l|l}
\hline 
\thead{\textbf{Notation}} & \thead{\textbf{Meaning}} \\
\hline
$\mathcal{A}, \mathcal{S}$ & action space of size $|\mathcal{A}|$, state space of size $|\mathcal{S}|$ \\
$\gamma, H$ & discount factor in $[0, 1)$, $1/(1-\gamma)$ \\
$P$ & transition matrix  $P \in \mathbb{R}^{|S||A|\times |S|}$ \\
$P_{\pi}, \hat{P}_{\pi}^t$ & $\pi P \in \mathbb{R}^{|S|\times |S|}$, $\pi \hat{P}_t \in \mathbb{R}^{|S|\times |S|}$ \\
$r,c$ &  reward vector in [0,1] range, constraint reward vector in [0,1] range  \\
$\rho$ & initial distribution of states\\
$\diamond$ & $r$ or $c$ \\
$\Box$ &  $r+\lambda c$ where $\lambda \in \{\lambda_1,\cdots,\lambda_K\}$ \\
\hline
$b, \zeta$ & constraint value in $[0, 1/(1-\gamma))$, Slater constant \\
$\lambda, \lambda^*$& Lagrange multiplier, the optimal Lagrange multiplier \\
$U$ & projection upper bound \\
\hline
$\phi,d$ & feature map of a linear MDP and its dimension  \\
$\tilde{\rho}, \mathcal{C}$ & a design over $\mathcal{S} \times \mathcal{A}$, coreset \\
$G$ & design matrix with respect to $\phi$ and $\tilde{\rho}$. Equal to $\sum_{(x,b) \in \mathcal{C}} \tilde{\rho}(x,b) \phi(x,b) \phi(x,b)^{\top} $\\
$W(z)$ & $G^{-1} \sum_{(x,b) \in \mathcal{C}} {\tilde{\rho}(x,b) \phi(x,b) z(x,b)}$ \\ &(solution of a least-squares estimation with features $\phi(x,b)$, weights $\tilde{\rho}$ and targets $z(x,b)$)\\
\hline
$\epsilon, \delta$ & admissible suboptimality, admissible failure probability \\
$K, T$ & number of outer and inner iterations \\
\hline
$(\hat{P}_t \hat{V}_{\diamond}^t)(s,a)$ & $ \frac{1}{M} \sum_{m=1}^M\hat{V}_{\diamond}^t(s^{\prime}_m)$ where $s^{\prime}_m \in \mathcal{B}_t$ \\
$(P\hvkd)(s,a)$ & $\mathbb{E}[\hvkd(s^{\prime}) \, | \, s_0=s, a_0 = a]$\\
$\mathcal{F}_{t,m}$ & $\sigma$-algebra in the filtration for \cref{alg_MDVICMDP,alg_MDVICLMDP,alg_CVI,alg_CVIL} \\ 
\hline
$\mathcal{T}^{\pi}Q$ & Bellman operator $r + \gamma P(\pi Q)$\\
$Q^{\pi}$ & state-action value function for policy $\pi$ \\
\hline
$\hqktb{t}$ & estimated state-action value function in iteration $t$ in \cref{alg_MDVICMDP,alg_MDVICLMDP}  \\
$\hmqktd{t}$ & estimated state-action value function in iteration $t$ in \cref{alg_CVI,alg_CVIL}  \\
\hline
\end{tabular}
\end{table}

\begin{table}[h]
\label{Table1}
\centering
\begin{tabular}{l|l}
\hline 
$\hvkb(s)$ (Tabular) & ${\underset{{a}}{\max}} \left\{ \sum_{i=0}^{t} \hqkib(s,a) \right\}  - {\underset{{a}}{\max}}  \left\{ \sum_{i=0}^{t-1} \hqkib(s,a)\right\} $ in~\cref{alg_MDVICMDP}\\
$\hvkb(s)$ (Linear) & $ {\underset{{a}}{\max}} \left\{(\langle \phi, \sum_{i=0}^{t}\theta^i_{\Box} \rangle) (s,a) \right\} - {\underset{{a}}{\max}} \left\{  (\langle \phi, \sum_{i=0}^{t-1}\theta^i_{\Box} \rangle) (s,a)\right\}$ in~\cref{alg_MDVICLMDP} \\
$\tilde{Q}_{\Box}^{t}$ (Tabular) & $\sum_{i=0}^{t}\hat{Q}_{\Box}^i$ in~\cref{alg_MDVICMDP}\\
$\tilde{Q}_{\Box}^{t}$ (Linear) & $\langle \phi, \sum_{i=0}^t \theta^i_{\Box} \rangle$ in~\cref{alg_MDVICLMDP} \\
$\hmvkd(s)$ (Tabular) & $(\pi \hmqktd{t})(s)$ in~\cref{alg_CVI}\\
$\hmvkd(s)$ (Linear) & $({\pi} \, \langle \phi, {\omega}_{\diamond}^{t} \rangle)(s)$ in~\cref{alg_CVIL} \\
\hline
$\bvktb(s), \bmvktd(s)$ & $\frac{1}{t}\sum_{i=1}^{t} \hvkib(s), \frac{1}{t}\sum_{i=1}^{t} \hmvkid(s)$ \\
\hline
$\hat{V}^k_{\diamond},\bmvd$ & output of the \texttt{PolicyEvaluation} oracle in line 5 in Algorithm \ref{alg_CMDPL}, $\frac{1}{K} \sum_{k=0}^{K-1} \hat{V}^k_{\diamond}$\\
\hline
$\pi_{k}$ & output policy of \texttt{MDP-Solver}\\ 
$\bar{\pi}$ & mixture policy equal to $\frac{1}{K} \sum_{k=0}^{K-1}\pi_{k}$\\
$\pi^*$ & $ \mathrm{argmax}_{\pi}  V^{\pi}_r(\rho) \text{ s.t. } V^{\pi}_c(\rho) \geq b$\\
$\pi^{*+}$ & $ \mathrm{argmax}_{\pi}  V^{\pi}_r(\rho) \text{ s.t. } V^{\pi}_c(\rho) \geq b + 6f(\mathcal{B})$\\
$\pi^*_k$ & $ \mathrm{argmax}_{\pi} \{ V^{\pi}_{r+\lambda_k c} \}$\\
$\pi^{\prime}_{t}$ &  a non-stationary policy that follows policies $\pi_t, \pi_{t-1}, \dots$ \\
& upto timestep $t$ and follows $\pi_0$ thereafter \\
\hline
$(\pi Q)(s)$ & $\sum_{a\in \mathcal{A}}\pi(a|s) \, Q(s,a)$ \\
$(\pi r)(s)$ & $\sum_{a\in \mathcal{A}}\pi(a|s) \, r(s,a)$ \\
\hline
${\theta}_{\Box}^{t}$ & least-squares value estimate in \cref{alg_MDVICLMDP} \\
$\boldsymbol{\theta}_{\Box}^{t}$ & parameter that satisfies $\langle \phi, \boldsymbol{\theta}_{\Box}^{t} \rangle := \Box + \gamma P \hat{V}_{\Box}^{t-1}$ in the linear MDP\\
${\omega}_{\diamond}^{t}$ &least-squares value estimate in \cref{alg_CVIL} \\
$\boldsymbol{\omega}_{\diamond}^{t}$ & parameter that satisfies $\langle \phi, \boldsymbol{\omega}_{\diamond}^{t} \rangle := \diamond + \gamma P \hat{\mathcal{V}}_{\diamond}^{t-1}$ in the linear MDP\\
\hline
\end{tabular}
\end{table}

\section{Proof of Theorem \ref{thm_main2}}
\label{appendix_mainthm}

\MainresultL*

\begin{proof}
We denote $\bmvd = \frac{1}{K} \sum_{k=0}^{K-1} \hat{V}^k_{\diamond} $ where $\diamond=r$ or $c$.
We first prove the relaxed feasibility statement. By Lemma \ref{lem_pd}, we have $\bmvc(\rho) \geq b -  5f(\mathcal{B})$. Hence,
\begin{align*}
\vbkc(\rho) &= \vbkc(\rho) - \bmvc(\rho) + \bmvc(\rho)\\
&\geq b -  5f(\mathcal{B}) - |\vbkc(\rho) - \bmvc(\rho)|\\
&\geq b -  5f(\mathcal{B}) - f(\mathcal{B}) \tag{By Assumption \ref{asmp_b2} for each policy $\{\pi_k\}_{k = 0}^{K-1}$} \\
&= b - 6f(\mathcal{B}).
\end{align*}
Next, we prove $\vsr(\rho)- \vbkr(\rho) \leq 4f(\mathcal{B}).$ We have
\begin{align*}
\vsr(\rho)- \vbkr(\rho) & = [\vsr(\rho) - \bmvr(\rho)] + [\bmvr(\rho) - \vbkr(\rho)] \\
& \leq 3f(\mathcal{B}) + \vert \bmvr(\rho) - \vbkr(\rho) \vert \tag{By Lemma \ref{lem_pd}} \\
& \leq 3f(\mathcal{B}) + f(\mathcal{B}) \tag{By Assumption \ref{asmp_b2} for each policy $\{\pi_k\}_{k = 0}^{K-1}$}\\
& = 4f(\mathcal{B}).
\end{align*}
Now we prove the strict feasibility statement.
By Lemma \ref{lem_pd}, we have $\bmvc(\rho) \geq b + f(\mathcal{B})$, and thus,
\begin{align*}
\vbkc(\rho) &= \vbkc(\rho) - \bmvc(\rho) + \bmvc(\rho)\\
&\geq b + f(\mathcal{B})- |\vbkc(\rho) - \bmvc(\rho)|\\
&\geq b + f(\mathcal{B}) - f(\mathcal{B}) \tag{By Assumption \ref{asmp_b2} for each policy $\{\pi_k\}_{k = 0}^{K-1}$}\\
&\geq b,
\end{align*}
which satisfies the constraint. Next, we prove $ \vsr(\rho)- \vbkr(\rho) \leq 28f(\mathcal{B})$. We define $ \pi^{*+}\in \mathrm{argmax}_{\pi}  V^{\pi}_r(\rho) \text{ s.t. } V^{\pi}_c(\rho) \geq b + 6f(\mathcal{B})$. Note that such a policy exists by the definition of $\zeta$ and the assumption that $f(\mathcal{B}) \leq \frac{\zeta}{6}$.
By Lemma \ref{lem_boundsense} and Lemma \ref{lem_bounddualv}, we know that 
\begin{align*}
|\vsr(\rho) - \vssr (\rho)| \leq 12f(\mathcal{B}) \lambda^* \leq \frac{12f(\mathcal{B})}{\zeta(1-\gamma)}.
\end{align*}
Applying Lemma \ref{lem_pd} and Assumption \ref{asmp_b2} as before, we have
\begin{align*}
\vsr(\rho)- \vbkr(\rho) &= [\vsr(\rho) - \vssr(\rho)] + [\vssr(\rho) - \bmvr(\rho)] + [\bmvr(\rho) - \vbkr(\rho)] \\
&\leq \frac{12f(\mathcal{B})}{\zeta(1-\gamma)} + 3f(\mathcal{B}) + f(\mathcal{B}) \\
&\leq \frac{16f(\mathcal{B})}{\zeta(1-\gamma)}. \tag{$\zeta(1-\gamma)\leq \frac{1-\gamma}{1-\gamma}=1$}
\end{align*}
This completes the proof.
\end{proof}

\subsection{Proof of Lemma \ref{lem_pd} (Primal-Dual Guarantees for Algorithm \ref{alg_CMDPL})}

\begin{restatable}[Primal-Dual Guarantees for Algorithm \ref{alg_CMDPL}]{lemma}{PDGuarantee}
\label{lem_pd}
Suppose~\cref{asmp_b1,asmp_b2} hold and let $f(\mathcal{B}) := \max\{ f_{\mathrm{mdp}}(\mathcal{B}), f_{\mathrm{eva}}(\mathcal{B}) \}$. For $\delta \in (0,1)$, when ~\cref{alg_CMDPL} is run with $U=\frac{2}{\zeta(1-\gamma)}$, $\eta= \frac{U(1-\gamma)}{\sqrt{K}}$, $K=\frac{U^2}{[f(\mathcal{B})]^2 \, (1-\gamma)^2}$ and $b^{\prime}=b-2f(\mathcal{B})$, the following condition holds with probability $1-\delta$,
\begin{align*}
\frac{1}{K}\sum_{k=0}^{K-1} \hat{V}_r^k (\rho) \geq \vsr(\rho) - 3f(\mathcal{B}) \quad \text{,} \quad \frac{1}{K}\sum_{k=0}^{K-1} \hat{V}_c^k(\rho) \geq b - 5f(\mathcal{B}).
\end{align*} 
With the same algorithm parameters, but with $b^{\prime} = b+4f(\mathcal{B})$, the following condition holds with probability $1-\delta$,
\begin{align*}
\frac{1}{K}\sum_{k=0}^{K-1} \hat{V}_r^k (\rho) \geq \vssr(\rho) - 3{f(\mathcal{B})} \quad \text{,} \quad \frac{1}{K}\sum_{k=0}^{K-1} \hat{V}_c^k(\rho) \geq b + f(\mathcal{B}). 
\end{align*} 
\end{restatable}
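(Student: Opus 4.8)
The plan is to read Algorithm \ref{alg_CMDPL} as an online saddle-point procedure: the dual iterate $\lambda_k$ runs projected online gradient descent (OGD) on the linear losses $\lambda \mapsto \lambda\,(\hat{V}^k_c(\rho)-b')$, while the primal iterate $\pi_k$ is an approximate best response to $\lambda_k$ delivered by the \texttt{MDP-Solver}. First I would record two master inequalities. \emph{(i) Dual regret:} since $\lambda_0=0$, $\lambda_k\in[0,U]$, and $|\hat{V}^k_c(\rho)-b'|\le H$, the textbook OGD bound gives $\sum_k(\lambda_k-\lambda)(\hat{V}^k_c(\rho)-b')\le \frac{U^2}{2\eta}+\frac{\eta}{2}\sum_k(\hat{V}^k_c(\rho)-b')^2\le UH\sqrt{K}$ for every $\lambda\in[0,U]$; dividing by $K$ and substituting the prescribed $\eta=\frac{U(1-\gamma)}{\sqrt K}$ and $K=\frac{U^2}{[f(\mathcal B)]^2(1-\gamma)^2}$ collapses the right-hand side to exactly $f(\mathcal B)$, i.e. $\frac1K\sum_k(\lambda_k-\lambda)(\hat{V}^k_c(\rho)-b')\le f(\mathcal B)$. \emph{(ii) Primal best response:} I apply Assumption \ref{asmp_b1} with $\Box=r+\lambda_k c$ (whose range is $[0,1+\lambda_k]$) against the comparator $\pi^*$, and then use Assumption \ref{asmp_b2} for both $\diamond=r$ and $\diamond=c$ to pass between the true values $V^{\pi_k}_\diamond(\rho)$ and the estimates $\hat{V}^k_\diamond(\rho)$ at an additive cost of $f(\mathcal B)$ each.

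The reward bound comes first. Combining (ii) with $V^{\pi^*}_c(\rho)\ge b$ yields, for each $k$, a per-iteration inequality of the form $\hat{V}^k_r(\rho)+\lambda_k(\hat{V}^k_c(\rho)-b)\ge \vsr(\rho)-2f(\mathcal B)-2\lambda_k f(\mathcal B)$. The crucial bookkeeping step is to rewrite $\hat{V}^k_c(\rho)-b=(\hat{V}^k_c(\rho)-b')-2f(\mathcal B)$ using $b'=b-2f(\mathcal B)$: the resulting $-2\lambda_k f(\mathcal B)$ exactly cancels the $(1+\lambda_k)$-scaled primal error, leaving the clean inequality $\hat{V}^k_r(\rho)+\lambda_k(\hat{V}^k_c(\rho)-b')\ge \vsr(\rho)-2f(\mathcal B)$. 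Averaging over $k$ and invoking (i) with $\lambda=0$ (so that $\frac1K\sum_k\lambda_k(\hat{V}^k_c(\rho)-b')\le f(\mathcal B)$) immediately produces $\bmvr(\rho)\ge \vsr(\rho)-3f(\mathcal B)$.

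For feasibility I would average the same per-iteration inequality but instead invoke (i) with $\lambda=U$, which gives $U(b'-\bmvc(\rho))\le \bmvr(\rho)-\vsr(\rho)+3f(\mathcal B)$. The difficulty is that $\bmvr(\rho)$ can genuinely exceed $\vsr(\rho)$, since intermediate policies may over-earn reward while violating the constraint, so bounding the right-hand side by the trivial $H$ controls the violation only to $O(\zeta)$, which is far too weak. The key idea is to upper bound $\bmvr(\rho)-\vsr(\rho)$ via strong duality: applied to the mixture policy $\bar{\pi}$ (whose value is the average of the $V^{\pi_k}_\diamond$) it yields $\bmvr(\rho)-\vsr(\rho)\le \lambda^*\,(b-\bmvc(\rho))+O(f(\mathcal B))$, where $\lambda^*$ is the optimal multiplier. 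Substituting this and using $U=\tfrac{2}{\zeta(1-\gamma)}\ge 2\lambda^*$ --- which relies on the standard Slater bound $\lambda^*\le\frac{1}{\zeta(1-\gamma)}$ --- makes the coefficient $(U-\lambda^*)$ strictly positive and of order $U$, so the violation $[b-\bmvc(\rho)]_+$ collapses to $O(f(\mathcal B))$, giving $\bmvc(\rho)\ge b-5f(\mathcal B)$ after tracking constants. I expect this feasibility step --- specifically driving the violation down to $O(f(\mathcal B))$ rather than $O(\zeta)$, which forces both the strong-duality comparison and the quantitative choice $U\ge 2\lambda^*$ --- to be the main obstacle.

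Finally, the strict-feasibility half is structurally identical but run against the tighter comparator $\pi^{*+}$ (optimal subject to $V_c(\rho)\ge b+6f(\mathcal B)$, which exists precisely because $f(\mathcal B)\le\zeta/6$) and with $b'=b+4f(\mathcal B)$. The sign of the offset is chosen so that the same cancellation yields $\hat{V}^k_r(\rho)+\lambda_k(\hat{V}^k_c(\rho)-b')\ge \vssr(\rho)-2f(\mathcal B)$, hence the reward bound $\bmvr(\rho)\ge \vssr(\rho)-3f(\mathcal B)$, while the extra $+4f(\mathcal B)$ slack in $b'$ --- together with the sensitivity of the optimal value to the $6f(\mathcal B)$ shift of the threshold --- upgrades the feasibility conclusion to the exact $\bmvc(\rho)\ge b+f(\mathcal B)$.
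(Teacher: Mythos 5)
Your proposal is correct and follows essentially the same route as the paper's proof: the per-iteration inequality obtained by chaining Assumptions \ref{asmp_b1} and \ref{asmp_b2}, the absorption of the $(1+\lambda_k)$-scaled error into the shifted threshold $b'$, the OGD dual-regret bound evaluated at $\lambda=0$ for the reward and $\lambda=U$ for feasibility, and the comparator $\pi^{*+}$ with $b'=b+4f(\mathcal{B})$ for the strict case. The only cosmetic difference is that you inline the strong-duality/sensitivity argument (your bound $\bmvr(\rho)-\vsr(\rho)\le\lambda^*(b-\bmvc(\rho))+O(f(\mathcal{B}))$) where the paper invokes the packaged constraint-violation bound (Lemma \ref{lemma_10}) together with $\lambda^*\le\frac{1}{\zeta(1-\gamma)}$; this is the same mathematical content, though the extra passes between true and estimated values in your version make the final constant ($5f(\mathcal{B})$) slightly more delicate to track than in the paper's direct application of Lemma \ref{lemma_10} to $[b'-\bmvc(\rho)]_+$.
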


\begin{proof}
We begin by proving the first part of the lemma. Since both $r$ and $c$ are bounded by 1, we note that $r(s,a)+\lambda_kc(s,a) \leq 1 + \lambda_k$ for all $(s,a) \in \mathcal{S}\times \mathcal{A}$. Define $\pi^*_k := \argmax_{\pi} V^{\pi}_{r+\lambda_k \, c}$ as an optimal policy in the MDP with rewards $r + \lambda_k \, c$. 
For each iteration $k$ in \cref{alg_CMDPL}, by Assumption \ref{asmp_b1} with $R = 1 + \lambda_k$, we have
\begin{align*}
\vkr(\rho) +  \lambda_{k} \vkc(\rho) - V^{\pi_k}_{r+\lambda_k c} \leq f_{\mathrm{mdp}}(\mathcal{B}) (1+ \lambda_k).
\end{align*}
By Assumption \ref{asmp_b2} for policy $\pi_k$, we have
\begin{align*}
V^{\pi_k}_{r+\lambda_kc}(\rho) - \hat{V}_r^k(\rho) - \lambda_k \hat{V}_c^k (\rho) &= V^{\pi_k}_{r}(\rho) + \lambda_k V^{\pi_k}_{c} (\rho)  - \hat{V}_r^k(\rho) - \lambda_k \hat{V}_c^k (\rho)\\
&= V^{\pi_k}_{r} (\rho) - \hat{V}_r^k(\rho) + \lambda_k (V^{\pi_k}_{c}(\rho)    -  \hat{V}_c^k (\rho))\\
&\leq f_{\mathrm{eva}}(\mathcal{B}) (1+ \lambda_k).
\end{align*}
Combining the above inequalities and letting $f(\mathcal{B})=\max\{f_{\mathrm{mdp}}(\mathcal{B}),f_{\mathrm{eva}}(\mathcal{B})\}$, we obtain
\begin{align*}
\vkr(\rho) +  \lambda_{k} \vkc(\rho) - ( \hat{V}_r^k(\rho) + \lambda_k \hat{V}_c^k (\rho)) &\leq (f_{\mathrm{mdp}}(\mathcal{B})+f_{\mathrm{eva}}(\mathcal{B})) (1 + \lambda_k) \leq 2f(\mathcal{B}) \, (1 + \lambda_k).
\end{align*}
By the definition of $\pi^*_k$,
\begin{align*}
\vsr(\rho) +  \lambda_{k} \vsc(\rho) \leq \vkr(\rho) +  \lambda_{k} \vkc(\rho).
\end{align*}
Therefore, by combining the above inequalities, 
\begin{align}
\vsr(\rho) +  \lambda_{k} \vsc(\rho) &\leq \hat{V}_r^k(\rho) +  \lambda_k \hat{V}_c^k(\rho) + 2f(\mathcal{B}) \, (1 + \lambda_k) \label{eq:pd_inter-1} \\
\Longrightarrow \vsr(\rho) - \hat{V}_r^k(\rho) &\leq \lambda_k (\hat{V}_c^k(\rho) - \vsc(\rho) + 2f(\mathcal{B})) + 2f(\mathcal{B}). \nonumber
\end{align}
Since $\vsc(\rho) \geq b$ and $\lambda_k \geq 0$, we obtain
\begin{align*}
\vsr(\rho) - \hat{V}_r^k(\rho) &\leq \lambda_k (\hat{V}_c^k(\rho) - b + 2f(\mathcal{B})) + 2f(\mathcal{B}).
\end{align*}
By taking the average, letting $b^{\prime} = b - 2f(\mathcal{B})$, and adding both sides by the same term $\frac{\lambda}{K} \sum_{k=0}^{K-1} \left[  b^{\prime} - \hat{V}_c^k(\rho) \right]$,
\begin{align*}
\frac{1}{K} \sum_{k=0}^{K-1} \left[ \vsr(\rho) - \hat{V}_r^k(\rho)  \right] + \frac{\lambda}{K} \sum_{k=0}^{K-1} \left[  b^{\prime} - \hat{V}_c^k(\rho) \right] \leq  \frac{1}{K} \sum_{k=0}^{K-1} (\lambda_k - \lambda) (\hat{V}_c^k(\rho) - b^{\prime} ) + 2f(\mathcal{B}).
\end{align*}
Now we define $R(\lambda,K) := \sum_{k=0}^{K-1} (\lambda_k - \lambda) (\hat{V}_c^k(\rho) - b^{\prime} )$ as the dual regret and denote $\bmvd = \frac{1}{K} \sum_{k=0}^{K-1} \hat{V}^k_{\diamond} $ (where $\diamond=r$ or $c$). Thus, for any $\lambda \in [0,U]$, 
\begin{align}
\vsr(\rho) - \bmvr(\rho) + \lambda(  b^{\prime} - \bmvc(\rho)) \leq  \frac{R(\lambda,K)}{K}  + 2f(\mathcal{B}).  \label{lem_PD_3}
\end{align}
Below we show that for any $\lambda \in[0, U]$, the following bound holds for the dual regret:
\begin{align*}
R(\lambda,K) \leq \frac{U \sqrt{K}}{1-\gamma}.
\end{align*} 
Using the dual update in~\cref{alg_CMDPL}, we observe that, 
\begin{align*}
\left|\lambda_{k+1}-\lambda\right|^{2} &\leq \left|\lambda_{k}-\eta\left(\hat{V}_c^k(\rho) -b^{\prime}\right)-\lambda\right|^{2} \tag{by non-expansiveness of projection} \\
& = \left|\lambda_{k}-\lambda\right|^{2}-2 \eta\left(\lambda_{k}-\lambda\right)\left(\hat{V}_c^k(\rho)-b^{\prime} \right)+\eta^{2}\left(\hat{V}_c^k(\rho)-b^{\prime}\right)^{2} \\
&\overset{(a)}{\leq} \left|\lambda_{k}-\lambda\right|^{2}-2 \eta\left(\lambda_{k}-\lambda\right)\left(\hat{V}_c^k(\rho)-b^{\prime} \right)+\frac{\eta^{2}}{(1-\gamma)^{2}},
\end{align*}
where (a) follows because $b$ and the constraint value are in the $[0,1 /(1-\gamma)]$ interval. Rearranging and dividing by $2 \eta$, we get
$$
\left(\lambda_{k}-\lambda\right)\left(\hat{V}_c^k(\rho)-b^{\prime} \right) \leq \frac{\left|\lambda_{t}-\lambda\right|^{2}-\left|\lambda_{k+1}-\lambda\right|^{2}}{2 \eta}+\frac{\eta}{2(1-\gamma)^{2}} .
$$
Summing from $k=0$ to $K-1$ and using the definition of the dual regret,
$$
R(\lambda, K) \leq \frac{1}{2 \eta} \sum_{k=0}^{K-1}\left[\left|\lambda_{k}-\lambda\right|^{2}-\left|\lambda_{k+1}-\lambda\right|^{2}\right]+\frac{\eta K}{2(1-\gamma)^{2}}.
$$
Telescoping, bounding $\left|\lambda_{0}-\lambda\right|$ by $U$ and dropping a negative term gives
$$
R(\lambda, K) \leq \frac{U^{2}}{2 \eta}+\frac{\eta K}{2(1-\gamma)^{2}},
$$
Setting $\eta=\frac{U(1-\gamma)}{\sqrt{K}}$,
\begin{align}
R(\lambda, K) \leq \frac{U \sqrt{K}}{1-\gamma}. \label{lem_PD_R}
\end{align}
Next, in order to bound the reward optimality gap, setting $\lambda=0$ in \cref{lem_PD_3} and using the above bound on the dual regret, we obtain
\begin{align}
\vsr(\rho) - \bmvr(\rho)  \leq \frac{U}{(1-\gamma) \sqrt{K}} + 2f(\mathcal{B}). \label{eqn_rgua}
\end{align}
In order to bound the constraint violation, we consider two cases. The first case is when $b^{\prime} - \bmvc(\rho) \leq 0$. Consequently, $b - 2f(\mathcal{B}) - \bmvc(\rho)   \leq 0$ and hence, $\bmvc(\rho) \geq b - 2f(\mathcal{B}) \geq b - 5f(\mathcal{B})$, which completes the proof. 

The second case is when $b^{\prime} - \bmvc(\rho) >0$. In this case, using the notation $[x]_{+}=\max \{x, 0\}$ and~\cref{lem_PD_3} with $\lambda = U$, we have
$$
\vsr(\rho) - \bmvr(\rho) + U\left[  b^{\prime} - \bmvc(\rho)\right]_{+} \leq \frac{R(U, K)}{K} + 2f(\mathcal{B}).
$$
Since $U$ has been set such that $U>\lambda^{*}$, we can use Lemma \ref{lemma_10} and obtain that, 
$$
\left[  b^{\prime} - \bmvc(\rho)\right]_{+} \leq \frac{R(U, K)}{K\left(U-\lambda^{*}\right)} + \frac{2f(\mathcal{B})}{U-\lambda^{*}}
$$
Combining the above inequality with \cref{lem_PD_R} gives
\begin{align}
\label{ineq_leminduct}
 b^{\prime} - \bmvc(\rho)  \leq \left[  b^{\prime} - \bmvc(\rho)\right]_{+}   & \leq \frac{U}{\left(U-\lambda^{*}\right)(1-\gamma) \sqrt{K}} + \frac{2f(\mathcal{B})}{U-\lambda^{*}}.
\end{align}
By Lemma \ref{lem_bounddualv}, we know $\lambda^* \leq \frac{1}{\zeta(1-\gamma)}$. By letting $U=\frac{2}{\zeta(1-\gamma)}$, we have $U-\lambda^* \geq  \frac{1}{\zeta(1-\gamma)} \geq 1$ as the Slater constant $\zeta \in (0,\frac{1}{1-\gamma}]$. Thus, $\frac{1}{U-\lambda^*} \leq 1$.
Now, setting $K$ to to be
$$
K= \frac{U^{2}}{[f(\mathcal{B})]^2(1-\gamma)^{2}}
$$
and substituting into \cref{eqn_rgua,ineq_leminduct}, we obtain
\begin{align}
\bmvr(\rho) \geq \vsr(\rho) - 3f(\mathcal{B}), \quad \text{and} 
\quad  \bmvc(\rho) \geq b^{\prime} - 3f(\mathcal{B}). \label{ineq_samec}
\end{align}
This establishes the first claim by substituting $b^{\prime} = b- 2 f(\mathcal{B})$. 

Next, we prove the second claim. We define $ \pi^{*+}\in \mathrm{argmax}_{\pi}  V^{\pi}_r(\rho) \text{ s.t. } V^{\pi}_c(\rho) \geq b + 6f(\mathcal{B})$.
From~\cref{eq:pd_inter-1}, recall that
\begin{align*}
\vkr(\rho) +  \lambda_{k} \vkc(\rho) - (\hat{V}_r^k(\rho) +  \lambda_k \hat{V}_c^k(\rho)) \leq 2f(\mathcal{B}) \, (1 + \lambda_k)
\end{align*}
As before, using the definition of $\pi^*_k$, we have
\begin{align*}
\vkr(\rho) +  \lambda_{k} \vkc(\rho) \geq \vssr(\rho) +  \lambda_{k} \vssc(\rho),
\end{align*}
Therefore, by combining the above inequalities, 
\begin{align}
\vssr(\rho) +  \lambda_{k} \vssc(\rho) &\leq \hat{V}_r^k(\rho) +  \lambda_k \hat{V}_c^k(\rho) + 2f(\mathcal{B}) \, (1 + \lambda_k) \label{eq:pd_inter-1} \\
\Longrightarrow \vssr(\rho) - \hat{V}_r^k(\rho) &\leq \lambda_k (\hat{V}_c^k(\rho) - \vssc(\rho) + 2f(\mathcal{B})) + 2f(\mathcal{B}). \nonumber
\end{align}
Since $\vssc(\rho) \geq b+6f(\mathcal{B})$, we obtain, 
\begin{align*}
\vssr(\rho) - \hat{V}_r^k(\rho) &\leq \lambda_k [\hat{V}_c^k(\rho) - (b + 3f(\mathcal{B}))] + 2f(\mathcal{B}).
\end{align*}
As before, by taking the average, letting $b^{\prime} = b + 4f(\mathcal{B}) $, and adding both sides by the same term $\frac{\lambda}{K} \sum_{k=0}^{K-1} \left[  b^{\prime} - \hat{V}_c^k(\rho) \right]$, we obtain that for $\lambda \in [0,U]$, 
\begin{align*}
\vssr(\rho) - \bmvr(\rho) + \lambda(  b^{\prime} - \bmvc(\rho)) \leq  \frac{R(\lambda,K)}{K}  + 2f(\mathcal{B}).
\end{align*}
The remainder of the proof proceeds in the same manner as before.
Setting $K$ to to be
$$
K=\frac{U^{2}}{[f(\mathcal{B})]^2(1-\gamma)^{2}}
$$
the algorithm ensures that
\begin{align}
\bmvr(\rho) \geq \vssr(\rho) - 3f(\mathcal{B}), \quad \text{and} 
\quad  \bmvc(\rho) \geq b^{\prime}  - 3f(\mathcal{B}). \label{ineq_samec}
\end{align}
This establishes the second claim by substituting $b^{\prime} = b + 4 f(\mathcal{B})$. 
\end{proof}

\section{Proofs for Section \ref{sec_linearCMDP}}

The proofs in Section \ref{Appendix_Table-Tabular}, Section \ref{sec_LP1} and Section \ref{sec_LP2} are adapted from \citet{kitamura2023regularization, kozuno2022kl} with modifications to fit our setting. Specifically, the analysis in \cite{kitamura2023regularization} applies to the \textit{non-stationary policies} returned by \texttt{MDVI} \textit{with} entropy regularization. In contrast, our analysis applies to the \textit{stationary policy} returned by MDVI \textit{without} entropy regularization. Furthermore, we also require additional analysis of the value functions returned by the \texttt{LS-PE} algorithm.

Throughout, we treat $\pi$ as an operator that returns an $|\mathcal{S}|$-dimensional vector s.t. for an arbitrary $|\mathcal{S}||\mathcal{A}|$-dimensional vector $u$ such that $(\pi u)(s):=\sum_{a\in \mathcal{A}}\pi(a|s) \, u(s,a)$. Furthermore, we define $P_{\pi}:= \pi P$ where $P_{\pi} \in \mathbb{R}^{|\mathcal{S}|\times |\mathcal{S}|}$ and denotes the transition probability matrix induced by policy $\pi$.

\subsection{Deriving \texttt{LS-MDVI} from Entropic Mirror Descent}
\label{Appendix_Table-Tabular}
We show that the \texttt{LS-MDVI} update can be derived as a limiting case of entropic mirror descent. At iteration $t$, given $Q_t$, if $\kappa$ is the entropy regularization parameter and $\tau$ is the KL regularization parameter, then, the entropic mirror descent policy update~\citet{kitamura2023regularization} is:
\begin{align*}
\pi_t(\cdot|s) & = \arg\max_{p \in \Delta(\mathcal{A})} \sum_{a \in \mathcal{A}} p(a) \left( Q^t(s,a) - \tau \log \frac{p(a)}{\pi_{t-1}(a|s)} - \kappa \log p(a) \right), \quad \text{for all } s \in \mathcal{S}, \\
\intertext{The above policy update can be rewritten in a closed-form solution as follows~\citep[Equation 5]{kozuno2019theoretical}),}
\pi_t(a|s) & = \frac{[\pi_{t-1}(a|s)]^\alpha \exp\left(\beta Q^t(s,a)\right)}{\sum_{b \in \mathcal{A}} [\pi_{t-1}(b|s)]^\alpha \exp\left(\beta Q^t(s,b)\right)}, \text{ where $\alpha := \tau / (\tau + \kappa)$, $\beta := 1 / (\tau + \kappa)$} \\ 
\implies \pi_t(a|s) & = \frac{\exp(\beta \sum_{i=0}^t \alpha^{t-i} \, Q^i(s,a))}{\sum_{b \in \mathcal{A}} \exp(\beta \sum_{i=0}^t \alpha^{t-i} \, Q^i(s,b))}.
\intertext{Since $\texttt{LS-MDVI}$ does not use entropy regularization $\kappa = 0$ implying $\alpha = 1$, the resulting update is:}
\pi_t(a|s) & = \frac{\exp(\beta \sum_{i=0}^t Q^i(s,a))}{\sum_{b \in \mathcal{A}} \exp(\beta \sum_{i=0}^t Q^i(s,b)} = \frac{1}{1 + \sum_{b \neq a} \exp(\beta (\bar{Q}^t(s,b) - \bar{Q}^t(s,a))} \tag{where $\bar{Q}^t := \sum_{i=0}^t Q^i$}
\end{align*}
For \texttt{LS-MDVI}, we take the limit $\tau \to 0$, $\beta \to \infty$ and consider two cases. 

\textbf{Case 1}: If $a = \argmax_{b} \bar{Q}^t(s,b)$, then, $\beta (\bar{Q}^t(s,b) - \bar{Q}^t(s,a)) < 0$ for all $b \neq a$. Hence, as $\beta \to \infty$, $\sum_{b \neq a} \exp(\beta (\bar{Q}^t(s,b) - \bar{Q}^t(s,a)) \to 0$ and $\pi_t(a|s) \to 1$. 

\textbf{Case 2}: If $a \neq \argmax_{b} \bar{Q}^t(s,b)$, then, $\beta (\bar{Q}^t(s,b) - \bar{Q}^t(s,a))  > 0$ for the action $b$ corresponding to the $\argmax$ action. Hence, as $\beta \to \infty$, $\sum_{b \neq a} \exp(\beta (\bar{Q}^t(s,b) - \bar{Q}^t(s,a)) \to \infty$ and $\pi_t(a|s) \to 0$. 

Hence, as $\kappa = 0$ and $\tau \to 0$, $\pi_t$ is a greedy policy and for all $s \in \cS$, $\pi_t(a|s) = 1$ for $a = \argmax_{b}  \sum_{i=0}^t Q^i(s,b)$, which recovers the policy update for \texttt{LS-MDVI}. 

For entropic mirror descent, the value update is given as~\citep{kitamura2023regularization}, for all $s \in \cS$, 
\begin{align*}
V^t(s) & =  \sum_{a} \pi_t(a|s) \left(Q^t(s,a) - \tau \log\left(\frac{\pi_t(a|s)}{\pi_{t-1}(a|s)}\right) - \kappa \, \ln(\pi_t(a|s)) \right) \\
& = (\pi_{t} Q^{t})(s) - \tau \mathrm{KL}(\pi_{t}(\cdot|s)\| \pi_{t-1}(\cdot|s)) + \kappa \, \mathcal{H} (\pi_{t}(\cdot|s)). \\
\intertext{Plugging the entropic mirror descent policy update and simplifying similar to~\citep[App. B]{kozuno2022kl}, we get,}
V^t(s) &= \frac{1}{\beta} \log \sum_{a \in \mathcal{A}} \exp\left( \beta Q^t(s,a) + \alpha \log \pi_{t-1}(a|s) \right) \\
&= \frac{1}{\beta} \log \sum_{a \in \mathcal{A}} \exp\left( \beta \sum_{i=0}^t \alpha^{t-i} Q^i(s,a) \right) - \frac{\alpha}{\beta} \log \sum_{a \in \mathcal{A}} \exp\left( \beta \sum_{i=0}^{t-1} \alpha^{t-i} Q^i (s,a) \right).
\intertext{Since $\texttt{LS-MDVI}$ does not use entropy regularization i.e. $\kappa = 0$ implying $\alpha = 1$, the update is:}
V^t(s) &= \frac{1}{\beta} \log \sum_{a \in \mathcal{A}} \exp\left( \beta \sum_{i=0}^t Q^i(s,a) \right) - \frac{1}{\beta} \log \sum_{a \in \mathcal{A}} \exp\left( \beta \sum_{i=0}^{t-1}  Q^i (s,a) \right). 
\end{align*}
For \texttt{LS-MDVI}, we take the limit $\tau \to 0$, $\beta \to \infty$. Using L\'Hopital's rule for the two terms, we get that, 
\begin{align*}
V^t(s) &= \sum_{a} \pi_t(a|s) \,  \sum_{i=0}^t Q^i(s,a) - \sum_{a} \pi_{t-1}(a|s) \, \sum_{i=0}^{t-1} Q^i(s,a) \\
& = \left(\pi_t \sum_{i=0}^t Q^i\right)(s) - \left(\pi_{t-1} \sum_{i=0}^{t-1} Q^i\right)(s)\,,
\end{align*}
which recovers the value update for \texttt{LS-MDVI}.

\subsection{Proof of Lemma \ref{lem_optGL} (Optimality Guarantees for Algorithm \ref{alg_MDVICLMDP} - Linear CMDP)}
\label{sec_LP1}
Note that for each $\lambda_k$ where $k \in [K]$, we run Algorithm \ref{alg_MDVICLMDP} with $\Box = r + \lambda_k \, c$. We define 
\begin{align}
\pi^*_k & := \argmax_{\pi} V^{\pi}_{r+\lambda_kc} \label{pi_sk} \\
\bvkb & := \frac{1}{T} \sum_{i=1}^T \hat{V}_{\Box}^i \overset{\text{(by telescoping)}}{=} \frac{1}{T}  (\pi_T \tilde{Q}_{\Box}^T) \overset{\text{(by definition)}}{=} \frac{1}{T} \left( \pi_T \left\langle \phi, \sum_{i=0}^T \theta^i_{\Box} \right\rangle \right). \label{ave_empv} 
\end{align}

Throughout the proof, for any $|\mathcal{S}||\mathcal{A}|$-dimensional vector $z$, we let $W(z)$ denote the solution to a weighted linear regression problem over the core set,
\begin{align}
W(z) := \argmin_{\theta} \sum_{(x,b) \in \mathcal{C}} \tilde{\rho}(x,b) \left( z(x,b) - \langle \phi(x,b), \theta \rangle\right)^2.
\label{eq:W-def-1}
\end{align}
The above problem can be solved as
\begin{align}
W(z) = G^{-1} \sum_{(x,b) \in \mathcal{C}} {\tilde{\rho}(x,b) \phi(x,b) z(x,b)}
\label{eq:W-def-2}
\end{align}
where $G := \sum_{(x,b) \in \mathcal{C}} \tilde{\rho}(x, b) \, \phi(x, b) \phi(x, b)^\top$.

Using this definition and the definition of ${\theta}_{\Box}^{i}$ in~\cref{alg_MDVICLMDP}, we have ${\theta}_{\Box}^{i}=W(\hqkib)$.

The linear MDP assumption ensures that there exists a vector $\boldsymbol{\theta}_{\Box}^{t}$ such that $\langle \phi, \boldsymbol{\theta}_{\Box}^{t} \rangle := \Box + \gamma P \hat{V}_{\Box}^{t-1}$. Therefore, using the definition of $W$, we have ${\boldsymbol{\theta}}_{\Box}^{i}=W(\langle \phi, {\boldsymbol{\theta}}_{\Box}^{i} \rangle)$.

We now present the proof of Lemma \ref{lem_optGL}.

\OPGuaranteeaL*

\begin{proof}
Using the definition of $\pi^*_k$ and that $\Box = r + \lambda_k \, c$, we decompose the sub-optimality as:
\begin{align*}
V^{\pi^*_k}_{r+\lambda_kc}(\rho) -  V^{\pi_T}_{r+\lambda_kc}(\rho) &=[V^{\pi^*_k}_{r+\lambda_kc}(\rho)  - \bvkb(\rho)] +
[\bvkb(\rho) -  V^{\pi_T}_{r+\lambda_kc}(\rho)]\\
\intertext{Bounding the first term by Lemma \ref{lem_conLMDP} and the second by Lemma \ref{lem_conLMDP2},}
&\leq \tilde{O}\left( \frac{H^2(1+\lambda_k)}{T} +{ H^2(1+\lambda_k)}  \sqrt{\frac{d}{TM}} \right)
\end{align*}
with probability at least $1-2\delta$. 
Setting $M=\tilde{O}\left( \frac{dH^2}{\epsilon} \right)$, $T=O(\frac{H^2}{\epsilon})$, and appropriately rescaling the confidence parameter $\delta$ completes the proof.
\end{proof}

We now prove~\cref{lem_conLMDP,lem_conLMDP2}. 
\begin{lemma}
\label{lem_conLMDP}
Let $\pi^*_k$ and $\bvkb$ be defined as in \cref{pi_sk,ave_empv}. For any $k \in [K]$,  with $\Box = r+\lambda_k \, c$ and $M \geq \tilde{O}\left( {dH^2}\right)$, we have
\begin{align*}
V^{\pi^*_k}_{r+\lambda_kc}(\rho)  - \bvkb(\rho)  \leq  \tilde{O}\left( \frac{H^2(1+\lambda_k)}{T} +{ H^2(1+\lambda_k)}  \sqrt{\frac{d}{TM}} \right)
\end{align*}
with probability at least $1-\delta$.
\end{lemma}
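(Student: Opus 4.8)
The plan is to follow the \texttt{MDVI} error decomposition of~\citet{kozuno2022kl, kitamura2023regularization}, adapted to the stationary, entropy-free update and the least-squares setting. First I would exploit the linear MDP structure (\cref{asm_32}) to split each regression estimate into an exact Bellman backup plus sampling noise. Using the notation of the table, the regression target at iteration $t+1$ is $\hqktb{t+1} = \Box + \gamma \hat{P}_t \hvkb$, whereas the idealized parameter $\boldsymbol{\theta}_{\Box}^{t+1}$ satisfies $\langle \phi, \boldsymbol{\theta}_{\Box}^{t+1}\rangle = \Box + \gamma P \hvkb$. Since the least-squares map $W(\cdot)$ is linear, the per-iteration error is $\theta^{t+1}_{\Box} - \boldsymbol{\theta}^{t+1}_{\Box} = W(\xi^{t+1})$ with noise $\xi^{t+1} := \gamma (\hat{P}_t - P)\hvkb$. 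The key observation is that because each iteration draws a fresh batch $\mathcal{B}_t$ while $\hvkb$ is measurable with respect to $\mathcal{B}_0,\dots,\mathcal{B}_{t-1}$, the sequence $\{\xi^{t+1}\}$ is a martingale-difference sequence with respect to the filtration $\{\mathcal{F}_t\}$.

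Next I would unroll the accumulated action-value function. Telescoping the value update gives $\bvkb = \frac{1}{T}(\pi_T \tilde{Q}^T_{\Box}) = \frac{1}{T}\max_a \tilde{Q}^T_{\Box}(\cdot,a)$, and substituting $\theta^i_{\Box} = \boldsymbol{\theta}^i_{\Box} + W(\xi^i)$ into $\tilde{Q}^T_{\Box} = \langle\phi, \sum_{i=0}^T \theta^i_{\Box}\rangle$ expresses $\tilde{Q}^T_{\Box}$ as a sum of exact Bellman backups plus the \emph{accumulated} error $E^T := \langle\phi, W(\sum_{i=1}^T \xi^i)\rangle$ (again using linearity of $W$). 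This accumulation is the crux: the errors enter through the single sum $\sum_{i=1}^T \xi^i$ rather than as $\sum_{i=1}^T |\xi^i|$, which is precisely what enables the $O(1/\sqrt{T})$ variance reduction. Comparing $V^{\pi^*_k}_{\Box}$ against this recursion --- lower-bounding the $\max_a$ by its value at the action chosen by $\pi^*_k$ and propagating through $(I - \gamma P_{\pi^*_k})^{-1}$ --- bounds $V^{\pi^*_k}_{\Box}(\rho) - \bvkb(\rho)$ by an \emph{optimization} term and a \emph{statistical} term. The optimization term is the Ces\`aro-averaging error of value iteration: it scales as the value range $O(H(1+\lambda_k))$ divided by $T$ and inflated by one horizon factor from the propagation, giving $\tilde{O}(H^2(1+\lambda_k)/T)$.

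For the statistical term I would bound $\sup_{(s,a)}|\langle\phi(s,a), W(\tfrac{1}{T}\sum_i \xi^i)\rangle|$. Evaluated at any coreset point, $\tfrac{1}{T}\sum_i \xi^i$ is an average of $TM$ bounded martingale differences, each at most $\gamma \|\hvkb\|_\infty \leq \gamma H(1+\lambda_k)$, so an Azuma/Freedman inequality yields a per-point magnitude of $\tilde{O}(H(1+\lambda_k)/\sqrt{TM})$. The $G$-optimal design property $g(\tilde\rho)\leq 2d$ then controls the least-squares amplification uniformly over $(s,a)$: writing the evaluation as $\phi(s,a)^\top G^{-1}\sum_{(x,b)\in\mathcal{C}}\tilde\rho(x,b)\phi(x,b)\,(\tfrac{1}{T}\sum_i\xi^i)(x,b)$, the variance proxy is controlled by $\phi(s,a)^\top G^{-1}\phi(s,a)\leq 2d$, supplying the extra $\sqrt{d}$ factor and an error $\tilde{O}(H(1+\lambda_k)\sqrt{d/(TM)})$ per point. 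Propagating this through the horizon contributes one more factor of $H$, producing the claimed $\tilde{O}(H^2(1+\lambda_k)\sqrt{d/(TM)})$ term; a union bound over the iterations and a rescaling of $\delta$ then give the high-probability guarantee.

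The step I expect to be the main obstacle is establishing the \texttt{MDVI} telescoping rigorously so that the statistical errors genuinely appear only in the \emph{accumulated} form $\sum_i \xi^i$; keeping them out of the absolute-summed form is what separates the $\sqrt{d/(TM)}$ rate from a far worse $\sqrt{d/M}$ rate. This requires carefully tracking the greedy-policy comparison against the averaged value recursion while simultaneously maintaining, by induction on $t$, the a priori bound $\|\hvkb\|_\infty = O(H(1+\lambda_k))$ needed for the martingale increments to be controlled --- and it is exactly this boundedness induction that forces the requirement $M \geq \tilde{O}(dH^2)$, since too few samples per iteration would let the accumulated error violate the bound. Handling the data-dependence of $\hvkb$ (hence the need for martingale rather than i.i.d.\ concentration, and the fresh per-iteration batches $\mathcal{B}_t$) is the other delicate ingredient.
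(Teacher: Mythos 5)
Your proposal is correct and follows essentially the same route as the paper's proof: the value-difference decomposition against $\pi^*_k$ combined with the greediness of $\pi_T$, the telescoped representation of $\tfrac{1}{T}\tilde{Q}^T_{\Box}$ as exact Bellman backups plus the single accumulated error $\langle\phi, W(\tfrac{1}{T}\sum_i \xi^i)\rangle$ (the paper's Lemma \ref{lem_sumtheta}), the Kiefer--Wolfowitz $\sqrt{2d}$ amplification from $g(\tilde\rho)\le 2d$ together with Azuma over the $TM$ martingale increments (Lemma \ref{lem_thetaconcentL}), and the boundedness induction $\|\hvkb\|_\infty\le 2H(1+\lambda_k)$ that forces $M\ge\tilde O(dH^2)$ (Lemma \ref{lem_rangeVL}). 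You also correctly identify the two delicate points --- keeping the noise in accumulated rather than absolute-summed form, and the data-dependence necessitating martingale concentration over fresh batches --- so no gaps to report.
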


\begin{proof} 
We first recall that $\vkb =V^{\pi^*_k}_{r+\lambda_kc}$ and $\bvkb = \bar{V}_{r+\lambda_kc}^T$ by the definition of $\Box$. By the value difference lemma, we have that, 
\begin{align}
\vkb - \bvkb &=  (I-\gamma P_{\pi^*_k})^{-1} ( (\pi^*_k \Box) + \gamma P_{\pi^*_k} \, \bvkb - \bvkb)  \label{eqn_pioutL}    
\end{align}

Next, from Line 6 in Algorithm \ref{alg_MDVICLMDP}, by the telescoping sum, and by the greediness of $\pi_{T}$, we have
\begin{align}
\bvkb &=  \frac{1}{T} \, (\pi_{T} \tilde{Q}_{\Box}^T)\label{eqn_VQL0} \\
&\geq  \frac{1}{T} \,  (\pi^*_k  \tilde{Q}_{\Box}^T). \label{eqn_VQL}
\end{align}
Now, we have
\begin{align*}
\vkb - \bvkb &=  (I-\gamma P_{\pi^*_k})^{-1} ( (\pi^*_k \Box) + \gamma P_{\pi^*_k} \, \bvkb - \bvkb) \tag{By \cref{eqn_pioutL}} \\
&\leq (I-\gamma P_{\pi^*_k})^{-1} ( (\pi^*_k \Box) + \gamma P_{\pi^*_k} \, \bvkb - \frac{1}{T} \,  (\pi^*_k  \tilde{Q}_{\Box}^T)) \tag{By \cref{eqn_VQL}}\\
&= (I-\gamma P_{\pi^*_k})^{-1} ( (\pi^*_k \Box) + \gamma P_{\pi^*_k} \, \frac{1}{T} \,  (\pi_T  \tilde{Q}_{\Box}^T) - \frac{1}{T} \,  (\pi^*_k  \tilde{Q}_{\Box}^T)) \tag{By~\cref{eqn_VQL0}} \\
&= (I-\gamma P_{\pi^*_k})^{-1} \, \left[ (\pi^*_k \Box) + \gamma P_{\pi^*_k} \, \frac{1}{T} \, (\pi_{T} \tilde{Q}_{\Box}^T) -  (\pi^*_k \Box) - \gamma P_{\pi^*_k} \, \frac{1}{T} (\pi_{T-1} \tilde{Q}^{T-1}_{\Box})   \right.\\
&\left. \indent - \left(\pi^*_k  \left\langle \phi, W \left(\frac{1}{T} \sum_{i=0}^{T}(\hqkib-\langle \phi, {\boldsymbol{\theta}}_{\Box}^{i}) \rangle)\right) \right\rangle \right) \right] \tag{Using Lemma \ref{lem_sumtheta} for $\frac{1}{T} \tilde{Q}_{\Box}^T$}\\
&= (I-\gamma P_{\pi^*_k})^{-1} \left[ \gamma P_{\pi^*_k} \, \frac{1}{T}  \, (\pi_{T} \tilde{Q}_{\Box}^T) -  \gamma P_{\pi^*_k } \, \frac{1}{T} \, (\pi_{T-1} \tilde{Q}^{T-1}_{\Box}) \right. \\
&\left. \indent - \left(\pi^*_k  \left\langle \phi, W \left(\frac{1}{T} \sum_{i=0}^{T}(\hqkib-\langle \phi, {\boldsymbol{\theta}}_{\Box}^{i}) \rangle)\right) \right\rangle \right) \right].
\end{align*}
By defining $\mathcal{H}_{\pi^*_k} := (I-\gamma P_{\pi^*_k})^{-1}$, taking the infinity norm and using the triangle inequality, we obtain
\begin{align}
\norminf{\vkb - \bvkb} &\leq \underbrace{\norminf{\gamma\mathcal{H}_{\pi^*_k} P_{\pi^*_k}\left(\frac{1}{T} \,  (\pi_{T} \tilde{Q}_{\Box}^T) - \frac{1}{T} \, (\pi_{T-1} \tilde{Q}^{T-1}_{\Box}) \right)}}_{\text{Term (i)}} \nonumber \\
& +  \underbrace{\norminf{ \mathcal{H}_{\pi^*_k} \left(\pi^*_k  \left\langle \phi, W \left(\frac{1}{T} \sum_{i=0}^{T}(\hqkib-\langle \phi, {\boldsymbol{\theta}}_{\Box}^{i}) \rangle)\right) \right\rangle \right)}}_{\text{Term (ii)}}. \label{eqn_boxdec1}
\end{align}
In order to bound Term (i), we use Holder's inequality i.e. for a matrix $A$ and vector $x$, \\ $\norminf{Ax} \leq \norm{A}_{1, \infty} \, \norminf{x}$, and that $\| \mathcal{H}_{\pi^*_k} \, P_{\pi^*_k} \|_{1,\infty} \leq H$ to obtain, 
\begin{align*}
\norminf{\gamma \, \mathcal{H}_{\pi^*_k} P_{\pi^*_k} \, \left(\frac{1}{T} \,  (\pi_{T} \tilde{Q}_{\Box}^T) - \frac{1}{T} \, (\pi_{T-1} \tilde{Q}^{T-1}_{\Box}) \right)} & \leq H \, \left\|\left(\frac{1}{T} \,  (\pi_{T} \tilde{Q}_{\Box}^T) - \frac{1}{T} \, (\pi_{T-1} \tilde{Q}^{T-1}_{\Box}) \right) \right\|_\infty \\
& \leq \frac{4H^2(1+\lambda_k)}{T}\tag{Using~\cref{lem_vmvL}}
\end{align*}
with probability at least $1-\delta$. For term (ii), 
\begin{align*}
& \norminf{\mathcal{H}_{\pi^*_k} \left(\pi^*_k  \left\langle \phi, W \left(\frac{1}{T} \sum_{i=0}^{T}(\hqkib-\langle \phi,{\boldsymbol{\theta}}_{\Box}^{i}\rangle)\right) \right\rangle \right)} \\
& \leq \norm{\mathcal{H}_{\pi^*_k}}_{1,\infty} \, \norm{\left(\pi^*_k  \left\langle \phi, W \left(\frac{1}{T} \sum_{i=0}^{T}(\hqkib-\langle \phi,{\boldsymbol{\theta}}_{\Box}^{i}\rangle)\right) \right\rangle \right)}_\infty \, \tag{By Holder's inequality} \\
& \leq H \, \norm{\left(\pi^*_k  \left\langle \phi, W \left(\frac{1}{T} \sum_{i=0}^{T}(\hqkib-\langle \phi,{\boldsymbol{\theta}}_{\Box}^{i}\rangle)\right) \right\rangle \right)}_\infty \tag{Since $\| \mathcal{H}_{\pi^*_k}  \|_{1,\infty} \leq H$} \\
& \leq H \, \left\| \phi^{\top} W \left(\frac{1}{T} \sum_{i=1}^{T}(\hqkib- \langle \phi,{\boldsymbol{\theta}}_{\Box}^{i}\rangle) \right)   \right\|_{\infty} \tag{By definition of the $\pi$ operator} \\ 
& \leq \tilde{O}\left( { H^2(1+\lambda_k)}  \sqrt{\frac{d}{TM}} \right) \tag{By Lemma~\ref{lem_thetaconcentL}}
\end{align*}
Combining the above relations, 
\begin{align*}
\norminf{\vkb - \bvkb} & \leq \frac{4H^2(1+\lambda_k)}{T} + \tilde{O}\left( { H^2(1+\lambda_k)} \sqrt{\frac{d}{TM}} \right)  
\end{align*}
Using that for any $|\cS|$-dimensional vector $V$, $V(\rho) = \mathbb{E}_{s \sim \rho} V(s) \leq \| V \|_{\infty}$, we get that, 
\begin{align*}
V^{\pi^*_k}_{r+\lambda_kc}(\rho)  - \bvkb(\rho)   \leq \frac{4H^2(1+\lambda_k)}{T} + \tilde{O}\left( { H^2(1+\lambda_k)} \sqrt{\frac{d}{TM}} \right)
\end{align*}
with probability at least $1-\delta$. 
\end{proof}

\begin{lemma}
\label{lem_conLMDP2}
Let $\bvkb$ be defined as in \cref{ave_empv}. For any $k \in [K]$,  with $\Box = r + \lambda_k \, c$ and $M \geq \tilde{O}\left( {dH^2}\right)$, we have
\begin{align*}
\bvkb(\rho) -  V^{\pi_T}_{r+\lambda_kc}(\rho) \leq  \tilde{O}\left( \frac{H^2(1+\lambda_k)}{T} +{ H^2(1+\lambda_k)}  \sqrt{\frac{d}{TM}} \right)
\end{align*}
with probability at least $1-\delta$.
\end{lemma}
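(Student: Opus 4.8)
The plan is to mirror the proof of~\cref{lem_conLMDP}, but with the output policy $\pi_T$ playing the role previously played by $\pi^*_k$. The key structural simplification is that we are now comparing the empirical average value $\bvkb$ against the \emph{true} value $\vktb$ of the \emph{same} policy $\pi_T$, so the value difference lemma yields an exact identity and the greediness step (\cref{eqn_VQL}) used in~\cref{lem_conLMDP} is unnecessary. Applying the value difference lemma to $\pi_T$ and writing $\mathcal{H}_{\pi_T} := (I-\gamma P_{\pi_T})^{-1}$ gives the exact relation
\begin{align*}
\bvkb - \vktb = \mathcal{H}_{\pi_T}\left( \bvkb - (\pi_T \Box) - \gamma P_{\pi_T}\, \bvkb \right).
\end{align*}

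Next I would substitute the exact decomposition of $\bvkb = \tfrac{1}{T}(\pi_T \tilde{Q}^{T}_{\Box})$ supplied by~\cref{lem_sumtheta}, namely
\begin{align*}
\tfrac{1}{T}\tilde{Q}^{T}_{\Box} = \Box + \gamma P\,\tfrac{1}{T}(\pi_{T-1}\tilde{Q}^{T-1}_{\Box}) + \left\langle \phi,\, W\!\left(\tfrac{1}{T}\sum_{i=0}^{T}\bigl(\hqkib - \langle \phi, \boldsymbol{\theta}^{i}_{\Box}\rangle\bigr)\right)\right\rangle ,
\end{align*}
acted on by the operator $\pi_T$ (using $\pi_T P = P_{\pi_T}$). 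The $(\pi_T\Box)$ terms cancel, and after taking $\infty$-norms and the triangle inequality we arrive at exactly the same two-term structure as~\cref{eqn_boxdec1}:
\begin{align*}
\norminf{\bvkb - \vktb} \leq \norminf{\gamma\, \mathcal{H}_{\pi_T} P_{\pi_T}\!\left(\tfrac{1}{T}(\pi_{T-1}\tilde{Q}^{T-1}_{\Box}) - \tfrac{1}{T}(\pi_{T}\tilde{Q}^{T}_{\Box})\right)} + \norminf{\mathcal{H}_{\pi_T}\!\left(\pi_{T}\left\langle \phi,\, W\!\left(\tfrac{1}{T}\sum_{i=0}^{T}\bigl(\hqkib - \langle \phi, \boldsymbol{\theta}^{i}_{\Box}\rangle\bigr)\right)\right\rangle\right)} .
\end{align*}
The only differences from~\cref{lem_conLMDP} are that $\pi^*_k$ is replaced by $\pi_T$ everywhere and the sign of the consecutive-$\tilde{Q}$ difference is flipped, which is immaterial under the $\infty$-norm.

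Finally I would bound the two terms verbatim as before. For the first term, Holder's inequality with $\|\mathcal{H}_{\pi_T} P_{\pi_T}\|_{1,\infty} \leq H$ and~\cref{lem_vmvL} gives $\tfrac{4H^2(1+\lambda_k)}{T}$; for the second, Holder's inequality with $\|\mathcal{H}_{\pi_T}\|_{1,\infty} \leq H$, the bound $\norminf{\pi_T u}\le\norminf{u}$, and~\cref{lem_thetaconcentL} give $\tilde{O}\bigl(H^2(1+\lambda_k)\sqrt{d/(TM)}\bigr)$. Passing from the $\infty$-norm to the value at $\rho$ via $V(\rho)\le\norminf{V}$ finishes the proof at confidence $1-\delta$. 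The one point demanding care is that $\pi_T$, and hence $\mathcal{H}_{\pi_T}$, is \emph{data-dependent}, unlike the fixed $\pi^*_k$; this is harmless because the operator-norm bounds $\|\mathcal{H}_{\pi_T}\|_{1,\infty}\le H$ and $\|\mathcal{H}_{\pi_T}P_{\pi_T}\|_{1,\infty}\le H$ hold uniformly over all policies, and because~\cref{lem_thetaconcentL} is a \emph{uniform} (over all $(s,a)$) concentration guarantee, so applying $\pi_T$ and passing to the $\infty$-norm eliminates the policy before the concentration bound is invoked.
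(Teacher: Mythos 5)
Your proposal is correct and follows essentially the same route as the paper's proof: the exact value-difference identity for $\pi_T$, substitution of the decomposition from Lemma~\ref{lem_sumtheta}, cancellation of the $(\pi_T\Box)$ terms, and the identical two-term bound via Lemma~\ref{lem_vmvL} and Lemma~\ref{lem_thetaconcentL}. Your added remark that the data-dependence of $\pi_T$ is harmless because the operator-norm bounds and the concentration in Lemma~\ref{lem_thetaconcentL} are uniform is a point the paper leaves implicit, but it does not change the argument.
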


\begin{proof}
The proof is similar as for the above lemma. By the value difference lemma, we have that, 
\begin{align}
\bvkb -  V^{\pi_T}_{r+\lambda_kc} &=  (I-\gamma P_{\pi_T})^{-1} ( \bvkb - (\pi_T \Box) - \gamma P_{\pi_T} \, \bvkb  )  \label{eqn_pioutL2}    
\end{align}
Now, we have
\begin{align*}
\bvkb -  V^{\pi_T}_{r+\lambda_kc} &=  (I-\gamma P_{\pi_T})^{-1} \left( \frac{1}{T} \,  (\pi_T  \tilde{Q}_{\Box}^T) - (\pi_T \Box) - \gamma P_{\pi_T} \, \bvkb \right)  \\
&=  (I-\gamma P_{\pi_T})^{-1} \left( \frac{1}{T} \,  (\pi_T  \tilde{Q}_{\Box}^T) - (\pi_T \Box) -  \gamma P_{\pi_T} \, \frac{1}{T} \, (\pi_{T} \tilde{Q}_{\Box}^T)  \right)  \\
&= (I-\gamma P_{\pi_T})^{-1} \, \left[(\pi_T \Box) + \gamma P_{\pi_T} \, \frac{1}{T} (\pi_{T-1} \tilde{Q}^{T-1}_{\Box}) + \left(\pi_T  \left\langle \phi, W \left(\frac{1}{T} \sum_{i=0}^{T}(\hqkib-\langle \phi,{\boldsymbol{\theta}}_{\Box}^{i}\rangle)\right) \right\rangle \right) \right.\\
&\left. \indent - (\pi_T \Box) - \gamma P_{\pi_T} \, \frac{1}{T} \, (\pi_{T} \tilde{Q}_{\Box}^T)  \right] \tag{Using Lemma \ref{lem_sumtheta} for $\frac{1}{T} \tilde{Q}^{T}$}\\
&= (I-\gamma P_{\pi_T})^{-1} \, \left[ \gamma P_{\pi_T} \, \frac{1}{T} (\pi_{T-1} \tilde{Q}^{T-1}_{\Box}) -  \gamma P_{\pi_T} \, \frac{1}{T} \, (\pi_{T} \tilde{Q}_{\Box}^T)  \right.\\
&\left. \indent + \left(\pi_T  \left\langle \phi, W \left(\frac{1}{T} \sum_{i=0}^{T}(\hqkib-\langle \phi,{\boldsymbol{\theta}}_{\Box}^{i}\rangle)\right) \right\rangle \right)  \right].\\
\end{align*}
By defining $\mathcal{H}_{\pi_T} := (I-\gamma P_{\pi_T})^{-1}$, taking the infinity norm and using the triangle inequality, we obtain
\begin{align}
\norminf{\bvkb -  V^{\pi_T}_{r+\lambda_k \, c}} &\leq \underbrace{\norminf{\gamma\mathcal{H}_{\pi_T} P_{\pi_T}\left(  \frac{1}{T} \, (\pi_{T-1} \tilde{Q}^{T-1}_{\Box}) - \frac{1}{T} \,  (\pi_{T} \tilde{Q}_{\Box}^T) \right) }}_{\text{Term (i)}} \nonumber \\
& +  \underbrace{\norminf{\mathcal{H}_{\pi_T} \left(\pi_T  \left\langle \phi, W \left(\frac{1}{T} \sum_{i=0}^{T}(\hqkib-\langle \phi,{\boldsymbol{\theta}}_{\Box}^{i}\rangle)\right) \right\rangle \right)}}_{\text{Term (ii)}}. \label{eqn_boxdec1}
\end{align}
In order to bound Term (i), we use Holder's inequality and that $\norm{\mathcal{H}_{\pi_T} \, P_{\pi_T}}_{1, \infty} \leq H$, 
\begin{align*}
\norminf{\gamma \, \mathcal{H}_{\pi_T} P_{\pi_T} \left( \frac{1}{T} \, (\pi_{T-1} \tilde{Q}^{T-1}_{\Box}) - \frac{1}{T} \,  (\pi_{T} \tilde{Q}_{\Box}^T) \right)} & \leq H \, \left\| \frac{1}{T} \, (\pi_{T-1} \tilde{Q}^{T-1}_{\Box})  - \frac{1}{T} \,  (\pi_{T} \tilde{Q}_{\Box}^T)   \right\|_\infty \\
& \leq \frac{4H^2(1+\lambda_k)}{T} \tag{Using~\cref{lem_vmvL}}
\end{align*}
with probability at least $1-\delta$. For term (ii), 
\begin{align*}
& \norminf{\mathcal{H}_{\pi_T} \left(\pi_T  \left\langle \phi, W \left(\frac{1}{T} \sum_{i=0}^{T}(\hqkib-\langle \phi,{\boldsymbol{\theta}}_{\Box}^{i}\rangle)\right) \right\rangle \right) }\\
& \leq \norm{\mathcal{H}_{\pi_T}}_{1,\infty} \, \norm{\left(\pi_T  \left\langle \phi, W \left(\frac{1}{T} \sum_{i=0}^{T}(\hqkib-\langle \phi,{\boldsymbol{\theta}}_{\Box}^{i}\rangle)\right) \right\rangle \right)}_\infty \tag{By Holder's inequality} \\
& \leq H \, \norm{\left(\pi_T  \left\langle \phi, W \left(\frac{1}{T} \sum_{i=0}^{T}(\hqkib-\langle \phi,{\boldsymbol{\theta}}_{\Box}^{i}\rangle)\right) \right\rangle \right)}_\infty \tag{Since $\|\mathcal{H}_{\pi_T}\|_{1, \infty} \leq H$} \\
& \leq H \, \left\| \left\langle \phi, W \left(\frac{1}{T} \sum_{i=1}^{T}(\hqkib-\langle \phi,{\boldsymbol{\theta}}_{\Box}^{i}\rangle)\right)   \right\rangle \right\|_{\infty} \tag{By definition of the $\pi$ operator} \\    
& \leq \tilde{O}\left( { H^2(1+\lambda_k)}  \sqrt{\frac{d}{TM}} \right) \tag{By Lemma~\ref{lem_thetaconcentL}}
\end{align*}
Combining the above relations and using that for any $|\cS|$-dimensional vector $V$, $V(\rho) = \mathbb{E}_{s \sim \rho} V(s) \leq \| V \|_{\infty}$, we get that, 
\begin{align*}
\bvkb(\rho) -  V^{\pi_T}_{r+\lambda_kc}(\rho) \leq \frac{4H^2(1+\lambda_k)}{T} + \tilde{O}\left( { H^2(1+\lambda_k)} \sqrt{\frac{d}{TM}} \right)
\end{align*}
with probability at least $1-\delta$. 
\end{proof}

\subsubsection{Auxiliary Lemmas}

\begin{lemma}
\label{lem_rangeVL}
For any $k \in [K]$ and $t \in [T]$, with $\Box = r + \lambda_k \, c$ and $M \geq \tilde{O}\left( {dH^2} \right)$, we have
\begin{align*}
\| \hvkttb{t} \|_{\infty} \leq 2H(1+\lambda_k)
\end{align*}
with probability at least $1-\delta$.
\end{lemma}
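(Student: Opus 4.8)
The plan is to prove the bound by induction on $t$, with hypothesis $\norminf{\hvkttb{t-1}} \le 2H(1+\lambda_k)$; the base case holds since $\hvkttb{0}=\mathbf{0}$ by initialization. The first step is a reduction from the value increment to a single regression vector. Since $\hvkttb{t}(s) = \max_a \langle \phi(s,a), \sum_{i=0}^{t}\theta^i_{\Box}\rangle - \max_a \langle \phi(s,a), \sum_{i=0}^{t-1}\theta^i_{\Box}\rangle$ and the two arguments differ by exactly $\langle\phi(s,a),\theta^t_{\Box}\rangle$, the elementary inequality $|\max_a f(a)-\max_a g(a)|\le \max_a|f(a)-g(a)|$ yields
\begin{align*}
\norminf{\hvkttb{t}} \le \norminf{\langle\phi,\theta^t_{\Box}\rangle}.
\end{align*}
Thus it suffices to bound the sup-norm of the $t$-th least-squares fit applied over the whole state-action space.

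Next I would decompose $\theta^t_{\Box}$ into a population part and an estimation error. Recalling $\theta^t_{\Box}=W(\hqktb{t})$ with $\hqktb{t}=\Box+\gamma \hat{P}_{t-1}\hvkttb{t-1}$, and letting $\boldsymbol{\theta}^t_{\Box}$ be the population parameter with $\langle\phi,\boldsymbol{\theta}^t_{\Box}\rangle = \Box+\gamma P\hvkttb{t-1}$ (which exists by~\cref{asm_32}), the fact that $W$ reproduces exactly-linear targets gives $\langle\phi,\theta^t_{\Box}-\boldsymbol{\theta}^t_{\Box}\rangle = \langle\phi, W(\gamma(\hat{P}_{t-1}-P)\hvkttb{t-1})\rangle$. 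The population term is bounded directly by $\norminf{\Box}+\gamma\norminf{P\hvkttb{t-1}} \le (1+\lambda_k)+\gamma\norminf{\hvkttb{t-1}}$, using $\Box=r+\lambda_k c\in[0,1+\lambda_k]$ and that $P$ is row-stochastic.

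For the error term I would argue in two stages. Conditioning on the filtration $\mathcal{F}_{t-1}$ the vector $\hvkttb{t-1}$ is fixed, while the next-state samples drawn from the fresh buffer $\mathcal{B}_{t-1}$ that form $\hat{P}_{t-1}$ are independent; hence Hoeffding's inequality with a union bound over $(x,b)\in\mathcal{C}$ and $t\in[T]$ gives $|\gamma(\hat{P}_{t-1}-P)\hvkttb{t-1}(x,b)| \le \gamma\norminf{\hvkttb{t-1}}\sqrt{2\log(2|\mathcal{C}|T/\delta)/M}$ simultaneously with probability $1-\delta$. I would then convert this coreset-pointwise bound to a global one through the $G$-optimal design: writing $z$ for the noise vector and using $W(z)=G^{-1}\sum_{(x,b)}\tilde{\rho}(x,b)\phi(x,b)z(x,b)$ with Cauchy--Schwarz,
\begin{align*}
|\langle\phi(s,a),W(z)\rangle| \le \sqrt{\phi(s,a)^\top G^{-1}\phi(s,a)}\;\Big(\textstyle\sum_{(x,b)}\tilde{\rho}(x,b)\,z(x,b)^2\Big)^{1/2} \le \sqrt{2d}\,\max_{(x,b)\in\mathcal{C}}|z(x,b)|,
\end{align*}
since $g(\tilde\rho)\le 2d$ and $\sum_{(x,b)}\tilde{\rho}(x,b)=1$. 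Combining, the error is at most $\sqrt{2d}\,\gamma\norminf{\hvkttb{t-1}}\sqrt{2\log(2|\mathcal{C}|T/\delta)/M}$.

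Finally I would close the induction. Substituting the hypothesis $\norminf{\hvkttb{t-1}}\le 2H(1+\lambda_k)$ and taking $M\ge 16\,dH^2\log(2|\mathcal{C}|T/\delta)=\tilde{O}(dH^2)$ makes the error at most $(1+\lambda_k)$, so
\begin{align*}
\norminf{\hvkttb{t}} \le (1+\lambda_k) + \gamma\cdot 2H(1+\lambda_k) + (1+\lambda_k) = 2(1+\lambda_k)(1+\gamma H) = 2H(1+\lambda_k),
\end{align*}
where the last equality uses $1+\gamma H = H$. The main obstacle is that this induction is exactly tight: the target factor of $2$ is split into one $(1+\lambda_k)$ from the reward/population term and one $(1+\lambda_k)$ from the estimation error, which is precisely what pins down the sample requirement $M=\tilde{\Omega}(dH^2)$ (the $d$ coming from the $\sqrt{2d}$ least-squares amplification and the $H^2$ from squaring the $2H$ value range). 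The accompanying subtlety is justifying the conditional Hoeffding step --- one must check that $\hvkttb{t-1}$ is measurable with respect to $\mathcal{B}_0,\dots,\mathcal{B}_{t-2}$ and hence independent of the fresh samples in $\mathcal{B}_{t-1}$, so that the per-iteration data reuse preserves the needed independence and a single ``for all $t$'' union bound validates the whole inductive chain despite the concentration event depending on the random quantity $\norminf{\hvkttb{t-1}}$.
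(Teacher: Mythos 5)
Your proposal is correct and follows essentially the same route as the paper's proof: reduce $\|\hat{V}^t_{\Box}\|_\infty$ to $\|\langle\phi,\theta^t_{\Box}\rangle\|_\infty$, split into the population term $\Box+\gamma P\hat{V}^{t-1}_{\Box}$ and the least-squares error $W(\gamma(\hat{P}_{t-1}-P)\hat{V}^{t-1}_{\Box})$, control the latter by Hoeffding over the coreset combined with the $\sqrt{2d}$ Kiefer--Wolfowitz amplification (the paper's Lemma~\ref{lem_43}), and close the induction with $M=\tilde{O}(dH^2)$ using $2(1+\gamma H)=2H$. The only cosmetic differences are that you invoke $|\max_a f-\max_a g|\le\max_a|f-g|$ where the paper uses the greediness of $\pi_{t-1}$ (your version is cleanly two-sided), and you sketch the Cauchy--Schwarz proof of the KW bound rather than citing it.
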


\begin{proof}
First, we note that from the last line in Algorithm \ref{alg_MDVICLMDP}, 
\begin{align}
\hvkttb{t} &= \ (\pi_{t}\tilde{Q}_{\Box}^{t}) -  (\pi_{t-1} \tilde{Q}_{\Box}^{t-1}) \nonumber \\
&= \left( \pi_{t} \left\langle \phi, \sum_{i=0}^t \theta_{\Box}^{i} \right\rangle \right) -  \left( \pi_{t-1} \left\langle \phi, \sum_{i=0}^{t-1} \theta_{\Box}^{i} \right\rangle \right) \nonumber \\
&\leq \left( \pi_{t} \left\langle \phi, \sum_{i=0}^t \theta_{\Box}^{i} \right\rangle
 \right)- \left( \pi_{t} \left\langle \phi, \sum_{i=0}^{t-1} \theta_{\Box}^{i} \right\rangle \right) \tag{By the greediness of $\pi_{t-1}$} \nonumber \\
&= (\pi_{t} \, \langle \phi, \theta_{\Box}^{t} \rangle). \label{eqn_lem_rangeVL1}
\end{align}
Next, we bound the term $\langle \phi, \theta_{\Box}^{t}\rangle$. We have
\begin{align*}
\left|\langle \phi, \theta_{\Box}^{t} \rangle \right| &= \left| \langle \phi,  W(\hqktb{t}) \rangle \right| \tag{By the definition of $W$ in~\cref{eq:W-def-2}}\\
&\leq \left|\langle \phi, W( \langle \phi, \boldsymbol{\theta}_{\Box}^{t}\rangle) \rangle \right| + \left| \langle \phi, W(\hqktb{t}) - W(\langle \phi, \boldsymbol{\theta}_{\Box}^{t}\rangle) \rangle\right| \tag{By triangle inequality} \\
&= \left| \langle \phi, \boldsymbol{\theta}_{\Box}^{t} \rangle\right| + \left| \langle \phi, W(\hqktb{t} - \langle \phi, \boldsymbol{\theta}_{\Box}^{t}\rangle ) \rangle \right| \tag{Since $W(z)$ is linear in $z$} \\
&= \left| \Box + \gamma P \hvkttb{t-1} \right| + \left| \langle \phi, W(\hqktb{t} - \langle \phi, \boldsymbol{\theta}_{\Box}^{t}\rangle) \rangle \right| \tag{By the definition of $\boldsymbol{\theta}_{\Box}^{t}$} \\
&\leq ( 1+\lambda_k + \gamma \| \hvkttb{t-1} \|_{\infty} )\mathbf{1} + \left| \langle \phi, W(\hqktb{t} - \langle \phi, \boldsymbol{\theta}_{\Box}^{t}\rangle) \rangle \right| \tag{Since $\Box(s,a) \leq 1+\lambda_k$}  \\
&\leq ( 1+\lambda_k + \gamma \| \hvkttb{t-1} \|_{\infty} )\mathbf{1}  + {\sqrt{2d}} \max_{(s,a)\in \mathcal{C}} \left|  \hqktb{t}(s,a) - (\langle \phi, \boldsymbol{\theta}_{\Box}^{t} \rangle)(s,a)  \right| \mathbf{1} \tag{By Lemma \ref{lem_43} with $z = \hqktb{t} - \langle \phi, \boldsymbol{\theta}_{\Box}^{t}\rangle$} \\
& = ( 1+\lambda_k + \gamma \| \hvkttb{t-1} \|_{\infty} )\mathbf{1} \\ & + {\sqrt{2d}} \max_{(s,a)\in \mathcal{C}} \left| \Box (s,a) + \gamma (\hat{P}_{t-1} \hvkttb{t-1})(s,a) - \Box (s,a) -   \gamma (P \hvkttb{t-1})(s,a)   \right| \mathbf{1} \tag{By definition of $\hat{Q}^t_{\Box}$ and $\boldsymbol{\theta}_{\Box}^{t}$} \\
&= ( 1+\lambda_k + \gamma \| \hvkttb{t-1} \|_{\infty} )\mathbf{1} + {\sqrt{2d}} \max_{(s,a)\in \mathcal{C}} \left|  \gamma (\hat{P}_{t-1} \hvkttb{t-1})(s,a) -   \gamma (P \hvkttb{t-1}) (s,a)  \right| \mathbf{1} \\
\implies \left|\langle \phi, \theta_{\Box}^{t} \rangle \right| &\leq ( 1+\lambda_k + \gamma \| \hvkttb{t-1} \|_{\infty} )\mathbf{1} + {\sqrt{2d}} \frac{\| \hvkttb{t-1} \|_{\infty}}{\| \hvkttb{t-1} \|_{\infty}} \max_{(s,a)\in \mathcal{C}} \left|  \gamma (\hat{P}_{t-1} \hvkttb{t-1})(s,a) -   \gamma (P \hvkttb{t-1})(s,a)   \right| \mathbf{1}
\end{align*}
Next, we bound the term 
\[
\frac{1}{\| \hvkttb{t-1} \|_{\infty}} \max_{(s,a)\in \mathcal{C}} \left|  \gamma (\hat{P}_{t-1} \hvkttb{t-1})(s,a) -   \gamma (P \hvkttb{t-1})(s,a)   \right|.
\]
We first note that this term is upper bounded by $2$. Now, using the Azuma-Hoeffding inequality (Lemma \ref{lem_Azuma_Hoeffding}) and taking a union bound over $(s,a) \in \mathcal{C}$ and $t\in[T]$, we have
\begin{align*}
\mathbb{P}\left(\exists (s,a,t) \in \mathcal{C} \times [T] 
\ \text{ s.t. } \frac{1}{\| \hvkttb{t-1} \|_{\infty}} \max_{(s,a)\in \mathcal{C}} \left|  \gamma (\hat{P}_{t-1} \hvkttb{t-1})(s,a) -   \gamma (P \hvkttb{t-1})(s,a)   \right| \geq \tilde{O}\left( \gamma \sqrt{\frac{1}{M}} \right) \right) \leq \delta.
\end{align*}
Therefore, with probability at least $1-\delta$, we have
\begin{align}
\left| \langle \phi, \theta_{\Box}^{t} \rangle \right| \leq ( 1+\lambda_k + \gamma \| \hvkttb{t-1} \|_{\infty} )\mathbf{1} + {\| \hvkttb{t-1} \|_{\infty}}  \tilde{O} \left( \gamma\sqrt{\frac{d}{M}} \right) \mathbf{1}. \label{eqn_lem_rangeVL2}
\end{align}
Given the above inequality, we can prove the claim by induction on $t$. Since $\hat{V}^{0}_\Box = 0$, the base case is satisfied. We assume that $\| \hvkttb{t-1} \|_{\infty}  \leq 2H(1+\lambda_k)$. By combining \cref{eqn_lem_rangeVL1} and \cref{eqn_lem_rangeVL2}, we have
\begin{align*}
\| \hvkttb{t} \|_{\infty} &\leq\|( \pi_{t}  \langle \phi, \theta_{\Box}^{t} \rangle) \|_{\infty}\\
&\leq\| \langle \phi, \theta_{\Box}^{t} \rangle \|_{\infty} \tag{By definition of the $\pi$ operator}\\ 
&\leq \left( 1 + \lambda_k + \gamma \| \hvkttb{t-1} \|_{\infty} + { \| \hvkttb{t-1} \|_{\infty}}  \tilde{O} \left( \gamma\sqrt{\frac{d}{M}} \right) \right) \\
&\leq  \left(1 + 2H \, \gamma  + {2 H} \, \tilde{O} \left( \gamma\sqrt{\frac{d}{M}} \right)\right)(1+\lambda_k). \tag{Induction hypothesis}
\end{align*}
By taking $M \geq \tilde{O}\left( {dH^2} \right)$, we have
\begin{align*}
\| \hvkttb{t} \|_{\infty} &\leq  (1 + 2H \, \gamma + 1)(1+\lambda_k) \\
&= (2 + 2H \, \gamma )(1+\lambda_k)\\
&\leq 2H(1+\lambda_k) \tag{Since $H=1/(1-\gamma)$}
\end{align*}
which completes the proof.
\end{proof}

The following corollary is a direct consequence of~\cref{eqn_lem_rangeVL2} and the above lemma.

\begin{corollary}
\label{cor_rangetheta}
For any $k \in [K]$ and $t \in [T]$, with $\Box = r + \lambda_k \, c$ and $M \geq \tilde{O}\left( {dH^2} \right)$, we have
\begin{align*}
\left\| \langle \phi, \theta_{\Box}^{t}  \rangle \right\|_{\infty} \leq 2H(1+\lambda_k)
\end{align*}
with probability at least $1-\delta$ respectively.
\end{corollary}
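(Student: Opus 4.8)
The plan is to read the desired bound directly off the chain of inequalities already assembled in the proof of~\cref{lem_rangeVL}, since the corollary is exactly the intermediate quantity that appears one line before that lemma's conclusion. The starting point will be the elementwise bound~\eqref{eqn_lem_rangeVL2}, which controls $\left|\langle \phi, \theta_{\Box}^{t}\rangle\right|$ coordinatewise (against $\mathbf{1}$) in terms of $\norminf{\hvkttb{t-1}}$ and already holds with probability at least $1-\delta$ on the Azuma--Hoeffding event used there. The key point is that I do not need to re-derive anything: the $\pi_{t}$ operator in $\hvkttb{t} = (\pi_{t}\langle\phi,\theta_{\Box}^{t}\rangle)$ only contracts the infinity norm, so the bound proved for $\hvkttb{t}$ passes through $\langle\phi,\theta_{\Box}^{t}\rangle$ as an intermediate step.

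First I would invoke~\cref{lem_rangeVL} to substitute $\norminf{\hvkttb{t-1}} \leq 2H(1+\lambda_k)$ into~\eqref{eqn_lem_rangeVL2}, obtaining coordinatewise
\begin{align*}
\left|\langle \phi, \theta_{\Box}^{t}\rangle\right| \leq \left(1+\lambda_k + 2\gamma H(1+\lambda_k)\right)\mathbf{1} + 2H(1+\lambda_k)\,\tilde{O}\!\left(\gamma\sqrt{\tfrac{d}{M}}\right)\mathbf{1}.
\end{align*}
Then, taking $M \geq \tilde{O}(dH^2)$ makes the variance term $2H\,\tilde{O}(\gamma\sqrt{d/M})$ at most $1$, so the right-hand side is bounded by $(1+\lambda_k)(2 + 2\gamma H)\mathbf{1}$. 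Finally, using $H = 1/(1-\gamma)$ gives $2 + 2\gamma H = 2H$, and passing to the infinity norm yields $\norminf{\langle \phi, \theta_{\Box}^{t}\rangle} \leq 2H(1+\lambda_k)$, as claimed.

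I do not anticipate any genuine obstacle: this is a one-step corollary reusing the identical final computation from~\cref{lem_rangeVL}. The only item worth a sentence is that the corollary should be stated on the \emph{same} high-probability event as the lemma, so that the two bounds can be combined without an extra union bound; this is immediate because~\eqref{eqn_lem_rangeVL2} and the lemma's conclusion are derived together from the same Azuma--Hoeffding estimate over $(s,a)\in\mathcal{C}$ and $t\in[T]$.
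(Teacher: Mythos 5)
Your proposal is correct and matches the paper exactly: the paper's proof of this corollary is the one-line remark that it is "a direct consequence of \cref{eqn_lem_rangeVL2} and the above lemma," and your chain — substitute the bound $\norminf{\hvkttb{t-1}} \leq 2H(1+\lambda_k)$ from \cref{lem_rangeVL} into \cref{eqn_lem_rangeVL2}, use $M \geq \tilde{O}(dH^2)$ to make the deviation term at most $1$, and conclude via $2 + 2\gamma H = 2H$ — is precisely the computation already carried out inside the induction step of \cref{lem_rangeVL}. Your observation about staying on the same high-probability event is also consistent with how the paper handles it.
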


\begin{lemma}
\label{lem_vmvL}
For any $k \in [K]$, with $M \geq \tilde{O}\left( {dH^2} \right)$, we have
\begin{align*}
\left\| \frac{1}{T} (\pi_{T}  \tilde{Q}_{\Box}^T) - \frac{1}{T} ( \pi_{T-1}  \tilde{Q}^{T-1}_{\Box}) \right\|_{\infty} &\leq \frac{2H(1+\lambda_k)}{T}
\end{align*}
with probability at least $1-\delta$.
\end{lemma}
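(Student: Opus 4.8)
The plan is to observe that the quantity inside the norm is nothing but $\tfrac{1}{T}\hvkttb{T}$, at which point the bound becomes an immediate consequence of the range estimate already established in \cref{lem_rangeVL}. First I would recall the update in Line~6 of \cref{alg_MDVICLMDP}, namely $\hvkttb{t}(s) = \max_a\{\tilde{Q}^t_\Box(s,a)\} - \max_a\{\tilde{Q}^{t-1}_\Box(s,a)\}$, and combine it with the greediness of the output policy, $\pi_t(\cdot|s)\in\argmax_a \tilde{Q}^t_\Box(s,a)$. Since a greedy policy attains the max, we have $\max_a\{\tilde{Q}^t_\Box(s,a)\} = (\pi_t\tilde{Q}^t_\Box)(s)$, and therefore
\begin{align*}
\hvkttb{T}(s) = (\pi_T\tilde{Q}^T_\Box)(s) - (\pi_{T-1}\tilde{Q}^{T-1}_\Box)(s).
\end{align*}
Dividing by $T$ shows that the left-hand side of the claim equals exactly $\left\| \tfrac{1}{T}\hvkttb{T} \right\|_\infty = \tfrac{1}{T}\,\norminf{\hvkttb{T}}$.

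Second, I would invoke \cref{lem_rangeVL} with $t = T$: under the stated condition $M \geq \tilde{O}(dH^2)$, it gives $\norminf{\hvkttb{T}} \leq 2H(1+\lambda_k)$ with probability at least $1-\delta$. Substituting this into the identity above yields
\begin{align*}
\left\| \frac{1}{T}(\pi_T\tilde{Q}^T_\Box) - \frac{1}{T}(\pi_{T-1}\tilde{Q}^{T-1}_\Box) \right\|_\infty = \frac{1}{T}\,\norminf{\hvkttb{T}} \leq \frac{2H(1+\lambda_k)}{T},
\end{align*}
which is precisely the desired statement; the $1-\delta$ confidence level is inherited directly from \cref{lem_rangeVL}.

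There is essentially no obstacle here: this lemma is a one-line corollary of the range bound, and all of the genuine work (the inductive argument controlling $\norminf{\hvkttb{t}}$ via the Azuma--Hoeffding concentration of $(\hat{P}_{t-1}-P)\hvkttb{t-1}$ and the choice $M\geq\tilde{O}(dH^2)$) has already been carried out in \cref{lem_rangeVL}. The only point requiring care is the bookkeeping that links the Line~6 value update to the difference of consecutive $\pi\tilde{Q}$ terms via greediness; once that identification is made explicit, the estimate follows by a single application of the range lemma and a division by $T$.
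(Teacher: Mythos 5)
Your proof is correct. The identity $\tfrac{1}{T}(\pi_T\tilde{Q}^T_\Box) - \tfrac{1}{T}(\pi_{T-1}\tilde{Q}^{T-1}_\Box) = \tfrac{1}{T}\hat{V}^{T}_{\Box}$ holds exactly, since $\pi_t$ is greedy with respect to $\tilde{Q}^t_\Box$ and Line~6 defines $\hat{V}^{t}_{\Box}$ as the difference of the corresponding maxima; applying \cref{lem_rangeVL} at $t=T$ then gives the bound. The paper takes a slightly different (though closely related) route: it does not use the exact identity, but instead bounds the two one-sided differences separately --- using the greediness of $\pi_{T-1}$ to get $\tfrac{1}{T}(\pi_T\tilde{Q}^T_\Box) - \tfrac{1}{T}(\pi_{T-1}\tilde{Q}^{T-1}_\Box) \le \tfrac{1}{T}(\pi_T\langle\phi,\theta^T_\Box\rangle)$ and the greediness of $\pi_T$ for the reverse direction --- and then invokes \cref{cor_rangetheta} to bound $\|\langle\phi,\theta^T_\Box\rangle\|_\infty$ by $2H(1+\lambda_k)$. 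Both arguments ultimately rest on the same induction in \cref{lem_rangeVL}, so the constants and the failure probability come out identically; your version is marginally cleaner in that a single two-sided citation replaces two one-sided greediness steps, while the paper's version makes explicit that the increment being controlled is the single-step parameter $\theta^T_\Box$ rather than the value difference. One small caveat: \cref{lem_rangeVL} as stated is an $\ell_\infty$ bound, but its written proof only displays the upper bound $\hat{V}^{t}_{\Box} \le (\pi_t\langle\phi,\theta^t_\Box\rangle)$; the matching lower bound (via greediness of $\pi_t$ rather than $\pi_{t-1}$) is needed for your two-sided use of it, and it does hold by the symmetric argument, which is essentially what the paper's separate second case supplies.
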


\begin{proof}
By the definition of $\tilde{Q}_{\Box}^t$ and due to the greediness of $\pi_{T-1}$, we have
\begin{align*}
\frac{1}{T} (\pi_{T}  \tilde{Q}_{\Box}^T )- \frac{1}{T}( \pi_{T-1}  \tilde{Q}^{T-1}_{\Box}) &\leq  \frac{1}{T}(\pi_{T}  \tilde{Q}_{\Box}^T) - \frac{1}{T} (\pi_{T} \tilde{Q}^{T-1}_{\Box}) \\
&=  \left( \pi_{T} \left\langle \phi,  \frac{1}{T} \sum_{i=0}^{T} \theta_{\Box}^{i} -   \frac{1}{T} \sum_{i=0}^{T-1} \theta_{\Box}^{i} \right\rangle \right) \\
&= \frac{1}{T} ( \pi_{T} \langle \phi, \theta_{\Box}^{T}\rangle ) \\
&\leq \frac{1}{T} \left\| \langle \phi, \theta_{\Box}^{T}\rangle \right\|_{\infty} \mathbf{1} \tag{By definition of the $\pi$ operator} \\
&\leq \frac{2H(1+\lambda_k)}{T} \mathbf{1}  \tag{By Corollary \ref{cor_rangetheta}}
\end{align*}
with probability at least $1-\delta$. Similarly, by the greediness of $\pi_{T}$, we have
\begin{align*}
\frac{1}{T} (\pi_{T-1}  \tilde{Q}^{T-1}_{\Box}) - \frac{1}{T} (\pi_{T}  \tilde{Q}_{\Box}^T)  &\leq  \frac{1}{T} ( \pi_{T-1} \tilde{Q}^{T-1}_{\Box}) - \frac{1}{T}(\pi_{T-1}  \tilde{Q}_{\Box}^T) \\
&= \left( \pi_{T-1} \left\langle \phi,  \frac{1}{T} \sum_{i=0}^{T-1} \theta_{\Box}^{i} - \frac{1}{T} \sum_{i=0}^{T} \theta_{\Box}^{i}  \right\rangle \right) \\
&= -\frac{1}{T} ( \pi_{T} \langle \phi, \theta_{\Box}^{T}\rangle ) \leq \frac{1}{T} \norminf{( \pi_{T} \langle \phi, \theta_{\Box}^{T}\rangle )} \\
&\leq \frac{1}{T} \left\| \langle \phi, \theta_{\Box}^{T}\rangle \right\|_{\infty} \mathbf{1} \tag{By definition of the $\pi$ operator} \\
&\leq \frac{2H(1+\lambda_k)}{T} \mathbf{1}  \tag{By Corollary \ref{cor_rangetheta}}
\end{align*}
with probability at least $1-\delta$.
\end{proof}

\begin{lemma}
\label{lem_thetaconcentL}
For any $k\in[K]$ and $t \in [T]$, with $\Box = r + \lambda_k \, c$ and $M \geq \tilde{O}\left( {dH^2} \right)$, we have
\begin{align*}
\left\| \left\langle \phi, W \left(\frac{1}{t} \sum_{i=1}^{t}(\hqkib- \langle \phi, {\boldsymbol{\theta}}_{\Box}^{i} \rangle)\right) \right\rangle  \right\|_{\infty} \leq \tilde{O}\left( { H(1+\lambda_k)}  \sqrt{\frac{d}{tM}} \right)
\end{align*}
with probability at least $1-\delta$.
\end{lemma}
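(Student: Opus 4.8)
The plan is to reduce the claim, via the deterministic regression bound of~\cref{lem_43}, to a scalar martingale concentration argument at each core-set point. Since $W$ is linear in its argument, applying~\cref{lem_43} with $z := \frac{1}{t}\sum_{i=1}^{t}\big(\hqkib - \langle \phi, \boldsymbol{\theta}_{\Box}^{i}\rangle\big)$ gives the pathwise bound
\begin{align*}
\left\| \left\langle \phi, W\left(\tfrac{1}{t}\sum_{i=1}^{t}(\hqkib - \langle \phi, \boldsymbol{\theta}_{\Box}^{i}\rangle)\right)\right\rangle\right\|_{\infty} \leq \sqrt{2d}\,\max_{(x,b)\in\mathcal{C}}\left|\tfrac{1}{t}\sum_{i=1}^{t}\big(\hqkib(x,b) - \langle \phi(x,b), \boldsymbol{\theta}_{\Box}^{i}\rangle\big)\right|.
\end{align*}
This isolates the scalar quantity inside the maximum, and the explicit $\sqrt{2d}$ factor is precisely what upgrades a $\sqrt{1/(tM)}$ concentration rate into the target $\sqrt{d/(tM)}$ rate.

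Next I would identify the summand as a martingale difference. By the definition of $\hqkib$ (Line 4 of~\cref{alg_MDVICLMDP}) and of $\boldsymbol{\theta}_{\Box}^{i}$ (which satisfies $\langle\phi,\boldsymbol{\theta}_{\Box}^{i}\rangle = \Box + \gamma P\hat{V}^{i-1}_{\Box}$), the reward terms cancel and
\begin{align*}
\hqkib(x,b) - \langle \phi(x,b), \boldsymbol{\theta}_{\Box}^{i}\rangle
&= \gamma\Big((\hat{P}_{i-1}\hat{V}^{i-1}_{\Box})(x,b) - (P\hat{V}^{i-1}_{\Box})(x,b)\Big) \\
&= \frac{\gamma}{M}\sum_{m=1}^{M}\Big(\hat{V}^{i-1}_{\Box}(s'_m) - (P\hat{V}^{i-1}_{\Box})(x,b)\Big).
\end{align*}
Because $\hat{V}^{i-1}_{\Box}$ is measurable with respect to the data collected in buffers $\mathcal{B}_0,\dots,\mathcal{B}_{i-2}$, while the next-state samples $s'_m$ are drawn freshly from $\mathcal{B}_{i-1}$, each term $\hat{V}^{i-1}_{\Box}(s'_m) - (P\hat{V}^{i-1}_{\Box})(x,b)$ is mean-zero conditioned on the filtration $\mathcal{F}_{i-1,m-1}$. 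Hence, for each fixed $(x,b)$, the average $\frac{1}{t}\sum_{i=1}^{t}(\cdots)$ is a sum of $tM$ martingale differences.

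The boundedness input is~\cref{lem_rangeVL}, which gives $\|\hat{V}^{i-1}_{\Box}\|_{\infty}\leq 2H(1+\lambda_k)$ with high probability once $M\geq\tilde{O}(dH^2)$; on this event each increment is bounded in magnitude by $\frac{4\gamma H(1+\lambda_k)}{tM}$, so the sum of squared increment bounds is $\frac{16\gamma^2 H^2(1+\lambda_k)^2}{tM}$. Applying the Azuma--Hoeffding inequality (\cref{lem_Azuma_Hoeffding}) for a fixed $(x,b)$ then yields a deviation of order $\tilde{O}\big(H(1+\lambda_k)\sqrt{1/(tM)}\big)$, and a union bound over the $\tilde{O}(d)$ points of $\mathcal{C}$ (contributing only logarithmic factors) extends this to the maximum. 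Substituting back and intersecting with the event from~\cref{lem_rangeVL}, we obtain
\begin{align*}
\left\| \left\langle \phi, W\left(\tfrac{1}{t}\sum_{i=1}^{t}(\hqkib - \langle \phi, \boldsymbol{\theta}_{\Box}^{i}\rangle)\right)\right\rangle\right\|_{\infty} \leq \sqrt{2d}\cdot\tilde{O}\Big(H(1+\lambda_k)\sqrt{\tfrac{1}{tM}}\Big) = \tilde{O}\Big(H(1+\lambda_k)\sqrt{\tfrac{d}{tM}}\Big),
\end{align*}
after rescaling the confidence parameter $\delta$. I expect the main obstacle to be the careful bookkeeping of the filtration: one must verify that $\hat{V}^{i-1}_{\Box}$ (itself a data-dependent object assembled from all earlier regressions) is frozen before the samples of $\mathcal{B}_{i-1}$ are drawn, so that the conditional-mean-zero property is legitimate and the high-probability boundedness event of~\cref{lem_rangeVL} can be invoked uniformly over $i$ without introducing a circular dependence between the quantity being bounded and the bound itself.
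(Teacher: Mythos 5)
Your proposal is correct and follows essentially the same route as the paper: apply the KW bound (Lemma~\ref{lem_43}) to reduce to the maximum over core-set points of the averaged empirical-versus-true transition error, identify this as a sum of $tM$ martingale differences bounded via Lemma~\ref{lem_rangeVL}, and conclude with Azuma--Hoeffding plus a union bound over $\mathcal{C}$ and a rescaling of $\delta$. The only cosmetic difference is that you inline the martingale concentration step that the paper packages as Lemma~\ref{lem_concentrate}, and the filtration/boundedness subtlety you flag at the end is handled in the paper in exactly the way you describe (intersecting with the high-probability event of Lemma~\ref{lem_rangeVL}).
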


\begin{proof}
\begin{align*}
& \left\| \left\langle \phi, W \left(\frac{1}{t} \sum_{i=1}^{t}(\hqkib- \langle \phi, {\boldsymbol{\theta}}_{\Box}^{i} \rangle)\right) \right\rangle    \right\|_{\infty} \\
& \leq {\sqrt{2d}} \max_{(s,a)\in \mathcal{C}} \, \left| \frac{1}{t} \sum_{i=0}^{t-1} \left[ \hqktb{i}(s,a) - (\langle \phi, \boldsymbol{\theta}_{\Box}^{i} \rangle)(s,a) \right] \right| \tag{By Lemma \ref{lem_43} with $z = \frac{1}{t} \sum_{i=0}^{t-1} [\hqktb{i} - \langle \phi, \boldsymbol{\theta}_{\Box}^{i}\rangle]$} \\
&= {\sqrt{2d}} \max_{(s,a)\in \mathcal{C}} \left| \frac{1}{t} \sum_{i=0}^{t-1} \left[  \gamma (\hat{P}_i \hvkttb{i})(s,a)  -   \gamma (P \hvkttb{i})(s,a) \right]  \right| \tag{By definition of $\hat{Q}^i_{\Box}$ and $\boldsymbol{\theta}_{\Box}^{i}$}
\end{align*}
By Lemma \ref{lem_rangeVL}, we have that, with probability at least $1 - \delta$, the bound $\norminf{\hat{V}^i_{\Box}} \leq 2H(1 + \lambda_k)$ holds for all $i \in [T]$. Now, using Lemma \ref{lem_concentrate} and taking the union bound over $(s,a) \in \mathcal{C}$, we have
\begin{align*}
\mathbb{P}\left(\exists (s,a) \in \mathcal{C} 
\ \text{ s.t. } \frac{1}{t} \sum_{i=0}^{t-1} \left[ (\hat{P}_i \hvkttb{i})(s,a) -  (P \hvkttb{i})(s,a) \right] \geq \tilde{O}\left( H(1+\lambda_k) \sqrt{\frac{1}{tM}} \right) \right) \leq \delta.
\end{align*}
Therefore, by appropriately rescaling $\delta$, we have that with probability at least $1-\delta$, 
\begin{align*}
\left\| \left\langle \phi, W \left(\frac{1}{t} \sum_{i=1}^{t}(\hqkib- \langle \phi, {\boldsymbol{\theta}}_{\Box}^{i} \rangle)\right) \right\rangle \right\|_{\infty} \leq \tilde{O}\left( { H(1+\lambda_k)}  \sqrt{\frac{d}{tM}} \right).
\end{align*}
\end{proof}

\begin{lemma}
\label{lem_thetaconcentLfix}
For any $k\in[K]$ and $i \in [T]$, with $\Box = r + \lambda_k \, c$ and $M \geq \tilde{O}\left( {dH^2} \right)$, we have
\begin{align*}
\left\| \langle \phi, W ( \hqkib- \langle\phi, {\boldsymbol{\theta}}_{\Box}^{i} \rangle ) \rangle   \right\|_{\infty} \leq \tilde{O}\left( { H(1+\lambda_k)}  \sqrt{\frac{d}{M}} \right)
\end{align*}
with probability at least $1-\delta$.
\end{lemma}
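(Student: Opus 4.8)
The plan is to follow the argument of \cref{lem_thetaconcentL} almost verbatim, specialized to a single fixed index $i$ rather than the empirical average over $i \in [t]$; the only structural difference is that no variance reduction across iterations is available, so the final rate is $\sqrt{d/M}$ instead of $\sqrt{d/(tM)}$.

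First I would apply \cref{lem_43} with the vector $z = \hqkib - \langle \phi, \boldsymbol{\theta}_{\Box}^{i}\rangle$ to pass from the weighted least-squares solution to a pointwise maximum over the coreset, obtaining $\norminf{\langle \phi, W(\hqkib - \langle \phi, \boldsymbol{\theta}_{\Box}^{i}\rangle) \rangle} \leq \sqrt{2d}\,\max_{(s,a)\in\mathcal{C}} |\hqkib(s,a) - \langle \phi(s,a), \boldsymbol{\theta}_{\Box}^{i}\rangle|$. Next I would substitute the defining relations $\hqkib(s,a) = \Box(s,a) + \gamma(\hat{P}_{i-1}\hvkttb{i-1})(s,a)$ and $\langle \phi(s,a), \boldsymbol{\theta}_{\Box}^{i}\rangle = \Box(s,a) + \gamma(P\hvkttb{i-1})(s,a)$. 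The reward terms cancel, so the maximand collapses to $\gamma\,|(\hat{P}_{i-1}\hvkttb{i-1})(s,a) - (P\hvkttb{i-1})(s,a)|$, i.e.\ the empirical-minus-true transition operator applied to the fixed value estimate $\hvkttb{i-1}$.

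The probabilistic core is then to control this single empirical deviation. I would invoke \cref{lem_rangeVL} to guarantee $\norminf{\hvkttb{i-1}} \leq 2H(1+\lambda_k)$ with high probability, which is valid since $M \geq \tilde{O}(dH^2)$. Conditioning on the filtration $\mathcal{F}_{i-2}$, the estimate $\hvkttb{i-1}$ is measurable while the $M$ next-state samples defining $\hat{P}_{i-1}$ are fresh i.i.d.\ draws from $P(\cdot\mid s,a)$; hence $(\hat{P}_{i-1}\hvkttb{i-1})(s,a)$ is an empirical mean of $M$ i.i.d.\ bounded terms with expectation $(P\hvkttb{i-1})(s,a)$. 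Applying the concentration bound (\cref{lem_concentrate}, or Azuma--Hoeffding, \cref{lem_Azuma_Hoeffding}) together with a union bound over the $\tilde{O}(d)$ pairs in $\mathcal{C}$ yields $|(\hat{P}_{i-1} - P)\hvkttb{i-1}|(s,a) \leq \tilde{O}(H(1+\lambda_k)\sqrt{1/M})$ for all $(s,a)\in\mathcal{C}$ with probability $1-\delta$. Folding the $\sqrt{2d}$ factor from the first step into this rate gives the claimed $\tilde{O}(H(1+\lambda_k)\sqrt{d/M})$.

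I do not expect a genuine obstacle: the statement is the $t=1$ analogue of \cref{lem_thetaconcentL}, and every ingredient — the least-squares transfer bound, the range bound on $\hvkttb{i-1}$, and the martingale concentration — has already been assembled in that proof. The one point requiring care is the filtration argument: one must condition on $\mathcal{F}_{i-2}$ so that $\hvkttb{i-1}$ is deterministic before applying concentration to the $M$ samples of $\hat{P}_{i-1}$. This is precisely the conditioning used in \cref{lem_thetaconcentL}, now applied to a single summand rather than to the average. This single-index bound is presumably needed elsewhere as a building block to control an individual $\theta_{\Box}^{i}$ contribution rather than the telescoped sum $\frac{1}{t}\sum_{i}(\hqkib - \langle\phi,\boldsymbol{\theta}_{\Box}^{i}\rangle)$.
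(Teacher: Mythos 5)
Your proposal is correct and follows essentially the same route as the paper's proof: the KW extrapolation bound (\cref{lem_43}) to reduce to a maximum over the coreset, cancellation of the reward terms, the range bound from \cref{lem_rangeVL}, and Azuma--Hoeffding with a union bound over $\mathcal{C}$ (and $[T]$). Your indexing of the empirical transition operator as $\hat{P}_{i-1}\hvkttb{i-1}$ is if anything slightly more careful than the paper's, but the argument is the same.
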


\begin{proof}
By following a similar proof as that for the above lemma, 
\begin{align}
\left\| \langle \phi, W ( \hqkib- \langle\phi, {\boldsymbol{\theta}}_{\Box}^{i} \rangle ) \rangle    \right\|_{\infty}  &\leq {\sqrt{2d}} \max_{(s,a)\in \mathcal{C}} \left|   \gamma \hat{P}_i \hvkttb{i}(s,a)  -   \gamma P \hvkttb{i}(s,a)   \right|.
\label{eq:d6-inter}
\end{align}
By Lemma \ref{lem_rangeVL}, we have that, with probability at least $1 - \delta$, $(\hat{P}_i\hat{V}^i_{\Box})(s,a) \leq 2H(1 + \lambda_k)$ holds for all $i \in [T]$ and all $(s,a)$. We note that by the definition of $\hat{P}_i$, $(\hat{P}_i \hvkttb{i})(s,a)$ is the empirical average of $M$ value functions.   Now, using Lemma \ref{lem_Azuma_Hoeffding} with $N=M$ and taking the union bound over $(s,a) \in \mathcal{C}$ and $i \in [T]$, we have
\begin{align*}
\mathbb{P}\left(\exists (s,a,t) \in \mathcal{C} \times [T] 
\ \text{ s.t. } (\hat{P}_i \hvkttb{i})(s,a) -  (P \hvkttb{i})(s,a)  \geq \tilde{O}\left( H(1+\lambda_k) \sqrt{\frac{1}{M}} \right) \right) \leq \delta
\end{align*}
Combining the above inequality with~\cref{eq:d6-inter} and appropriately rescaling $\delta$ completes the proof.
\end{proof}

\begin{lemma}
\label{lem_avetheta}
For any $t\in[T]$, we have
\begin{align*}
\frac{1}{t} \left\langle \phi, \sum_{i=0}^{t} {\boldsymbol{\theta}}_{\Box}^{i} \right\rangle= \Box +  \gamma \, \frac{1}{t} \, P \, (\pi_{t-1} \tilde{Q}^{t-1}_{\Box}).
\end{align*}
\end{lemma}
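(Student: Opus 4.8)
The plan is to treat this as a pure algebraic identity: expand the sum $\sum_{i=0}^{t}\langle\phi,\boldsymbol{\theta}^i_{\Box}\rangle$ using the defining relation $\langle\phi,\boldsymbol{\theta}^i_{\Box}\rangle = \Box + \gamma P\hat{V}^{i-1}_{\Box}$ (which holds for $i\geq 1$) together with the initialization $\boldsymbol{\theta}^0_{\Box}=\mathbf{0}$, and then collapse the resulting sum of value functions by telescoping.

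First I would note that since $\boldsymbol{\theta}^0_{\Box}=\mathbf{0}$ the $i=0$ term contributes nothing, so that after reindexing $j=i-1$,
\[
\sum_{i=0}^{t}\langle\phi,\boldsymbol{\theta}^i_{\Box}\rangle = \sum_{i=1}^{t}\left(\Box + \gamma P\hat{V}^{i-1}_{\Box}\right) = t\,\Box + \gamma P\sum_{j=0}^{t-1}\hat{V}^{j}_{\Box}.
\]
Here the state-independent reward $\Box$ appears once per term, giving $t\,\Box$, and the linear operator $\gamma P$ is pulled outside the sum.

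The key step is to telescope $\sum_{j=0}^{t-1}\hat{V}^{j}_{\Box}$. Since $\hat{V}^{0}_{\Box}=\mathbf{0}$ by the initialization in Algorithm \ref{alg_MDVICLMDP}, this equals $\sum_{j=1}^{t-1}\hat{V}^{j}_{\Box}$, and recalling from Line 6 that $\hat{V}^{j}_{\Box}(s) = \max_a\tilde{Q}^{j}_{\Box}(s,a) - \max_a\tilde{Q}^{j-1}_{\Box}(s,a)$, the telescope collapses the sum to $\max_a\tilde{Q}^{t-1}_{\Box} - \max_a\tilde{Q}^{0}_{\Box}$. Because $\tilde{Q}^{0}_{\Box}=\langle\phi,\theta^0_{\Box}\rangle = 0$, the boundary term vanishes, leaving $\max_a\tilde{Q}^{t-1}_{\Box}$. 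Since $\pi_{t-1}$ is by construction greedy with respect to $\tilde{Q}^{t-1}_{\Box}$, I rewrite $\max_a\tilde{Q}^{t-1}_{\Box}(s) = (\pi_{t-1}\tilde{Q}^{t-1}_{\Box})(s)$, so that $\sum_{j=0}^{t-1}\hat{V}^{j}_{\Box} = (\pi_{t-1}\tilde{Q}^{t-1}_{\Box})$. Substituting this back and dividing by $t$ gives exactly $\frac{1}{t}\langle\phi,\sum_{i=0}^{t}\boldsymbol{\theta}^i_{\Box}\rangle = \Box + \gamma\frac{1}{t}P(\pi_{t-1}\tilde{Q}^{t-1}_{\Box})$, as claimed.

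There is no genuine analytic obstacle here, as the statement is an exact identity rather than a concentration bound. The only place where care is required—and the most likely source of an off-by-one slip—is the bookkeeping of index ranges and base cases: confirming that $\boldsymbol{\theta}^0_{\Box}$ vanishes, that the reindexed sum contains exactly $t$ copies of $\Box$, and that the telescope terminates cleanly because the lower boundary term $\max_a\tilde{Q}^{0}_{\Box}$ is zero. Once these conventions are pinned down consistently with the algorithm's initialization ($\hat{V}^{0}_{\Box}=\mathbf{0}$, $\theta^{0}_{\Box}=\mathbf{0}$), the identity is immediate.
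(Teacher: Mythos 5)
Your proof is correct and follows essentially the same route as the paper's: expand $\sum_{i=0}^{t}\langle\phi,\boldsymbol{\theta}^i_{\Box}\rangle$ via the defining relation $\langle\phi,\boldsymbol{\theta}^i_{\Box}\rangle=\Box+\gamma P\hat{V}^{i-1}_{\Box}$ to get $t\,\Box+\gamma P\sum_{i=0}^{t-1}\hat{V}^i_{\Box}$, then telescope the value-function sum using Line~6 of Algorithm~\ref{alg_MDVICLMDP} and the greediness of $\pi_{t-1}$. Your explicit attention to the base cases ($\boldsymbol{\theta}^0_{\Box}=\mathbf{0}$, $\tilde{Q}^0_{\Box}=0$) is the same bookkeeping the paper handles implicitly when it drops the $i=0$ term.
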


\begin{proof}
We first recall that by definition, $\langle \phi, \boldsymbol{\theta}_{\Box}^{t} \rangle := \Box + \gamma P \hat{V}_{\Box}^{t-1}$, $\hat{V}^0_{\Box} =\mathbf{0}$, and $\theta^0_{\Box}=\mathbf{0}$. Now we have
\begin{align*}
\frac{1}{t}\left\langle \phi, \sum_{i=0}^{t} {\boldsymbol{\theta}}_{\Box}^{i} \right\rangle &:= \frac{1}{t}\sum_{i=0}^{t-1} (\Box +\gamma P \hvkib)\\
&= \Box + \gamma P \left(\frac{1}{t}\sum_{i=0}^{t-1} \hvkib \right) \\
&= \Box + \gamma P \left(\frac{1}{t}\sum_{i=0}^{t-1} \left[ (\pi_i \tilde{Q}_{\Box}^i) - (\pi_{i-1} \tilde{Q}_{\Box}^{i-1}) \right] \right) \tag{From the last line in Algorithm \ref{alg_MDVICLMDP}}\\
&= \Box +  \gamma \frac{1}{t} \, P  \, (\pi_{t-1} \tilde{Q}^{t-1}_{\Box}).  \tag{Telescoping Sum}
\end{align*}
\end{proof}

\begin{lemma}
\label{lem_sumtheta}
We have
\begin{align*}
\frac{1}{T} \tilde{Q}_{\Box}^T= \left\langle \phi, W \left(\frac{1}{T} \sum_{i=0}^{T}(\hqkib- \langle \phi, {\boldsymbol{\theta}}_{\Box}^{i}) \rangle\right) \right\rangle +  \Box + \gamma P \frac{1}{T} ( \pi_{T-1} \tilde{Q}^{T-1}_{\Box} ).
\end{align*}
\end{lemma}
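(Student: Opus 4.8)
The plan is to verify this as a bookkeeping identity by expanding the left-hand side through the definition of $\tilde{Q}^T_\Box$ and then exploiting two facts established just before the lemma: the linearity of the least-squares operator $W$, and the exact-recovery property $\boldsymbol{\theta}^i_\Box = W(\langle \phi, \boldsymbol{\theta}^i_\Box\rangle)$. First I would use Line 6 of \cref{alg_MDVICLMDP} to write $\frac{1}{T}\tilde{Q}^T_\Box = \big\langle \phi,\, \frac{1}{T}\sum_{i=0}^T \theta^i_\Box\big\rangle$, and then substitute $\theta^i_\Box = W(\hqkib)$.

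Next, inside the argument of $W$ I would add and subtract the population Bellman target, writing $\hqkib = (\hqkib - \langle\phi, \boldsymbol{\theta}^i_\Box\rangle) + \langle\phi,\boldsymbol{\theta}^i_\Box\rangle$, and use the linearity of $W$ to split $\frac{1}{T}\sum_{i=0}^T \theta^i_\Box$ into two pieces. The first piece is precisely $W\big(\frac{1}{T}\sum_{i=0}^T(\hqkib - \langle\phi,\boldsymbol{\theta}^i_\Box\rangle)\big)$, which after applying $\langle\phi,\cdot\rangle$ yields the leading $W(\cdot)$ term on the right-hand side of the claim. The second piece is $W$ applied to a target that already lies in the span of the features, so by the recovery property $W(\langle\phi,\boldsymbol{\theta}^i_\Box\rangle) = \boldsymbol{\theta}^i_\Box$ together with linearity it collapses to $\frac{1}{T}\sum_{i=0}^T \boldsymbol{\theta}^i_\Box$.

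Finally, I would invoke \cref{lem_avetheta} with $t = T$ to rewrite $\big\langle \phi,\, \frac{1}{T}\sum_{i=0}^T \boldsymbol{\theta}^i_\Box\big\rangle$ as $\Box + \gamma\,\frac{1}{T}\, P\,(\pi_{T-1}\tilde{Q}^{T-1}_\Box)$. This matches the remaining two terms on the right-hand side exactly, completing the identity.

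There is no genuine analytic obstacle here, since everything is exact algebra driven by the linearity of $W$; no concentration or probabilistic argument enters. The only points requiring care are keeping the two regression quantities straight (the empirical estimate $\theta^i_\Box = W(\hqkib)$ versus the population parameter $\boldsymbol{\theta}^i_\Box$ defined by $\langle\phi,\boldsymbol{\theta}^i_\Box\rangle = \Box + \gamma P\hat{V}^{i-1}_\Box$), and ensuring the index bookkeeping is consistent, i.e.\ that the conventions $\theta^0_\Box = \mathbf{0}$ and $\hat{V}^0_\Box = \mathbf{0}$ make the telescoping inside \cref{lem_avetheta} line up with the summation range $i = 0,\dots,T$ used here.
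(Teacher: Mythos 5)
Your proposal is correct and follows essentially the same route as the paper's proof: add and subtract the population targets $\langle\phi,\boldsymbol{\theta}^i_\Box\rangle$, use $\theta^i_\Box = W(\hqkib)$, $\boldsymbol{\theta}^i_\Box = W(\langle\phi,\boldsymbol{\theta}^i_\Box\rangle)$ and the linearity of $W$, then finish with \cref{lem_avetheta}. (Your instantiation of \cref{lem_avetheta} at $t=T$ is the right one; the paper's tag reading ``$t=T-1$'' is a typo.)
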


\begin{proof}
We first recall that by the definition of $W$, we have ${\theta}_{\Box}^{i}=W(\hqkib)$ and ${\boldsymbol{\theta}}_{\Box}^{i}=W(\langle \phi, {\boldsymbol{\theta}}_{\Box}^{i}) \rangle)$. Thus,
\begin{align*}
\frac{1}{T} \tilde{Q}_{\Box}^T &= \frac{1}{T} \tilde{Q}_{\Box}^T - \frac{1}{T} \sum_{i=0}^{T} \langle \phi, {\boldsymbol{\theta}}_{\Box}^{i}) \rangle + \frac{1}{T} \sum_{i=0}^{T} \langle \phi, {\boldsymbol{\theta}}_{\Box}^{i}) \rangle \\
&= \frac{1}{T} \sum_{i=0}^{T} \langle \phi, {\theta}_{\Box}^{i}\rangle - \frac{1}{T} \sum_{i=0}^{T} \langle \phi, {\boldsymbol{\theta}}_{\Box}^{i}) \rangle + \frac{1}{T} \sum_{i=0}^{T} \langle \phi, {\boldsymbol{\theta}}_{\Box}^{i}) \rangle \tag{By definition of $\tilde{Q}_{\Box}^T$} \\
&= \frac{1}{T} \sum_{i=0}^{T} \langle \phi, {\theta}_{\Box}^{i} - {\boldsymbol{\theta}}_{\Box}^{i} \rangle +\frac{1}{T} \sum_{i=0}^{T} \langle \phi, {\boldsymbol{\theta}}_{\Box}^{i}) \rangle\\
&= \frac{1}{T} \sum_{i=0}^{T} \left\langle \phi, W(\hqkib) - W(\langle \phi, {\boldsymbol{\theta}}_{\Box}^{i} \rangle) \right\rangle + \frac{1}{T} \sum_{i=0}^{T} \langle \phi, {\boldsymbol{\theta}}_{\Box}^{i}) \rangle \tag{Since ${\theta}_{\Box}^{i}=W(\hqkib)$ and ${\boldsymbol{\theta}}_{\Box}^{i}=W(\langle \phi, {\boldsymbol{\theta}}_{\Box}^{i}) \rangle)$}\\
&=  \left\langle \phi, W \left(\frac{1}{T} \sum_{i=0}^{T}(\hqkib- \langle \phi, {\boldsymbol{\theta}}_{\Box}^{i}) \rangle\right) \right\rangle + \frac{1}{T} \sum_{i=0}^{T} \langle \phi, {\boldsymbol{\theta}}_{\Box}^{i}) \rangle \tag{$W(z)$ is linear in $z$} \\
&= \left\langle \phi, W \left(\frac{1}{T} \sum_{i=0}^{T}(\hqkib- \langle \phi, {\boldsymbol{\theta}}_{\Box}^{i}) \rangle\right) \right\rangle +  \Box + \frac{1}{T}  \, \gamma P \,  (\pi_{T-1} \tilde{Q}^{T-1}_{\Box}) \tag{By Lemma \ref{lem_avetheta} with $t = T -1$}.
\end{align*}
\end{proof}

The following lemma bounds the extrapolation error due to the least-squares regression. It is the unweighted version (i.e., uniform weighting with $f = \mathbf{1}$) of Lemma~4.3 in \cite{kitamura2023regularization}.

\begin{lemma}[KW Bound]
\label{lem_43}
Let \( z \) be a function defined over \( \mathcal{C} \). Then, there exists \( \tilde{\rho} \in \Delta(\mathcal{S} \times \mathcal{A}) \) with a finite support \( \mathcal{C} := \text{Supp}(\tilde{\rho}) \) of size less than or equal to \( u_{\mathcal{C}} \) such that
\[
\max_{(s,a)\in \mathcal{S}\times \mathcal{A}} \left[ \langle \phi(s,a), W(z) \rangle \right] \leq \sqrt{2d} \max_{(x', b') \in \mathcal{C}} \left| {z(x', b')} \right|,
\]
where \( W(z) := G^{-1} \sum_{(x,b) \in \mathcal{C}} {\tilde{\rho}(x,b) \phi(x,b) z(x,b)}.
\)
\end{lemma}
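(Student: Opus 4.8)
The plan is to reduce the inequality to the two defining features of a $G$-optimal design: the matrix identity $\sum_{(x,b)\in\mathcal{C}}\tilde\rho(x,b)\,\phi(x,b)\phi(x,b)^\top = G$ (immediate from the definition of $G$), and the variance bound $g(\tilde\rho)=\max_{(s,a)}\phi(s,a)^\top G^{-1}\phi(s,a)\le 2d$. The existence of a design $\tilde\rho$ with support of size at most $u_{\mathcal{C}}$ satisfying $g(\tilde\rho)\le 2d$ is precisely the Kiefer--Wolfowitz guarantee supplied by the Optimal Design assumption (equivalently, by \texttt{ComputeOptimalDesign}), so I would take the existence for granted and concentrate all the effort on the inequality.

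First I would substitute the closed form $W(z)=G^{-1}\sum_{(x,b)\in\mathcal{C}}\tilde\rho(x,b)\,\phi(x,b)\,z(x,b)$ from~\eqref{eq:W-def-2} into the inner product and expand it as a $\tilde\rho$-weighted sum,
\[
\langle \phi(s,a),W(z)\rangle=\sum_{(x,b)\in\mathcal{C}}\tilde\rho(x,b)\,z(x,b)\,\phi(s,a)^\top G^{-1}\phi(x,b).
\]
Taking absolute values (which only strengthens the claimed bound) and pulling out $\max_{(x',b')\in\mathcal{C}}|z(x',b')|$ leaves the weighted sum $\sum_{(x,b)}\tilde\rho(x,b)\,|\phi(s,a)^\top G^{-1}\phi(x,b)|$ to control.

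The crucial step --- the one that produces the sharp $\sqrt{2d}$ rather than a loose $2d$ --- is to apply Cauchy--Schwarz with respect to the probability measure $\tilde\rho$ (equivalently Jensen's inequality, using $\sum_{(x,b)}\tilde\rho(x,b)=1$),
\[
\sum_{(x,b)}\tilde\rho(x,b)\,\bigl|\phi(s,a)^\top G^{-1}\phi(x,b)\bigr|\le\Bigl(\sum_{(x,b)}\tilde\rho(x,b)\,\bigl(\phi(s,a)^\top G^{-1}\phi(x,b)\bigr)^2\Bigr)^{1/2}.
\]
Writing $w:=G^{-1}\phi(s,a)$ and using the identity $\sum_{(x,b)}\tilde\rho(x,b)\phi(x,b)\phi(x,b)^\top=G$ collapses the quadratic form,
\[
\sum_{(x,b)}\tilde\rho(x,b)\,\bigl(w^\top\phi(x,b)\bigr)^2=w^\top G\,w=\phi(s,a)^\top G^{-1}\phi(s,a)\le g(\tilde\rho)\le 2d,
\]
so the weighted sum is at most $\sqrt{2d}$, and the claimed bound follows after recombining with the factored-out maximum of $|z|$.

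The main obstacle is simply avoiding the naive route: bounding the weighted sum by the triangle inequality and then $\|G^{-1/2}\phi(x,b)\|=\sqrt{\phi(x,b)^\top G^{-1}\phi(x,b)}\le\sqrt{2d}$ gives only the factor $2d$, whereas the variance-collapse identity above is what recovers the tight $\sqrt{2d}$. A minor technical point I would verify at the outset is that $G$ is invertible, so that $G^{-1}$ and the quadratic forms are well defined: this holds because $\Phi$ spans $\mathbb{R}^d$ by~\cref{asm_32} and the design places positive mass on a spanning subset of $\mathcal{C}$, making $G\succ 0$.
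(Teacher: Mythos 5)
Your proof is correct and is the standard Kiefer--Wolfowitz argument: expand $\langle\phi(s,a),W(z)\rangle$ as a $\tilde\rho$-weighted sum, apply Cauchy--Schwarz with respect to the probability measure $\tilde\rho$, and collapse the resulting quadratic form via $\sum_{(x,b)}\tilde\rho(x,b)\phi(x,b)\phi(x,b)^\top=G$ to get $\phi(s,a)^\top G^{-1}\phi(s,a)\le g(\tilde\rho)\le 2d$. The paper itself does not prove this lemma --- it cites it as the unweighted case of Lemma~4.3 of \citet{kitamura2023regularization} --- and your argument is precisely the one underlying that cited result, correctly obtaining the sharp $\sqrt{2d}$ rather than the $2d$ that the naive term-by-term bound would give.
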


\subsection{Proof of Lemma \ref{lem_optG2L} (Optimality Guarantees for Algorithm \ref{alg_CVIL} - Linear CMDP)}
\label{sec_LP2}
We define 
\begin{align}
\bmvkd & := \frac{1}{T} \sum_{i=1}^T \hat{\mathcal{V}}_{\diamond}^i \overset{\text{(a)}}{=}  \frac{1}{T} \left(\pi 
 \left\langle \phi, \sum_{i=1}^T \omega^i_{\diamond} \right\rangle \right) \label{ave_empvm} \\
\langle \phi,\boldsymbol{\omega}_{\diamond}^{t} \rangle & := \diamond + \gamma P \hat{\mathcal{V}}_{\diamond}^{t-1}
\end{align}
where (a) is from the last line in Algorithm \ref{alg_CVIL}.

\OPGuaranteebL*

\begin{proof} 
Using the value difference lemma, 
\begin{align*}
 \bmvkd - V^{\pi}_{\diamond} &=  (I-\gamma P_{\pi})^{-1} ( \bmvkd -  (\pi\diamond) - \gamma P_{\pi} \bmvkd)  .
\end{align*}
We now have
\begin{align*}
\bmvkd - V^{\pi}_{\diamond}  &= (I-\gamma P_{\pi})^{-1} \left[  \left( \pi \left\langle \phi, \frac{1}{T}\sum_{i=1}^T \omega^i_{\diamond} \right\rangle \right)   -  (\pi\diamond) - \gamma P_{\pi} \bmvkd \right] \tag{By definition of $\bmvkd$ in~\cref{ave_empvm}} \\
&= (I-\gamma P_{\pi})^{-1}  \left[ \left( \pi \left\langle \phi, W \left(\frac{1}{T} \sum_{i=1}^{T}(\hmqkid- \langle \phi,{\boldsymbol{\omega}}_{\diamond}^{i} \rangle)\right) \right\rangle \right) +  (\pi\diamond) + \gamma P_{\pi} \left(\pi \left\langle \phi, \frac{1}{T}\sum_{i=1}^{T-1} \omega^i_{\diamond} \right\rangle \right) \right.\\
&\left. \indent - (\pi\diamond) - \gamma P_{\pi} \bmvkd \right] \tag{By Lemma \ref{lem_sumthetaM}}\\
&= (I-\gamma P_{\pi})^{-1}  \left[ \left( \pi \left\langle \phi, W \left(\frac{1}{T} \sum_{i=1}^{T}(\hmqkid- \langle \phi,{\boldsymbol{\omega}}_{\diamond}^{i} \rangle)\right) \right\rangle \right) + \gamma P_{\pi} \left(\pi \left\langle \phi, \frac{1}{T}\sum_{i=1}^{T-1} \omega^i_{\diamond} \right\rangle \right) -  \gamma P_{\pi} \bmvkd \right] \\
&= (I-\gamma P_{\pi})^{-1}  \left[ \left( \pi \left\langle \phi, W \left(\frac{1}{T} \sum_{i=1}^{T}(\hmqkid- \langle \phi,{\boldsymbol{\omega}}_{\diamond}^{i} \rangle)\right) \right\rangle \right) + \gamma P_{\pi} \left(\pi \left\langle \phi, \frac{1}{T}\sum_{i=1}^{T-1} \omega^i_{\diamond} \right\rangle \right) \right.\\
&\left. \indent -  \gamma P_{\pi} \left(\pi \left\langle \phi, \frac{1}{T}\sum_{i=1}^{T} \omega^i_{\diamond} \right\rangle \right) \right] \tag{By definition of $\bmvkd$ in~\cref{ave_empvm}} \\
&= (I-\gamma P_{\pi})^{-1}  \left[ \left( \pi \left\langle \phi, W \left(\frac{1}{T} \sum_{i=1}^{T}(\hmqkid- \langle \phi,{\boldsymbol{\omega}}_{\diamond}^{i} \rangle)\right) \right\rangle \right) -  \gamma P_{\pi} \frac{1}{T} (\pi \langle \phi, \omega^T_{\diamond}\rangle) \right]. 
\end{align*}
Taking the infinity norm and using the triangle inequality, 
\begin{align*}
\norminf{\bmvkd - V^{\pi}_{\diamond}} & \leq \norminf{(I-\gamma P_{\pi})^{-1}   \left( \pi \left\langle \phi, W \left(\frac{1}{T} \sum_{i=1}^{T}(\hmqkid- \langle \phi,{\boldsymbol{\omega}}_{\diamond}^{i} \rangle)\right) \right\rangle \right)} \\
& + \norminf{(I-\gamma P_{\pi})^{-1} \gamma P_{\pi} \frac{1}{T} (\pi \langle \phi, \omega^T_{\diamond}\rangle)} \\
& \leq \norm{(I-\gamma P_{\pi})^{-1}}_{1, \infty} \, \norm{\pi \left\langle \phi, W \left(\frac{1}{T} \sum_{i=1}^{T}(\hmqkid- \langle \phi,{\boldsymbol{\omega}}_{\diamond}^{i} \rangle)\right) \right\rangle}_\infty \\
& + \norm{ (I-\gamma P_{\pi})^{-1} \gamma P_{\pi} }_{1, \infty} \, \norm{\frac{1}{T} (\pi \langle \phi, \omega^T_{\diamond}\rangle)}_\infty \tag{By Holder's inequality} \\
& \leq H \left[\norm{\pi \left\langle \phi, W \left(\frac{1}{T} \sum_{i=1}^{T}(\hmqkid- \langle \phi,{\boldsymbol{\omega}}_{\diamond}^{i} \rangle)\right) \right\rangle}_\infty + \norm{\frac{1}{T} (\pi \langle \phi, \omega^T_{\diamond}\rangle)}_\infty  \right] \tag{Since $\norm{(I-\gamma P_{\pi})^{-1}}_{1, \infty} \leq H$, $\norm{(I-\gamma P_{\pi})^{-1} \gamma P_{\pi}}_{1, \infty} \leq H$} \\
& \leq H \left[\norm{\left\langle \phi, W \left(\frac{1}{T} \sum_{i=1}^{T}(\hmqkid- \langle \phi,{\boldsymbol{\omega}}_{\diamond}^{i} \rangle)\right) \right\rangle}_\infty + \norm{\frac{1}{T} \langle \phi, \omega^T_{\diamond}\rangle}_\infty  \right]  \tag{By definition of the $\pi$ operator} \\
&\leq \tilde{O} \left( H^2 \sqrt{\frac{d}{TM}} + \frac{H^2}{T} \right). \tag{By Lemma \ref{lem_rangeVL2} and Lemma \ref{lem_thetaconcentLM}}
\end{align*}
Using that for any $|\cS|$-dimensional vector $V$, $V(\rho) = \mathbb{E}_{s \sim \rho}|V(s)| \leq \| V \|_{\infty}$ completes the proof.
\end{proof}

\subsubsection{Auxiliary Lemmas}
Since the updates in Algorithm \ref{alg_CVIL} are a special case of those in Algorithm \ref{alg_MDVICLMDP}, the proofs of the auxiliary lemmas are analogous.  We therefore present lemmas analogous to Lemma \ref{lem_rangeVL}, Lemma \ref{lem_sumtheta} and Lemma \ref{lem_thetaconcentL}, whose proofs follow by the same reasoning.

\begin{lemma}
\label{lem_rangeVL2}
For any  $t \in [T]$, with $\diamond = r \text{ or } c$ and $M \geq \tilde{O}\left( {dH^2} \right)$, we have
\begin{align*}
\| \langle \phi, \omega^t_{\diamond}\rangle\|_{\infty} \leq 2H \quad \text{and} \quad \| \hmvkttd{t} \|_{\infty} \leq 2H
\end{align*}
with probability at least $1-\delta$.
\end{lemma}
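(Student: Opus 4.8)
The plan is to mirror the inductive argument used for Lemma~\ref{lem_rangeVL}, but the proof here is simpler because \texttt{LS-PE} evaluates a \emph{fixed} policy $\pi$, so there is no greedy $\max$ operation and no running sum of past iterates, and because the reward $\diamond \in [0,1]$ contributes a factor of $1$ rather than $1+\lambda_k$. First I would record that, by the definition of $W$ in~\cref{eq:W-def-2} together with Line~5 of~\cref{alg_CVIL}, we have $\omega^t_{\diamond} = W(\hmqktd{t})$, while the realizability of the linear MDP gives $\boldsymbol{\omega}_{\diamond}^{t} = W(\langle \phi, \boldsymbol{\omega}_{\diamond}^{t} \rangle)$ with $\langle \phi, \boldsymbol{\omega}_{\diamond}^{t} \rangle = \diamond + \gamma P \hmvkttd{t-1}$. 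Using that $W(\cdot)$ is linear in its argument together with the triangle inequality, I would decompose
\[
\left| \langle \phi, \omega^t_{\diamond} \rangle \right| \leq \left| \langle \phi, \boldsymbol{\omega}_{\diamond}^{t} \rangle \right| + \left| \langle \phi, W(\hmqktd{t} - \langle \phi, \boldsymbol{\omega}_{\diamond}^{t} \rangle) \rangle \right|,
\]
which separates the realizable one-step backup from the statistical (extrapolation) error of the regression.

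The first term is bounded by $(1 + \gamma \norminf{\hmvkttd{t-1}})\mathbf{1}$ since $\diamond(s,a) \leq 1$. For the second term, I would apply the KW bound (Lemma~\ref{lem_43}) with $z = \hmqktd{t} - \langle \phi, \boldsymbol{\omega}_{\diamond}^{t} \rangle$ to reduce the supremum over all of $\mathcal{S}\times\mathcal{A}$ to $\sqrt{2d}\,\max_{(s,a)\in\mathcal{C}}|z(s,a)|$. By the definitions of $\hmqktd{t}$ and $\boldsymbol{\omega}_{\diamond}^{t}$ the reward term cancels, leaving $z(s,a) = \gamma(\hat{P}_{t-1}\hmvkttd{t-1})(s,a) - \gamma(P\hmvkttd{t-1})(s,a)$, i.e. exactly the empirical-minus-true backup. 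Normalizing by $\norminf{\hmvkttd{t-1}}$ and invoking Azuma--Hoeffding (Lemma~\ref{lem_Azuma_Hoeffding}) with a union bound over $(s,a)\in\mathcal{C}$ and $t\in[T]$ would give, with probability $1-\delta$, that this error is at most $\norminf{\hmvkttd{t-1}}\,\tilde{O}(\gamma\sqrt{d/M})\,\mathbf{1}$.

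Combining the two terms yields $\norminf{\langle \phi, \omega^t_{\diamond} \rangle} \leq 1 + \gamma\norminf{\hmvkttd{t-1}} + \norminf{\hmvkttd{t-1}}\,\tilde{O}(\gamma\sqrt{d/M})$, and I would then close the induction on $t$. The base case holds since $\hmvkttd{0} = \mathbf{0}$; assuming $\norminf{\hmvkttd{t-1}} \leq 2H$ and taking $M \geq \tilde{O}(dH^2)$ to force the last term below $1$, the right-hand side becomes $2 + 2H\gamma = 2H$ using $H = 1/(1-\gamma)$. The bound on $\hmvkttd{t}$ then follows from $\hmvkttd{t} = (\pi\langle\phi,\omega^t_{\diamond}\rangle)$, because $\pi$ is an averaging operator and hence $\norminf{\hmvkttd{t}} \leq \norminf{\langle\phi,\omega^t_{\diamond}\rangle} \leq 2H$. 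The main obstacle is the interplay between the induction and the concentration step: the Azuma--Hoeffding bound must hold \emph{uniformly} over all iterations before the induction can be run, and it relies on $\hmvkttd{t-1}$ being measurable with respect to the filtration generated by data collected strictly before iteration $t$. This is precisely why the buffer is split into disjoint subsets $\mathcal{B}_0,\dots,\mathcal{B}_{T-1}$, so that $\hat{P}_{t-1}$ uses fresh samples independent of $\hmvkttd{t-1}$; keeping this filtration bookkeeping correct while taking the union bound is the delicate part.
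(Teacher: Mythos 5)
Your proposal is correct and follows essentially the same route the paper intends: the paper only remarks that Lemma~\ref{lem_rangeVL2} ``follows by the same reasoning'' as Lemma~\ref{lem_rangeVL}, and your argument is precisely that induction (triangle inequality on $W$, the KW bound of Lemma~\ref{lem_43}, Azuma--Hoeffding with a union bound over $\mathcal{C}\times[T]$, and $M \geq \tilde{O}(dH^2)$ to close with $2+2H\gamma = 2H$), correctly simplified for a fixed policy with $\diamond\in[0,1]$. Your remarks on the filtration and the disjoint buffers are also consistent with how the paper handles the concentration step.
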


\begin{lemma}
\label{lem_sumthetaM}
For any $t\in [T]$, $\diamond = r \text{ or } c$, we have
\begin{align*}
\left\langle \phi, \frac{1}{T} \sum_{i=1}^T \omega^i_{\diamond} \right\rangle &=  \left\langle \phi, W \left(\frac{1}{T} \sum_{i=1}^{T}(\hmqkid- \langle \phi,{\boldsymbol{\omega}}_{\diamond}^{i} \rangle)\right) \right\rangle +  \diamond + \gamma P \left( \pi \left\langle \phi, \frac{1}{T} \sum_{i=1}^{T-1} \omega^i_{\diamond} \right\rangle \right).
\end{align*}
\end{lemma}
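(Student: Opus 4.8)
The plan is to mirror the proof of Lemma~\ref{lem_sumtheta} almost verbatim, exploiting that Algorithm~\ref{alg_CVIL} is the fixed-policy specialization of Algorithm~\ref{alg_MDVICLMDP}, with $\theta_\Box^i$ replaced by $\omega_\diamond^i$ and the greedy operator replaced by the fixed operator $\pi$. The two facts I would lean on from the least-squares definition in~\eqref{eq:W-def-2} are that $\omega_\diamond^i = W(\hmqkid)$ (Line 5 of Algorithm~\ref{alg_CVIL}) and $\boldsymbol{\omega}_\diamond^i = W(\langle\phi,\boldsymbol{\omega}_\diamond^i\rangle)$, together with the linearity of the map $z \mapsto W(z)$ in its target vector.

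First I would add and subtract $\frac{1}{T}\sum_{i=1}^T \langle\phi,\boldsymbol{\omega}_\diamond^i\rangle$, splitting the left-hand side $\langle\phi,\frac{1}{T}\sum_{i=1}^T\omega_\diamond^i\rangle$ into a ``regression-error'' sum $\frac{1}{T}\sum_{i=1}^T\langle\phi,\omega_\diamond^i-\boldsymbol{\omega}_\diamond^i\rangle$ and a ``bias'' sum $\frac{1}{T}\sum_{i=1}^T\langle\phi,\boldsymbol{\omega}_\diamond^i\rangle$. For the first sum, substituting $\omega_\diamond^i = W(\hmqkid)$ and $\boldsymbol{\omega}_\diamond^i = W(\langle\phi,\boldsymbol{\omega}_\diamond^i\rangle)$ and then pulling the finite average inside $W$ by linearity collapses it exactly to $\langle\phi,\, W(\frac{1}{T}\sum_{i=1}^T(\hmqkid - \langle\phi,\boldsymbol{\omega}_\diamond^i\rangle))\rangle$, which is the first term on the right-hand side.

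The only piece requiring a small separate computation (the analog of Lemma~\ref{lem_avetheta}) is the bias sum. Here I would plug in the defining identity $\langle\phi,\boldsymbol{\omega}_\diamond^i\rangle = \diamond + \gamma P \hat{\mathcal{V}}_\diamond^{i-1}$, pull $\diamond$ out of the average, and shift the summation index so that $\frac{1}{T}\sum_{i=1}^T \hat{\mathcal{V}}_\diamond^{i-1} = \frac{1}{T}\sum_{i=0}^{T-1}\hat{\mathcal{V}}_\diamond^{i}$; since $\hat{\mathcal{V}}_\diamond^0 = \mathbf{0}$, this equals $\frac{1}{T}\sum_{i=1}^{T-1}\hat{\mathcal{V}}_\diamond^{i}$. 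Finally, Line 6 of Algorithm~\ref{alg_CVIL} gives $\hat{\mathcal{V}}_\diamond^i = (\pi\langle\phi,\omega_\diamond^i\rangle)$, and linearity of the $\pi$ operator turns the average into $(\pi\langle\phi,\frac{1}{T}\sum_{i=1}^{T-1}\omega_\diamond^i\rangle)$, so the bias sum becomes $\diamond + \gamma P(\pi\langle\phi,\frac{1}{T}\sum_{i=1}^{T-1}\omega_\diamond^i\rangle)$. Adding the two pieces yields the claim.

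I do not anticipate any genuine obstacle: the argument is purely algebraic. The hardest part is bookkeeping the summation limits ($1$ to $T$ versus the shifted $0$ to $T-1$, together with the $\hat{\mathcal{V}}_\diamond^0=\mathbf{0}$ base case) and carefully distinguishing the regression solutions $\omega_\diamond^i$ from the population parameters $\boldsymbol{\omega}_\diamond^i$. The single substantive property invoked—that $W$ commutes with finite averages of its targets—is immediate from the closed form in~\eqref{eq:W-def-2}.
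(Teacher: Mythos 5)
Your proposal is correct and follows essentially the same route the paper intends: the paper omits an explicit proof, stating only that it is analogous to Lemma~\ref{lem_sumtheta} (add-and-subtract the population parameters, use $\omega_\diamond^i = W(\hmqkid)$, $\boldsymbol{\omega}_\diamond^i = W(\langle\phi,\boldsymbol{\omega}_\diamond^i\rangle)$ and linearity of $W$, then an analog of Lemma~\ref{lem_avetheta} for the bias sum), and your write-up fills in exactly those steps, including the correct handling of the index shift and the $\hat{\mathcal{V}}_\diamond^0=\mathbf{0}$ base case.
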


\begin{lemma}
\label{lem_thetaconcentLM}
With $\diamond = r \text{ or } c$ and $M \geq \tilde{O}\left( {dH^2} \right)$, we have
\begin{align*}
\left\|  \left\langle \phi, W \left(\frac{1}{T} \sum_{i=1}^{T}(\hmqkid- \langle \phi,{\boldsymbol{\omega}}_{\diamond}^{i} \rangle)\right) \right\rangle \right\|_{\infty} \leq \tilde{O}\left( { H}  \sqrt{\frac{d}{TM}} \right)
\end{align*}
with probability at least $1-\delta$.
\end{lemma}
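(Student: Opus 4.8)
The plan is to follow the template of the proof of Lemma~\ref{lem_thetaconcentL} almost verbatim, exploiting the fact that the \texttt{LS-PE} updates of Algorithm~\ref{alg_CVIL} are the fixed-policy special case of the \texttt{LS-MDVI} updates. Every structural step therefore transfers directly, with the single change that the reward $\diamond \in \{r,c\}$ lies in $[0,1]$ rather than $r+\lambda_k c$ lying in $[0,1+\lambda_k]$; consequently the $(1+\lambda_k)$ factor appearing throughout Lemma~\ref{lem_thetaconcentL} is replaced by $1$.

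First I would apply the KW bound (Lemma~\ref{lem_43}) with the choice $z = \frac{1}{T}\sum_{i=1}^{T}(\hmqkid - \langle \phi, \boldsymbol{\omega}_{\diamond}^{i} \rangle)$. Since $W(z)$ is linear in $z$, this reduces the extrapolated $\ell_\infty$ norm over all of $\mathcal{S}\times\mathcal{A}$ to $\sqrt{2d}$ times the maximum of $|z(s,a)|$ over the coreset $\mathcal{C}$. Next I would expand the integrand using the definitions in Algorithm~\ref{alg_CVIL}: the regression target satisfies $\hmqkid(s,a) = \diamond(s,a) + \gamma(\hat{P}_i \hmvkttd{i})(s,a)$, while the linear-MDP parameter obeys $\langle \phi, \boldsymbol{\omega}_{\diamond}^{i} \rangle = \diamond + \gamma P \hmvkttd{i}$. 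The deterministic $\diamond(s,a)$ terms cancel, leaving $\max_{(s,a)\in\mathcal{C}}\big|\frac{1}{T}\sum_i \gamma[(\hat{P}_i \hmvkttd{i})(s,a) - (P\hmvkttd{i})(s,a)]\big|$, namely the averaged difference between the empirical and population Bellman backups.

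I would then invoke Lemma~\ref{lem_rangeVL2} to place the uniform bound $\norminf{\hmvkttd{i}} \le 2H$ (note: no $(1+\lambda_k)$ here, as $\diamond\in[0,1]$) on a high-probability event. On that event I would treat $\frac{1}{T}\sum_i (\hat{P}_i \hmvkttd{i})(s,a)$ as an average of $TM$ bounded martingale differences, one per next-state draw across the $T$ iterations, each of magnitude at most $2H$. Applying the concentration bound (Lemma~\ref{lem_concentrate}) together with a union bound over the $\tilde{O}(d)$ elements of $\mathcal{C}$ yields $\max_{(s,a)}|z(s,a)| \le \tilde{O}(H\sqrt{1/(TM)})$, and multiplying by $\sqrt{2d}$ gives the claimed $\tilde{O}(H\sqrt{d/(TM)})$ after rescaling $\delta$ to absorb the union-bound cardinality.

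The main obstacle is not algebraic but probabilistic: verifying the martingale structure so that Lemma~\ref{lem_concentrate} actually applies. One must check that $\hmvkttd{i}$ is measurable with respect to the filtration $\mathcal{F}$ generated by the earlier buffers $\mathcal{B}_0,\dots,\mathcal{B}_{i-1}$, so that the $M$ fresh next-state samples in $\mathcal{B}_i$ forming $\hat{P}_i$ are conditionally independent of the value function being averaged; only then does $(\hat{P}_i \hmvkttd{i})(s,a) - (P\hmvkttd{i})(s,a)$ have conditional mean zero and the per-term range $2H$ translate into the $\sqrt{1/(TM)}$ scaling over the concatenated inner iterations. The hypothesis $M \ge \tilde{O}(dH^2)$ is precisely what Lemma~\ref{lem_rangeVL2} requires to secure the $2H$ range bound, and once this bookkeeping is in place the remainder is a direct transcription of the \texttt{LS-MDVI} argument with $(1+\lambda_k)\to 1$.
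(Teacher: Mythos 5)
Your proposal is correct and follows essentially the same route as the paper: the paper proves this lemma only by reference to the analogous \texttt{LS-MDVI} result (Lemma~\ref{lem_thetaconcentL}), whose proof is exactly the sequence you describe — the KW bound with the linearity of $W$, cancellation of the reward terms via the definitions of $\hmqkid$ and $\boldsymbol{\omega}_{\diamond}^{i}$, the range bound from the analogue of Lemma~\ref{lem_rangeVL}, and the martingale concentration of Lemma~\ref{lem_concentrate} with a union bound over $\mathcal{C}$ and a rescaling of $\delta$, with $(1+\lambda_k)$ replaced by $1$. Your additional remark on verifying that $\hmvkttd{i}$ is measurable with respect to the earlier buffers so that the fresh samples in $\mathcal{B}_i$ yield mean-zero increments is exactly the filtration structure the paper's concentration lemma assumes.
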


\subsection{Proof of Corollary \ref{thm_main2C}}
\label{appendix_maincor}

\MainresultLC*

\begin{proof}
By Lemma \ref{lem_optGL} and Lemma \ref{lem_optG2L}, the sample complexity required to ensure $f(\mathcal{B}) \leq O(\epsilon)$ is $TM|\mathcal{C}| = \tilde{O}\left( \frac{d^2 H^4}{\epsilon^2} \right)$. Therefore, the guarantee for the relaxed feasibility setting follows directly from our meta-theorem (Theorem~\ref{thm_main2}). For the strict feasibility setting, we rescale $\epsilon$ by a factor of $O(\zeta(1 - \gamma))$. Since $\epsilon \leq 1$ and $1-\gamma \leq 1$, the condition of $f(\mathcal{B}) \leq \zeta/6$ in Theorem~\ref{thm_main2} can be satisfied. The rescaling increases the sample complexity by a multiplicative factor of $\frac{1}{\zeta^2 (1 - \gamma)^2}$, thereby completing the proof. 
\end{proof}

\subsection{Instantiating the \texttt{MDP-Solver}: G-Sampling-and-Stop}
\label{app:gss}

Instead of \texttt{LS-MDVI}, we can instantiate the linear \texttt{MDP-Solver} in Algorithm \ref{alg_CMDPL} with the \texttt{GSS} algorithm \citep{taupin2023best}. The \texttt{GSS} algorithm begins by computing a G-optimal sampling distribution over state-action pairs that minimizes the worst-case variance of value estimates. It then repeatedly samples transitions and rewards according to this distribution and uses regularized least-squares estimators to learn the reward and transition parameters of the MDP. For an arbitrary distribution $\tilde \rho$ over $\mathcal{S} \times \mathcal{A}$, let $G \in \mathbb{R}^{d \times d}$ and $g(\tilde\rho) \in \mathbb{R}$ be defined as:
\begin{align*}
G := \sum_{(x,b) \in \mathcal{C}} \tilde{\rho}(x, b) \, \phi(x, b) \phi(x, b)^\top 
\qquad \text{and} \qquad
g(\tilde\rho) := \max_{(s,a) \in \mathcal{S} \times \mathcal{A}} \langle \phi(s,a), G^{-1} \phi(s,a) \rangle,
\end{align*}
The \texttt{GSS} method samples one state-action pair $(s_t,a_t)\sim \rho^*$ in an iteration $t$ where $\rho^* := \argmin_{\rho \in \Delta_{\mathcal{S}\times\mathcal{A}}} g(\rho)$. We denote this data collection procedure as \texttt{DataCollection-GSS}. Note that this  is different than the sampling scheme used in~\cref{appendix_A}.

For solving a linear MDP, the \texttt{GSS} algorithm uses a stopping rule based on confidence bounds derived from matrix concentration inequalities, and determines when the estimates are accurate enough to ensure that the returned policy is $\epsilon$-optimal for the true MDP with high probability. The stopping time is denoted by
\begin{align*}
\tau= \inf \{t \geq 1: Z(t) \geq \beta(t)  \}
\end{align*}
where $\beta(t)$ is a certain threshold and $Z(t)$ is the quantity we seek to control in order to achieve the desired sample complexity.
Their main result in the setting of infinite-horizon $\gamma$-discounted linear unconstrained MDPs is stated below.
\begin{theorem}[Theorem 2 and Theorem 3 in \citep{taupin2023best}]
Let $\epsilon, \delta \in (0,1)$. The \texttt{GSS} algorithm returns an $\epsilon$-optimal policy with probability at least $1-\delta$, and the expected number of samples used is bounded by
\begin{align*}
O\left( \frac{d}{(1 - \gamma)^4  \epsilon^2} \left( \log\left( \frac{1}{\delta} \right) + d \log\left( \frac{d}{(1 - \gamma)^4 \epsilon^2} \right) \right)\right).
\end{align*}
\end{theorem}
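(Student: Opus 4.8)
The plan is to first recognize that this statement is a verbatim restatement of Theorems~2 and~3 of \citet{taupin2023best}, so strictly speaking no new argument is needed here: one simply invokes their analysis. Nonetheless, if I were to reconstruct the proof, I would follow the standard two-part template for ``sample-and-stop''-style best-policy-identification algorithms: (i) a \emph{correctness} argument showing that whenever the algorithm halts, the returned policy is $\epsilon$-optimal with probability at least $1-\delta$, and (ii) a \emph{sample-complexity} argument that upper bounds the random stopping time $\tau$.

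For part (i), I would argue that the threshold $\beta(t)$ is calibrated so that the stopping event $\{Z(t)\geq \beta(t)\}$ certifies, via a matrix/self-normalized concentration inequality (of Freedman/Bernstein type for the regularized least-squares estimators of the transition and reward parameters), that the estimated model lies in a confidence ellipsoid containing the true linear-MDP parameters with probability $1-\delta$. Since the value function of any policy is a fixed contraction-based functional of these parameters, confining the parameter error translates into a uniform-over-policies bound on the value-estimation error, so the plug-in/greedy policy at time $\tau$ is $\epsilon$-optimal. The $\log(1/\delta)$ term in the bound is precisely the cost of this high-probability event, while the additive $d\log(\cdot)$ term reflects the $d$-dimensional size of the confidence ellipsoid (a determinant-of-covariance / covering factor).

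For part (ii), the key is that $\rho^*=\argmin_{\rho}g(\rho)$ is a $G$-optimal design, so by the Kiefer--Wolfowitz equivalence theorem $g(\rho^*)\leq d$; that is, the worst-case variance proxy $\max_{(s,a)}\phi(s,a)^\top G^{-1}\phi(s,a)$ is at most $d$. After $t$ draws from $\rho^*$ the empirical covariance concentrates around $t\,G$, so the per-$(s,a)$ variance proxy decays like $d/t$. Propagating this through the value functional, which contributes a factor of order $(1-\gamma)^{-1}$ from the value range (the regression targets have magnitude $(1-\gamma)^{-1}$) and a further $(1-\gamma)^{-1}$ from error amplification of the discounted Bellman fixed point, yields a worst-case value error of order $\sqrt{d/((1-\gamma)^4 t)}$. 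Requiring this below $\epsilon$ (equivalently, identifying when $Z(t)$ first exceeds $\beta(t)$) gives $\tau=\tilde O\!\left(d/((1-\gamma)^4\epsilon^2)\right)$, matching the claimed expression once the logarithmic factors from $\beta(t)$ are restored.

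The main obstacle I anticipate is handling the \emph{adaptivity} of the random stopping time $\tau$: the concentration bounds must hold uniformly over all $t$ (via a peeling / union-over-a-geometric-grid argument or a time-uniform self-normalized bound), because the decision to stop depends on the observed data. Pinning down the exact $(1-\gamma)^{-4}$ horizon dependence, and verifying that the $G$-optimal allocation is essentially optimal against the information-theoretic lower bound motivating the stopping rule, is the delicate part; the remaining ingredients (Kiefer--Wolfowitz, least-squares estimation, and plug-in value error propagation) are routine.
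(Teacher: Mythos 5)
Your proposal is correct and matches the paper exactly: the paper offers no proof of this statement, but simply imports it verbatim as Theorem 2 and Theorem 3 of \citet{taupin2023best}, which is precisely your primary move. Your additional reconstruction sketch (correctness via a calibrated stopping threshold plus a Kiefer--Wolfowitz-based bound on the stopping time) is a reasonable account of how the cited result is established, but it is supplementary to what the paper itself does and cannot be checked against any argument in this paper.
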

Using the \texttt{GSS} algorithm as an alternative instantiation of \texttt{MDP-Solver}$(r + \lambda_k c, \mathcal{B}, \phi)$, we have that, with $N = \tilde{O}\left( \frac{d^2 H^4}{\epsilon^2} \right)$, the \texttt{GSS} algorithm satisfies Assumption~\ref{asmp_b1} with $f_{\mathrm{mdp}}(\mathcal{B}) = O(\epsilon)$. Hence, instantiating the three oracles by \texttt{DataCollection-GSS}, the \texttt{GSS} algorithm and using the same \texttt{PolicyEvaluation} oracle as in~\cref{alg_CVIL}, we can use our meta-theorem (Theorem~\ref{thm_main2}) to obtain the same sample complexity bounds as in~\cref{thm_main2C}.

\section{Lower Bound}
\label{sec_LB}

\LB*

\begin{proof}
We construct a lower bound by using the ideas from \cite{weisz2022confident,vaswani2022near}.

\paragraph{Hard Instance.} We construct a class of hard linear CMDPs, where each individual CMDP, denoted as $\mathcal{M}_{\beta}$, is parameterized by a vector $\beta$. Each MDP in $\mathcal{M}_{\beta}$ has four states: $\mathcal{S}=\{o, s_0,s_1, z\}$ with $o$ being the initial state. The action space is $\mathcal{A}= \bar{\mathcal{A}} \times \{0,1\}  = \{\pm1/\sqrt{d-5} \}^{d-5}\times \{0,1\}$, where $\bar{\mathcal{A}}$ is a subset of a $(d-5)$-dimensional hypercube. Thus, each action can be written as $a = (\bar{a}^{\top},a^{\prime})^{\top} \in \mathcal{A}$ where $\bar{a}$ is a $d-5$ dimensional vector and $a^{\prime} \in \{0,1\}$ is a scalar. Recall from Assumption \ref{asm_32} that the reward $r(s,a)=\langle \phi(s,a), \psi_r \rangle$, cost $ c(s,a)=\langle \phi(s,a), \psi_c \rangle$, and transition probability $\mathcal{P}(\cdot|s,a)=\langle \phi(s,a), \mu \rangle$ are all defined as linear functions. 
Let $\beta \in \bar{\mathcal{A}}$, $\Delta\in (0,0.2(1-\gamma)]$, $u=\frac{1-\gamma^2-\gamma\Delta}{\gamma}(b-\zeta)$. Note that the parameter $\zeta$ is the Slater constant for this CMDP instance. As can be verified from \cref{slater}, there exists a policy that achieves a constraint value of $b+\zeta$, satisfying the definition. Now, we define the following parameters.
\begin{align*}
\phi(s_0,a)&=(1,0,0,0,0,\bar{a}^{\top})^{\top}, & \phi(s_1,a)&=(0,1,0,0,0,\dots,0)^{\top}, \\
\phi(o,a) &= (0,0,a^{\prime},1-a^{\prime},0,\dots,0)^{\top}, &  \phi(z,a) &=(0,0,0,1,1,0,\dots,0)^{\top}\\
\mu(s_0)&=(\gamma,0,1,0,0,\Delta\beta^{\top})^{\top}, & \mu(s_1)&=(1-\gamma,1,0,0,0,-\Delta\beta^{\top})^{\top}, \\
\mu(o)&= (0,\dots,0)^{\top}, & \mu(z)&= (0,0,0,1,0,\dots,0)^{\top},\\
\psi_r&= (1,0,\dots,0)^{\top} , & \psi_c&= (u,0,0,0,(b+\zeta)(1-\gamma)/\gamma,0\dots,0)^{\top}.
\end{align*}
Under the above parameter settings, the transition dynamics is defined as follows.
\begin{align*}
\mathcal{P}_{\beta}(s_0|s_0,a) &= \gamma + \Delta\beta^\top \bar{a}, & \mathcal{P}_{\beta}(s_1|s_0,a) &= 1 - \gamma - \Delta\beta^\top \bar{a}, \\
\mathcal{P}_{\beta}(s_0|o,a) &= a^{\prime}, & \mathcal{P}_{\beta}(z|o,a) &= 1-a^{\prime},
\end{align*}
States $s_1$ and $z$ are absorbing. The reward and constraint functions are then specified as follows. For any action $a$,
\begin{align*}
r(s_0,a) &= 1, & r(z,a) &= 0,\\
c(s_0,a) &= u, & c(z,a) &= (b+\zeta)(1-\gamma)/\gamma.
\end{align*}
Rewards and costs for other states are zero. By solving the Bellman equation $V^{\pi}_{r,\beta}(s_0)=1+\gamma(\gamma+\Delta \mathbb{E}_{\bar{a}\sim \pi}[\bar{a}^{\top}\beta])V^{\pi}_{r,\beta}(s_0)$, we can define the value functions for a policy $\pi$ in the MDP $\mathcal{M}_{\beta}$. The notation $V^{\pi}_{r,\beta}$ represents the reward value function for policy $\pi$ within the specific MDP instance defined by $\beta$. Similarly, $V^{\pi}_{c,\beta}$ is the cost value function. The value functions starting from the initial state $o$ are
\begin{align}
V^{\pi}_{r,\beta}(o) &= \gamma \sum_{a\in\mathcal{A}}a^{\prime} \pi(a|o) V^{\pi}_{r,\beta}(s_0)= \frac{\gamma \sum_{a\in\mathcal{A}}a^{\prime}\pi(a|o)}{1-\gamma^2-\gamma \Delta \mathbb{E}_{\bar{a}\sim \pi}[\bar{a}^{\top}\beta]}, \label{eqn_vpibeta} \\
V^{\pi}_{c,\beta}(o) &= \gamma \sum_{a\in\mathcal{A}}a^{\prime}\pi(a|o) V^{\pi}_{c,\beta}(s_0) + \gamma \sum_{a\in\mathcal{A}} (1-a^{\prime})\pi(a|o)V^{\pi}_{c,\beta}(z) \nonumber \\
&= \frac{\gamma u \sum_{a\in\mathcal{A}} a^{\prime}\pi(a|o)}{1-\gamma^2-\gamma \Delta \mathbb{E}_{\bar{a}\sim \pi}[\bar{a}^{\top}\beta]} + \sum_{a\in\mathcal{A}}\pi(a|o)(1-a^{\prime})(b+\zeta). 
\end{align}
Note that $V_{r,\beta}^{\pi}(s_1)=0$ and $V^{\pi}_{c,\beta}(z)=\frac{1}{1-\gamma}\cdot\frac{(b+\zeta)(1-\gamma)}{\gamma}=\frac{b+\zeta}{\gamma}$. We now define the probability of a policy $\pi$ transitioning from $o$ to $s_0$ as $p:=\mathcal{P}^{\pi}_{\beta} [o\rightarrow s_0]$. According to the hard instance construction, $\mathcal{P}_{\beta}(s_0|o,a)=a^{\prime}$. Thus, $p$ is the marginal probability under $\pi$ that $a^{\prime}=1$ when in state $o$. This is equivalent to stating that the policy $\pi$ must satisfy:
\begin{align*}
\sum_{\bar{a}\in \bar{\mathcal{A}}}\pi((\bar{a}^{\top},1)^{\top}|o)=p \quad \text{and} \quad \sum_{\bar{a}\in \bar{\mathcal{A}}}\pi((\bar{a}^{\top},0)^{\top}|o)=1-p.
\end{align*}
Since we are in the strict feasibility setting, the constraint has to be satisfied as $V^{\pi}_{c,\beta}(o) \geq b$ for a feasible policy $\pi$. We denote the optimal feasible policy for the CMDP $\mathcal{M}_{\beta}$ as $\pi^*_{\beta}$.
Next, we show that the policy $\pi^*_{\beta}$ must satisfy: 
\begin{align*}
\sum_{\bar{a}\in \bar{\mathcal{A}}} \pi^*_{\beta}((\bar{a}^{\top},1)^{\top}|o)=\frac{1}{2} \quad \text{and} \quad \sum_{a^{\prime}\in \{0,1\}} \pi^*_{\beta}((\beta^{\top},a^{\prime})^{\top}|s_0)=1.
\end{align*}
Since states $z$ and $s_1$ are absorbing, the choice of policies on them is irrelevant. It is clear from \cref{eqn_vpibeta} that for any fixed $p$, $V^{\pi}_{r,\beta}(o)$ is maximized when $\mathbb{E}_{\bar{a}\sim {\pi}}[\bar{a}^{\top}\beta]={\beta}^{\top}\beta=1$. In this case, $V^{\pi}_{r,\beta}(o) = \frac{\gamma p}{1-\gamma^2-\gamma \Delta}$. Next, when $\mathbb{E}_{\bar{a}\sim {\pi}}[\bar{a}^{\top}\beta]={\beta}^{\top}\beta$, we observe that
\begin{align}
V^{\pi}_{c,\beta}(o) &= \frac{\gamma u p}{1-\gamma^2-\gamma \Delta} + (1-p)(b+\zeta) \nonumber \\
&= p(b-\zeta) + (1-p)(b+\zeta) \tag{$u=\frac{1-\gamma^2-\gamma\Delta}{\gamma}(b-\zeta)$} \nonumber \\
&= b + (1-2p) \zeta. \label{slater}
\end{align}
Therefore, for a policy to be feasible, it needs to ensure $p \leq 1/2$. The value function is then maximized when $p=1/2$. It means that, at the initial state $o$, the optimal policy should maintain an equal probability of moving into $s_0$ and $z$. At state $s_0$, the optimal policy should pick an action from the set $\{(\beta^{\top},0)^{\top},(\beta^{\top},1)^{\top}\}$ with probability 1. Note that for an action $a=(\bar{a}^{\top},a^{\prime})^{\top}$, the constraint satisfaction depends only on the action component $a^{\prime}$, and the value function at state $s_0$ is determined by $\bar a$. Thus, $\bar{a}$ and $a^{\prime}$ are optimized independently. For this optimal policy $\pi^*_{\beta}$, we have
\begin{align*}
V^{\pi^*_{\beta}}_{r,\beta}(o) = \frac{\gamma}{2(1-\gamma^2-\gamma \Delta)}, \quad \text{and} \quad V^{\pi^*_{\beta}}_{c,\beta}(o) = b.
\end{align*}
Now, consider an algorithm which incorrectly identifies the true parameter $\beta$ as a different parameter $\beta^{\prime}$. The algorithm then outputs a policy $\hat{\pi}$. According to this misestimated parameter, the policy satisfies $\mathbb{E}_{\bar{a}\sim \hat{\pi}}[\bar{a}^{\top}\beta]={\beta^{\prime}}^{\top}\beta$. Hence, in the true CMDP $\mathcal{M}_{\beta}$, its corresponding value function involves ${\beta^{\prime}}^{\top}\beta$. Suppose the probability of transition to $s_0$ from $o$ with this policy is $\hat{p}$. We have
\begin{align*}
V^{\hat{\pi}}_{c,\beta}(o) &= \frac{\gamma u \hat{p}}{1-\gamma^2-\gamma \Delta{\beta^{\prime}}^{\top}\beta} + (1-\hat{p})(b+\zeta)\\
&= \frac{\gamma u \hat{p}}{1-\gamma^2-\gamma \Delta} + (1-\hat{p})(b+\zeta) - \frac{\gamma u \hat{p}}{1-\gamma^2-\gamma \Delta}  + \frac{\gamma u \hat{p}}{1-\gamma^2-\gamma \Delta{\beta^{\prime}}^{\top}\beta}\\
&= \hat{p}(b-\zeta) + (1-\hat{p})(b+\zeta)  - \epsilon_c \tag{$u=\frac{1-\gamma^2-\gamma\Delta}{\gamma}(b-\zeta)$}\\
&= b + (1- 2\hat{p})\zeta - \epsilon_c 
\end{align*}
where
\begin{align*}
\epsilon_c = \frac{\gamma u \hat{p}}{1-\gamma^2-\gamma \Delta} -  \frac{\gamma u \hat{p}}{1-\gamma^2-\gamma \Delta{\beta^{\prime}}^{\top}\beta} = \frac{\gamma^2 u \hat{p}\Delta(1-{\beta^{\prime}}^{\top}\beta)}{(1-\gamma^2-\gamma \Delta)(1-\gamma^2-\gamma \Delta{\beta^{\prime}}^{\top}\beta)}.
\end{align*}
Therefore, in order for $\hat{\pi}$ to be feasible in the original CMDP $\mathcal{M}_{\beta}$, we require that
\begin{align}
(1- 2\hat{p})\zeta \geq \epsilon_c \Rightarrow \hat{p} \leq \frac{1}{2} - \frac{\epsilon_c}{2\zeta}. \label{ps_1}
\end{align}
In the meanwhile, for a policy $\hat{\pi}$ to be $(\epsilon,\delta)$-sound, it must satisfied that, with probability $1-\delta$,
\begin{align*}
\epsilon \geq V^{\pi^*_{\beta}}_{r,\beta}(o) - V^{\hat{\pi}}_{r,\beta}(o) = \frac{\gamma}{2(1-\gamma^2-\gamma \Delta)} - \frac{\hat{p}\gamma}{(1-\gamma^2-\gamma \Delta {\beta^{\prime}}^{\top}\beta)}\\
\end{align*}
which implies
\begin{align*}
\hat{p} &\geq \frac{(1-\gamma^2-\gamma \Delta {\beta^{\prime}}^{\top}\beta)}{ 2(1-\gamma^2-\gamma \Delta)} - \epsilon (1-\gamma^2-\gamma \Delta {\beta^{\prime}}^{\top}\beta)/\gamma  \\
&\geq \frac{(1-\gamma^2-\gamma \Delta)}{ 2(1-\gamma^2-\gamma \Delta)} - \epsilon (1-\gamma^2-\gamma \Delta {\beta^{\prime}}^{\top}\beta)/\gamma \tag{${\beta^{\prime}}^{\top}\beta \leq 1$} \\
&= \frac{1}{2} - \epsilon (1-\gamma^2-\gamma \Delta {\beta^{\prime}}^{\top}\beta)/\gamma.
\end{align*}
This lower bound implies that achieving higher rewards requires the policy to transition less frequently into the safe state.
Since $\epsilon \in (0,0.002)$, $\gamma \in [7/12,1)$, $\Delta \leq 0.2(1-\gamma)$, and ${\beta^{\prime}}^{\top}\beta \geq -1$, we have
\begin{align*}
\epsilon (1-\gamma^2-\gamma \Delta {\beta^{\prime}}^{\top}\beta)/\gamma \leq \epsilon (1-\gamma^2 + 0.2\gamma (1-\gamma))/\gamma  \leq \frac{3}{2}\epsilon \leq 0.003
\end{align*}
and thus
\begin{align}
\hat{p} \geq 0.5 - 0.003 \geq 0.4. \label{ps_2}
\end{align}
Now we analyze parameter $u$. Recall that $\zeta\in(0,49/2280)$ $\Delta\leq 1-\gamma$, $b\in[H/2,H]$, and $H = \frac{1}{1-\gamma}\in [12/5,\infty)$. We obtain
\begin{align*}
u=\frac{1-\gamma^2-\gamma\Delta}{\gamma}(b-\zeta) \geq  \frac{1-\gamma}{\gamma}\left(\frac{1}{2(1-\gamma)}- \frac{49}{2280}\right) \geq \frac{1}{2} - \frac{49}{2280} \geq\frac{1}{4}.
\end{align*}
Next, we denote $\tau$ as the total number of queries a planner sends to the generative model before it stops its process and outputs a final policy. For any $(\epsilon,\delta)$-sound algorithm outputting a policy $\hat{\pi}$ with query complexity $\tau$, we have (with probability $1-\delta$)
\begin{align*}
\epsilon &\geq V^{\pi^*_{\beta}}_{r,\beta}(o) - V^{\hat{\pi}}_{r,\beta}(o) \\
&= \frac{\gamma}{2(1-\gamma^2-\gamma \Delta)} - \frac{\hat{p}\gamma}{(1-\gamma^2-\gamma \Delta {\beta^{\prime}}^{\top}\beta)}\\
&\geq \frac{\gamma}{2(1-\gamma^2-\gamma \Delta)} - \frac{\hat{p}\gamma}{(1-\gamma^2-\gamma \Delta)} \tag{${\beta^{\prime}}^{\top}\beta \leq 1$}\\
&= \frac{\gamma}{(1-\gamma^2-\gamma \Delta)} \left( \frac{1}{2} - \hat{p} \right)\\
&\geq \frac{\gamma}{(1-\gamma^2)} \left( \frac{1}{2} - \hat{p} \right) \tag{$\Delta \geq 0$}\\
&= \frac{\gamma}{(1-\gamma)(1+\gamma)} \left( \frac{1}{2} - \hat{p} \right)\\
&\geq \frac{7}{19(1-\gamma)} \left( \frac{1}{2} - \hat{p} \right) \tag{$\gamma \in [7/12,1)$}\\
&\geq  \frac{7\epsilon_c}{19\zeta(1-\gamma)} \tag{by \cref{ps_1}}\\
&= \frac{7}{19\zeta(1-\gamma)}\left( \frac{\gamma u \hat{p}}{1-\gamma^2-\gamma \Delta} -  \frac{\gamma u \hat{p}}{1-\gamma^2-\gamma \Delta{\beta^{\prime}}^{\top}\beta} \right)\\
&= \frac{7\gamma u \hat{p}}{19\zeta(1-\gamma)} \left( \frac{1}{1-\gamma^2-\gamma \Delta} -  \frac{1}{1-\gamma^2-\gamma \Delta{\beta^{\prime}}^{\top}\beta} \right)\\
&= \frac{7\gamma u \hat{p}}{19\zeta(1-\gamma)}  \left(V^{\pi^*_{\beta}}_{r,\beta}(s_0) - V^{\hat{\pi}}_{r,\beta}(s_0) \right)\\
&\geq \frac{49}{2280\zeta(1-\gamma)}\left(V^{\pi^*_{\beta}}_{r,\beta}(s_0) - V^{\hat{\pi}}_{r,\beta}(s_0) \right). \tag{by the lower bound on $\gamma,u,$ and $\hat{p}$}
\end{align*}
Note that once the agent enters state $s_0$, the CMDP framework no longer plays a role. At this point, the problem has been reduced to establishing a lower bound for unconstrained linear MDPs where the starting state distribution is $s_0$, a result proven in \cite{weisz2022confident}. Note that the hard instance constructed in \cite{weisz2022confident} is identical to ours after the initial transition from initial state $o$ to $s_0$. 

Specifically, as shown in their Theorem H.3, under the parameter settings $\delta\in(0,0.08]$, $\gamma\in[7/12,1)$, $H=1/(1-\gamma)$, $\alpha\in(0,0.005\gamma H/(1+\gamma)^2)$, $\Delta\leq 0.2(1-\gamma)$, and $d\geq 3$, if $\tau$ is the number of samples, then, 
\begin{align*}
\mathbb{P}_{\beta}\left( V^{\pi^*_{\beta}}_{r,\beta}(s_0) - V^{\hat{\pi}}_{r,\beta}(s_0) \geq \alpha \right) \geq \frac{3}{35} - \frac{8}{35} \sqrt{1-\exp\left(-\frac{5\Delta^2H\mathbb{E}_{\beta}[\tau]}{(d-2)^2}\right)}
\end{align*}
which implies that the algorithm is not $(\alpha,\delta)$ sound unless $\mathbb{E}_{\beta}[\tau] \geq \Omega(d^2H^3/\alpha^2)$.
Substituting $\alpha=\frac{2280\epsilon\zeta(1-\gamma)}{49}$ completes our proof.
We note that the range of $\alpha$ in \cite{weisz2022confident} is $(0,0.005\gamma H/(1+\gamma)^2)$ where $0.005\gamma H/(1+\gamma)^2 \geq 0.00279$. Replacing $\alpha=\frac{2280\epsilon\zeta(1-\gamma)}{49}\leq \epsilon \in (0,0.002)$ satisfies their choice of range. Except for the range of $\epsilon$ and the lower bound on $d$, other parameter choices exactly match those in \cite{weisz2022confident}.
\end{proof}
\section{Algorithms for Solving Tabular CMDPs}
\label{appendix_tabalg}

\begin{algorithm}[H]
\SetAlgoLined
\DontPrintSemicolon
\caption{Tabular Mirror Descent Value Iteration (\texttt{Tabular-MDVI})}
\label{alg_MDVICMDP}

\KwIn{$T$ (number of iterations), $M$ (number of next-state samples obtained per state-action pair in each iteration), $\Box$ (rewards in MDP), $\mathcal{B}=\mathcal{B}_0 \cup \cdots \cup \mathcal{B}_{T-1}$ (Buffer).}
\KwOut{$\pi_{T}$ where $\forall s \in \mathcal{S}: {\pi}_{T}(\cdot |s) \in {\argmax_{a}}  \tilde{Q}^{T}_{\Box}(s,a) $.}
\textbf{Initialize:} $\hvkttb{0} = \bm{0}$, $ \hqktb{-1}= \bm{0}$.

\BlankLine

\SetKwProg{Proc}{procedure}{}{end}
\Proc{\texttt{Tabular-MDVI}($T$, $M$, $\Box$, $\mathcal{B}$)}{
    \For{$t = 0,1,2 \dots,T-1$}{ 
        \ \  $\forall (s,a) \in \mathcal{S} \times \mathcal{A}:$ Access $(s,a,s^{\prime}_m)_{m=1}^{M}$ from the buffer $\mathcal{B}_t$.\\
        $\forall (s,a) \in \mathcal{S} \times \mathcal{A}: \hqktb{t} (s, a) = \Box (s, a) + \gamma \frac{1}{M} \sum_{m=1}^M \hvkttb{t}(s^{\prime}_m)$.\\
        Define $\tilde{Q}^{t}_{\Box} = \sum_{i=0}^{t} \hqktb{i}$; $\forall s \in \mathcal{S}: \hvkttb{t+1} (s) = \underset{{a}}{\max} \{ \tilde{Q}^{t}_{\Box}(s,a) \} -  \underset{{a}}{\max} \{ \tilde{Q}^{t-1}_{\Box}(s,a) \}$. 
    }
}
\end{algorithm}

\begin{algorithm}[H]
\SetAlgoLined
\DontPrintSemicolon
\caption{Tabular Policy Evaluation (\texttt{Tabular-PE})}
\label{alg_CVI}

\KwIn{$T$ (number of iterations), $M$ (number of next-state samples obtained per state-action pair in each iteration), $\diamond$ (either r or c), $\mathcal{B} = \mathcal{B}_0 \cup \cdots \cup \mathcal{B}_{T-1}$ (Buffer), $\pi$ (policy to be evaluated).}
\KwOut{$\bmvkd(\rho) = \frac{1}{T}\sum_{i=1}^T\hmvkttd{i}(\rho)$.}
\textbf{Initialize:} $\hmvkttd{0} = \bm{0}$.

\BlankLine

\SetKwProg{Proc}{procedure}{}{end}
\Proc{\texttt{Tabular-PE}($T$, $M$, $\diamond$, $\mathcal{B}$, $\pi$)}{
    \For{$t = 0,1,2 \dots,T-1$}{ 
        \ \  $\forall (s,a) \in \mathcal{S} \times \mathcal{A}:$ Access $(s,a,s^{\prime}_m)_{m=1}^{M}$ from the buffer $\mathcal{B}_t$.\\
        $\forall (s,a) \in \mathcal{S} \times \mathcal{A}: \hmqktd{t} (s, a) = \diamond (s, a) + \gamma \frac{1}{M} \sum_{m=1}^M \hmvkttd{t}(s^{\prime}_m)$.\\
        $\hmvkttd{t+1}  =  \pi \hmqktd{t}$.
    }
}
\end{algorithm}
\section{Instantiating the Framework for Tabular Constrained MDPs}
\label{sec_tabularCMDP}

We now instantiate the framework for tabular CMDPs, and prove that the resulting algorithm attains near-optimal sample complexity. In contrast to the linear setting, we set $\mathcal{C} = \mathcal{S} \times \mathcal{A}$ as the input to the \texttt{DataCollection} oracle. For the \texttt{MDP-Solver} and \texttt{PolicyEvaluation}, we adapt~\cref{alg_MDVICLMDP,alg_CVIL} to the tabular setting. In particular, for both these algorithms, we set the features to be $|\mathcal{S}||\mathcal{A}|$ dimensional one-hot encodings of the state-action space implying that the feature map $\phi$ is an $|\mathcal{S}||\mathcal{A}|$-dimensional identity matrix. Consequently, the resulting algorithm does not require linear regression to estimate the $Q$-function. We provide the pseudo-code for these two instantiations is provided in~\cref{appendix_tabalg}. Their corresponding optimality guarantees are proved in~\cref{proofs_sec5} and stated below.
\begin{restatable}{lemma}{OPGuaranteea}
\label{lem_mdvi}
For a fixed $\epsilon \in (0,1/H^2]$, $\delta \in (0,1)$, any $k \in [K]$, and $T\geq 2\log(T)/\gamma$, when using~\cref{alg_MDVICMDP} at iteration $k$ of~\cref{alg_CMDPL} with $\Box = r+\lambda_k c$, $M= \tilde{O}\left( \frac{H}{\epsilon} \right)$ and $T = {O}\left( \frac{H^2}{\epsilon} \right)$, the output policy $\pi_{T}$ satisfies the following condition with probability $1-\delta$,  
\begin{align*}
\max_\pi V^{\pi}_{r+\lambda_k c}(\rho) -  V^{\pi_{T}}_{r+\lambda_k c}(\rho) \leq {O}((1+\lambda_k)\epsilon)\,,
\end{align*}

The resulting sample complexity is $N = T \, M |\mathcal{C}|= \tilde{O}\left( \frac{|\mathcal{S}||\mathcal{A}| H^3}{\epsilon^2}\right)$.
\end{restatable}
\begin{restatable}{lemma}{OPGuaranteeb}
\label{lem_concen}
For a fixed $\epsilon \in (0,H]$, $\delta \in (0,1)$,~\cref{alg_CVI} with $M= \tilde{O}\left( \frac{H}{\epsilon} \right)$ and $T = {O}\left( \frac{H^2}{\epsilon} \right)$, the output $\bar{\mathcal{V}}^T_{\diamond}$ satisfies the following condition with probability $1-\delta$,  
\begin{align*}
|\bar{\mathcal{V}}^T_{\diamond}(\rho) - V^{\pi}_{\diamond}(\rho) | \leq {O}\left( \epsilon \right)\,,
\end{align*}
The resulting sample complexity is $N = T \, M |\mathcal{C}| = \tilde{O}\left( \frac{|\mathcal{S}||\mathcal{A}| H^3}{\epsilon^2} \right)$.  
\end{restatable}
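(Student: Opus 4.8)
The first observation is that with one-hot features the design matrix $G$ is diagonal, so the least-squares step of \cref{alg_CVI} decouples coordinate-wise and returns exactly $\omega^{t+1}_\diamond(s,a)=\hat{\mathcal Q}^{t+1}_\diamond(s,a)$. Hence \cref{alg_CVI} is nothing but empirical value iteration for the fixed policy $\pi$, namely $\hat{\mathcal V}^{t+1}_\diamond=\hat{\cT}^{\pi}_t\hat{\mathcal V}^{t}_\diamond$, where $\hat{\cT}^{\pi}_t$ is the policy-evaluation Bellman operator built from the $M$ fresh next-state samples in the disjoint buffer $\mathcal B_t$, followed by averaging the iterates. In this form \cref{alg_CVI} is the fixed-policy analogue of \cref{alg_MDVICMDP}, and the proof parallels that of \cref{lem_mdvi}; it is in fact simpler, because the evaluation operator $\cT^{\pi}V=\diamond^{\pi}+\gamma\,\cP^{\pi}V$ is \emph{affine} (no maximization over actions) and the reward $\diamond\in[0,1]$ has magnitude $1$ rather than $R=1+\lambda_k$. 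Affinity lets me track the error exactly and deliver a two-sided bound directly.

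\textbf{Error decomposition.} I will follow the error $e_t:=\hat{\mathcal V}^{t}_\diamond-V^{\pi}_\diamond$. Writing $\xi_t:=(\hat{\cT}^{\pi}_t-\cT^{\pi})\hat{\mathcal V}^{t}_\diamond$ for the one-step stochastic error, affinity gives the \emph{exact} recursion $e_{t+1}=\gamma\,\cP^{\pi}e_t+\xi_t$, which unrolls from $\hat{\mathcal V}^{0}_\diamond=\mathbf 0$ (so $e_0=-V^{\pi}_\diamond$) to $e_i=(\gamma\cP^{\pi})^{i}e_0+\sum_{j=0}^{i-1}(\gamma\cP^{\pi})^{i-1-j}\xi_j$. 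Averaging the output and swapping the order of summation, $\bar{\mathcal V}^{T}_\diamond-V^{\pi}_\diamond=\tfrac1T\sum_{i=1}^{T}e_i$ splits into a deterministic bias $\tfrac1T\sum_{i=1}^{T}(\gamma\cP^{\pi})^{i}e_0$ and a stochastic term $\tfrac1T\sum_{j=0}^{T-1}A_j\xi_j$ with $A_j:=\sum_{l=0}^{T-1-j}(\gamma\cP^{\pi})^{l}$ and $\norm{A_j}_{\infty\to\infty}\le H$. A one-line induction gives $\norminf{\hat{\mathcal V}^{t}_\diamond}\le H$ for every $t$. The bias is controlled by geometric decay: its $\norminf{\cdot}$ is at most $\tfrac1T\sum_{i\ge1}\gamma^{i}\norminf{V^{\pi}_\diamond}\le H^{2}/T=O(\epsilon)$ once $T=\Theta(H^{2}/\epsilon)$.

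\textbf{Reducing the noise to a scalar martingale.} Since only the value at $\rho$ is needed, I evaluate the stochastic term against $\rho$ and set $w_j:=A_j^{\top}\rho$, a nonnegative vector with $\norm{w_j}_1\le H$. This collapses the noise to a single scalar, $\big\langle\rho,\tfrac1T\sum_j A_j\xi_j\big\rangle=\tfrac{\gamma}{TM}\sum_{j=0}^{T-1}\sum_{m=1}^{M}Z_{j,m}$, where each $Z_{j,m}$ depends only on the $m$-th fresh sample of $\mathcal B_j$ and is conditionally mean-zero given the $\sigma$-field generated by $\mathcal B_0,\dots,\mathcal B_{j-1}$. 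Because the buffers are disjoint and independent, $\{Z_{j,m}\}$ is a martingale-difference sequence of length $TM$, so no union bound over states is incurred, and I can apply a Freedman/Bernstein inequality. Crucially, this step exploits cancellation \emph{across} the $T$ iterations; a term-by-term triangle inequality over $j$ would lose a factor of $T$ and is precisely why the averaged iterate $\bar{\mathcal V}^{T}_\diamond$, rather than a single $\hat{\mathcal V}^{T}_\diamond$, is used.

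\textbf{The crux: sharp accumulated variance, and conclusion.} A crude bound $\abs{Z_{j,m}}\le H^{2}$ with conditional second moment $\le H^{4}$ yields only noise $=\tilde O(H^{2}/\sqrt{TM})$, i.e.\ the wasteful $TM=\tilde O(H^{4}/\epsilon^{2})$. The main obstacle, and the step I expect to be hardest, is to sharpen the accumulated conditional variance $\sum_{j,m}\mathbb E[Z_{j,m}^{2}\mid\cdot]$ from $H^{4}\,TM$ down to $\tilde O(H^{3}\,TM)$. The difficulty is that $\mathbb E[Z_{j,m}^{2}\mid\cdot]=\sum_{s,a}w_j(s)^{2}\pi(a\mid s)^{2}\,\mathrm{Var}_{s'\sim\cP(\cdot\mid s,a)}[\hat{\mathcal V}^{j}_\diamond(s')]$ couples the squared propagation weights $w_j(s)^{2}$ with the one-step value variances, and these are anti-correlated (a large propagation weight, as in a near-self-loop, forces the next-state value to be nearly deterministic); naively splitting the two factors loses the extra factor of $H$. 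I will therefore keep the product together and invoke the law of total variance along the $\pi$-induced chain, which bounds this joint sum by $\tilde O(H^{3}TM)$ and gives noise $=\tilde O(H^{3/2}/\sqrt{TM})=O(\epsilon)$ for $TM=\tilde O(H^{3}/\epsilon^{2})$; two minor points — replacing $\mathrm{Var}[\hat{\mathcal V}^{j}_\diamond]$ by $2\,\mathrm{Var}[V^{\pi}_\diamond]+2\norminf{\hat{\mathcal V}^{j}_\diamond-V^{\pi}_\diamond}^{2}$ with the a priori bound $\norminf{\hat{\mathcal V}^{j}_\diamond-V^{\pi}_\diamond}\le 2H$, and controlling the Freedman range term by $O(H^{2}\log/(TM))=O(\epsilon)$ for $\epsilon\le H$ — are lower-order. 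Combining the $O(\epsilon)$ bias with the $O(\epsilon)$ noise and folding logarithmic factors into $\tilde O(\cdot)$ proves $\abs{\bar{\mathcal V}^{T}_\diamond(\rho)-V^{\pi}_\diamond(\rho)}=O(\epsilon)$ with total sample complexity $TM\abs{\mathcal C}=\tilde O(\abs{\mathcal S}\abs{\mathcal A}H^{3}/\epsilon^{2})$.
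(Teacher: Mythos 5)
Your overall architecture is sound and close in spirit to the paper's: the exact affine recursion $e_{t+1}=\gamma P_\pi e_t+\xi_t$, the $H^2/T$ bias bound for the averaged iterate, and the plan to apply a Bernstein/Freedman inequality whose main variance term $\sum_{s}w_j(s)^2\,\mathrm{Var}_\pi(V^{\pi}_\diamond)(s)$ is tamed by the law of total variance are all correct and mirror the paper's decomposition (its \cref{lem_B4}, \cref{lem_mvmv}, \cref{lem_bern6} and the total-variance lemma, \cref{lem_TVL}). Your contraction with $\rho$ into a single scalar martingale is a genuine (if minor) simplification over the paper's pointwise bounds with union bounds over $\mathcal{S}\times\mathcal{A}$.

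However, there is a real gap in the step you dismiss as a ``minor point.'' After writing $\mathrm{Var}(\hat{\mathcal V}^{j}_\diamond)\le 2\,\mathrm{Var}(V^{\pi}_\diamond)+2\,\norminf{\hat{\mathcal V}^{j}_\diamond-V^{\pi}_\diamond}^2$, the second piece contributes $\norminf{\hat{\mathcal V}^{j}_\diamond-V^{\pi}_\diamond}^2\sum_{s}w_j(s)^2$ to the conditional variance, and $\sum_s w_j(s)^2\le\norm{w_j}_\infty\norm{w_j}_1$ can be $\Theta(H^2)$ (e.g., $\rho$ a point mass on a state with a heavy self-loop under $\pi$). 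With only the a priori bound $\norminf{\hat{\mathcal V}^{j}_\diamond-V^{\pi}_\diamond}\le 2H$ this piece is $\Theta(H^4)$ per iteration, not lower-order relative to the $O(H^3)$ main term; the resulting Freedman bound gives noise $\Theta(H^2/\sqrt{TM})=\Theta(\epsilon\sqrt{H})$, which forces $TM=\tilde O(H^4/\epsilon^2)$ --- exactly the wasteful rate you set out to avoid. The anti-correlation you invoke (large propagation weight implies small one-step variance) is an identity only for the \emph{true} value function $V^{\pi}_\diamond$, which satisfies the Bellman equation; it does not apply to the random iterates $\hat{\mathcal V}^{j}_\diamond$. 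The missing ingredient is a bootstrap: one must first prove, by a cruder Hoeffding-type analysis of the same unrolled recursion, that $\norminf{\hat{\mathcal V}^{j}_\diamond-V^{\pi}_\diamond}\le\tilde O(H^2/\sqrt{M}+\gamma^{j}H)$ with high probability (the paper's \cref{lem_B4bern}), and only then feed this refined bound into the variance decomposition (the paper's \cref{lem_varl}), which makes the perturbation piece $\tilde O(H^4/M+\gamma^{2j}H^2)$ and genuinely lower-order. Without this two-stage argument your proof does not close at the claimed $\tilde O(|\mathcal S||\mathcal A|H^3/\epsilon^2)$ rate.
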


The proofs of~\cref{lem_mdvi,lem_concen} can use the total variance technique and a Bernstein-type concentration argument~\citep{azar2013minimax,kozuno2022kl} and result in near-optimal bounds in the tabular setting. Moreover, the corresponding algorithms do not require constructing coresets or using (variance-weighted) linear regression. Consequently, unlike the linear setting in~\cref{sec_linearCMDP}, the same buffer $\mathcal{B}$ can be reused across all iterations of~\cref{alg_CMDPL}. This allows the near-optimal sample complexities of both~\cref{alg_MDVICMDP,alg_CVI} to be preserved for tabular CMDPs. In particular, we prove the following result in~\cref{appendix_maincor2T}.
\begin{restatable}{corollary}{MainresultLCT}
\label{thm_main2CT}
Let \cref{alg_MDVICMDP} and \cref{alg_CVI} be the instantiations of the \texttt{MDP-Solver} and \texttt{PolicyEvaluation} in \cref{alg_CMDPL}. For a fixed $\epsilon \in (0,1/H^2]$, $\delta \in (0,1)$, \cref{alg_CMDPL} with $\tilde{O}\left( \frac{|\mathcal{S}||\mathcal{A}| H^3}{\epsilon^2} \right)$ samples, $U=O\left(\frac{1}{\zeta(1-\gamma)}\right)$, $\eta= \frac{U(1-\gamma)}{\sqrt{K}}$, $K=O\left(\frac{1}{\epsilon^2 \, (1-\gamma)^2} \right)$, and $b^{\prime}=b-O(\epsilon)$, returns a policy $\bar{\pi}$ satisfying the following condition with probability $1-\delta$,  
\begin{align*}
\vbkr(\rho) \geq \vsr(\rho) - O(\epsilon), \quad \text{and} \quad \vbkc(\rho) \geq b - O(\epsilon).
\end{align*} 

Under the same conditions, but with $b^{\prime} = b+O(\epsilon)$ and $\tilde{O}\left( \frac{|\mathcal{S}||\mathcal{A}| H^5}{\zeta^2\epsilon^2} \right)$ samples,~\cref{alg_CMDPL} returns a policy $\bar{\pi}$ satisfying the following condition with probability $1-\delta$,  
\begin{align*}
\vbkr(\rho) \geq \vsr(\rho) - O({\epsilon}), \quad \text{and} \quad \vbkc(\rho) \geq b.
\end{align*} 
\end{restatable}

The above result matches the near-optimal sample complexity bounds attained by the model-based algorithm in~\citet{vaswani2022near}. Furthermore, instantiating the \texttt{MDP-Solver} to be the model-based algorithm~\citep{agarwal2020model,li2020breaking} and using~\cref{alg_CMDPL} will result in a near-optimal sample complexity for solving tabular CMDPs (see~\cref{app:model-based} for details). Note that the \texttt{MDP-Solver} can also be instantiated by a range of model-free algorithms for solving unconstrained MDPs with access to a generative model~\citep{azar2013minimax, sidford2023variance,sidford2018near,wang2017randomized, jin2024truncated}. Consequently, our framework can be interpreted as a generalization of the the primal-dual approach in~\citep{vaswani2022near} to handle model-free algorithms and linear function approximation.  

\section{Proofs for Section \ref{sec_tabularCMDP}}
\label{proofs_sec5}

Throughout, we treat $\pi$ as an operator that returns an $|\mathcal{S}|$-dimensional vector s.t. for an arbitrary $|\mathcal{S}||\mathcal{A}|$-dimensional vector $u$ such that $(\pi u)(s):=\sum_{a\in \mathcal{A}}\pi(a|s) \, u(s,a)$. Furthermore, we define $P_{\pi}:= \pi P$ where $P_{\pi} \in \mathbb{R}^{|\mathcal{S}|\times |\mathcal{S}|}$ and denotes the transition probability matrix induced by policy $\pi$. We also recall that $\pi^*_k := \argmax_{\pi} V^{\pi}_{r+\lambda_kc}$ and define $\bvkb := \frac{1}{T} \sum_{i=1}^T \hat{V}_{\Box}^i $. We define $y_{t,m,s,a}$ to be the $m$-th next-state sample $s'_m$ corresponding to the state-action pair $(s, a)$ at iteration $t$. For a value function $V$, $\mathrm{Var}(V)$ denote the function
\begin{align*}
\mathrm{Var}(V): (s,a) \mapsto (PV^2)(s,a) - (PV)^2(s,a)
\end{align*}
and $\sigma(V):= \sqrt{\mathrm{Var}(V)}$.

\subsection{Proof of Lemma \ref{lem_mdvi} (Optimality Guarantees for Algorithm \ref{alg_MDVICMDP} - Tabular CMDP)}
\label{sec_mdvigua}

\OPGuaranteea*

\begin{proof}
By Lemma \ref{lem_conMDPb} and Lemma \ref{lem_B2b}, we have
\begin{align*}
\vkb(\rho) - \vktb(\rho) &= \vkb(\rho) - \bvkb(\rho) + \bvkb(\rho) - \vktb(\rho)\\
&\leq  \frac{7H^2(1+\lambda_k)}{T} + \sqrt{\frac{6H^3}{TM}} + \sqrt{\frac{6H^6(1+\lambda_k)^2}{TM}\left( \frac{50H^2}{T^2} +\frac{4\iota^2}{M} \right)}
\end{align*}
with probability at least $1-\delta$. By letting $M=\frac{6H\iota^2}{\epsilon}$, $T = \frac{82H^2}{\epsilon}$, and $\epsilon \in (0,1/H^2]$, we have
\begin{align*}
\vkb(\rho) - \vktb(\rho) &\leq (1+\lambda_k)\epsilon/12 + \epsilon/9\iota + (1+\lambda_k)H^{3/2} \epsilon \left( \frac{\epsilon}{9H} + \sqrt{\epsilon/81H} \right)  \\
&= (1+\lambda_k)\epsilon/12 + \epsilon/9\iota + (1+\lambda_k) \epsilon^2\sqrt{H}/9  + (1+\lambda_k)H \epsilon^{3/2} /9  \\
&\leq (1+\lambda_k)\epsilon/12 + \epsilon/9 + (1+\lambda_k) \epsilon/9  + (1+\lambda_k)\epsilon /9 \tag{$\epsilon \in (0,1/H^2]$ and $\iota = \log(2|\mathcal{S}||\mathcal{A}|/\delta)\geq1$} \\
&\leq (1+\lambda_k)\epsilon
\end{align*}
which completes the proof. 
\end{proof}

\subsubsection{Proof of Lemma \ref{lem_conMDP} and Lemma \ref{lem_B2} (Proofs with Hoeffding's Inequality)}

\begin{lemma}
\label{lem_conMDP}
Let $\pi^*_k$ be defined as in \cref{pi_sk}, and let $\bvkb$ denote the averaged empirical value function in Algorithm \ref{alg_MDVICMDP} when run with $\lambda_k$. For any $k \in [K]$, we have
\begin{align*}
\vkr(\rho) + \lambda_k \vkc(\rho)  - \bvkr(\rho) - \lambda_k \bvkc(\rho)  \leq \frac{3H^2(1+\lambda_k)}{T} + 2H(1+\lambda_k) \sqrt{\frac{\log(2|\mathcal{S}||\mathcal{A}|/\delta)}{TM}}
\end{align*}
with probability at least $1-\delta$.
\end{lemma}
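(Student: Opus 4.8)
The plan is to mirror the linear-MDP argument of \cref{lem_conLMDP}, replacing the least-squares regression and the KW bound (\cref{lem_43}) by the exact per-state empirical backups together with a union bound over $\cS\times\cA$. First I would note that the left-hand side collapses to a single quantity. Since \cref{alg_MDVICMDP} is run with $\Box = r+\lambda_k c$, the greedy policies $\{\pi_t\}$ are fixed once the run is fixed, and the $\pi$-operator is linear; defining the reward- and cost-components $\hat{V}^i_r,\hat{V}^i_c$ through these \emph{same} policies and the shared backup $\hat{V}^i_\Box$ gives $\hat{Q}^i_\Box=\hat{Q}^i_r+\lambda_k\hat{Q}^i_c$, $\tilde{Q}^i_\Box=\tilde{Q}^i_r+\lambda_k\tilde{Q}^i_c$, and hence $\hat{V}^i_\Box=\hat{V}^i_r+\lambda_k\hat{V}^i_c$ by induction on $i$ (the only maximization enters through the $\Box$-greedy policies, which are common to all three ledgers). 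Averaging yields $\bvkb=\bvkr+\lambda_k\bvkc$, so the claim is exactly $\vkb(\rho)-\bvkb(\rho)\le(\text{RHS})$, with $\vkb=V^{\pi^*_k}_{r+\lambda_kc}$ and $\norminf{\Box}\le 1+\lambda_k$.

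Next I would apply the value difference lemma with respect to $\pi^*_k$, writing $\vkb-\bvkb=(I-\gamma P_{\pi^*_k})^{-1}\bigl[(\pi^*_k\Box)+\gamma P_{\pi^*_k}\bvkb-\bvkb\bigr]$, and use the greediness of the output policy $\pi_T$ (so that $\bvkb=\tfrac1T(\pi_T\tilde{Q}^{T}_{\Box})\ge\tfrac1T(\pi^*_k\tilde{Q}^{T}_{\Box})$) to upper bound the bracket. Writing $\boldsymbol{Q}^i_\Box:=\Box+\gamma P\hat{V}^i_\Box$ for the exact tabular backup and $\xi:=\tfrac1T\sum_{i=0}^{T}(\hat{Q}^i_\Box-\boldsymbol{Q}^i_\Box)=\tfrac{\gamma}{T}\sum_{i=0}^{T}(\hat{P}_i-P)\hat{V}^i_\Box$ for the empirical-backup noise, the tabular counterparts of \cref{lem_avetheta,lem_sumtheta} (obtained by summing $\hat{Q}^i_\Box=\Box+\gamma\hat{P}_i\hat{V}^i_\Box$ and telescoping) give $\tfrac1T\tilde{Q}^{T}_{\Box}=\xi+\Box+\gamma P\tfrac1T(\pi_{T-1}\tilde{Q}^{T-1}_{\Box})$. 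Substituting and cancelling the common $\gamma P_{\pi^*_k}\bvkb$ term leaves
\begin{align*}
\vkb - \bvkb \;\le\; (I-\gamma P_{\pi^*_k})^{-1}\Bigl[\gamma P_{\pi^*_k}\Bigl(\tfrac1T(\pi_T\tilde{Q}^{T}_{\Box}) - \tfrac1T(\pi_{T-1}\tilde{Q}^{T-1}_{\Box})\Bigr) - (\pi^*_k\,\xi)\Bigr],
\end{align*}
exactly paralleling the decomposition in \cref{lem_conLMDP} into a boundary term (i) and a noise term (ii).

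For term (i) I would use $\norm{(I-\gamma P_{\pi^*_k})^{-1}P_{\pi^*_k}}_{1,\infty}\le H$ together with the tabular range bound $\norminf{\tfrac1T(\pi_T\tilde{Q}^{T}_{\Box})-\tfrac1T(\pi_{T-1}\tilde{Q}^{T-1}_{\Box})}\le \tfrac{2H(1+\lambda_k)}{T}$ (the counterpart of \cref{lem_vmvL}), giving the first term of order $H^2(1+\lambda_k)/T$. For term (ii) I would use $\norm{(I-\gamma P_{\pi^*_k})^{-1}}_{1,\infty}\le H$ and control $\xi$ by Azuma–Hoeffding (\cref{lem_Azuma_Hoeffding}): at each fixed $(s,a)$, $\xi(s,a)$ is an average of $TM$ martingale differences $\hat{V}^i_\Box(s'_{i,m})-(P\hat{V}^i_\Box)(s,a)$, each of range controlled by the tabular range bound $\norminf{\hat{V}^i_\Box}\le 2H(1+\lambda_k)$, because the subsets $\mathcal{B}_0,\dots,\mathcal{B}_{T-1}$ are independent and $\hat{V}^i_\Box$ is measurable with respect to the data collected before iteration $i$. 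A union bound over the $|\cS||\cA|$ pairs inflates the failure probability by $|\cS||\cA|$, producing the $\log(2|\cS||\cA|/\delta)$ inside the root and the $\sqrt{1/(TM)}$ scaling; collecting this with term (i) yields the two claimed terms, and passing from $\norminf{\cdot}$ to evaluation at $\rho$ is immediate since $V(\rho)=\mathbb{E}_{s\sim\rho}V(s)\le\norminf{V}$.

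The main obstacle, as in the linear case, is the range bound feeding both terms: I would establish $\norminf{\hat{V}^i_\Box}\le 2H(1+\lambda_k)$ by induction on $i$, which requires bounding the empirical backup error $\gamma(\hat{P}_i-P)\hat{V}^i_\Box$ uniformly over $(s,a)$ by Hoeffding and then closing the recursion $\norminf{\hat{V}^{i+1}_\Box}\le 1+\lambda_k+\gamma\norminf{\hat{V}^i_\Box}+(\text{lower order})$ using $H=1/(1-\gamma)$. The second delicate point is the martingale bookkeeping: since every $\hat{V}^i_\Box$ depends on the data from iterations $<i$, the sum $\xi$ is not a sum of independent terms, so I must set up the filtration $\{\mathcal{F}_{t,m}\}$ correctly and invoke Azuma–Hoeffding for a predictable sequence rather than a plain Hoeffding bound; propagating the multiplicative $(1+\lambda_k)$ factor consistently through the range bound and the concentration step is what fixes the final constants. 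I expect this Hoeffding route to be the crude precursor to the Bernstein/total-variance refinement used in \cref{lem_conMDPb}, which is what ultimately drives the near-optimal bound in \cref{lem_mdvi}.
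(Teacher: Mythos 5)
Your proposal is correct and follows essentially the same route as the paper's proof: the value-difference lemma at $\pi^*_k$, greediness of $\pi_T$, telescoping the averaged empirical $Q$-functions to isolate a boundary term of order $H^2(1+\lambda_k)/T$ and a martingale noise term controlled by Azuma--Hoeffding with a union bound over $\cS\times\cA$ (the paper's Lemma~\ref{lem_concentrate}), with your two-term decomposition differing from the paper's three-term one only by how the $\tfrac{1}{T}$ versus $\tfrac{1}{T-1}$ normalization is handled. The one place you overcomplicate matters is the range bound: in the tabular setting $\hat{P}_i$ is a genuine stochastic matrix, so $\norminf{\hat{V}^i_\Box}\le H(1+\lambda_k)$ follows deterministically by the simple induction of Lemma~\ref{lem_rangevq}, with no Hoeffding step, no extra union bound, and no factor of $2$.
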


\begin{proof} 
Since $(I-\gamma P_{\pi^*_k})\vkb = (\pi^*_k \Box )$, we have
\begin{align}
(I-\gamma P_{\pi^*_k})(\vkb - \bvkb) &= (\pi^*_k \Box) - (\bvkb - \gamma P_{\pi^*_k} \bvkb) \nonumber \\
&= (\pi^*_k \Box) + \gamma P_{\pi^*_k } \bvkb - \bvkb \nonumber\\
\Longrightarrow  \vkb - \bvkb &=  (I-\gamma P_{\pi^*_k})^{-1} ((\pi^*_k \Box) + \gamma P_{\pi^*_k } \bvkb - \bvkb)  \label{eqn_piout}
\end{align}
By Lemma \ref{lem_televq} and due to the greediness of $\pi_{t}$, for all $t\in[T]$, we have
\begin{align}
\bvktb  &=  \frac{1}{t} \sum_{i=0}^{t-1} (\pi_{t} \hqkib)   \nonumber \\
&\geq   \frac{1}{t} \sum_{i=0}^{t-1} (\pi^*_k \hqkib ). \label{eqn_VQ}
\end{align}
Now, we have
\begin{align*}
\vkb - \bvkb &= (I-\gamma P_{\pi^*_k})^{-1} ((\pi^*_k \Box) + \gamma P_{\pi^*_k } \bvkb - \bvkb)   \tag{By \cref{eqn_piout}} \\
&\leq (I-\gamma P_{\pi^*_k})^{-1}  \left( (\pi^*_k\Box) + \gamma P_{\pi^*_k} \bvkb -  \frac{1}{T} \sum_{i=0}^{T-1} (\pi^*_k \hqkib) \right) \tag{By \cref{eqn_VQ}}\\
&= (I-\gamma P_{\pi^*_k})^{-1} \left( (\pi^*_k \Box) + \gamma P_{\pi^*_k } \bvkb -  (\pi^*_k \Box) - \gamma P_{\pi^*_k }  \frac{1}{T} \sum_{i=0}^{T-2}   (\pi_{T-1}\hqkib)  - \frac{1}{T} \sum_{i=0}^{T-1} \left[\gamma \hat{P}^i_{\pi^*_k} \hvkib - \gamma P_{\pi^*_k} \hvkib  \right] \right) \tag{By Lemma \ref{lem_sumQ}}\\
&= (I-\gamma P_{\pi^*_k})^{-1} \left( \gamma P_{\pi^*_k} \bvkb - \gamma P_{\pi^*_k }  \frac{1}{T} \sum_{i=0}^{T-2}  (\pi_{T-1} \hqkib)  - \frac{1}{T} \sum_{i=0}^{T-1} \left[\gamma \hat{P}^i_{\pi^*_k} \hvkib - \gamma P_{\pi^*_k} \hvkib  \right] \right)\\
&= (I-\gamma P_{\pi^*_k})^{-1}  \left( \gamma P_{\pi^*_k} \bvkb - \gamma P_{\pi^*_k}  \frac{1}{T-1} \sum_{i=0}^{T-2} (\pi_{T-1}  \hqkib ) - \frac{1}{T} \sum_{i=0}^{T-1} \left[\gamma \hat{P}^i_{\pi^*_k} \hvkib - \gamma P_{\pi^*_k} \hvkib  \right] \right)\\
&\indent + (I-\gamma P_{\pi^*_k})^{-1}  \left(  \frac{1}{T(T-1)} \gamma P_{\pi^*_k}  \sum_{i=0}^{T-2}  (\pi_{T-1} \hqkib)  \right).
\end{align*}
We note that $ \frac{1}{T-1} \sum_{i=0}^{T-2}  (\pi_{T-1} \hqkib) = \bar{V}^{T-1}_{\Box}$ by Lemma \ref{lem_televq}. By defining $\mathcal{H}_{\pi^*_k} := \gamma (I-\gamma P_{\pi^*_k})^{-1} \pi^*_k \in \mathbb{R}^{|\mathcal{S}|\times|\mathcal{A}|}$, we obtain
\begin{align}
\vkb - \bvkb \leq \underbrace{\mathcal{H}_{\pi^*_k} P( \bvkb - \bvkttb{T-1})}_{\text{Term (i)}} +  \underbrace{\mathcal{H}_{\pi^*_k} \frac{1}{T} \sum_{i=0}^{T-1} \left[ P \hvkib -  \hat{P}_i \hvkib  \right]}_{\text{Term (ii)}} + \underbrace{\mathcal{H}_{\pi^*_k} \left(  \frac{1}{T(T-1)}  P  \sum_{i=0}^{T-2}  (\pi_{T-1} \hqkib)  \right)}_{\text{Term (iii)}}. \label{eqn_boxdec1}
\end{align}
Note that for any vector $Q\in \mathbb{R}^{|\mathcal{S}|\times|\mathcal{A}|}$, 
\begin{align*}
\|\mathcal{H}_{\pi_{k}^*} Q\|_{\infty} &= \|\gamma (I-\gamma P_{\pi_{k}^*})^{-1} \pi_{k}^* Q\|_{\infty} \\
&\leq \|\gamma (I-\gamma P_{\pi_{k}^*})^{-1}\|_1 \| \pi_{k}^* Q\|_{\infty} \tag{By Holder’s inequality}\\
&\leq H \| \pi_{k}^* Q\|_{\infty} \tag{Since $\| \gamma (I-\gamma P_{\pi_{k}^*})^{-1} \|_1 \leq H$}\\
&\leq H \| Q\|_{\infty}. \tag{By definition of the $\pi$ operator}
\end{align*}
In order to bound Term (i), using~\cref{lem_vmv}, we have
\begin{align*}
\left\| \mathcal{H}_{\pi^*_k} P(\bvkb - \bar{V}^{T-1}_{\Box} ) \right\|_{\infty} \leq \frac{2H^2(1+\lambda_k)}{T}.
\end{align*}
For bounding Term (ii), letting $t=T$ in Lemma \ref{lem_concentrate} and invoking it twice for $r$ and $c$, we have
\begin{align*}
\left\| \mathcal{H}_{\pi^*_k}\frac{1}{T} \sum_{i=0}^{T-1} \left[ P \hvkib -  \hat{P}_i \hvkib  \right]   \right\|_{\infty} \leq 2H^2(1+\lambda_k) \sqrt{\frac{\iota}{TM}} 
\end{align*}
with probability at least $1-\delta$. 

Finally, we bound Term (iii) by noting that $\| \sum_{i=0}^{T-2}  (\pi_{T-1} \hqkib)  \|_{\infty}\leq (T-1)H(1+\lambda_k)$ due to Lemma \ref{lem_rangevq}. Hence,
\begin{align*}
\left\| \mathcal{H}_{\pi^*_k} \left(  \frac{1}{T(T-1)}  P  \sum_{i=0}^{T-2}   (\pi_{T-1}\hqkib)  \right)  \right\|_{\infty} \leq \frac{H^2(1+\lambda_k)}{T}.
\end{align*}
Note that for any vector $V$, $V(\rho) \leq \| V \|_{\infty}$. Putting everything together, we have
\begin{align*}
\vkr(\rho) + \lambda_k \vkc(\rho)  - \bvkr(\rho) - \lambda_k \bvkc(\rho)  \leq \frac{3H^2(1+\lambda_k)}{T} + 2H^2(1+\lambda_k) \sqrt{\frac{\iota}{TM}}
\end{align*}
with probability at least $1-\delta$.

\end{proof}

\begin{lemma}
\label{lem_B2}
Let $\pi_{T}$ be the output policy, and let $\bvkd$ denote the averaged empirical value function in Algorithm \ref{alg_MDVICMDP} when run with $\lambda_k$. For any $k \in [K]$, we have
\begin{align*}
 \bvkr(\rho) + \lambda_k \bvkc(\rho) - \vktr(\rho) - \lambda_k \vktc(\rho) \leq  \frac{2H^2(1+\lambda_k)}{T} + 2H^2(1+\lambda_k) \sqrt{\frac{\iota}{TM}}
\end{align*}
with probability at least $1-\delta$.
\end{lemma}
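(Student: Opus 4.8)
The plan is to prove the equivalent inequality $\bvkb(\rho) - \vktb(\rho) \leq \frac{2H^2(1+\lambda_k)}{T} + 2H^2(1+\lambda_k)\sqrt{\frac{\iota}{TM}}$ for $\Box = r + \lambda_k c$; the stated bound then follows from the linearity relations $\bvkb = \bvkr + \lambda_k \bvkc$ and $\vktb = \vktr + \lambda_k \vktc$. This is the companion of \cref{lem_conMDP}, but with the output policy $\pi_T$ playing the role of the comparator in place of $\pi^*_k$. Because $\pi_T$ is exactly the policy we evaluate, the greedy suboptimality inequality used in \cref{lem_conMDP} is replaced by an exact identity, and I expect a cleaner two-term decomposition. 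First I would invoke the value difference lemma for the pair $(\bvkb, \pi_T)$,
\begin{align*}
\bvkb - \vktb = (I - \gamma P_{\pi_T})^{-1}\left(\bvkb - (\pi_T \Box) - \gamma P_{\pi_T}\bvkb\right).
\end{align*}

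Next I would expand the bracket. By \cref{lem_televq} we have the exact identity $\bvkb = \frac{1}{T}\sum_{i=0}^{T-1}(\pi_T \hqktb{i})$, and since $\hqktb{i} = \Box + \gamma \hat{P}_i \hvkttb{i}$ this gives $\bvkb = (\pi_T \Box) + \gamma \frac{1}{T}\sum_{i=0}^{T-1} \pi_T \hat{P}_i \hvkttb{i}$. Substituting into the bracket cancels the $(\pi_T \Box)$ terms and leaves $\gamma \frac{1}{T}\sum_{i=0}^{T-1}\pi_T \hat{P}_i \hvkttb{i} - \gamma P_{\pi_T}\bvkb$. I would then write $\pi_T \hat{P}_i = P_{\pi_T} + \pi_T(\hat{P}_i - P)$ to split off the martingale-difference part, and use $\bvkb = \frac{1}{T}\sum_{i=1}^T \hvkttb{i}$ so that the two averaging windows $\sum_{i=0}^{T-1}$ and $\sum_{i=1}^{T}$ telescope; since $\hvkttb{0} = \mathbf{0}$, the residual is the single term $-\gamma P_{\pi_T}\frac{1}{T}\hvkttb{T}$. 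This produces the decomposition
\begin{align*}
\bvkb - \vktb = \underbrace{\mathcal{H}_{\pi_T}\,\frac{1}{T}\sum_{i=0}^{T-1}\left(\hat{P}_i \hvkttb{i} - P \hvkttb{i}\right)}_{\text{Term (i)}} \;-\; \underbrace{(I - \gamma P_{\pi_T})^{-1}\gamma P_{\pi_T}\,\frac{1}{T}\hvkttb{T}}_{\text{Term (ii)}},
\end{align*}
where $\mathcal{H}_{\pi_T} := \gamma (I - \gamma P_{\pi_T})^{-1}\pi_T$ satisfies $\norminf{\mathcal{H}_{\pi_T} u} \leq H\norminf{u}$, exactly as for $\mathcal{H}_{\pi^*_k}$ in \cref{lem_conMDP}.

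Finally I would bound the two terms and pass to $V(\rho) \leq \norminf{V}$. Term (i) is the same average of martingale differences as Term (ii) of \cref{lem_conMDP}, so taking $t = T$ in \cref{lem_concentrate} (invoked for $r$ and $c$ and combined) bounds it by $2H^2(1+\lambda_k)\sqrt{\frac{\iota}{TM}}$ with probability $1 - \delta$. For Term (ii), I would use the deterministic range bound $\norminf{\hvkttb{T}} \leq H(1+\lambda_k)$, which holds because each $\hat{P}_i$ is an empirical average of next states and hence $\norminf{\hat{P}_i V} \leq \norminf{V}$, so that $\norminf{\hvkttb{t}} \leq (1+\lambda_k)\sum_{j \geq 0}\gamma^j = H(1+\lambda_k)$ by induction (this is also available through \cref{lem_rangevq}); combined with $\norm{(I - \gamma P_{\pi_T})^{-1}\gamma P_{\pi_T}}_{1,\infty} \leq H$ this gives $\frac{H^2(1+\lambda_k)}{T} \leq \frac{2H^2(1+\lambda_k)}{T}$. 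Adding the two bounds yields the claim. The main obstacle is the bookkeeping in the expansion step: isolating the martingale difference $\hat{P}_i \hvkttb{i} - P \hvkttb{i}$ (which is exactly what \cref{lem_concentrate} controls) while correctly accounting for the offset between the two averaging windows. It is this offset, together with $\hvkttb{0} = \mathbf{0}$, that collapses to the single $\frac{1}{T}\hvkttb{T}$ term, and explains why the $\frac{2H^2(1+\lambda_k)}{T}$ obtained here is cleaner than the $\frac{3H^2(1+\lambda_k)}{T}$ in \cref{lem_conMDP}, where the suboptimal comparator forced an additional $\frac{1}{T(T-1)}$ cross-term.
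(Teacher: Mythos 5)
Your proof is correct and reaches the stated bound through the same overall architecture as the paper: apply the value-difference lemma to the pair $(\bvkb,\pi_T)$, split the resulting bracket into a deterministic bias term of order $H^2(1+\lambda_k)/T$ and a martingale-average term controlled by \cref{lem_concentrate} with $t=T$, and push each through $\|\mathcal{H}_{\pi_T}\|_{1,\infty}\le H$. Where you genuinely diverge is in the handling of the bias term. The paper expands $\sum_{i=0}^{T-1}\hqktb{i}$ via \cref{lem_sumQ}, which reintroduces the policy $\pi_{T-1}$ and the average $\bvkttb{T-1}=\frac{1}{T-1}\sum_{i=0}^{T-2}(\pi_{T-1}\hqktb{i})$; its bias term is $\gamma P_{\pi_T}(\bvkttb{T-1}-\bvkb)$, bounded by $\frac{2H^2(1+\lambda_k)}{T}$ through \cref{lem_vmv}, which in turn relies on the greediness of $\pi_{T-1}$ and $\pi_T$ and a $\frac{1}{T}$-versus-$\frac{1}{T-1}$ comparison. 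You instead substitute $\hqktb{i}=\Box+\gamma\hat{P}_i\hvkttb{i}$ directly into the exact identity $\bvkb=\frac{1}{T}\sum_{i=0}^{T-1}(\pi_T\hqktb{i})$ from \cref{lem_televq} and telescope the two averaging windows against $\hvkttb{0}=\mathbf{0}$, collapsing the bias to the single term $-\gamma P_{\pi_T}\frac{1}{T}\hvkttb{T}$, bounded by $\frac{H^2(1+\lambda_k)}{T}$ via the range bound. This keeps the entire decomposition an equality, avoids \cref{lem_sumQ} and \cref{lem_vmv} altogether, and yields a constant $1$ rather than $2$ on the $H^2/T$ term. The only point to make explicit is that \cref{lem_rangevq} is proved as an upper bound on $\hvkttb{t}$, whereas bounding Term (ii) in absolute value needs the two-sided estimate $\|\hvkttb{t}\|_\infty\le H(1+\lambda_k)$; your inductive argument (using greediness of $\pi_{t+1}$ for the lower bound) does supply this, so there is no gap, but it is worth stating.
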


\begin{proof}  
The proof follows similar steps as before. Since $(I-\gamma P_{\pi_{T}})\vktb = \pi_{T} \Box $, we have
\begin{align}
(I-\gamma P_{\pi_{T}})(\bvkb - \vktb) &= (\bvkb - \gamma P_{\pi_{T}} \bvkb) - \pi_{T} \Box  \nonumber \\
&= \bvkb -  (\pi_{T}\Box) + \gamma P_{\pi_{T}} \bvkb  \nonumber\\
\Longrightarrow  \bvkb - \vktb  &=  (I-\gamma P_{\pi_{T}})^{-1} ( \bvkb -  (\pi_{T}\Box) + \gamma P_{\pi_{T}} \bvkb)   \label{eqn_piout2}
\end{align}
Recall that for all $t\in[T]$, we have
\begin{align}
\bvktd = \frac{1}{t} \sum_{i=1}^{t}  \hvkid = \frac{1}{t} \sum_{i=0}^{t-1} (\pi_{t} \hqkid) . \label{eqn_VQ2}
\end{align}
Now, we have
\begin{align*}
\bvkb - \vktb  &=  (I-\gamma P_{\pi_{T}})^{-1} (\bvkb - (\pi_{T}\Box) + \gamma P_{\pi_{T}} \bvkb) \tag{By \cref{eqn_piout2}} \\
&= (I-\gamma P_{\pi_{T}})^{-1} \left(  \frac{1}{T} \sum_{i=0}^{T-1} (\pi_{T} \hqkib)  -  (\pi_{T}\Box) + \gamma P_{\pi_{T}} \bvkb \right) \tag{By \cref{eqn_VQ2}}\\
&= (I-\gamma P_{\pi_{T}})^{-1}  \left(  \frac{1}{T} \sum_{i=0}^{T-1} (\pi_{T}\hqkib) - (\pi_{T} \Box) - \gamma P_{\pi_{T}} \bvkb  \right)\\
&= (I-\gamma P_{\pi_{T}})^{-1}  \left(  (\pi_{T}\Box) + \gamma P_{\pi_{T}} \frac{1}{T} \sum_{i=0}^{T-2}  (\pi_{T-1} \hqkib)  + \frac{1}{T} \sum_{i=0}^{T-1} \left[\gamma \hat{P}^i_{\pi_T} \hvkib - \gamma P_{\pi_T} \hvkib  \right] \right. \\
&\left. \quad - (\pi_{T} \Box) - \gamma P_{\pi_{T}} \bvkb \right) \tag{By Lemma \ref{lem_sumQ}}\\
&= (I-\gamma P_{\pi_{T}})^{-1}  \left( \gamma P_{\pi_{T}}  \frac{1}{T} \sum_{i=0}^{T-2}  (\pi_{T-1} \hqkib)  + \frac{1}{T} \sum_{i=0}^{T-1} \left[\gamma \hat{P}^i_{\pi_T} \hvkib - \gamma P_{\pi_T} \hvkib  \right]  - \gamma P_{\pi_{T}} \bvkb \right)\\
&\leq (I-\gamma P_{\pi_{T}})^{-1}  \left( \gamma P_{\pi_{T}}  \frac{1}{T-1} \sum_{i=0}^{T-2}  (\pi_{T-1} \hqkib ) + \frac{1}{T} \sum_{i=0}^{T-1} \left[\gamma \hat{P}^i_{\pi_T} \hvkib - \gamma P_{\pi_T} \hvkib  \right]  - \gamma P_{\pi_{T}} \bvkb \right).
\end{align*}
We note that $\frac{1}{T-1} \sum_{i=0}^{T-2}  (\pi_{T-1} \hqkib) = \bar{V}^{T-1}_{\Box}$. By letting $\mathcal{H}_{\pi_{T}} = \gamma (I-\gamma P_{\pi_{T}})^{-1} \pi_{T} $, we obtain
\begin{align}
 \bvkb - \vktb \leq \mathcal{H}_{\pi_{T}}P( \bar{V}^{T-1}_{\Box} - \bvkb ) +  \mathcal{H}_{\pi_{T}} \frac{1}{T} \sum_{i=0}^{T-1} \left[   \hat{P}_i \hvkib  - P \hvkib  \right] . \label{eqn_boxdec2}
\end{align}
Note that for any vector $Q\in \mathbb{R}^{|\mathcal{S}|\times|\mathcal{A}|}$, 
\begin{align*}
\|\mathcal{H}_{\pi_{T}} Q\|_{\infty} &= \|\gamma (I-\gamma P_{\pi_{T}})^{-1} \pi_{T} Q\|_{\infty} \\
&\leq \|\gamma (I-\gamma P_{\pi_{T}})^{-1}\|_1 \| \pi_{T} Q\|_{\infty} \tag{By Holder’s inequality}\\
&\leq H \| \pi_{T} Q\|_{\infty} \tag{Since $\| \gamma (I-\gamma P_{\pi_{T}})^{-1} \|_1 \leq H$}\\
&\leq H \| Q\|_{\infty}. \tag{By definition of the $\pi$ operator}
\end{align*}
Thus, letting $t=T$ in Lemma \ref{lem_concentrate}, we have
\begin{align*}
\left\| \mathcal{H}_{\pi_{T}} \frac{1}{T} \sum_{i=0}^{T-1} \left[ \hat{P}_i \hvkib -  P \hvkib 
  \right]   \right\|_{\infty} \leq 2H^2(1+\lambda_k) \sqrt{\frac{\iota}{TM}}
\end{align*}
with probability at least $1-\delta$. By Lemma \ref{lem_vmv},
\begin{align*}
\left\| \mathcal{H}_{\pi_{T}} P(\bar{V}^{T-1}_{\Box} - \bvkb  ) \right\|_{\infty} \leq \frac{2H^2(1+\lambda_k)}{T}.
\end{align*}
Note that for any vector $V$, $V(\rho) \leq \| V \|_{\infty}$. Putting everything together, we have
\begin{align*}
 \bvkr(\rho) + \lambda_k \bvkc(\rho) - \vktr(\rho) - \lambda_k \vktc(\rho)  \leq \frac{2H^2(1+\lambda_k)}{T} + 2H^2(1+\lambda_k) \sqrt{\frac{\iota}{TM}}
\end{align*}
with probability at least $1-\delta$.

\end{proof}

\subsubsection{Proof of Lemma \ref{lem_conMDPb} and Lemma \ref{lem_B2b} (Proofs with Bernstein's Inequality)}

\begin{lemma}
\label{lem_conMDPb}
Let $\pi^*_k$ be defined as in \cref{pi_sk}, and let $\bvkd$ denote the averaged empirical value function in Algorithm \ref{alg_MDVICMDP} when run with $\lambda_k$. For any $k \in [K]$ and $T\geq 2\log(T)/\gamma$, we have
\begin{align*}
\vkr(\rho) + \lambda_k \vkc(\rho)  - \bvkr(\rho) - \lambda_k \bvkc(\rho)  \leq \sqrt{\frac{3H^4(1+\lambda_k)^2}{TM}  \left( \frac{4}{T^2} + {\frac{16H^2\iota^2}{M}}  \right)} +  \sqrt{\frac{3H^3}{TM}}+ \frac{4H^2(1+\lambda_k)}{T}.
\end{align*}
with probability at least $1-\delta$.
\end{lemma}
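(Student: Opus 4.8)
The plan is to follow the proof of \cref{lem_conMDP} almost verbatim, and to replace \emph{only} the Hoeffding control of the sampling error by a Bernstein (total-variance) argument. First I would reproduce the identical value-difference manipulation used there: starting from $(I-\gamma P_{\pi^*_k})\vkb = (\pi^*_k\Box)$, using the greediness bound $\bvktb \geq \tfrac{1}{t}\sum_{i=0}^{t-1}(\pi^*_k\hqkib)$, and applying \cref{lem_sumQ,lem_televq}, one arrives at exactly the same three-term decomposition
\[
\vkb - \bvkb \leq \underbrace{\mathcal{H}_{\pi^*_k}P(\bvkb - \bvkttb{T-1})}_{\text{(i)}} + \underbrace{\mathcal{H}_{\pi^*_k}\tfrac{1}{T}\textstyle\sum_{i=0}^{T-1}[P\hvkib - \hat{P}_i\hvkib]}_{\text{(ii)}} + \underbrace{\mathcal{H}_{\pi^*_k}\tfrac{1}{T(T-1)}P\textstyle\sum_{i=0}^{T-2}(\pi_{T-1}\hqkib)}_{\text{(iii)}},
\]
with $\mathcal{H}_{\pi^*_k}=\gamma(I-\gamma P_{\pi^*_k})^{-1}\pi^*_k$ nonnegative and of row-sum at most $H$. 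Terms (i) and (iii) are bounded exactly as before, by \cref{lem_vmv} and \cref{lem_rangevq}, contributing $\tfrac{2H^2(1+\lambda_k)}{T}$ and $\tfrac{H^2(1+\lambda_k)}{T}$; together with the Bernstein lower-order term produced below, these collapse into the $\tfrac{4H^2(1+\lambda_k)}{T}$ summand.

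The new content is the treatment of Term (ii), where I would invoke a Bernstein version of the concentration lemma rather than its Hoeffding version. As in the Hoeffding proof, I apply it separately to the reward and constraint parts (``twice for $r$ and $c$''): since the greedy policies $\{\pi_t\}$ are shared across the run, the increments decompose linearly as $\hvkib = \hvkir + \lambda_k\hvkic$, with $\hvkir,\hvkic$ each bounded in $[0,2H]$ (the analogue of \cref{lem_rangevq}). For a fixed $\diamond\in\{r,c\}$, the sum $\tfrac{1}{T}\sum_{i=0}^{T-1}(\hat{P}_i - P)\hvkid$ is a martingale whose total conditional variance is $\tfrac{1}{T^2M}\sum_i\mathrm{Var}(\hvkid)$ and whose increments have range $O(H/M)$, so a martingale Bernstein inequality with a union bound over $(s,a)\in\mathcal{S}\times\mathcal{A}$ yields, with high probability, the pointwise bound $\bigl|\tfrac{1}{T}\sum_i(\hat{P}_i - P)\hvkid\bigr|\lesssim \sqrt{\tfrac{\iota}{T^2M}\sum_i\mathrm{Var}(\hvkid)} + \tfrac{H\iota}{TM}$.

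It then remains to convert this variance-weighted estimate into a bound on $\norminf{\text{(ii)}}$. I would push $\mathcal{H}_{\pi^*_k}$ through by nonnegativity and apply Cauchy--Schwarz against the row sums (at most $H$), giving a controlling quantity of the form $\sqrt{H}\,\sqrt{\tfrac{\iota}{TM}\,\mathcal{H}_{\pi^*_k}\tfrac{1}{T}\sum_i\mathrm{Var}(\hvkid)}$. The crux is the total-variance estimate for $\mathcal{H}_{\pi^*_k}\tfrac{1}{T}\sum_i\mathrm{Var}(\hvkid)$: splitting each standard deviation via the triangle inequality for $\sigma$ into a true-value piece plus a deviation piece, the true-value piece is controlled by the law-of-total-variance identity $\norminf{(I-\gamma P_{\pi^*_k})^{-1}\mathrm{Var}(V^{\pi^*_k}_{\diamond})}=O(H^2)$ --- crucially $\lambda_k$-free, since $V^{\pi^*_k}_{\diamond}$ ranges in $[0,H]$ --- which produces the $\sqrt{\tfrac{3H^3}{TM}}$ leading term, while the residual deviation piece accumulates the $O(1/T)$ MDVI approximation error and the $O(H\iota/\sqrt{M})$ per-iteration sampling error, whose square contributes exactly the bracket $\tfrac{4}{T^2}+\tfrac{16H^2\iota^2}{M}$ multiplying $\tfrac{3H^4(1+\lambda_k)^2}{TM}$ in the first summand. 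Collecting the three summands and rescaling $\delta$ then closes the proof.

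I expect the total-variance control to be the main obstacle. Unlike the standard Azar-type argument, the $\hvkid$ are per-iteration MDVI \emph{increments} rather than bona fide value functions, so relating $\tfrac{1}{T}\sum_i\mathrm{Var}(\hvkid)$ to the total variance of a single fixed-point value $V^{\pi^*_k}_{\diamond}$ --- and verifying that the deviation of each increment from its reference genuinely enters only at the stated lower order --- is the delicate step. This is also where the hypothesis $T\geq 2\log(T)/\gamma$ should be used, to absorb the geometric remainder in the total-variance recursion.
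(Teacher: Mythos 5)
Your proposal follows the paper's proof essentially step for step: the same three-term decomposition inherited from the proof of \cref{lem_conMDP}, the same bounds on Terms (i) and (iii), a martingale Bernstein bound on Term (ii) whose variance proxy is reduced to $\mathrm{Var}(\vkb)$ by first controlling $|\vkb - \hvkttb{t}|$ (the paper's \cref{lem_bern4,lem_bern5}, with $T\geq 2\log(T)/\gamma$ absorbing the $\gamma^t t$ remainder exactly as you anticipate), and the total variance lemma (\cref{lem_TVL}) producing the $\sqrt{3H^3/(TM)}$ term. The only divergence is cosmetic: you apply Bernstein separately to the $r$ and $c$ components, whereas the paper applies it once to the combined reward $r+\lambda_k c$ and invokes \cref{lem_TVL} on $\sigma(\vkb)$ directly --- the two routes are equivalent, and note that neither actually makes the $\sqrt{3H^3/(TM)}$ term $\lambda_k$-free as you claim, since recombining the $r$ and $c$ pieces reintroduces a $(1+\lambda_k)$ factor just as applying \cref{lem_TVL} to the $(1+\lambda_k)H$-ranged $\vkb$ would.
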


\begin{proof}
From \cref{eqn_boxdec1}, we have
\begin{align*}
\vkb - \bvkb \leq \underbrace{\mathcal{H}_{\pi^*_k} P( \bvkb - \bvkttb{T-1})}_{\text{Term (i)}} +  \underbrace{\mathcal{H}_{\pi^*_k} \frac{1}{T} \sum_{i=0}^{T-1} \left[ P \hvkib -  \hat{P}_i \hvkib  \right]}_{\text{Term (ii)}} + \underbrace{\mathcal{H}_{\pi^*_k} \left(  \frac{1}{T(T-1)}  P  \sum_{i=0}^{T-2}  (\pi_{T-1} \hqkib)  \right)}_{\text{Term (iii)}}
\end{align*}
We bound Term (i) and Term (iii) the same way as before. Thus, we only have Term (ii) remains. By Lemma \ref{lem_bern5}, we know
\begin{align*}
\frac{1}{t}\sum_{i=1}^t \left[ \hat{P}_i \hvkttb{i} -  P \hvkttb{i}  \right] (s,a) \leq \frac{H(1+\lambda_k)\iota}{tM} + \sqrt{Z}
\end{align*}
where 
\begin{align*}
Z:=\frac{3H^2(1+\lambda_k)^2}{tM} \left( \frac{4}{T^2} + {\frac{16H^2\iota^2}{M}} \right) + \frac{3\mathrm{Var}(\vkb(s,a))}{tM}.
\end{align*}
Therefore,
\begin{align*}
\mathcal{H}_{\pi^*_k} \frac{1}{T} \sum_{i=0}^{T-1} \left[ P \hvkib -  \hat{P}_i \hvkib  \right] &\leq \mathcal{H}_{\pi^*_k} \sqrt{\frac{3H^2(1+\lambda_k)^2}{TM}  \left( \frac{4}{T^2} + {\frac{16H^2\iota^2}{M}}  \right)} \mathbf{1} + \mathcal{H}_{\pi^*_k} \frac{H(1+\lambda_k)\iota}{TM}\mathbf{1} +  \mathcal{H}_{\pi^*_k} \sqrt{\frac{3}{TM}} \sigma(\vkb)\\
&\leq \mathcal{H}_{\pi^*_k} \sqrt{\frac{3H^2(1+\lambda_k)^2}{TM} \left( \frac{4}{T^2} + {\frac{16H^2\iota^2}{M}}  \right)} \mathbf{1} + \mathcal{H}_{\pi^*_k} \frac{H(1+\lambda_k)\iota}{TM}\mathbf{1} +  \sqrt{\frac{3H^3}{TM}} \mathbf{1} \tag{By Lemma \ref{lem_TVL}}\\
&\leq \sqrt{\frac{3H^4(1+\lambda_k)^2}{TM}  \left( \frac{4}{T^2} + {\frac{16H^2\iota^2}{M}}  \right)} \mathbf{1}   + \frac{H^2(1+\lambda_k)\iota}{TM}\mathbf{1} +  \sqrt{\frac{3H^3}{TM}} \mathbf{1} . \\
\end{align*}
Lastly, combining the upper bounds for Term (i) and Term (iii), we have
\begin{align*}
\vkb - \bvkb &\leq \sqrt{\frac{3H^4(1+\lambda_k)^2}{TM}  \left( \frac{4}{T^2} + {\frac{16H^2\iota^2}{M}}  \right)} \mathbf{1}   + \frac{H^2(1+\lambda_k)\iota}{TM}\mathbf{1} +  \sqrt{\frac{3H^3}{TM}} \mathbf{1} + \frac{3H^2(1+\lambda_k)}{T} \mathbf{1}\\
&\leq \sqrt{\frac{3H^4(1+\lambda_k)^2}{TM}  \left( \frac{4}{T^2} + {\frac{16H^2\iota^2}{M}}  \right)} \mathbf{1}  +  \sqrt{\frac{3H^3}{TM}} \mathbf{1} + \frac{4H^2(1+\lambda_k)}{T} \mathbf{1}.
\end{align*}
\end{proof}

\begin{lemma}
\label{lem_B2b}
Let $\pi_{T}$ be the output policy, and let $\bvkd$ denote the averaged empirical value function in Algorithm \ref{alg_MDVICMDP} when run with $\lambda_k$. For any $k \in [K]$ and $T\geq 2\log(T)/\gamma$, we have
\begin{align*}
 \bvkr(\rho) + \lambda_k \bvkc(\rho) - \vktr(\rho) - \lambda_k \vktc(\rho) \leq \frac{3H^2(1+\lambda_k)}{T} + \sqrt{\frac{3H^3}{TM}} + \sqrt{\frac{3H^6(1+\lambda_k)^2}{TM}\left( \frac{50}{T^2} +\frac{4\iota^2}{M} \right)} 
\end{align*}
with probability at least $1-\delta$.
\end{lemma}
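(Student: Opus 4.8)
The plan is to mirror the Bernstein-based argument of Lemma~\ref{lem_conMDPb}, but applied to the reverse decomposition \cref{eqn_boxdec2} that already appears in the proof of Lemma~\ref{lem_B2} (just as Lemma~\ref{lem_B2} is the Hoeffding counterpart along the output policy $\pi_T$). Concretely, \cref{eqn_boxdec2} gives
\[
\bvkb - \vktb \;\le\; \underbrace{\mathcal{H}_{\pi_{T}}\,P\big(\bvkttb{T-1} - \bvkb\big)}_{\text{Term (i)}} \;+\; \underbrace{\mathcal{H}_{\pi_{T}}\,\frac{1}{T}\sum_{i=0}^{T-1}\big[\hat{P}_i \hvkib - P\hvkib\big]}_{\text{Term (ii)}},
\]
where $\mathcal{H}_{\pi_{T}}=\gamma(I-\gamma P_{\pi_T})^{-1}\pi_T$ satisfies $\|\mathcal{H}_{\pi_T}Q\|_\infty\le H\|Q\|_\infty$. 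Term~(i) is the value-stability term and I would control it exactly as in Lemma~\ref{lem_B2}: by Lemma~\ref{lem_vmv} it is at most $\frac{2H^2(1+\lambda_k)}{T}$. All the novelty lies in bounding Term~(ii) through Bernstein's inequality rather than Hoeffding's, which is what sharpens the error into a variance-dependent form.

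For Term~(ii) I would apply the per-state-action Bernstein bound (the same machinery as Lemma~\ref{lem_bern5}, now instantiated along $\pi_T$ and its value function $\vktb$) to obtain, for every $(s,a)$,
\[
\frac{1}{T}\sum_{i=0}^{T-1}\big[\hat{P}_i \hvkib - P\hvkib\big](s,a) \;\le\; \frac{H(1+\lambda_k)\,\iota}{TM} + \sqrt{Z(s,a)},
\]
where $Z$ splits into a higher-order error piece $Z_1=\frac{3H^4(1+\lambda_k)^2}{TM}\big(\frac{50}{T^2}+\frac{4\iota^2}{M}\big)$ and a genuine-variance piece $Z_2=\frac{3\,\mathrm{Var}(\vktb)}{TM}$. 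Using $\sqrt{Z}\le\sqrt{Z_1}+\sqrt{Z_2}$, I would then apply $\mathcal{H}_{\pi_T}$: the Bernstein bias and $\sqrt{Z_1}$ each pick up a single factor of $H$ via $\|\mathcal{H}_{\pi_T}\cdot\|_\infty\le H\|\cdot\|_\infty$, turning $\sqrt{Z_1}$ into the stated $\sqrt{\frac{3H^6(1+\lambda_k)^2}{TM}\big(\frac{50}{T^2}+\frac{4\iota^2}{M}\big)}$; for the variance piece I would invoke the total-variance lemma (Lemma~\ref{lem_TVL}) to bound $\mathcal{H}_{\pi_T}\sqrt{\frac{3}{TM}}\,\sigma(\vktb)\le\sqrt{\frac{3H^3}{TM}}\,\mathbf{1}$. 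Finally, absorbing the bias $\frac{H^2(1+\lambda_k)\iota}{TM}$ into the $\frac{2H^2(1+\lambda_k)}{T}$ from Term~(i) (using $\iota/M\le1$) yields the leading $\frac{3H^2(1+\lambda_k)}{T}$, and summing the three contributions produces exactly the claimed bound.

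The main obstacle I anticipate is the variance bookkeeping in Term~(ii). The Bernstein inequality naturally produces the variance $\mathrm{Var}(\hvkib)$ of the \emph{empirical} value functions, whereas the total-variance lemma telescopes only for the \emph{true} value function $\vktb$ of the policy matching the operator $\mathcal{H}_{\pi_T}$. Replacing $\mathrm{Var}(\hvkib)$ by $\mathrm{Var}(\vktb)$ requires controlling $|\mathrm{Var}(\hvkib)-\mathrm{Var}(\vktb)|$ through $\|\hvkib-\vktb\|_\infty$ together with the uniform range bound $\|\hvkib\|_\infty\le 2H(1+\lambda_k)$ (Lemma~\ref{lem_rangevq}); this replacement is precisely what inflates the higher-order term to $H^6$ (rather than the $H^4$ of Lemma~\ref{lem_conMDPb}) and produces the altered constants $\frac{50}{T^2}$ and $\frac{4\iota^2}{M}$. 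I would discharge this using the range and concentration lemmas already established for Algorithm~\ref{alg_MDVICMDP}, and I would rely on the hypothesis $T\ge 2\log(T)/\gamma$ to guarantee that the geometric factors arising in the Bernstein bias and in the variance telescoping remain $O(1)$.
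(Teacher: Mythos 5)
Your proposal is correct and follows essentially the same route as the paper: the decomposition is exactly \cref{eqn_boxdec2}, Term (i) is handled by Lemma \ref{lem_vmv}, Term (ii) by the Bernstein machinery of Lemma \ref{lem_bern5} followed by the total-variance lemma, and the variance bookkeeping you flag as the main obstacle is resolved by the same two difference bounds the paper uses (the paper routes $\mathrm{Var}(\hvkib)\to\mathrm{Var}(\vkb)$ inside Lemma \ref{lem_bern4} and then $\sigma(\vkb)\le|\vkb-\vktb|+\sigma(\vktb)$ via Lemmas \ref{lem_VX_VXY}, \ref{lem_Popoviciu}, \ref{lem_conMDP} and \ref{lem_B2}, whereas you compose the same inequalities in the order $\mathrm{Var}(\hvkib)\to\mathrm{Var}(\vktb)$). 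Your accounting of where the $H^6$ and the constants $50/T^2$, $4\iota^2/M$ arise matches the paper's consolidation step.
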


\begin{proof}
Similarly as before, we have
\begin{align*}
\bvkb - \vktb &\leq \mathcal{H}_{\pi_{T}} P( \bar{V}^{T-1}_{\Box} - \bvkb ) +  \mathcal{H}_{\pi_{T}} \frac{1}{T} \sum_{i=0}^{T-1} \left[   \hat{P}_i \hvkib  - P \hvkib  \right] \tag{By \cref{eqn_boxdec2}} \\
&\leq \frac{2H^2(1+\lambda_k)}{T} \mathbf{1} +  \mathcal{H}_{\pi_{T}} \frac{1}{T} \sum_{i=0}^{T-1} \left[   \hat{P}_i \hvkib  - P \hvkib  \right] \tag{By Lemma \ref{lem_vmv}}\\
&\leq \frac{2H^2(1+\lambda_k)}{T} \mathbf{1} +  \mathcal{H}_{\pi_{T}} \sqrt{\frac{3H^2(1+\lambda_k)^2}{TM}  \left( \frac{4}{T^2} + {\frac{4H^2\iota^2}{M}}  \right)} \mathbf{1} \\
& \indent + \mathcal{H}_{\pi_{T}} \frac{H(1+\lambda_k)\iota}{TM}\mathbf{1} +  \mathcal{H}_{\pi_{T}} \sqrt{\frac{3}{TM}} \sigma(\vkb) \tag{By Lemma \ref{lem_bern5}}\\
&\leq \frac{2H^2(1+\lambda_k)}{T} \mathbf{1} +  \sqrt{\frac{3H^4(1+\lambda_k)^2}{TM}  \left( \frac{4}{T^2} + {\frac{4H^2\iota^2}{M}}  \right)} \mathbf{1} \\
& \indent + \frac{H^2(1+\lambda_k)\iota}{TM}\mathbf{1} +  \mathcal{H}_{\pi_{T}} \sqrt{\frac{3}{TM}} \sigma(\vkb) \\
&\leq \frac{3H^2(1+\lambda_k)}{T} \mathbf{1} +  \sqrt{\frac{3H^4(1+\lambda_k)^2}{TM}  \left( \frac{4}{T^2} + {\frac{4H^2\iota^2}{M}}  \right)} \mathbf{1} + \mathcal{H}_{\pi_{T}} \sqrt{\frac{3}{TM}} \sigma(\vkb).
\end{align*}
Now, it remains to bound the last term. We first observe that
\begin{align*}
\sigma(\vkb) &\leq \left( \vkb - \vktb \right) + \sigma \left( \vktb \right) \tag{By Lemma \ref{lem_VX_VXY}}\\
&\leq |\vkb - \vktb| + \sigma \left( \vktb \right) \tag{By Lemma \ref{lem_Popoviciu}}\\
&\leq \frac{5H^2(1+\lambda_k)}{T}\mathbf{1} + 4H^2(1+\lambda_k) \sqrt{\frac{\iota}{TM}}\mathbf{1} + \sigma \left(\vktb \right) \tag{By combining Lemma \ref{lem_conMDP} and Lemma \ref{lem_B2}}
\end{align*}
Therefore,
\begin{align*}
\mathcal{H}_{\pi_{T}} \sqrt{\frac{3}{TM}} \sigma(\vkb) &\leq \mathcal{H}_{\pi_{T}} \sqrt{\frac{3}{TM}} \left( \frac{5H^2(1+\lambda_k)}{T} + 4H^2(1+\lambda_k) \sqrt{\frac{\iota}{TM}}\  \right)\mathbf{1} + \sqrt{\frac{3}{TM}} \mathcal{H}_{\pi_{T}} \sigma \left(\vktb \right) \\
&\leq \sqrt{\frac{3H^2}{TM}} \left( \frac{5H^2(1+\lambda_k)}{T} + 4H^2(1+\lambda_k) \sqrt{\frac{\iota}{TM}}\  \right)\mathbf{1} + \sqrt{\frac{3}{TM}} \mathcal{H}_{\pi_{T}} \sigma \left(\vktb \right)  \\
&\leq \sqrt{\frac{3H^2}{TM}} \left( \frac{5H^2(1+\lambda_k)}{T} + 4H^2(1+\lambda_k) \sqrt{\frac{\iota}{TM}}\  \right)\mathbf{1} + \sqrt{\frac{3H^3}{TM}} \mathbf{1}  \tag{By Lemma \ref{lem_TVL}}\\
&= \sqrt{\frac{3H^6(1+\lambda_k)^2}{TM}} \left( \frac{5}{T} + \sqrt{\frac{16\iota}{TM}}\  \right)\mathbf{1} + \sqrt{\frac{3H^3}{TM}}\mathbf{1}  .
\end{align*}
By combining the above results and consolidating like terms, we conclude the proof.
\end{proof}

\subsubsection{Auxiliary Lemmas}

\begin{lemma}
\label{lem_rangevq}
Denote $\Box = r+\lambda_k c$. For any $k \in [K]$ and any $t \in [T]$, $\hat{Q}^{t}_{\Box}(s,a)$ and $\hat{V}^{t}_{\Box}(s)$ are bounded by $(1+\lambda_k)H$.
\end{lemma}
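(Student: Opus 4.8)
The plan is to prove both bounds, $\norminf{\hqktb{t}} \leq (1+\lambda_k)H$ and $\norminf{\hvkttb{t}} \leq (1+\lambda_k)H$, simultaneously by induction on $t$, exploiting the coupled structure of the value and action-value recursions in \cref{alg_MDVICMDP}. Writing $R := 1 + \lambda_k$ and recalling that $r,c \in [0,1]$, so that $\Box(s,a) = r(s,a) + \lambda_k c(s,a) \leq R$ for every $(s,a)$, the two quantities are linked through two elementary inequalities that I would establish before running the induction.

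The first ingredient bounds $\hat Q$ in terms of $\hat V$: since Line~4 sets $\hqktb{t}(s,a) = \Box(s,a) + \gamma (\hat P_t \hvkttb{t})(s,a)$ with $(\hat P_t \hvkttb{t})(s,a) = \frac{1}{M}\sum_{m=1}^M \hvkttb{t}(s'_m)$ an empirical average, its magnitude is at most $\norminf{\hvkttb{t}}$, giving $\norminf{\hqktb{t}} \leq R + \gamma \norminf{\hvkttb{t}}$. The second ingredient bounds $\hat V$ in terms of $\hat Q$, and is the one step that needs the MDVI structure rather than a crude triangle inequality. Here I would reproduce, in the exact (one-hot) tabular setting, the sandwiching argument underlying \cref{lem_vmvL,lem_rangeVL}: letting $\pi_{t+1}$ and $\pi_t$ denote the greedy policies with respect to $\tilde Q^t_\Box$ and $\tilde Q^{t-1}_\Box$ respectively, and using $\tilde Q^t_\Box - \tilde Q^{t-1}_\Box = \hqktb{t}$, one writes $\hvkttb{t+1}(s) = (\pi_{t+1}\tilde Q^t_\Box)(s) - (\pi_t \tilde Q^{t-1}_\Box)(s)$ and bounds it on both sides. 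The greediness of $\pi_t$ yields the upper bound $\hvkttb{t+1}(s) \leq (\pi_{t+1}\hqktb{t})(s)$, while the greediness of $\pi_{t+1}$ yields the lower bound $\hvkttb{t+1}(s) \geq (\pi_t \hqktb{t})(s)$; since both flanking quantities are convex combinations of $\hqktb{t}(s,\cdot)$, this gives $\norminf{\hvkttb{t+1}} \leq \norminf{\hqktb{t}}$.

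With these two inequalities the induction closes immediately. The base case is $\hvkttb{0} = \mathbf{0}$. Assuming $\norminf{\hvkttb{t}} \leq RH$, the first inequality gives $\norminf{\hqktb{t}} \leq R + \gamma RH = R(1 + \gamma H) = RH$, where I use the identity $1 + \gamma H = H$ that follows from $H = 1/(1-\gamma)$; the second inequality then propagates the bound, $\norminf{\hvkttb{t+1}} \leq \norminf{\hqktb{t}} \leq RH$, completing the step and establishing both claims for all $t \in [T]$.

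I expect the only genuine obstacle to be the $\hat V$-versus-$\hat Q$ sandwich: one must set up the greedy policies with the correct index alignment so that the telescoped difference of maxima collapses exactly to $\pi \hqktb{t}$ on each side, and verify that, unlike \cref{lem_rangeVL} in the linear case, no spurious factor of two is incurred. Because the tabular updates are exact (the one-hot features make the least-squares regression an identity, so no extrapolation or concentration error enters), the clean $(1+\lambda_k)H$ bound holds deterministically, which is why this statement carries no probability qualifier.
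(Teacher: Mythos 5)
Your proof is correct and follows essentially the same route as the paper: induction on $t$, using greediness of the two policies to collapse $\hat{V}^{t+1}_{\Box}$ to a convex combination of $\hat{Q}^{t}_{\Box}(s,\cdot)$, then the recursion $\hat{Q}^{t}_{\Box} = \Box + \gamma \hat{P}_t \hat{V}^{t}_{\Box}$ together with $1+\gamma H = H$. The only (welcome) difference is that you make the lower bound explicit via the two-sided sandwich $\pi_t \hat{Q}^t_{\Box} \leq \hat{V}^{t+1}_{\Box} \leq \pi_{t+1}\hat{Q}^t_{\Box}$, whereas the paper's write-up only argues the upper bound even though the two-sided control is what the downstream concentration lemmas actually use.
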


\begin{proof}
We prove it by induction. By initialization, $\hat{Q}^{1}_{\Box}(s,a) = r(s,a) + \lambda_k c(s,a) \leq 1+\lambda_k$ and $\hat{V}^{1}_{\Box}(s) =  \hat{Q}^{1}_{\Box}(s, \pi_{1}(\cdot|s)) \leq 1+\lambda_k \leq (1+\lambda_k)H$. Now, suppose $\hat{V}^{t-1}_{\Box}(s)$ is bounded by $(1+\lambda_k)H$ for some $t \geq 1$. We have
\begin{align*}
\hat{V}^{t}_{\Box} &=  \sum_{i=0}^{t} (\pi_{t}\hat{Q}^{i}_{\Box}) -  \sum_{i=0}^{t-1} (\pi_{t-1}\hat{Q}^{i}_{\Box}) \tag{From the last line in Algorithm \ref{alg_MDVICMDP}} \\
&\leq  \sum_{i=0}^{t} (\pi_{t}\hat{Q}^{i}_{\Box}) -  \sum_{i=0}^{t-1} (\pi_{t}\hat{Q}^{i}_{\Box}) \tag{By the greediness of $\pi_{t-1}$}\\
&= (\pi_{t}\hat{Q}^{t-1}_{\Box}) \\
&\leq (\pi_{t}\Box) + \gamma (\hat{P}^{t-1}_{\pi_t}\hat{V}^{t-1}_{\Box}) \tag{From the second last line in Algorithm \ref{alg_MDVICMDP}}\\
&\leq (1 +\lambda_k + \gamma (1 +\lambda_k) H)\mathbf{1} \tag{Induction hypothesis}\\
&= (1 +\lambda_k) H\mathbf{1} \tag{$1 +  \frac{\gamma}{1-\gamma} = \frac{1}{1-\gamma}$}.
\end{align*}
Therefore, $\hat{V}^{t}_{\Box}(s)$ is bounded by $(1+\lambda_k)H$. As a consequence, $\hat{Q}^{t}_{\Box}(s,a)$ is also bounded by $(1+\lambda_k)H$.
\end{proof}

\begin{lemma}
\label{lem_televq}
For any $k \in [K]$ and $t\in [T]$, we have
\begin{align*}
\bvktd := \frac{1}{t} \sum_{i=1}^{t}  \hvkid = \frac{1}{t} \sum_{i=0}^{t-1} (\pi_{t} \hqkid).
\end{align*}
\end{lemma}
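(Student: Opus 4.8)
The plan is to prove the identity by exploiting the telescoping structure already built into the value update of \cref{alg_MDVICMDP}. Note first that the left equality is purely definitional, so the content is the second equality, and the whole statement is a deterministic identity in which no probability or concentration is involved. I would begin by rewriting the value update of Line 5 for a generic index: substituting the loop counter $t \mapsto i-1$ gives $\hat{V}^{i}_{\diamond}(s) = \max_{a}\tilde{Q}^{i-1}_{\diamond}(s,a) - \max_{a}\tilde{Q}^{i-2}_{\diamond}(s,a)$, where $\tilde{Q}^{j}_{\diamond} := \sum_{l=0}^{j}\hat{Q}^{l}_{\diamond}$ is the cumulative action-value function. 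This expresses each empirical value $\hat{V}^{i}_{\diamond}$ as a difference of consecutive maxima of the running cumulative $Q$-function.

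Next I would sum this identity over $i = 1, \dots, t$. Because every summand is a difference of adjacent terms $\max_{a}\tilde{Q}^{\cdot}_{\diamond}(s,a)$, the sum telescopes to $\sum_{i=1}^{t}\hat{V}^{i}_{\diamond}(s) = \max_{a}\tilde{Q}^{t-1}_{\diamond}(s,a) - \max_{a}\tilde{Q}^{-1}_{\diamond}(s,a)$. The boundary term is killed by the initialization of the algorithm: since $\hat{Q}^{-1}_{\diamond} = \mathbf{0}$ (equivalently $\tilde{Q}^{-1}_{\diamond}$ is the empty sum), we have $\max_{a}\tilde{Q}^{-1}_{\diamond}(s,a) = 0$, leaving $\sum_{i=1}^{t}\hat{V}^{i}_{\diamond}(s) = \max_{a}\tilde{Q}^{t-1}_{\diamond}(s,a)$.

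The final step converts this surviving maximum into the $\pi_{t}$ operator. Because $\pi_{t}$ is the greedy policy with respect to the cumulative action-value one iteration behind, i.e. $\pi_{t}(\cdot|s) \in \argmax_{a}\tilde{Q}^{t-1}_{\diamond}(s,a)$, we have $\max_{a}\tilde{Q}^{t-1}_{\diamond}(s,a) = (\pi_{t}\tilde{Q}^{t-1}_{\diamond})(s)$. Using the linearity of the $\pi$ operator together with $\tilde{Q}^{t-1}_{\diamond} = \sum_{i=0}^{t-1}\hat{Q}^{i}_{\diamond}$ then gives $(\pi_{t}\tilde{Q}^{t-1}_{\diamond})(s) = \sum_{i=0}^{t-1}(\pi_{t}\hat{Q}^{i}_{\diamond})(s)$. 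Dividing by $t$ yields $\frac{1}{t}\sum_{i=1}^{t}\hat{V}^{i}_{\diamond}(s) = \frac{1}{t}\sum_{i=0}^{t-1}(\pi_{t}\hat{Q}^{i}_{\diamond})(s)$, which is exactly the claim.

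I do not expect any genuine obstacle: the result is a one-line telescoping argument. The only point requiring care is index bookkeeping — verifying that $\pi_{t}$ is greedy with respect to $\tilde{Q}^{t-1}_{\diamond}$ (the cumulative value \emph{through} iteration $t-1$, which is precisely the term appearing in the definition of $\hat{V}^{t}_{\diamond}$) rather than $\tilde{Q}^{t}_{\diamond}$, and aligning the two boundary superscripts so that the telescope collapses cleanly against the initialization. Once these indices are matched to Line 5 and the initial conditions $\hat{V}^{0}_{\diamond} = \mathbf{0}$, $\hat{Q}^{-1}_{\diamond} = \mathbf{0}$, the identity follows immediately.
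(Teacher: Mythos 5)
Your proof is correct and follows essentially the same route as the paper: both arguments rewrite Line 5 of \cref{alg_MDVICMDP} so that each $\hat{\mathcal{V}}^{i}_{\diamond}$ is a difference of consecutive cumulative-$Q$ evaluations, telescope the sum against the zero initialization, and identify the surviving term with $(\pi_{t}\,\tilde{Q}^{t-1}_{\diamond})$ via greediness. The only cosmetic difference is that the paper writes the update directly with the greedy-policy operators before telescoping, whereas you telescope the $\max_a$ form and convert to the $\pi_t$ operator at the end.
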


\begin{proof}
\begin{align*}
\bvktd &:= \frac{1}{t} \sum_{i=1}^{t}  \hvkid \\
&= \frac{1}{t} \sum_{i=0}^{t-1}  \left( \sum_{j=0}^{i}(\pi_{i+1} \hqktd{j}) - \sum_{j=0}^{i-1} (\pi_{i}\hqktd{j} )\right)  \tag{From the last line in Algorithm \ref{alg_MDVICMDP}}\\
&= \frac{1}{t} \sum_{i=0}^{t-1} (\pi_{t} \hqkid) \tag{Due to telescoping sum}.
\end{align*}
\end{proof}

\begin{lemma}
\label{lem_sumQ}
For any $k \in [K]$ and $t\in [T]$, we have
\begin{align*}
\sum_{i=0}^{t-1} \hqkid =  t\diamond + \gamma P  \sum_{i=0}^{t-2}  (\pi_{t-1} \hqkid)  + \sum_{i=0}^{t-1} \left[\gamma \hat{P}_i \hvkid - \gamma P \hvkid  \right],
\end{align*}
and
\begin{align*}
\sum_{i=0}^{t-1} \hqkib =  t (r + \lambda_k c) + \gamma P  \sum_{i=0}^{t-2}  (\pi_{t-1} \hqkib)  + \sum_{i=0}^{t-1} \left[\gamma \hat{P}^i_{\pi_t} \hvkib - \gamma P_{\pi_t} \hvkib  \right].
\end{align*}
\end{lemma}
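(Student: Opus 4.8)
The plan is to read both identities off the one-step recursion that defines $\hqktb{t}$ in~\cref{alg_MDVICMDP}, and turn the empirical Bellman update into a telescoping sum. Concretely, Line 4 of~\cref{alg_MDVICMDP} says that for the reward function $\diamond$,
\[
\hqkid = \diamond + \gamma \, \hat{P}_i \hvkid,
\]
where $\hat{P}_i$ is the empirical transition operator built from the $M$ samples in $\mathcal{B}_i$, i.e. $(\hat{P}_i \hvkid)(s,a) = \frac{1}{M}\sum_{m=1}^M \hvkid(s'_m)$. Summing this over $i = 0,\dots,t-1$ gives $\sum_{i=0}^{t-1}\hqkid = t\diamond + \gamma \sum_{i=0}^{t-1}\hat{P}_i \hvkid$, so the whole task reduces to rewriting the aggregate empirical-transition term $\gamma\sum_{i=0}^{t-1}\hat{P}_i\hvkid$ into the claimed shape.

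First I would split this term into a population part and a sampling-error part by adding and subtracting the true operator $P$:
\[
\gamma\sum_{i=0}^{t-1}\hat{P}_i\hvkid = \gamma P\sum_{i=0}^{t-1}\hvkid + \gamma\sum_{i=0}^{t-1}\left(\hat{P}_i\hvkid - P\hvkid\right).
\]
The second sum is exactly the error term $\sum_{i=0}^{t-1}[\gamma\hat{P}_i\hvkid - \gamma P\hvkid]$ appearing in the first identity, so it only remains to show $\sum_{i=0}^{t-1}\hvkid = \sum_{i=0}^{t-2}(\pi_{t-1}\hqkid)$. This is where the telescoping structure of the value updates enters: since $\hvkttd{0}=\mathbf{0}$ I may drop the $i=0$ term and write $\sum_{i=0}^{t-1}\hvkid = \sum_{i=1}^{t-1}\hvkid$, and then invoke Lemma~\ref{lem_televq} with $t$ replaced by $t-1$, which states precisely that $\sum_{i=1}^{t-1}\hvkid = \sum_{i=0}^{t-2}(\pi_{t-1}\hqkid)$. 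Substituting back into the display above yields the first identity.

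For the second identity I would run the identical argument with the reward function $\Box = r + \lambda_k c$ in place of $\diamond$ (the recursion in Line 4 is the same for any bounded reward), and then apply the policy operator $\pi_t$ to both sides, using the definitions $\hat{P}^i_{\pi_t} := \pi_t\hat{P}_i$ and $P_{\pi_t} := \pi_t P$ to convert each $\pi_t(\hat{P}_i\hvkib - P\hvkib)$ into $\hat{P}^i_{\pi_t}\hvkib - P_{\pi_t}\hvkib$; this is the form actually invoked in~\cref{lem_conMDP,lem_B2}. I expect no conceptual difficulty here: the one genuinely delicate point is the index bookkeeping, namely keeping the summation limit of the $\pi_{t-1}$-weighted $\hqkid$ term at $t-2$ (the $t\mapsto t-1$ shift forced by Lemma~\ref{lem_televq}) while all other sums run to $t-1$, and correctly using the boundary conventions $\hvkttd{0}=\mathbf{0}$ and $\hqktb{-1}=\mathbf{0}$. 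The main (mild) obstacle is thus purely this off-by-one alignment between the $\hat{V}$-sum and the $\pi_{t-1}$-weighted $\hat{Q}$-sum, which the telescoping lemma resolves cleanly.
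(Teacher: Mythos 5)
Your proposal is correct and follows essentially the same route as the paper's proof: sum the regression-target recursion from Line 4, add and subtract $\gamma P \hvkid$ to isolate the sampling-error term, and rewrite $\gamma P\sum_{i=0}^{t-1}\hvkid$ as $\gamma P\sum_{i=0}^{t-2}(\pi_{t-1}\hqkid)$ via the telescoping identity of Lemma~\ref{lem_televq} applied at index $t-1$ together with $\hat{V}^0=\mathbf{0}$. You are in fact slightly more explicit than the paper about the $t\mapsto t-1$ index shift and about obtaining the second identity by applying the $\pi_t$ operator by linearity.
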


\begin{proof}
We prove the first equality. The second equality follows by linearity.
\begin{align*}
\sum_{i=0}^{t-1} \hqkid &= \sum_{i=0}^{t-1} \left[ \diamond + \gamma \hat{P}_i \hvkid \right] \tag{From the second last line in Algorithm \ref{alg_MDVICMDP}}  \\
&= \sum_{i=0}^{t-1} \left[ \diamond + \gamma P \hvkid - \gamma P \hvkid + \gamma \hat{P}_i \hvkid \right] \nonumber \\
&= t\diamond + \gamma P \sum_{i=0}^{t-1}   \hvkid  + \sum_{i=0}^{t-1} \left[\gamma \hat{P}_i \hvkid - \gamma P \hvkid  \right] \nonumber \\
&= t\diamond + \gamma P  \sum_{i=0}^{t-2} (\pi_{t-1}  \hqkid)  + \sum_{i=0}^{t-1} \left[\gamma \hat{P}_i \hvkid - \gamma P \hvkid  \right]. \tag{From Lemma \ref{lem_televq}}
\end{align*}
\end{proof}

\begin{lemma}
\label{lem_vmv}
For any $k \in [K]$,
\begin{align*}
\|\bvkb - \bvkttb{T-1}\|_{\infty} &\leq \frac{2H(1+\lambda_k)}{T}.
\end{align*}
\end{lemma}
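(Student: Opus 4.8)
The plan is to reduce the statement to the uniform boundedness of the per-iteration value functions $\hvkttb{i}$ already supplied by Lemma~\ref{lem_rangevq}, so that no new concentration or probabilistic argument is needed; this is a purely deterministic algebraic estimate. Recalling the definitions $\bvkb = \frac{1}{T}\sum_{i=1}^{T}\hvkttb{i}$ and $\bvkttb{T-1} = \frac{1}{T-1}\sum_{i=1}^{T-1}\hvkttb{i}$, the first step is to subtract the two averages and collect the common partial sum. Using $\frac{1}{T} - \frac{1}{T-1} = -\frac{1}{T(T-1)}$, this yields the identity
\begin{align*}
\bvkb - \bvkttb{T-1} = \frac{1}{T}\,\hvkttb{T} - \frac{1}{T(T-1)}\sum_{i=1}^{T-1}\hvkttb{i}.
\end{align*}

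Having isolated the single ``new'' term $\hvkttb{T}$ from the rescaled running sum, the second step is to take the infinity norm and apply the triangle inequality, at which point Lemma~\ref{lem_rangevq} — which guarantees $\norminf{\hvkttb{i}} \le (1+\lambda_k)H$ for every $i \in [T]$ (these increments are in fact nonnegative, being differences of the monotonically increasing cumulative $\tilde{Q}$) — controls each summand uniformly:
\begin{align*}
\norminf{\bvkb - \bvkttb{T-1}} \le \frac{1}{T}\norminf{\hvkttb{T}} + \frac{1}{T(T-1)}\sum_{i=1}^{T-1}\norminf{\hvkttb{i}} \le \frac{(1+\lambda_k)H}{T} + \frac{(T-1)(1+\lambda_k)H}{T(T-1)} = \frac{2H(1+\lambda_k)}{T}.
\end{align*}

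There is essentially no hard step here: the entire content lies in the one-line averaging identity, and the only external input is the range bound from Lemma~\ref{lem_rangevq}. The single place to be careful is the bookkeeping of the two different normalizations $1/T$ and $1/(T-1)$ — getting the coefficient $-1/(T(T-1))$ right is what makes the two contributions each equal $(1+\lambda_k)H/T$ and thereby produces the clean factor of $2$. An equivalent route, if one prefers to avoid the fractional coefficient, is to rewrite $\bvkb - \bvkttb{T-1} = \frac{1}{T}\bigl(\hvkttb{T} - \bvkttb{T-1}\bigr)$ and then bound $\norminf{\hvkttb{T}} \le (1+\lambda_k)H$ together with $\norminf{\bvkttb{T-1}} \le (1+\lambda_k)H$ (the latter again by applying Lemma~\ref{lem_rangevq} termwise to the average); both forms deliver the same bound.
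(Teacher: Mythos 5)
Your proof is correct, and it takes a mildly different (and arguably cleaner) route than the paper's. The paper first rewrites both running averages via the telescoped identity $\bvkttb{t} = \frac{1}{t}\sum_{i=0}^{t-1}(\pi_t \hqktb{i})$ from Lemma~\ref{lem_televq}, then bounds each sign of the difference separately: it uses the greediness of $\pi_{T-1}$ (resp.\ $\pi_T$) to put both sums under a common policy, replaces $\frac{1}{T-1}$ by $\frac{1}{T}$ using the (implicit) nonnegativity of the $\hqktb{i}$, and bounds the single leftover term $\frac{1}{T}(\pi_T\hqktb{T-1})$ by Lemma~\ref{lem_rangevq}. You instead stay at the level of the $\hvkttb{i}$'s, write the exact identity $\bvkb - \bvkttb{T-1} = \frac{1}{T}\hvkttb{T} - \frac{1}{T(T-1)}\sum_{i=1}^{T-1}\hvkttb{i}$, and apply the triangle inequality, which handles both signs at once and needs no greediness argument. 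The one step you should make fully explicit is the two-sidedness of the range bound: Lemma~\ref{lem_rangevq} is proved only as an upper bound, so $\norminf{\hvkttb{i}}\le (1+\lambda_k)H$ additionally requires $\hvkttb{i}\ge 0$. You note this but justify it only by a parenthetical appeal to monotonicity of $\tilde{Q}^t_{\Box}$, which is itself equivalent to $\hqktb{t}\ge 0$; the clean justification is a short induction ($\hat{V}^0_{\Box}=\mathbf{0}$ and $\Box\ge 0$ give $\hqktb{t}\ge 0$, and $\hvkttb{t}(s)\ge \hqktb{t-1}(s,\pi_{t-1}(s))\ge 0$). Since the paper's own proof relies on exactly the same nonnegativity when it swaps $\frac{1}{T-1}$ for $\frac{1}{T}$, neither argument is more demanding than the other, and both yield the stated $\frac{2H(1+\lambda_k)}{T}$ bound.
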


\begin{proof}
We present the proof for the case of $\bvkb - \bvkttb{T-1}$. The proof for the another case is similar. By the definition of $\bvktb$ and due to the greediness of $\pi_{T-1}$, we have
\begin{align*}
\bvkb - \bvkttb{T-1} &=   \frac{1}{T} \sum_{i=0}^{T-1} (\pi_{T}\hqktb{i}) - \frac{1}{T-1}\sum_{i=0}^{T-2} (\pi_{T-1}  \hqktb{i}) \nonumber \\
&\leq  \frac{1}{T} \sum_{i=0}^{T-1} (\pi_{T} \hqktb{i}) - \frac{1}{T-1}\sum_{i=0}^{T-2} (\pi_{T} \hqktb{i} )\\
&\leq   \frac{1}{T} \sum_{i=0}^{T-1} (\pi_{T} \hqktb{i}) - \frac{1}{T}\sum_{i=0}^{T-2} (\pi_{T} \hqktb{i}) \\
&\leq \frac{1}{T} (\pi_{T} \hqktb{T-1})\\
&\leq \frac{2H(1+\lambda_k)}{T} \mathbf{1}. \tag{By Lemma \ref{lem_rangevq}}
\end{align*}

\end{proof}

\subsection{Proof of Lemma \ref{lem_concen} (Optimality Guarantees for Algorithm \ref{alg_CVI} - Tabular CMDP)}
\label{sec_concV}

\OPGuaranteeb*

\begin{proof} 
By Lemma \ref{lem_B4bern1}, we have
\begin{align*}
\left\| \bmvkd - V^{\pi}_{\diamond} \right\|_{\infty} \leq \tilde{O}\left( \frac{H^2}{T} + \frac{H}{tM} + \sqrt{\frac{H^4}{tM^2}}  + \sqrt{\frac{H^3}{tM}} \right)
\end{align*}
with probability at least $1-\delta$. By letting $M=\tilde{O}\left( \frac{H}{\epsilon} \right)$, $T=\tilde{O}\left( \frac{H^2}{\epsilon} \right)$, and $\epsilon \in (0,H]$, we have 
\begin{align*}
\left\| \bmvkd - V^{\pi}_{\diamond} \right\|_{\infty} \leq {O}\left( \epsilon + \frac{\epsilon^2}{H^2} + \epsilon  + \epsilon \right) \leq {O}\left( \epsilon  \right)
\end{align*}
with total sample complexity $N = TM|\mathcal{C}| = \tilde{O}\left( \frac{|\mathcal{S}||\mathcal{A}| H^3}{\epsilon^2} \right)$ and probability at least $1-\delta$.
\end{proof} 

\subsubsection{Auxiliary Lemmas}

Since Algorithm \ref{alg_CVI} is equivalent to running Algorithm \ref{alg_MDVICMDP} with a fixed policy, the following lemma follows directly from Lemma~\ref{lem_rangevq}.

\begin{lemma}
\label{lem_rangevq2}
For any $t \in [T]$, $\hat{\mathcal{Q}}^{t}_{\diamond}(s,a)$ and $\hat{\mathcal{V}}^{t}_{\diamond}(s)$ are bounded by $H$.
\end{lemma}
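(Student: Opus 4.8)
The plan is to obtain this bound as the fixed-policy specialization of the induction already carried out in Lemma~\ref{lem_rangevq}. First I would observe that Algorithm~\ref{alg_CVI} is exactly Algorithm~\ref{alg_MDVICMDP} run with a single fixed policy $\pi$ and with the reward $\diamond \in \{r,c\}$, whose range is $[0,1]$ rather than $[0,1+\lambda_k]$. Concretely, the accumulation $\tilde{Q}^{t}_{\Box} = \sum_{i=0}^{t}\hqktb{i}$ together with the mirror-descent value update $\hvkttb{t+1}(s) = \max_{a}\tilde{Q}^{t}_{\Box}(s,a) - \max_{a}\tilde{Q}^{t-1}_{\Box}(s,a)$ of Algorithm~\ref{alg_MDVICMDP} collapses, for a single fixed policy, to the plain policy-evaluation update $\hmvkttd{t+1} = \pi\hmqktd{t}$ of Algorithm~\ref{alg_CVI}. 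In particular, the greediness argument used in Lemma~\ref{lem_rangevq} (which yielded $\hvkttb{t}\le(\pi_t\hqktb{t-1})$) is here replaced by the exact identity $\hmvkttd{t+1}=\pi\hmqktd{t}$, and setting $\lambda_k=0$ (equivalently, reward range $R=1$) in Lemma~\ref{lem_rangevq} turns the bound $(1+\lambda_k)H$ into the claimed $H$.

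Making this explicit, I would run the same induction on $t$ with base case $\hmvkttd{0}=\mathbf{0}$. Then $\hmqktd{0}(s,a) = \diamond(s,a) + \gamma(\hat{P}_0\hmvkttd{0})(s,a) = \diamond(s,a) \le 1 \le H$. For the inductive step, suppose $\norminf{\hmvkttd{t}} \le H$. Using Line~4 of Algorithm~\ref{alg_CVI}, the fact that $(\hat{P}_t\hmvkttd{t})(s,a)=\frac{1}{M}\sum_{m=1}^{M}\hmvkttd{t}(s'_m)\le\norminf{\hmvkttd{t}}$, and $\diamond(s,a)\le 1$, I get
\begin{align*}
\hmqktd{t}(s,a) = \diamond(s,a) + \gamma\,(\hat{P}_t\hmvkttd{t})(s,a) \le 1 + \gamma H = 1 + \frac{\gamma}{1-\gamma} = H.
\end{align*}
Then, from Line~5 and the definition of the $\pi$ operator,
\begin{align*}
\hmvkttd{t+1}(s) = (\pi\hmqktd{t})(s) = \sum_{a\in\mathcal{A}}\pi(a|s)\,\hmqktd{t}(s,a) \le \norminf{\hmqktd{t}} \le H,
\end{align*}
which closes the induction and establishes the bound $\norminf{\hmqktd{t}}\le H$ and $\norminf{\hmvkttd{t}}\le H$ for every $t\in[T]$.

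Since the whole argument reduces to this one-line recursion, I do not expect a genuine obstacle. The only point that needs care is verifying the structural equivalence so that the reduction to Lemma~\ref{lem_rangevq} is legitimate: namely that the fixed-policy averaging $\hmvkttd{t+1}=\pi\hmqktd{t}$ plays the role of the greedy step in Algorithm~\ref{alg_MDVICMDP}, and that the reward range shrinks from $1+\lambda_k$ to $1$. I expect the final write-up to be a single short paragraph.
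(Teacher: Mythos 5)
Your proposal is correct and takes essentially the same route as the paper, which justifies the lemma in one sentence by noting that \cref{alg_CVI} is \cref{alg_MDVICMDP} run with a fixed policy and reward range $[0,1]$, so the bound follows from \cref{lem_rangevq} with $\lambda_k=0$. Your explicit induction ($\hat{\mathcal{Q}}^{t}_{\diamond}\leq 1+\gamma H=H$ and $\hat{\mathcal{V}}^{t+1}_{\diamond}=\pi\hat{\mathcal{Q}}^{t}_{\diamond}\leq H$) is a correct and slightly more self-contained rendering of the same argument.
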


\begin{lemma}
\label{lem_B4}
$\bmvkd - V^{\pi}_{\diamond} \leq \mathcal{H}_{\pi} P(\bar{\mathcal{V}}^{T-1}_{\diamond} - \bmvkd ) +  \mathcal{H}_{\pi} \frac{1}{T} \sum_{i=0}^{T-1} \left[   \hat{P}_i \hmvkid  - P \hmvkid  \right].$
\end{lemma}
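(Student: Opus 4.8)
The plan is to reproduce, for a fixed evaluation policy $\pi$, the value-difference decomposition used for the tabular MDVI solver in Lemma~\ref{lem_B2}; since \cref{alg_CVI} is exactly \cref{alg_MDVICMDP} run with a frozen policy, that argument specializes cleanly. I would begin from the Bellman identity $(I-\gamma P_{\pi})V^{\pi}_{\diamond} = (\pi\diamond)$, which yields
\[
\bmvkd - V^{\pi}_{\diamond} = (I-\gamma P_{\pi})^{-1}\bigl(\bmvkd - \gamma P_{\pi}\bmvkd - (\pi\diamond)\bigr).
\]
The objective is to rewrite the residual inside the parentheses so that the reward term $(\pi\diamond)$ cancels and only a telescoping gap and the empirical-transition error $\hat{P}_i\hmvkid - P\hmvkid$ survive.

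Two auxiliary identities, both direct analogs of the tabular-MDVI lemmas, drive the rewriting. The first is the telescoping identity (the \cref{alg_CVI} analog of Lemma~\ref{lem_televq}), $\bmvkd = \frac{1}{T}\sum_{i=0}^{T-1}(\pi\hmqkid)$, which is immediate from the update $\hmvkttd{t+1}=\pi\hmqktd{t}$ together with $\hmvkttd{0}=\mathbf{0}$. The second is the one-step unrolling identity (the analog of Lemma~\ref{lem_sumQ}),
\[
\sum_{i=0}^{T-1}\hmqkid = T\diamond + \gamma P\sum_{i=0}^{T-2}(\pi\hmqkid) + \sum_{i=0}^{T-1}\bigl[\gamma\hat{P}_i\hmvkid - \gamma P\hmvkid\bigr],
\]
obtained by adding and subtracting $\gamma P\hmvkid$ inside each $\hmqkid = \diamond + \gamma\hat{P}_i\hmvkid$ and collapsing $\sum_{i=0}^{T-1}\hmvkid = \sum_{i=0}^{T-2}(\pi\hmqkid)$ via the first identity. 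Applying $\pi$ and dividing by $T$ turns this into $\bmvkd = (\pi\diamond) + \gamma P_{\pi}\tfrac{1}{T}\sum_{i=0}^{T-2}(\pi\hmqkid) + \pi\tfrac{1}{T}\sum_{i=0}^{T-1}[\gamma\hat{P}_i\hmvkid - \gamma P\hmvkid]$. Substituting this expression for the leading $\bmvkd$ in the residual above cancels $(\pi\diamond)$ and leaves $\gamma P_{\pi}\bigl(\tfrac{1}{T}\sum_{i=0}^{T-2}(\pi\hmqkid) - \bmvkd\bigr)$ plus $\pi\tfrac{1}{T}\sum_{i=0}^{T-1}[\gamma\hat{P}_i\hmvkid - \gamma P\hmvkid]$.

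The final step is the single inequality in the argument: I would bound $\tfrac{1}{T}\sum_{i=0}^{T-2}(\pi\hmqkid) \le \tfrac{1}{T-1}\sum_{i=0}^{T-2}(\pi\hmqkid) = \bmvkttd{T-1}$, valid because $\diamond \ge 0$ forces every $\hmqkid \ge 0$ (by induction on the update, as in Lemma~\ref{lem_rangevq2}), and because $\gamma(I-\gamma P_{\pi})^{-1}P_{\pi} = \sum_{t\ge 1}\gamma^{t}P_{\pi}^{t}$ is entrywise nonnegative, so premultiplying preserves the inequality. Identifying $\gamma(I-\gamma P_{\pi})^{-1}P_{\pi} = \gamma(I-\gamma P_{\pi})^{-1}\pi P = \mathcal{H}_{\pi}P$ and $\gamma(I-\gamma P_{\pi})^{-1}\pi = \mathcal{H}_{\pi}$ then delivers the claimed bound. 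The main obstacle is bookkeeping rather than conceptual: I must keep the index ranges ($0,\dots,T-1$ versus $0,\dots,T-2$) and the normalizations ($\tfrac{1}{T}$ versus $\tfrac{1}{T-1}$) aligned so that the telescoping gap lands exactly on $\bmvkttd{T-1}-\bmvkd$, and I must verify the nonnegativity and monotonicity facts justifying the replacement of $\tfrac{1}{T}$ by $\tfrac{1}{T-1}$ — precisely the place where fixing $\pi$ makes the computation lighter than in Lemma~\ref{lem_B2}.
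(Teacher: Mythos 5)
Your proposal is correct and follows essentially the same route as the paper's proof: the value-difference identity $(I-\gamma P_{\pi})V^{\pi}_{\diamond}=(\pi\diamond)$, the telescoping representation $\bmvkd=\frac{1}{T}\sum_{i=0}^{T-1}(\pi\hmqkid)$, the one-step unrolling of $\sum_i\hmqkid$ (the paper's Eq.~\eqref{eqn_newlemsumQ}), and the replacement of $\frac{1}{T}$ by $\frac{1}{T-1}$ to land on $\bmvkttd{T-1}$. Your explicit justification of that last inequality via nonnegativity of $\hmqkid$ and entrywise nonnegativity of $\gamma(I-\gamma P_{\pi})^{-1}P_{\pi}$ is a detail the paper leaves implicit, but the argument is otherwise identical.
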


\begin{proof} 
First, we notice that
\begin{align}
\sum_{i=0}^{t-1} \hmqkid &= \sum_{i=0}^{t-1} \left[ \diamond + \gamma \hat{P}_i \hmvkid \right] \nonumber \\
&= \sum_{i=0}^{t-1} \left[ \diamond + \gamma P \hmvkid - \gamma P \hmvkid + \gamma \hat{P}_i \hmvkid \right] \nonumber \\
&= t\diamond + \gamma P \sum_{i=0}^{t-1}   \hmvkid  + \sum_{i=0}^{t-1} \left[\gamma \hat{P}_i \hmvkid - \gamma P \hmvkid  \right] \nonumber \\
&= t\diamond + \gamma P \sum_{i=0}^{t-2} ( \pi  \hmqkid ) + \sum_{i=0}^{t-1} \left[\gamma \hat{P}_i \hmvkid - \gamma P \hmvkid  \right]. \label{eqn_newlemsumQ}
\end{align}
It is different from Lemma \ref{lem_sumQ} because the policy is now fixed at each iteration. The rest of the proof follows the same set of steps as in the proof of Lemma \ref{lem_conMDP}.
Denote $\diamond = r$ or $c$. Since $(I-\gamma P_{\pi})V^{\pi}_{\diamond} = \pi \diamond $, we have
\begin{align}
(I-\gamma P_{\pi})(\bmvkd - V^{\pi}_{\diamond}) &= (\bmvkd - \gamma P_{\pi} \bmvkd) - \pi \diamond  \nonumber \\
&= \bmvkd - (\pi\diamond) + \gamma P_{\pi} \bmvkd  \nonumber\\
\Longrightarrow  \bmvkd - V^{\pi}_{\diamond}  &=  (I-\gamma P_{\pi})^{-1} ( \bmvkd -  (\pi\diamond) + \gamma P_{\pi} \bmvkd )  \label{eqn_piout3}
\end{align}
Recall that $\bmvktd = \frac{1}{t} \sum_{i=1}^{t}  \hmvkid = \pi\frac{1}{t} \sum_{i=0}^{t-1} \hmqkid $ for all $t\in[T]$.
Now, we have
\begin{align*}
\bmvkd - V^{\pi}_{\diamond}  &=  (I-\gamma P_{\pi})^{-1} (\bmvkd -  (\pi\diamond) + \gamma P_{\pi} \bmvkd)  \tag{By \cref{eqn_piout3}} \\
&= (I-\gamma P_{\pi})^{-1} \left( \frac{1}{T} \sum_{i=0}^{T-1} (\pi \hmqkid)  -  (\pi\diamond) + \gamma P_{\pi} \bmvkd \right) \\
&= (I-\gamma P_{\pi})^{-1}  \left(  (\pi \diamond) + \gamma P_{\pi} \frac{1}{T} \sum_{i=0}^{T-2}  (\pi \hmqkid)  + \frac{1}{T} \sum_{i=0}^{T-1} \left[\gamma \hat{P}_{\pi}^i \hmvkid - \gamma P_{\pi} \hmvkid  \right] - (\pi \diamond) - \gamma P_{\pi} \bmvkd \right) \tag{By \cref{eqn_newlemsumQ}}\\
&= (I-\gamma P_{\pi})^{-1} \left( \gamma P_{\pi} \frac{1}{T} \sum_{i=0}^{T-2}  (\pi \hmqkid)  + \frac{1}{T} \sum_{i=0}^{T-1} \left[\gamma \hat{P}_{\pi}^i \hmvkid - \gamma P_{\pi} \hmvkid  \right]  - \gamma P_{\pi} \bmvkd \right)\\
&\leq (I-\gamma P_{\pi})^{-1}  \left( \gamma P_{\pi}  \frac{1}{T-1} \sum_{i=0}^{T-2} (\pi  \hmqkid)  + \frac{1}{T} \sum_{i=0}^{T-1} \left[\gamma \hat{P}_{\pi}^i \hmvkid - \gamma P_{\pi} \hmvkid  \right]  - \gamma P_{\pi} \bmvkd \right).
\end{align*}
We note that $ \frac{1}{T-1} \sum_{i=0}^{T-2} (\pi  \hmqkid) = \bar{\mathcal{V}}^{T-1}_{\diamond}$, as it is an equivalent result of Lemma \ref{lem_televq} with a fixed policy. By letting $\mathcal{H}_{\pi} = \gamma (I-\gamma P_{\pi})^{-1} \pi $, we obtain
\begin{align}
 \bmvkd - V^{\pi}_{\diamond} \leq \mathcal{H}_{\pi} P(\bar{\mathcal{V}}^{T-1}_{\diamond} - \bmvkd ) +  \mathcal{H}_{\pi} \frac{1}{T} \sum_{i=0}^{T-1} \left[   \hat{P}_i \hmvkid  - P \hmvkid  \right] \label{eqn_mbernd}.
\end{align}

\end{proof}

\begin{lemma}
\label{lem_B4bern1}
We have
\begin{align*}
\left\| \bmvkd - V^{\pi}_{\diamond} \right\|_{\infty} \leq \tilde{O}\left( \frac{H^2}{T} + \frac{H}{tM} + \sqrt{\frac{H^4}{tM^2}}  + \sqrt{\frac{H^3}{tM}} \right)
\end{align*}
with probability at least $1-\delta$.
\end{lemma}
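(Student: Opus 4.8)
The plan is to begin from the error decomposition already recorded in Lemma~\ref{lem_B4}, namely
\[
\bmvkd - V^{\pi}_{\diamond} \leq \underbrace{\mathcal{H}_{\pi} P\big(\bmvkttd{T-1} - \bmvkd \big)}_{\text{Term (i)}} + \underbrace{\mathcal{H}_{\pi} \tfrac{1}{T} \sum_{i=0}^{T-1} \big[\hat{P}_i \hmvkid - P \hmvkid\big]}_{\text{Term (ii)}},
\]
where $\mathcal{H}_{\pi} := \gamma (I - \gamma P_{\pi})^{-1} \pi$ satisfies $\|\mathcal{H}_{\pi} Q\|_{\infty} \leq H\,\|Q\|_{\infty}$ by Holder's inequality and the bound $\|\gamma(I-\gamma P_\pi)^{-1}\|_1 \leq H$. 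I would bound the two terms separately and conclude with an entrywise union bound over $(s,a) \in \mathcal{S}\times\mathcal{A}$; note there is no $\lambda_k$ inflation here since $\diamond \in \{r,c\}$ lies in $[0,1]$.

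For Term (i), the argument mirrors Lemma~\ref{lem_vmv}: using the telescoping identity for \texttt{Tabular-PE} and the $H$-boundedness of $\hmqktd{t}$ from Lemma~\ref{lem_rangevq2}, one shows $\big\|\bmvkd - \bmvkttd{T-1}\big\|_{\infty} \leq \tfrac{2H}{T}$, whence $\big\|\mathcal{H}_{\pi} P(\bmvkttd{T-1} - \bmvkd)\big\|_{\infty} \leq \tfrac{2H^2}{T}$. This produces the leading $\tfrac{H^2}{T}$ term and is the easy part.

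The substance lies in Term (ii). I would invoke the fixed-policy analog of the Bernstein bound in Lemma~\ref{lem_bern5}: conditioning so that each $\hmvkid$ is measurable before the $M$ fresh samples of iteration $i$ are drawn, $\tfrac{1}{T}\sum_{i=0}^{T-1}[\hat{P}_i \hmvkid - P\hmvkid](s,a)$ is an average of $TM$ bounded martingale differences of range $\leq H$. Bernstein's inequality then yields a low-order bias/range contribution (accounting for the $\tfrac{H}{tM}$ term after multiplication by $\mathcal{H}_{\pi}$) together with a variance contribution of order $\sqrt{\tfrac{1}{TM}\cdot\tfrac{1}{T}\sum_i \mathrm{Var}(\hmvkid)(s,a)}$. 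The crucial manoeuvre is then to split $\sigma(\hmvkid) \leq \big|\hmvkid - V^{\pi}_{\diamond}\big| + \sigma(V^{\pi}_{\diamond})$ using Lemmas~\ref{lem_VX_VXY} and~\ref{lem_Popoviciu}, so that the variance proxy separates into a \emph{deviation} part and a \emph{genuine-variance} part.

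The deviation part $\|\hmvkid - V^{\pi}_{\diamond}\|_{\infty}$ is bounded by the contraction of approximate value iteration for the fixed $\pi$ together with a Hoeffding concentration (Lemma~\ref{lem_concentrate}), which after squaring and averaging contributes the $\sqrt{\tfrac{H^4}{tM^2}}$ term; the genuine-variance part is handled by the total-variance technique (Lemma~\ref{lem_TVL}), giving $\big\|\mathcal{H}_{\pi}\sqrt{\tfrac{1}{TM}}\,\sigma(V^{\pi}_{\diamond})\big\|_{\infty} \leq \sqrt{\tfrac{H^3}{TM}}$ in place of the naive $\sqrt{H^5/(TM)}$, which is the source of the near-optimal $H$-dependence and of the final $\sqrt{\tfrac{H^3}{tM}}$ term. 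I expect the main obstacle to be exactly this variance bookkeeping: the deviation bound feeding $\mathrm{Var}(\hmvkid)$ is mildly self-referential, since the concentration one is trying to prove reappears inside the variance it must control. Resolving it cleanly requires first establishing a crude Hoeffding-type bound on $\|\hmvkid - V^{\pi}_{\diamond}\|_{\infty}$ and then bootstrapping to the sharper Bernstein rate, exactly as in the passage from Lemmas~\ref{lem_conMDP} and~\ref{lem_B2} to Lemmas~\ref{lem_conMDPb} and~\ref{lem_B2b}.
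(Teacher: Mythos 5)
Your proposal follows essentially the same route as the paper's proof: the decomposition of Lemma \ref{lem_B4} into the averaging-drift term (bounded via Lemma \ref{lem_mvmv} to give $H^2/T$) and the martingale term, the conditional Bernstein bound of Lemma \ref{lem_bern6} with the variance split $\sigma(\hmvkid)\leq|\hmvkid - V^{\pi}_{\diamond}|+\sigma(V^{\pi}_{\diamond})$ via Lemmas \ref{lem_VX_VXY} and \ref{lem_Popoviciu}, the crude Hoeffding-type deviation bound $\|\hmvkid - V^{\pi}_{\diamond}\|_{\infty}\leq\tilde{O}(H^2/\sqrt{M}+\gamma^t H)$ of Lemma \ref{lem_B4bern} feeding the $\sqrt{H^4/(tM^2)}$ term, and the total-variance lemma (Lemma \ref{lem_TVL}) producing the $\sqrt{H^3/(tM)}$ term. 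The argument and the resulting constants match the paper's, so the proposal is correct.
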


\begin{proof}
By Lemma \ref{lem_B4}, we have
\begin{align*}
\bmvkd - V^{\pi}_{\diamond} &\leq \mathcal{H}_{\pi} P( \bar{\mathcal{V}}^{T-1}_{\diamond} - \bmvkd ) +  \mathcal{H}_{\pi} \frac{1}{T} \sum_{i=0}^{T-1} \left[   \hat{P}_i \hmvkid  - P \hmvkid  \right] \\
&\leq \tilde{O}\left( \frac{H^2}{T} \right) \mathbf{1} + \mathcal{H}_{\pi} \frac{1}{T} \sum_{i=0}^{T-1} \left[   \hat{P}_i \hmvkid  - P \hmvkid  \right] \tag{By Lemma \ref{lem_mvmv}}.
\end{align*}
Thus, it remains to bound the second term. By Lemma \ref{lem_bern6} we have
\begin{align*}
\frac{1}{t} \sum_{i=0}^{t-1} \left[   \hat{P}_i \hmvkid  - P \hmvkid  \right] (s,a) \leq \frac{H\iota}{tM} + \sqrt{Z}
\end{align*}
where 
\begin{align*}
Z:=\frac{1}{tM}  \left( \frac{H^4}{M}  +  \mathrm{Var}(V^{\pi}_{\diamond}(s,a)) \right)
\end{align*}
with probability at least $1-\delta$. Therefore,
\begin{align*}
\bmvkd - V^{\pi}_{\diamond} &\leq \tilde{O}\left( \frac{H^2}{T} \right) \mathbf{1} + \frac{H\iota}{tM}\mathbf{1} + \sqrt{\frac{H^4}{tM^2}}\mathbf{1}  + \sqrt{\frac{1}{tM}}\mathcal{H}_{\pi}  \sigma(V^{\pi}_{\diamond}) \\
&\leq \tilde{O}\left( \frac{H^2}{T} \right) \mathbf{1} + \frac{H\iota}{tM}\mathbf{1} + \sqrt{\frac{H^4}{tM^2}}\mathbf{1}  + \sqrt{\frac{H^3}{tM}}\mathbf{1}  \tag{By Lemma \ref{lem_TVL}}\\
&\leq \tilde{O}\left( \frac{H^2}{T} + \frac{H\iota}{tM} + \sqrt{\frac{H^4}{tM^2}}  + \sqrt{\frac{H^3}{tM}} \right) \mathbf{1}
\end{align*}
which completes the proof.
\end{proof}

\begin{lemma}
\label{lem_mvmv}
$\|\bmvkd - \bmvkttd{T-1}\|_{\infty} \leq  \frac{H}{T}.$
\end{lemma}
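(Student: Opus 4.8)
The key observation is that $\bmvkd$ and $\bmvkttd{T-1}$ are two \emph{consecutive running averages} of the same sequence of vectors $\hmvkttd{1},\dots,\hmvkttd{T}$, and since the policy $\pi$ is fixed throughout \cref{alg_CVI}, no greediness argument (unlike in the proof of \cref{lem_vmv}) is needed. I would therefore prove the bound by applying the elementary one-step update formula for a running average, together with the uniform range bound supplied by \cref{lem_rangevq2}.

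\textbf{Main steps.} First, recall from the definition that $\bmvktd = \frac{1}{t}\sum_{i=1}^{t}\hmvkid$, so that $\bmvkd$ averages $\hmvkttd{1},\dots,\hmvkttd{T}$ while $\bmvkttd{T-1}$ averages $\hmvkttd{1},\dots,\hmvkttd{T-1}$. A direct computation then yields the running-average recursion
\begin{align*}
\bmvkd - \bmvkttd{T-1} = \frac{1}{T}\left( \hmvkttd{T} - \bmvkttd{T-1} \right),
\end{align*}
which I would verify simply by writing $\bmvkd = \frac{T-1}{T}\bmvkttd{T-1} + \frac{1}{T}\hmvkttd{T}$ and rearranging. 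Second, I would control the right-hand side coordinatewise: by \cref{lem_rangevq2} we have $\norminf{\hmvkttd{T}} \leq H$, and since $\diamond \in \{r,c\}$ is nonnegative and $\hmvkttd{0} = \mathbf{0}$, a one-line induction gives $\hmvkttd{t} \geq \mathbf{0}$ for all $t$; as $\bmvkttd{T-1}$ is a convex combination of these vectors, it too lies in $[0,H]$ in every coordinate. Hence for each state $s$, both $\hmvkttd{T}(s)$ and $\bmvkttd{T-1}(s)$ lie in $[0,H]$, so their difference lies in $[-H,H]$. Taking the infinity norm gives $\norminf{\bmvkd - \bmvkttd{T-1}} = \frac{1}{T}\norminf{\hmvkttd{T} - \bmvkttd{T-1}} \leq \frac{H}{T}$, as claimed.

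\textbf{Expected difficulty.} This lemma is essentially routine; the one point requiring care is obtaining the tight constant. A naive triangle inequality on $\hmvkttd{T} - \bmvkttd{T-1}$ would yield only $2H/T$, whereas the stated bound $H/T$ requires using that the incoming term $\hmvkttd{T}$ and the running average $\bmvkttd{T-1}$ lie in the \emph{same} interval $[0,H]$, so that their difference is controlled by the length $H$ of that interval rather than by the sum of the two magnitudes. Everything else is a direct specialization of the argument behind \cref{lem_vmv} to the fixed-policy setting.
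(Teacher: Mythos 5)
Your proof is correct and follows essentially the same route as the paper: both arguments reduce the difference of the two consecutive running averages to a single term of order $1/T$ and bound it via the range bound of \cref{lem_rangevq2} together with nonnegativity of the iterates. If anything, your running-average recursion $\bmvkd - \bmvkttd{T-1} = \frac{1}{T}(\hmvkttd{T} - \bmvkttd{T-1})$ is slightly cleaner, since it controls both signs of the difference at once, whereas the paper's displayed chain only explicitly establishes the upper bound $\bmvkd - \bmvkttd{T-1} \leq \frac{H}{T}\mathbf{1}$ and leaves the symmetric direction implicit.
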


\begin{proof}
Similar to the proof of Lemma \ref{lem_vmv}, we have
\begin{align*}
\bmvkd - \bmvkttd{T-1} &=   \frac{1}{T} \sum_{i=0}^{T-1} (\pi\hmqktd{i}) -  \frac{1}{T-1}\sum_{i=0}^{T-2}(\pi \hmqktd{i}) \nonumber \\
&=   \frac{1}{T} \sum_{i=0}^{T-1}(\pi \hmqktd{i}) - \frac{1}{T-1}\sum_{i=0}^{T-2} (\pi \hmqktd{i}) \\
&\leq  \frac{1}{T} \sum_{i=0}^{T-1} (\pi \hmqktd{i}) - \frac{1}{T}\sum_{i=0}^{T-2} (\pi \hmqktd{i} ) \\
&\leq  \frac{1}{T} (\pi \hmqktd{T-1}) \\
&\leq \frac{H}{T}\mathbf{1}. \tag{By Lemma \ref{lem_rangevq2}}
\end{align*}
\end{proof}

\subsection{Proof of Lemma \ref{lem_bern5} and Lemma \ref{lem_bern6} (Concentration Error Bounds with Bernstein's Inequality - Tabular CMDP)}
\label{Appendix_Bern}

All the proofs presented in this section are adapted from the proofs for Lemmas 5 to 8 in \cite{kozuno2022kl}, with substantial modifications to suit our setting.

\begin{lemma}
\label{lem_bern5}
For any $t \geq 2\log(t)/\gamma$ and $k \in [K]$, we have
\begin{align*}
\frac{1}{t}\sum_{i=1}^t \left[ \hat{P}_i \hvkttb{i} -  P \hvkttb{i}  \right] (s,a) \leq \frac{H(1+\lambda_k)\iota}{tM} + \sqrt{Z}
\end{align*}
where 
\begin{align*}
Z:=\frac{3H^2(1+\lambda_k)^2}{tM} \left( \frac{4}{t^2} + {\frac{16H^2\iota^2}{M}} \right) + \frac{3\mathrm{Var}(\vkb(s,a))}{tM}
\end{align*}
with probability at least $1-\delta$.
\end{lemma}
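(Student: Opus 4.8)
The plan is to read the left-hand side as a normalized sum of martingale differences, control it with a Bernstein-type (Freedman) inequality, and then reduce the conditional variance that appears in that inequality to $\mathrm{Var}(\vkb)$ using the Hoeffding-level guarantees proved earlier in \cref{lem_conMDP,lem_B2,lem_vmv}.

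First I would fix a pair $(s,a)$ and expand each inner iterate as $(\hat P_i \hvkttb{i} - P\hvkttb{i})(s,a) = \tfrac{1}{M}\sum_{m=1}^{M}\big(\hvkttb{i}(y_{i,m,s,a}) - (P\hvkttb{i})(s,a)\big)$. The structural fact that makes this tractable is that $\hvkttb{i}$ is assembled from $\tilde Q^{i-1}_\Box$ and $\tilde Q^{i-2}_\Box$, hence is measurable with respect to the samples in $\mathcal B_0,\dots,\mathcal B_{i-1}$ and is \emph{independent} of the fresh samples drawn in iteration $i$. Consequently $\{\hvkttb{i}(y_{i,m,s,a}) - (P\hvkttb{i})(s,a)\}_{i\in[t],\,m\in[M]}$ is a martingale-difference sequence adapted to the filtration $\{\mathcal F_{i,m}\}$, each term conditionally mean zero, and the quantity to be bounded is exactly $\tfrac{1}{tM}$ times its sum.

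Second, I would apply the martingale Bernstein inequality to this $tM$-term sum. By \cref{lem_rangevq} we have $\norminf{\hvkttb{i}}\le (1+\lambda_k)H$, so each centered increment lies in an interval of width $O(H(1+\lambda_k))$; this range bound produces the additive term $\tfrac{H(1+\lambda_k)\iota}{tM}$. The dominant term is governed by the total conditional variance $\tfrac1t\sum_{i=1}^{t}\mathrm{Var}(\hvkttb{i})(s,a)$. Since the true conditional variance is unknown and must be replaced by a controllable surrogate through a peeling argument over its magnitude, a union bound over $(s,a)$ and over $[T]$ is required; the hypothesis $t\ge 2\log(t)/\gamma$ is precisely the burn-in condition that makes the logarithmic overhead from this peeling lower order.

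Third — and this is the main obstacle — I would reduce $\mathrm{Var}(\hvkttb{i})$ to the variance of the fixed optimal value function $\vkb$. The tools are the triangle inequality for standard deviations (\cref{lem_VX_VXY}) and Popoviciu's inequality (\cref{lem_Popoviciu}), giving $\sigma(\hvkttb{i})\le \sigma(\vkb) + \norminf{\hvkttb{i}-\vkb}$ and hence $\mathrm{Var}(\hvkttb{i}) \lesssim \mathrm{Var}(\vkb) + \norminf{\hvkttb{i}-\vkb}^2$. The delicate point is that the $\hvkttb{i}$ are data-dependent \emph{increments}, not genuine value functions, so $\norminf{\hvkttb{i}-\vkb}$ must be handled through the telescoping identity $\hvkttb{i} = \bvkttb{i} + (i-1)\big(\bvkttb{i}-\bvkttb{i-1}\big)$, combining the Hoeffding-rate control of $\norminf{\bvkttb{i}-\vkb}$ from \cref{lem_conMDP,lem_B2} with the consecutive-average bound $\norminf{\bvkttb{i}-\bvkttb{i-1}}=O(H(1+\lambda_k)/i)$ of \cref{lem_vmv}. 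Squaring and averaging these estimates is what yields the two error contributions $\tfrac{4}{t^2}$ and $\tfrac{16H^2\iota^2}{M}$ inside $Z$, after which substituting back into the Bernstein bound and collecting terms (absorbing numerical and logarithmic constants) produces $\sqrt{Z}$; a final union bound over $(s,a)\in\mathcal S\times\mathcal A$ gives the high-probability statement. I expect essentially all the difficulty to reside in this variance-reduction step, because a crude term-by-term bound on $\mathrm{Var}(\hvkttb{i})$ is too lossy — the sup-norm deviation of an increment from $\vkb$ is only $O(H(1+\lambda_k))$ — so one must exploit the cancellation afforded by the telescoping structure (and likely close a self-bounding recursion) to obtain the sharp error terms rather than a spurious $O(H^2(1+\lambda_k)^2)$ contribution.
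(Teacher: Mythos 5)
Your overall skeleton matches the paper's: rewrite the left-hand side as $\tfrac{1}{tM}$ times a sum of $tM$ bounded martingale differences, apply a Bernstein-type inequality with range $H(1+\lambda_k)/(tM)$ from \cref{lem_rangevq}, and then reduce the conditional variance of the increments to $\mathrm{Var}(\vkb)$ via $\sigma(\hvkttb{i})\leq \sigma(\vkb)+\norminf{\hvkttb{i}-\vkb}$ using \cref{lem_VX_VXY,lem_Popoviciu}. The paper does exactly this, invoking the conditional Bernstein inequality (\cref{lem_cond_Bernstein}) rather than a peeling argument, and the two error terms inside $Z$ come from squaring the bound $\norminf{\hvkttb{i}-\vkb}\leq \bigl(\tfrac{2}{i}+4H\sqrt{\iota/M}\bigr)H(1+\lambda_k)$ of \cref{lem_bern4}.

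The genuine gap is in how you propose to obtain that sup-norm bound on $\norminf{\hvkttb{i}-\vkb}$. You write $\hvkttb{i}=\bvkttb{i}+(i-1)\bigl(\bvkttb{i}-\bvkttb{i-1}\bigr)$ and plan to combine $\norminf{\bvkttb{i}-\vkb}$ from \cref{lem_conMDP,lem_B2} with $\norminf{\bvkttb{i}-\bvkttb{i-1}}=O(H(1+\lambda_k)/i)$ from \cref{lem_vmv}. But the prefactor $(i-1)$ exactly cancels the $1/i$ decay: $(i-1)\cdot\tfrac{2H(1+\lambda_k)}{i}=\Theta(H(1+\lambda_k))$, so this route only yields $\norminf{\hvkttb{i}-\vkb}=O(H(1+\lambda_k))$, which upon squaring produces precisely the spurious $O(H^2(1+\lambda_k)^2)$ variance contribution you yourself flag as unacceptable. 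The missing idea — which is the actual content of \cref{lem_bern1,lem_bern2,lem_bern3} — is to compare the increment $\hvkttb{t}$ not to the averages but to $\svktb{t}$, the value function of the non-stationary policy $\pi^{\prime}_t$ that plays $\pi_t,\pi_{t-1},\dots$; an unrolled induction sandwiches $\hvkttb{t+1}$ between $\svktb{t}$ and $\svktb{t+1}$ up to a $\gamma^{t+1}tH(1+\lambda_k)$ bias plus discounted martingale sums, and a separate induction bounds $\vkb-\svktb{t}$. Azuma--Hoeffding on the martingale sums gives the $4H\sqrt{\iota/M}$ piece, and the hypothesis $t\geq 2\log(t)/\gamma$ is used to absorb the bias via $\gamma^t t+\tfrac{1}{t(t-1)}\leq \tfrac{2}{t}$ (not, as you suggest, to pay for a peeling union bound). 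Without this non-stationary-policy comparison, your variance-reduction step does not close.
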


\begin{proof}
We have
\begin{align*}
\frac{1}{t}\sum_{i=1}^t \left[ \hat{P}_i \hvkttb{i} -  P \hvkttb{i}  \right](s,a) &= \frac{1}{t}\sum_{i=1}^t \frac{1}{M}\sum_{m=1}^M \left[ \hvkttb{i}(y_{i,m,s,a}) -  (P \hvkttb{i})(s,a)  \right]\\
&= \frac{1}{tM}\sum_{i=1}^t \sum_{m=1}^M \left[ \hvkttb{i}(y_{i,m,s,a}) -  (P \hvkttb{i})(s,a)  \right].
\end{align*}
The above is a sum of bounded martingale differences with respect to the filtraion $(\mathcal{F})_{i=1,m=1}^{t,M}$. Let $X_{i,m} = \frac{1}{tM} \left( \hvkttb{i}(y_{i,m,s,a}) -  (P \hvkttb{i})(s,a)  \right)$. It can be noted that $X_{i,m} \leq \frac{H(1+\lambda_k)}{tM}$ (by Lemma \ref{lem_rangevq}) and $\mathbb{E}[X_{i,m}]=0$. Next, we bound $Z^{\prime}$ as defined in Lemma \ref{lem_cond_Bernstein}
\begin{align*}
Z^{\prime} &=  \sum_{i=1}^t \sum_{m=1}^M \mathbb{E}\left[ X_{i,m}^2 \right]\\
&= \sum_{i=1}^t \sum_{m=1}^M \mathbb{E}\left[ \frac{1}{t^2M^2} \left( \hvkttb{i}(y_{i,m,s,a}) -  (P \hvkttb{i})(s,a)  \right)^2 \right]\\
&= \frac{1}{t^2M^2} \sum_{i=1}^t \sum_{m=1}^M \mathrm{Var}(\hvkttb{i}(y_{i,m,s,a}))\\
&\leq \frac{3}{t^2M^2} \sum_{i=1}^t \sum_{m=1}^M \left( \frac{4H^2(1+\lambda_k)^2}{t^2} + {\frac{16H^4(1+\lambda_k)^2\iota^2}{M}} +  \mathrm{Var}(\vkb(s,a)) \right)  \tag{By Lemma \ref{lem_bern4}}\\
&= \frac{3}{tM} \left( \frac{4H^2(1+\lambda_k)^2}{t^2} + {\frac{16H^4(1+\lambda_k)^2\iota^2}{M}} +  \mathrm{Var}(\vkb(s,a)) \right)  \\
&:= Z.
\end{align*}
By letting $U=H(1+\lambda_k)$ in Lemma \ref{lem_cond_Bernstein}, we have
\begin{align*}
\frac{1}{t}\sum_{i=1}^t \left[ \hat{P}_i \hvkttb{i} -  P \hvkttb{i}  \right]  (s,a) \leq \frac{H(1+\lambda_k)\iota}{tM} + \sqrt{Z}
\end{align*}
with probability at least $1-\delta$.
\end{proof}

\begin{lemma}
\label{lem_bern6}
For any $t \geq 2\log(t)/\gamma$, we have
\begin{align*}
\frac{1}{t} \sum_{i=0}^{t-1} \left[   \hat{P}_i \hmvkid  - P \hmvkid  \right] (s,a) \leq \frac{H\iota}{tM} + \sqrt{Z}
\end{align*}
where 
\begin{align*}
Z:=\frac{1}{tM}  \left( \frac{H^4}{M}  +  \mathrm{Var}(V^{\pi}_{\diamond}(s,a)) \right)
\end{align*}
with probability at least $1-\delta$.
\end{lemma}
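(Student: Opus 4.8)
The plan is to mirror the proof of Lemma~\ref{lem_bern5}, replacing the changing greedy policy of \texttt{Tabular-MDVI} by the fixed policy $\pi$ evaluated by \texttt{Tabular-PE}. First I would rewrite the left-hand side as a normalized double sum of martingale differences,
\[
\frac{1}{t}\sum_{i=0}^{t-1}\left[\hat{P}_i \hmvkid - P\hmvkid\right](s,a) = \frac{1}{tM}\sum_{i=0}^{t-1}\sum_{m=1}^{M}\left[\hmvkttd{i}(y_{i,m,s,a}) - (P\hmvkttd{i})(s,a)\right],
\]
and set $X_{i,m} := \frac{1}{tM}\left(\hmvkttd{i}(y_{i,m,s,a}) - (P\hmvkttd{i})(s,a)\right)$. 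Since $\hmvkttd{i}$ is measurable with respect to the data collected up to iteration $i-1$ while the samples $y_{i,m,s,a}$ are drawn fresh at iteration $i$, each $X_{i,m}$ is mean-zero given the preceding filtration $\mathcal{F}_{i,m}$, so these form a martingale-difference sequence. Lemma~\ref{lem_rangevq2} gives $\|\hmvkttd{i}\|_{\infty}\le H$, hence $|X_{i,m}|\le H/(tM)$, which supplies the range parameter $U=H$ for a conditional Bernstein inequality (Lemma~\ref{lem_cond_Bernstein}).

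The core of the argument is to control the predictable quadratic variation $Z' := \sum_{i,m}\mathbb{E}[X_{i,m}^2] = \frac{1}{t^2M^2}\sum_{i=0}^{t-1}\sum_{m=1}^{M}\mathrm{Var}(\hmvkttd{i})(s,a)$ by the target $Z$. This reduces to a per-iterate variance bound of the form $\mathrm{Var}(\hmvkttd{i})(s,a)\lesssim \frac{H^4}{M} + \mathrm{Var}(V^{\pi}_{\diamond})(s,a)$, which is the policy-evaluation analogue of the variance lemma (Lemma~\ref{lem_bern4}) invoked inside Lemma~\ref{lem_bern5}; summing such a bound over the $tM$ terms and dividing by $t^2M^2$ yields exactly $\frac{1}{tM}\left(\frac{H^4}{M}+\mathrm{Var}(V^{\pi}_{\diamond})(s,a)\right)=Z$. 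Note that, in contrast to Lemma~\ref{lem_bern5}, no $O(1/t^2)$ term appears: because the evaluated policy is fixed, the iterates obey the clean empirical value-iteration recursion $\hmvkttd{i+1} = (\pi\diamond) + \gamma \hat{P}_{\pi}^{i}\hmvkttd{i}$, so there is no analogue of the consecutive-argmax difference that produced that term.

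To establish the per-iterate variance bound I would unroll the recursion against the fixed-point identity $V^{\pi}_{\diamond} = (\pi\diamond) + \gamma P_{\pi}V^{\pi}_{\diamond}$, obtaining $\hmvkttd{i} - V^{\pi}_{\diamond} = \gamma P_{\pi}(\hmvkttd{i-1}-V^{\pi}_{\diamond}) + \gamma(\hat{P}_{\pi}^{i-1}-P_{\pi})\hmvkttd{i-1}$, and then iterate. An Azuma--Hoeffding bound on the per-step noise $\|(\hat{P}_{\pi}^{i-1}-P_{\pi})\hmvkttd{i-1}\|_{\infty}=O(H\sqrt{\iota/M})$ (using $\|\hmvkttd{i-1}\|_{\infty}\le H$) together with the $\gamma$-contraction gives $\|\hmvkttd{i}-V^{\pi}_{\diamond}\|_{\infty} \lesssim \gamma^{i}H + H^2\sqrt{\iota/M}$. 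Feeding this into the variance-decomposition estimate $\sigma(\hmvkttd{i})\le \|\hmvkttd{i}-V^{\pi}_{\diamond}\|_{\infty} + \sigma(V^{\pi}_{\diamond})$ (Lemma~\ref{lem_VX_VXY}, with Popoviciu's inequality, Lemma~\ref{lem_Popoviciu}, bounding the residual early-iterate variances by $H^2$) and squaring produces the $\frac{H^4}{M}$ statistical term plus $\mathrm{Var}(V^{\pi}_{\diamond})$; the remaining geometric/bias contribution of the early, un-converged iterates is discharged using the hypothesis $t\ge 2\log(t)/\gamma$, as in \citet{kozuno2022kl}. With $Z'\lesssim Z$ in hand, applying the conditional Bernstein inequality (Lemma~\ref{lem_cond_Bernstein}) with $U=H$ and union-bounding over $(s,a)\in\mathcal{S}\times\mathcal{A}$ (rescaling $\delta$, which introduces the $\iota=\log(2|\mathcal{S}||\mathcal{A}|/\delta)$ factor in the $\frac{H\iota}{tM}$ term) gives the claim.

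I expect the main obstacle to be the per-iterate variance bound $\mathrm{Var}(\hmvkttd{i})\lesssim \frac{H^4}{M}+\mathrm{Var}(V^{\pi}_{\diamond})$. It requires simultaneously (i) a uniform-in-$i$ concentration of the evaluation iterates to $V^{\pi}_{\diamond}$ at the right $H^2\sqrt{\iota/M}$ rate, and (ii) correctly absorbing the early iterates, whose bias is only $O(H)$ and whose variance is a priori $O(H^2)$ rather than $O(\mathrm{Var}(V^{\pi}_{\diamond}))$; this is precisely where the $t\ge 2\log(t)/\gamma$ hypothesis must be invoked carefully so that these terms do not inflate $Z'$ beyond $Z$.
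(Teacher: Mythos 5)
Your proposal follows essentially the same route as the paper: the same martingale decomposition with $|X_{i,m}|\le H/(tM)$ via Lemma~\ref{lem_rangevq2}, the same reduction of the quadratic variation to a per-iterate bound $\mathrm{Var}(\hmvkid)\lesssim H^4/M+\mathrm{Var}(V^{\pi}_{\diamond})$ (which is exactly Lemma~\ref{lem_varl}, obtained from Lemma~\ref{lem_B4bern} via Lemmas~\ref{lem_VX_VXY} and~\ref{lem_Popoviciu}), and the same application of the conditional Bernstein inequality with a union bound. The argument is correct and matches the paper's proof in structure and in all key steps.
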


\begin{proof}
We have
\begin{align*}
\frac{1}{t} \sum_{i=0}^{t-1} \left[   \hat{P}_i \hmvkid  - P \hmvkid  \right] (s,a) &= \frac{1}{t}\sum_{i=1}^t \frac{1}{M}\sum_{m=1}^M \left[ \hmvkid(y_{i,m,s,a}) -  (P \hmvkid)(s,a)  \right]\\
&= \frac{1}{tM}\sum_{i=1}^t \sum_{m=1}^M \left[ \hmvkid(y_{i,m,s,a}) -  (P \hmvkid)(s,a)  \right].
\end{align*}
The above is a sum of bounded martingale differences with respect to the filtraion $(\mathcal{F})_{i=1,m=1}^{t,M}$. Let $X_{i,m} = \frac{1}{tM} \left( \hmvkid(y_{i,m,s,a}) -  (P \hmvkid)(s,a)  \right)$. It can be noted that $X_{i,m} \leq \frac{H}{tM}$ and $\mathbb{E}[X_{i,m}]=0$. Next, we bound $Z^{\prime}$ as defined in Lemma \ref{lem_cond_Bernstein}
\begin{align*}
Z^{\prime} &=  \sum_{i=1}^t \sum_{m=1}^M \mathbb{E}\left[ X_{i,m}^2 \right]\\
&= \sum_{i=1}^t \sum_{m=1}^M \mathbb{E}\left[ \frac{1}{t^2M^2} \left( \hmvkid(y_{i,m,s,a}) -  (P \hmvkid)(s,a)  \right)^2 \right]\\
&= \frac{1}{t^2M^2} \sum_{i=1}^t \sum_{m=1}^M \mathrm{Var}(\hmvkid(y_{i,m,s,a}))\\
&\leq  \frac{1}{t^2M^2} \sum_{i=1}^t \sum_{m=1}^M \left( \frac{H^4}{M}  +  \mathrm{Var}(V^{\pi}_{\diamond}(s,a)) \right)\tag{By Lemma \ref{lem_varl}} \\
&= \frac{1}{tM}  \left( \frac{H^4}{M}  +  \mathrm{Var}(V^{\pi}_{\diamond}(s,a)) \right) \\
&:= Z
\end{align*}
with probability at least $1-\delta$.
Taking the union bound over $(s,a,i) \in \mathcal{S} \times \mathcal{A} \times [t]$ and by Lemma \ref{lem_cond_Bernstein}, we have
\begin{align*}
\frac{1}{t} \sum_{i=0}^{t-1} \left[   \hat{P}_i \hmvkid  - P \hmvkid  \right] (s,a) \leq \frac{H\iota}{tM} + \sqrt{Z}
\end{align*}
with probability at least $1-\delta$.
\end{proof}

\subsubsection{Auxiliary Lemmas for Lemma \ref{lem_bern5}}

\begin{lemma}
\label{lem_bern1}
For any $t\in [T]$ and $k \in [K]$,
\begin{align*}
\mathbf{0} \leq \vkb - \svktb{t} \leq \sum_{i=1}^{t} \left(   \prod_{j=1}^{t-i} [\gamma P_{\pi_{t-j}}]\pi_i - (\gamma P_{\pi^*_k})^{t-i}\pi^*_k \right)\frac{1}{i} \sum_{j=0}^{i-1} \left[\gamma \hat{P}_j \hvkttb{j} - \gamma P \hvkttb{j}  \right] + \frac{H(1+\lambda_k)}{t(t-1)} \mathbf{1}.
\end{align*}
\end{lemma}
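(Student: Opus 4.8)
The plan is to prove the two inequalities separately, establishing the upper bound by induction on $t$. The lower bound $\mathbf{0} \le \vkb - \svktb{t}$ is immediate: since $\pi^*_k$ is an optimal policy for the MDP with reward $\Box = r + \lambda_k c$, its value $\vkb = \max_\pi V^\pi_\Box$ dominates the value $\svktb{t}$ of the particular non-stationary policy $\pi'_t$. For the upper bound I would start from the two Bellman recursions $\vkb = \pi^*_k \Box + \gamma P_{\pi^*_k}\vkb$ and $\svktb{t} = \pi_t \Box + \gamma P_{\pi_t}\svktb{t-1}$ (the latter holds because, after taking its first action under $\pi_t$, the policy $\pi'_t$ reduces to $\pi'_{t-1}$), and subtract them.

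The engine of the induction is the greedy optimality of $\pi_t$ combined with the summed-$Q$ identity. Because $\pi_t$ maximizes $\tilde{Q}^{t-1}_{\Box} = \sum_{i=0}^{t-1}\hqkib$, we have the pointwise inequality $\pi_t\bigl(\tfrac1t \tilde{Q}^{t-1}_{\Box}\bigr)\ge \pi^*_k\bigl(\tfrac1t \tilde{Q}^{t-1}_{\Box}\bigr)$. Rewriting $\tfrac1t \tilde{Q}^{t-1}_{\Box}$ through \cref{lem_sumQ} as $\Box + \gamma P\,\tfrac1t\sum_{i=0}^{t-2}(\pi_{t-1}\hqkib) + \bar\varepsilon_t$, where $\bar\varepsilon_t := \tfrac1t\sum_{j=0}^{t-1}[\gamma\hat{P}_j\hvkib - \gamma P\hvkib]$ is the averaged estimation error, and then applying $\pi_t$ and $\pi^*_k$ to both sides, I can turn the greedy inequality into a lower bound on $\pi_t\Box$ expressed in terms of $\pi^*_k\Box$, the true Bellman backup of the averaged value $\bvkttb{t-1}$ (invoking \cref{lem_televq} to identify $\tfrac1t\sum_{i=0}^{t-2}(\pi_{t-1}\hqkib) = \tfrac{t-1}{t}\bvkttb{t-1}$), and the error $\bar\varepsilon_t$. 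Substituting this back into the subtracted Bellman recursions isolates, at the top level, the summand $(\pi_t - \pi^*_k)\bar\varepsilon_t$ (the $i=t$ term, whose occupancy products are empty), plus a residual in which the optimal side carries an extra factor $\gamma P_{\pi^*_k}$ and the non-stationary side an extra transition indexed by $\pi_{t-1}$. Applying the induction hypothesis to this residual and distributing the two transition operators through the inductive sum then reproduces the claimed coefficients $\prod_{j=1}^{t-i}[\gamma P_{\pi_{t-j}}]\pi_i$ and $(\gamma P_{\pi^*_k})^{t-i}\pi^*_k$.

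The lower-order term $\frac{H(1+\lambda_k)}{t(t-1)}\mathbf{1}$ I would track as the normalization discrepancy between consecutive averaged value functions: the level-$t$ backup naturally involves $\tfrac{t-1}{t}\bvkttb{t-1}$, whereas the induction hypothesis is stated for $\bvkttb{t-1}$ itself, and the gap $\bigl(\tfrac1{t-1}-\tfrac1t\bigr)$ multiplied by the range bound $\|\hqkib\|_\infty \le (1+\lambda_k)H$ from \cref{lem_rangevq} yields precisely this correction after one propagation step; the base case at small $t$ is handled directly by the same range bound. I expect the main obstacle to be the bookkeeping in the inductive step: ensuring that greediness is invoked at exactly the right timestep so that the averaged errors $\bar\varepsilon_i$ propagate with the asymmetric occupancy operators in the stated order (the non-stationary chain $\pi_{t-1},\dots,\pi_i$ on one side versus the stationary power of $P_{\pi^*_k}$ on the other), and that the orientation of each inequality is preserved throughout so the accumulated expression remains a valid upper bound.
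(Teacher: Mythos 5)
Your proposal follows essentially the same route as the paper: the lower bound from optimality of $\pi^*_k$, and for the upper bound the greedy comparison $\pi_t(\tfrac1t\tilde Q^{t-1}_\Box)\ge \pi^*_k(\tfrac1t\tilde Q^{t-1}_\Box)$ combined with the summed-$Q$ identity (\cref{lem_sumQ}), the telescoping identity (\cref{lem_televq}), the range bound (\cref{lem_rangevq}) for the $\tfrac{1}{t(t-1)}$ normalization discrepancy, and an unrolling induction (\cref{lem_inductionlem}). These are exactly the ingredients of the paper's proof.

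The one place your bookkeeping needs adjusting is the ``single induction on $\vkb - \svktb{t}$.'' The claimed bound is a \emph{difference} of two occupancy chains, $(\gamma P_{\pi^*_k})^{t-i}\pi^*_k$ on one side and $\prod_j[\gamma P_{\pi_{t-j}}]\pi_i$ on the other, and in the inductive step these two operators must be applied to \emph{different} residuals: $P_{\pi^*_k}$ propagates the gap between $\vkb$ and the averaged empirical quantity, while $P_{\pi_{t-1}}$ propagates the gap between that quantity and $\svktb{t-1}$. An induction hypothesis stated only for the combined difference $\vkb - \svktb{t-1}$ cannot be split apart to feed these two operators separately, so the literal single-induction recursion does not close. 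The fix is exactly the intermediate quantity you already identify: insert $\tfrac1t\sum_{i=0}^{t-1}(\pi_t\hqkib)=\bvktb$, write $\vkb-\svktb{t}=\bigl[\vkb-\bvktb\bigr]+\bigl[\bvktb-\svktb{t}\bigr]$, and run the induction on each half separately (this is what the paper does); the first half yields the $(\gamma P_{\pi^*_k})^{t-i}\pi^*_k$ chain and carries the $\tfrac{H(1+\lambda_k)}{t(t-1)}$ correction, the second yields the non-stationary chain. With that reorganization your argument is the paper's proof.
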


\begin{proof}
The first inequality is due to the definition of $\pi^*_k$.
For the second inequality, since we have $\vkb - \svktb{t} = \underbrace{\vkb -  \frac{1}{t}\sum_{i=1}^{t} (\pi_{t} \hqktb{i})}_{\text{Term (i)}} + \underbrace{ \frac{1}{t}\sum_{i=1}^{t}(\pi_{t} \hqktb{i}) - \svktb{t}}_{\text{Term (ii)}}$, we first bound term (i)
\begin{align*}
\vkb - \pi_{t} \frac{1}{t}\sum_{i=1}^{t} \hqktb{i} &\leq   (\pi^*_kQ^{\pi^*_k}_{\Box}) - \frac{1}{t}\sum_{i=1}^{t} (\pi^*_k \hqktb{i})  \tag{By the greediness of $\pi_{t}$}\\
&=   (\pi^*_k\Box) + \gamma P_{\pi^*_k} \vkb - \frac{1}{t}\sum_{i=1}^{t} (\pi^*_k\hqktb{i})\\
&=  (\pi^*_k  \Box) + \gamma P_{\pi^*_k } \vkb - (\pi^*_k \Box) - \gamma P_{\pi^*_k }  \frac{1}{t} \sum_{i=0}^{t-2}   (\pi_{t-1}\hqkib )- \frac{1}{t} \sum_{i=0}^{t-1} \left[\gamma \hat{P}^i_{\pi^*_k} \hvkib - \gamma P_{\pi^*_k} \hvkib  \right]   \tag{By Lemma \ref{lem_sumQ}}\\
&=  \gamma P_{\pi^*_k}  \left( \vkb  - \frac{1}{t} \sum_{i=0}^{t-2}  (\pi_{t-1}  \hqkib) \right) - \frac{1}{t} \sum_{i=0}^{t-1} \left[\gamma \hat{P}^i_{\pi^*_k} \hvkib - \gamma P_{\pi^*_k} \hvkib  \right] \\
&=   \gamma P_{\pi^*_k} \left( \vkb  -  \frac{1}{t} \sum_{i=0}^{t-2}  (\pi_{t-1} \hqkib) +  \frac{1}{t-1} \sum_{i=0}^{t-2}   (\pi_{t-1}\hqkib) -  \frac{1}{t-1} \sum_{i=0}^{t-2}   (\pi_{t-1}\hqkib) \right)  \\
& \indent  - \frac{1}{t} \sum_{i=0}^{t-1} \left[\gamma \hat{P}^i_{\pi^*_k} \hvkib - \gamma P_{\pi^*_k} \hvkib  \right] \\
&\leq   \gamma P_{\pi^*_k} \left( \vkb  -  \frac{1}{t-1} \sum_{i=0}^{t-2}   (\pi_{t-1} \hqkib ) \right) - \frac{1}{t} \sum_{i=0}^{t-1} \left[\gamma \hat{P}^i_{\pi^*_k} \hvkib - \gamma P_{\pi^*_k} \hvkib  \right] + \frac{H(1+\lambda_k)}{t(t-1)} \mathbf{1} \tag{$\|\| \gamma\pi_k^* P\|_1 \| \pi_{t-1} \hat{Q}_{\Box}^{i} \|_{\infty} \leq H(1+\lambda_k)$ for all $k$ and $i$} \\
&\leq -\sum_{i=1}^{t} (\gamma P_{\pi^*_k})^{t-i}  \frac{1}{i} \sum_{j=0}^{i-1} \left[\gamma \hat{P}^j_{\pi^*_k} \hvkttb{j} - \gamma P_{\pi^*_k} \hvkttb{j}  \right] + \frac{H(1+\lambda_k)}{t(t-1)} \mathbf{1} \tag{By induction (Lemma \ref{lem_inductionlem}) and $\hvkttb{0}=\mathbf{0}$}.
\end{align*}
Next, we bound term (ii). We define $Q^{\pi}$ the Q-value function for a policy $\pi$ being its unique fixed point.
\begin{align*}
 \frac{1}{t}\sum_{i=1}^{t} (\pi_{t}\hqktb{i}) - \svktb{t} &\leq \frac{1}{t}\sum_{i=1}^{t} (\pi_{t}\hqktb{i}) - \pi_{t}\prod_{i=1}^{t-1}\mathcal{T}^{\pi_{i}} Q_{\Box}^{\pi_0} \tag{From the definition of $\pi_{t}$}\\
&=  \frac{1}{t}\sum_{i=1}^{t} (\pi_{t} \hqktb{i}) - (\pi_{t} \Box) - \gamma P_{\pi_{t}} \pi_{t-1} \prod_{i=1}^{t-2}\mathcal{T}^{\pi_{i}} Q_{\Box}^{\pi_0} \\
&=  (\pi_{t} \Box) + \gamma P_{\pi_{t} }  \frac{1}{t} \sum_{i=0}^{t-2}   (\pi_{t-1}\hqkib) + \frac{1}{t} \sum_{i=0}^{t-1} \left[\gamma \hat{P}^i_{\pi_t} \hvkib - \gamma P_{\pi_t} \hvkib  \right]  \\
& \quad- (\pi_{t}\Box) - \gamma P_{\pi_t} \pi_{t-1} \prod_{i=1}^{t-2}\mathcal{T}^{\pi_{i}} Q_{\Box}^{\pi_0}    \tag{By Lemma \ref{lem_sumQ}}\\
&=    \gamma P_{\pi_{t}} \left(   \frac{1}{t} \sum_{i=0}^{t-2}   (\pi_{t-1} \hqkib) - \svktb{t-1} \right) + \frac{1}{t} \sum_{i=0}^{t-1} \left[\gamma \hat{P}^i_{\pi_t} \hvkib - \gamma P_{\pi_t} \hvkib  \right] \\
&\leq  \gamma P_{\pi_{t}} \left(  \frac{1}{t-1} \sum_{i=0}^{t-2}   (\pi_{t-1}\hqkib) - \svktb{t-1} \right) + \frac{1}{t} \sum_{i=0}^{t-1} \left[\gamma \hat{P}^i_{\pi_t} \hvkib - \gamma P_{\pi_t} \hvkib  \right] \\
&\leq \sum_{i=1}^{t} \prod_{j=1}^{t-i} [\gamma P_{\pi_{t-j}}]  \frac{1}{i} \sum_{j=0}^{i-1} \left[\gamma \hat{P}^j_{\pi_i} \hvkttb{j} - \gamma P_{\pi_i} \hvkttb{j}  \right] \tag{By induction (Lemma \ref{lem_inductionlem}) and $\hvkttb{0}=\mathbf{0}$}.
\end{align*}
Thus, we obtain the second inequality.
\end{proof}

\begin{lemma}
\label{lem_bern2}
For any $t\in [T-1]$ and $k \in [K]$,
\begin{align*}
\hvkttb{t+1} \leq \svktb{t+1} + \gamma^{t+1} tH(1+\lambda_k) \mathbf{1} + \sum_{i=1}^{t} \gamma^i  \prod_{j=t-i+1}^{t} P_{\pi_{t-j}}  \gamma (\hat{P}_{\pi_{t-i}}^{t-i} \hvkttb{t-i} -  P_{\pi_{t-i}}\hvkttb{t-i}) 
\end{align*}
and
\begin{align*}
\hvkttb{t+1} \geq \svktb{t} - \gamma^{t+1} tH(1+\lambda_k) \mathbf{1} + \sum_{i=1}^{t} \gamma^i \prod_{j=t-i+1}^{t} P_{\pi_{t-j}}   \gamma (\hat{P}^{t-i}_{\pi_{t-i}} \hvkttb{t-i} - P_{\pi_{t-i}} \hvkttb{t-i}).
\end{align*}
\end{lemma}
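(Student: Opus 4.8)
The plan is to prove both inequalities by induction on $t$, handling the upper and lower bounds symmetrically; they differ only in the direction in which the greediness inequalities are applied and in whether we compare $\hvkttb{t+1}$ against $\svktb{t+1}$ or $\svktb{t}$. Three ingredients drive the argument: (i) the MDVI value update in Line 5 of \cref{alg_MDVICMDP}, which via the telescoping identity of \cref{lem_televq} gives $\hvkttb{t+1} = (\pi_{t+1}\tilde{Q}^{t}_{\Box}) - (\pi_{t}\tilde{Q}^{t-1}_{\Box})$ with $\pi_{t+1}$ greedy for $\tilde{Q}^{t}_{\Box}$ and $\pi_t$ greedy for $\tilde{Q}^{t-1}_{\Box}$; (ii) the resulting one-sided collapses $\hvkttb{t+1} \leq (\pi_{t+1}\hqktb{t})$ and $\hvkttb{t+1} \geq (\pi_{t}\hqktb{t})$, obtained by dropping a nonnegative greediness gap; and (iii) \cref{lem_sumQ}, which rewrites the cumulative $Q$-sums as a Bellman-type backup plus the empirical fluctuation terms $\gamma(\hat{P}_i\hvkttb{i} - P\hvkttb{i})$ that appear in the statement.

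First I would establish a one-step comparison inequality. Since $\hqktb{t} = \Box + \gamma\hat{P}_t\hvkttb{t}$ and the non-stationary value obeys $\svktb{t+1} = (\pi_{t+1}\Box) + \gamma P_{\pi_{t+1}}\svktb{t}$, subtracting the collapsed bound from (ii) and splitting the kernel difference yields, entrywise, a recursion of the schematic form $\hvkttb{t+1} - \svktb{t+1} \leq \gamma(\hat{P}_{\pi}\hvkttb{t} - P_{\pi}\hvkttb{t}) + \gamma P_{\pi}(\hvkttb{t} - \svktb{t})$, and symmetrically from below against $\svktb{t}$; the precise iteration and policy subscripts on the empirical kernels are exactly those dictated by \cref{lem_sumQ}. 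The induction helper \cref{lem_inductionlem} (already invoked for \cref{lem_bern1}) packages this recursion so that unrolling is automatic.

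Unrolling down to the base case $\hvkttb{0} = \mathbf{0}$ then reproduces the stated closed form: the repeated left-multiplications by $\gamma P_{\pi}$ build the transition products $\prod_{j=t-i+1}^{t}P_{\pi_{t-j}}$ weighting the fresh fluctuations $\gamma(\hat{P}^{t-i}_{\pi_{t-i}}\hvkttb{t-i} - P_{\pi_{t-i}}\hvkttb{t-i})$, while the truncation corrections left over at each of the $\sim t$ steps---each controlled by the range bound $\norminf{\hvkttb{i}} \leq (1+\lambda_k)H$ from \cref{lem_rangevq} and carrying a $\gamma^{t+1}$ prefactor---accumulate into the $\gamma^{t+1}tH(1+\lambda_k)\mathbf{1}$ term.

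I expect the main obstacle to be the index bookkeeping rather than any single estimate. Aligning the policy subscripts $\pi_{t-i}$ on the empirical kernels with the ordering of the $P_{\pi}$ product forces the recursion to be peeled from the earliest iteration, mirroring the $\frac{1}{t}$-versus-$\frac{1}{t-1}$ normalization mismatch that produced the boundary term in the proof of \cref{lem_bern1}; it is precisely this step that must be executed carefully to obtain the sharp $\gamma^{t+1}tH(1+\lambda_k)$ coefficient instead of a looser $O(\gamma^{t+1}H^2)$ bound, and to ensure the upper bound lands on $\svktb{t+1}$ while the lower bound lands on $\svktb{t}$.
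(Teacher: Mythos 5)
Your proposal is correct and follows essentially the same route as the paper: collapse $\hvkttb{t+1}$ to $(\pi_{t+1}\hqktb{t})$ (resp.\ $(\pi_{t}\hqktb{t})$) via the greediness of $\pi_t$ (resp.\ $\pi_{t+1}$), expand $\hqktb{t}=\Box+\gamma P_{\pi}\hvkttb{t}+\gamma(\hat{P}^t_{\pi}\hvkttb{t}-P_{\pi}\hvkttb{t})$, and unroll by induction to the base case $\hvkttb{0}=\mathbf{0}$, with the $\gamma^{t+1}tH(1+\lambda_k)$ term absorbing the horizon-$t$ truncation of the non-stationary policy's value. Your reorganization as a difference recursion against $\svktb{t+1}$ (upper) and $\svktb{t}$ (lower) is a cosmetic repackaging of the paper's separate expansion of $\pi_{t+1}\prod_i\mathcal{T}^{\pi_i}Q^{\pi_0}_{\Box}$, and your identification of which non-stationary value each bound lands on is exactly right.
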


\begin{proof}
We first note that
\begin{align*}
\hvkttb{t+1} &= \sum_{i=0}^{t} (\pi_{t+1} \hqktb{i}) -  \sum_{i=0}^{t-1} (\pi_{t} \hqktb{i}) \tag{From the last line in Algorithm \ref{alg_MDVICMDP}}\\
&\leq  \sum_{i=0}^{t} (\pi_{t+1} \hqktb{i}) -  \sum_{i=0}^{t-1} (\pi_{t+1} \hqktb{i} )  \tag{By the greediness of $\pi_{t}$}\\
&= (\pi_{t+1}  \hqktb{t}) \\
&=  (\pi_{t+1} \Box) + \gamma \hat{P}_{\pi_{t+1} }^{t} \hvkttb{t}\\
&=   (\pi_{t+1}\Box) + \gamma P_{\pi_{t+1}}\hvkttb{t} + \gamma (\hat{P}_{\pi_{t+1}}^{t}\hvkttb{t} -  P_{\pi_{t+1}} \hvkttb{t})\\
&\leq \sum_{i=1}^{t} \gamma^i  \prod_{j=t-i+1}^{t} P_{\pi_{t-j}} (\Box + \gamma (\hat{P}_{\pi_{t-i}}^{t-i} \hvkttb{t-i} - P_{\pi_{t-i}}  \hvkttb{t-i}) ).  \tag{By induction on $t$}
\end{align*}
Let $\mathcal{T}^{\pi}$ denote the Bellman operator with policy $\pi$, we have
\begin{align*}
\pi_{t+1}\prod_{i=0}^{t} \mathcal{T}^{\pi_{i}} Q^{\pi_{0}}_{\Box} &= \sum_{i=1}^{t} \gamma^i  \prod_{j=t-i-1}^{t} P_{\pi_{t-j}} (\pi_{i} \Box) + \gamma^{t+1} \prod_{j=0}^tP_{\pi_{t-j}} (\pi_{0}Q^{\pi_{0}}_{\Box})\\
\Longrightarrow \sum_{i=1}^{t} \gamma^i  \prod_{j=t-i-1}^{t} P_{\pi_{t-j}}(\pi_{i} \Box) &\leq \pi_{t+1} \prod_{i=0}^{t} \mathcal{T}^{\pi_{i}} Q^{\pi_{0}}_{\Box} + \gamma^{t+1} tH(1+\lambda_k) \mathbf{1}. \tag{Since $\prod_{j=0}^tP\pi_{j} Q^{\pi_{0}}_{\Box} \leq tH(1+\lambda_k) \mathbf{1}$}
\end{align*}
Combining all above, we obtain
\begin{align*}
\hvkttb{t+1} \leq \pi_{t+1} \prod_{i=0}^{t} \mathcal{T}^{\pi_{i}} Q^{\pi_{0}}_{\Box} + \gamma^{t+1} tH(1+\lambda_k) \mathbf{1} + \sum_{i=1}^{t} \gamma^i  \prod_{j=t-i+1}^{t} P_{\pi_{t-j}}\gamma (\hat{P}_{\pi_{t-i}}^{t-i}\hvkttb{t-i} - P_{\pi_{t-i}} \hvkttb{t-i}) 
\end{align*}
Denoting $\pi^{\prime}_{k,t}$ a non-stationary policy that follows $ \pi_{t+1}, \pi_{t}, \pi_{t-1},\dots$ sequentially, we simplify the above inequality as
\begin{align*}
\hvkttb{t+1} \leq \svktb{t+1} + \gamma^{t+1} tH(1+\lambda_k) \mathbf{1} + \sum_{i=1}^{t} \gamma^i  \prod_{j=t-i+1}^{t} P_{\pi_{t-j}}  \gamma (\hat{P}_{\pi_{t-i}}^{t-i} \hvkttb{t-i} - P_{\pi_{t-i}}  \hvkttb{t-i}).
\end{align*}
Similarly,
\begin{align*}
\hvkttb{t+1} &=  \sum_{i=1}^{t} (\pi_{t+1}\hqktb{i}) -  \sum_{i=1}^{t-1} (\pi_{t}\hqktb{i})\\
&\geq  \sum_{i=1}^{t} (\pi_{t} \hqktb{i}) -  \sum_{i=1}^{t-1} (\pi_{t} \hqktb{i})  \tag{By the greediness of $\pi_{t+1}$}\\
&= (\pi_{t}  \hqktb{t})\\
&= (\pi_{t}\Box) + \gamma \hat{P}_{\pi_{t}}^{t} \hvkttb{t}\\
&=   (\pi_{t}\Box) + \gamma P_{\pi_{t}}\hvkttb{t} + \gamma (\hat{P}^{t}_{\pi_{t}} \hvkttb{t} - P_{\pi_{t}} \hvkttb{t})\\
&\geq \sum_{i=1}^{t} \gamma^i \prod_{j=t-i+1}^{t} P_{\pi_{t-j}}  (\Box + \gamma (\hat{P}_{\pi_{t-i}}^{t-i}\hvkttb{t-i} -  P_{\pi_{t-i}} \hvkttb{t-i}) )  \tag{By induction on $t$}
\end{align*}
and
\begin{align*}
\pi_{t+1}\prod_{i=1}^{t+1} \mathcal{T}^{\pi_{i-1}} Q^{\pi_{0}}_{\Box} &= \sum_{i=1}^{t} \gamma^i  \prod_{j=t-i+1}^{t} P_{\pi_{t-j}} (\pi_{i} \Box) + \gamma^{t+1} \prod_{j=1}^t P_{\pi_{t-j+1}} (\pi_{0} Q^{\pi_{0}}_{\Box})\\
\Longrightarrow \sum_{i=1}^{t} \gamma^i  \prod_{j=t-i+1}^{t} P_{\pi_{t-j}} (\pi_{i} \Box) &\geq \pi_{t+1}\prod_{i=1}^{t+1} \mathcal{T}^{\pi_{i-1}} Q^{\pi_{0}}_{\Box} - \gamma^{t+1} tH(1+\lambda_k) \mathbf{1}. \tag{Since $\prod_{j=1}^tP\pi_{j-1} Q^{\pi_{0}}_{\Box} \leq \gamma^{t+1} tH(1+\lambda_k) \mathbf{1}$}
\end{align*}
Combining the above, we obtain
\begin{align*}
\hvkttb{t+1} &\geq \pi_{t} \prod_{i=1}^{t+1} \mathcal{T}^{\pi_{i-1}} Q^{\pi_{0}}_{\Box} - \gamma^{t+1} tH(1+\lambda_k) \mathbf{1} + \sum_{i=1}^{t} \gamma^i \prod_{j=t-i+1}^{t} P_{\pi_{t-j}}  \gamma (\hat{P}^{t-i}_{\pi_{t-i}} \hvkttb{t-i} - P_{\pi_{t-i}} \hvkttb{t-i}) \\
&= \svktb{t} - \gamma^{t+1} tH(1+\lambda_k) \mathbf{1} + \sum_{i=1}^{t} \gamma^i \prod_{j=t-i+1}^{t} P_{\pi_{t-j}}  \gamma (\hat{P}_{\pi_{t-i}}^{t-i} \hvkttb{t-i} - P_{\pi_{t-i}}  \hvkttb{t-i}) 
\end{align*}
\end{proof}

\begin{lemma}
\label{lem_bern3}
For any $t\in [T]$ and $k \in [K]$,
\begin{align*}
\vkb - \hvkttb{t} &\leq  \left(\gamma^{t}t + \frac{1+\lambda_k}{t(t-1)} \right) H \mathbf{1} - \sum_{i=1}^{t} \gamma^i \prod_{j=t-i+1}^{t} P_{\pi_{t-j}}   \gamma (P_{\pi_{t-i}}\hvkttb{t-i} - \hat{P}^{t-i}_{\pi_{t-i}}  \hvkttb{t-i})\\
&\indent + \sum_{i=1}^{t} \left( (\gamma P_{\pi^*_k})^{t-i}\pi^*_k -  \prod_{j=1}^{t-i} [\gamma P_{\pi_{t-j}}]\pi_{i} \right)\frac{1}{i} \sum_{j=0}^{i-1} \left[\gamma \hat{P}_j \hvkttb{j} - \gamma P \hvkttb{j}  \right] 
\end{align*}
and
\begin{align*}
\vkb - \hvkttb{t} \geq - \gamma^{t} tH(1+\lambda_k) \mathbf{1} - \sum_{i=1}^{t-1} \gamma^i \prod_{j=t-i}^{t-1} P_{\pi_{t-j}}  \gamma (P_{\pi_{t-i-1}}\hvkttb{t-i-1} - \hat{P}^{t-i-1}_{\pi_{t-i-1}}  \hvkttb{t-i-1}).
\end{align*}
\end{lemma}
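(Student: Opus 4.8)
The plan is to treat the value $\svktb{t}$ of the non-stationary policy $\pi'_{k,t}$ (which plays $\pi_t,\pi_{t-1},\dots$) as an intermediate reference point and to sandwich $\vkb - \hvkttb{t}$ by combining the two preceding lemmas. The starting point is the telescoping identity $\vkb - \hvkttb{t} = (\vkb - \svktb{t}) + (\svktb{t} - \hvkttb{t})$ (up to an index shift): Lemma~\ref{lem_bern1} governs the first gap, between the optimal value and the non-stationary-policy value, while Lemma~\ref{lem_bern2} governs the second, between that policy value and the empirical \texttt{LS-MDVI} iterate. Since $\svktb{t}$ is the \emph{true} value of a fixed (non-stationary) policy it carries no sampling error, so every martingale-difference term $(\hat P_i - P)\hvkttb{i}$ must enter through one of these two lemmas.

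For the upper bound I would plug the upper estimate of Lemma~\ref{lem_bern1} into the first gap; this contributes the $\tfrac{(1+\lambda_k)H}{t(t-1)}$ term together with the doubly-indexed sum of martingale differences $\frac1i\sum_{j=0}^{i-1}[\gamma\hat P_j\hvkttb{j} - \gamma P\hvkttb{j}]$ weighted by the difference of the operators $(\gamma P_{\pi^*_k})^{t-i}\pi^*_k$ and $\prod_{j}[\gamma P_{\pi_{t-j}}]\pi_i$. Simultaneously, the relevant estimate of Lemma~\ref{lem_bern2} lets me replace $\hvkttb{t}$ by $\svktb{t}$ at the cost of the horizon-truncation term $\gamma^{t}t\,H$ and the single telescoped martingale sum $\sum_i \gamma^i \prod_j P_{\pi_{t-j}}\,\gamma(\hat P^{t-i}_{\pi_{t-i}} - P_{\pi_{t-i}})\hvkttb{t-i}$. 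The two appearances of $\svktb{t}$ then cancel, which is exactly what leaves the two martingale sums and the two deterministic error terms appearing in the claimed inequality.

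For the lower bound the argument is shorter. I would simply discard the non-negative gap $\vkb - \svktb{t}\ge 0$ — the left half of Lemma~\ref{lem_bern1} — so that the entire $\pi^*_k$-weighted double sum drops out, and then bound $\hvkttb{t}$ from above by $\svktb{t}$ using Lemma~\ref{lem_bern2}. This produces only the $-\gamma^{t}t\,H(1+\lambda_k)$ truncation error and one telescoped martingale sum, which explains why the stated lower bound is free of any $\pi^*_k$ term and why its summation index runs only to $t-1$.

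The main obstacle is entirely bookkeeping. I must align the index shifts relating $\hvkttb{t}$, $\svktb{t}$, and $\svktb{t-1}$ in Lemma~\ref{lem_bern2}, propagate the nested products $\prod_{j=t-i+1}^{t} P_{\pi_{t-j}}$ correctly through the telescoping, and — most delicately — keep the signs of the martingale differences $(\hat P^{t-i}_{\pi_{t-i}} - P_{\pi_{t-i}})\hvkttb{t-i}$ consistent when they are moved across the inequality. Rather than substituting the statements of Lemmas~\ref{lem_bern1} and~\ref{lem_bern2} directly, I would re-run the same induction on $t$ used to prove them, since the inductive form makes both the cancellation of the $\svktb{}$ terms and the telescoping of the products transparent and avoids off-by-one sign errors.
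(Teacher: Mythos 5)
Your proposal follows essentially the same route as the paper's proof: the upper bound is obtained by writing $\vkb - \hvkttb{t+1} = (\vkb - \svktb{t}) + (\svktb{t} - \hvkttb{t+1})$ and combining the upper estimate of Lemma~\ref{lem_bern1} with the lower estimate of Lemma~\ref{lem_bern2}, while the lower bound uses $\svktb{t} \leq \vkb$ together with the upper estimate of Lemma~\ref{lem_bern2}, exactly as you describe. The index-shift and sign bookkeeping you flag is indeed the only delicate point, and your plan handles it correctly.
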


\begin{proof}
From Lemma \ref{lem_bern2}, we know
\begin{align*}
\hvkttb{t} &\leq \svktb{t} + \gamma^{t} tH(1+\lambda_k) \mathbf{1} + \sum_{i=1}^{t-1} \gamma^i \prod_{j=t-i}^{t-1} P_{\pi_{t-j}}  \gamma (P_{\pi_{t-i-1}}\hvkttb{t-i-1} - \hat{P}^{t-i-1}_{\pi_{t-i-1}}  \hvkttb{t-i-1}) \\
&\leq \vkb + \gamma^{t} tH(1+\lambda_k) \mathbf{1} + \sum_{i=1}^{t-1} \gamma^i \prod_{j=t-i}^{t-1} P_{\pi_{t-j}} \gamma (P_{\pi_{t-i-1}}\hvkttb{t-i-1} - \hat{P}^{t-i-1}_{\pi_{t-i-1}}  \hvkttb{t-i-1}) 
\end{align*}
which gives us the second inequality. From Lemma \ref{lem_bern2} and Lemma \ref{lem_bern1} we have,
\begin{align*}
\svktb{t} - \hvkttb{t+1} \leq  \gamma^{t} tH(1+\lambda_k) \mathbf{1} - \sum_{i=1}^{t} \gamma^i \prod_{j=t-i+1}^{t} P_{\pi_{t-j}}   \gamma (P_{\pi_{t-i}}\hvkttb{t-i} - \hat{P}^{t-i}_{\pi_{t-i}}  \hvkttb{t-i})
\end{align*}
and 
\begin{align*}
\vkb - \svktb{t} \leq \sum_{i=1}^{t} \left(\prod_{j=1}^{t-i} [\gamma P_{\pi_{t-j}}]\pi_{i} - (\gamma P_{\pi^*_k})^{t-i}\pi^*_k \right)\frac{1}{i} \sum_{j=0}^{i-1} \left[\gamma \hat{P}_j \hvkttb{j} - \gamma P \hvkttb{j}  \right] + \frac{H(1+\lambda_k)}{t(t-1)} \mathbf{1}.
\end{align*}
Combining them gives us the upper bound.
\end{proof}
\begin{lemma}
\label{lem_bern4}
For any $t \geq 2\log(t)/\gamma$ and $k \in [K]$,
\begin{align*}
\sigma \left( \hvkttb{t} \right) \leq \left( \frac{2}{t} + 4H\sqrt{\frac{\iota}{M}}  \right)H(1+\lambda_k) \mathbf{1} + \sigma(\vkb)
\end{align*}
with probability at least $1-\delta$.
\end{lemma}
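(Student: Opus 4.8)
The plan is to control $\sigma(\hat{V}^t_{\Box})$ by relating it to $\sigma(V^{\pi^*_k}_{\Box})$ through the triangle inequality for the standard-deviation seminorm, and then to bound the resulting discrepancy $\|\hat{V}^t_{\Box} - V^{\pi^*_k}_{\Box}\|_\infty$ using the two-sided pointwise estimate already established in Lemma~\ref{lem_bern3}. For each $(s,a)$ the map $V \mapsto \sigma(V)(s,a) = \sqrt{(PV^2)(s,a) - (PV)^2(s,a)}$ is the standard deviation of $V(s')$ under $s'\sim P(\cdot\mid s,a)$, hence a seminorm. By the triangle inequality for this seminorm (Lemma~\ref{lem_VX_VXY}), applied pointwise,
\begin{align*}
\sigma(\hat{V}^t_{\Box}) \le \sigma\!\left(\hat{V}^t_{\Box} - V^{\pi^*_k}_{\Box}\right) + \sigma\!\left(V^{\pi^*_k}_{\Box}\right),
\end{align*}
and since $\mathrm{Var}(Z) \le \|Z\|_\infty^2$ for any $Z$ (Lemma~\ref{lem_Popoviciu}), the first term is at most $\|\hat{V}^t_{\Box} - V^{\pi^*_k}_{\Box}\|_\infty\,\mathbf{1}$. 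It therefore suffices to show $\|\hat{V}^t_{\Box} - V^{\pi^*_k}_{\Box}\|_\infty \le \left(\tfrac{2}{t} + 4H\sqrt{\iota/M}\right)H(1+\lambda_k)$ with probability $1-\delta$.

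Second, I would feed the bounds of Lemma~\ref{lem_bern3} into this last display. That lemma decomposes $V^{\pi^*_k}_{\Box} - \hat{V}^t_{\Box}$ (both above and below) into (i) deterministic ``burn-in'' terms of the form $\gamma^t t\,H(1+\lambda_k)$ and $\tfrac{(1+\lambda_k)H}{t(t-1)}$, and (ii) stochastic terms that are weighted sums, by the substochastic matrices $\gamma^i\prod_j P_{\pi_{t-j}}$, of the one-step concentration errors $\gamma(\hat{P}_j \hat{V}^j_{\Box} - P\hat{V}^j_{\Box})$. Under the hypothesis $t \ge 2\log(t)/\gamma$, the burn-in terms are each $O(H(1+\lambda_k)/t)$ and together contribute at most the $\tfrac{2}{t}H(1+\lambda_k)$ piece.

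Third, for the stochastic part I would bound each error $|\hat{P}_j\hat{V}^j_{\Box} - P\hat{V}^j_{\Box}|(s,a)$ uniformly: it is an empirical average over the $M$ fresh next-state samples of iteration $j$, with $\|\hat{V}^j_{\Box}\|_\infty \le 2H(1+\lambda_k)$ by Lemma~\ref{lem_rangevq} and conditional mean zero, so Azuma--Hoeffding (Lemma~\ref{lem_Azuma_Hoeffding}) together with a union bound over $(s,a)\in\mathcal{S}\times\mathcal{A}$ and the $O(t)$ iteration indices gives $|\hat{P}_j\hat{V}^j_{\Box} - P\hat{V}^j_{\Box}| \le O(H(1+\lambda_k)\sqrt{\iota/M})\,\mathbf{1}$ simultaneously, with probability $1-\delta$. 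Because the products $\prod_j P_{\pi_{t-j}}$ are (sub)stochastic and $\sum_{i}\gamma^i \le H$, pushing this uniform bound through the weighted sums of Lemma~\ref{lem_bern3} yields the $4H\sqrt{\iota/M}\cdot H(1+\lambda_k)$ piece. Combining (i) and (ii) gives the required bound on $\|\hat{V}^t_{\Box} - V^{\pi^*_k}_{\Box}\|_\infty$, which, substituted into the first display, completes the proof.

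The main obstacle I anticipate is the stochastic step: one must argue that the data-dependent weight matrices and greedy policies do not break the concentration estimate. The clean way around this is exactly as above --- bound the innermost errors $\hat{P}_j\hat{V}^j_{\Box} - P\hat{V}^j_{\Box}$ uniformly over all $(s,a)$ and $j$ \emph{first} (so no martingale structure across the outer sum is invoked), and then use only the deterministic facts that the transition-matrix products are substochastic and that $\sum_i\gamma^i\le H$. A secondary, more delicate point is verifying that the burn-in term $\gamma^t t\,H$ is genuinely $O(H/t)$ under the stated condition $t\ge 2\log(t)/\gamma$; this is the one place where the ``large enough $t$'' hypothesis is essential, and it should be checked explicitly rather than absorbed into $O(\cdot)$.
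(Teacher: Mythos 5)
Your proposal matches the paper's proof essentially step for step: it invokes Lemma~\ref{lem_bern3} for the two-sided pointwise decomposition, bounds the deterministic burn-in terms via $\gamma^t \le 1/t^2$ under the hypothesis $t \ge 2\log(t)/\gamma$, bounds the one-step concentration errors uniformly by Azuma--Hoeffding before pushing them through the substochastic transition products with $\sum_i \gamma^i \le H$, and finally converts the resulting $\ell_\infty$ bound into a variance bound via Lemmas~\ref{lem_VX_VXY} and~\ref{lem_Popoviciu}. The only discrepancy is cosmetic: you cite the linear-setting range bound $2H(1+\lambda_k)$ where the tabular Lemma~\ref{lem_rangevq} gives $(1+\lambda_k)H$, which does not affect the argument.
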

\begin{proof}
We denote $\iota=\log(2|\mathcal{S}||\mathcal{A}|/\delta)$ throughout the proof. By Lemma \ref{lem_bern3},
\begin{align} 
\vkb - \hvkttb{t} &\leq  \underbrace{\left(\gamma^{t}t + \frac{1}{t(t-1)} \right) H(1+\lambda_k) \mathbf{1}}_{\text{Term (i)}} - \underbrace{\sum_{i=1}^{t} \gamma^i \prod_{j=t-i+1}^{t} P_{\pi_{t-j}}   \gamma (\hat{P}_{\pi_{t-i}}^{t-i} \hvkttb{t-i} - P_{\pi_{t-i}}  \hvkttb{t-i})}_{\text{Term (ii)}} \nonumber \\
&\indent + \underbrace{ \sum_{i=1}^{t} \left( (\gamma P_{\pi^*_k})^{t-i}\pi^*_k -  \prod_{j=1}^{t-i} [\gamma P_{\pi_{t-j}}]\pi_{i} \right)\frac{1}{i} \sum_{j=0}^{i-1} \left[\gamma \hat{P}_j \hvkttb{j} - \gamma P \hvkttb{j}  \right] }_{\text{Term (iii)}} \label{eqn_bst0}
\end{align}
We first bound Term (ii) and Term (iii). By Azuma-Hoeffding's inequality (Lemma \ref{lem_Azuma_Hoeffding}), we have 
\begin{align*}
\left\| P_{\pi_{t-i}}\hvkttb{t-i} - \hat{P}^{t-i}_{{\pi_{t-i}}}  \hvkttb{t-i} \right\|_{\infty} \leq 2H(1+\lambda_k) \sqrt{\frac{\iota}{M}},
\end{align*}
and by Lemma \ref{lem_concentrate} with $t = i$, we have 
\begin{align*}
\left\| \frac{1}{i} \sum_{j=0}^{i-1} \left[\gamma \hat{P}^j_{\pi_i} \hvkttb{j} - \gamma P_{\pi_i} \hvkttb{j}  \right]  \right\|_{\infty} \leq 2H(1+\lambda_k) \sqrt{\frac{\iota}{iM}}
\end{align*}
each with probability at least $1-\delta$. Thus, to bound Term (ii), we have
\begin{align}
&\left\|\sum_{i=1}^{t} \gamma^i  \prod_{j=t-i+1}^{t} P_{\pi_{t-j}}  \gamma (\hat{P}^{t-i}_{\pi_{t-i}} \hvkttb{t-i} - P_{\pi_{t-i}} \hvkttb{t-i}) \right\|_{\infty} \\
&\leq \sum_{i=1}^{t} \gamma^i \left\|  \prod_{j=t-i+1}^{t} P_{\pi_{t-j}}  \gamma \right\|_{1} \left\| \hat{P}_{\pi_{t-j}}^{t-i}  \hvkttb{t-i} - P_{\pi_{t-i}} \hvkttb{t-i} \right\|_{\infty} \nonumber\\
&\leq \sum_{i=1}^{t} \gamma^i  \left\| \hat{P}_{\pi_{t-j}}^{t-i}  \hvkttb{t-i} - P_{\pi_{t-i}} \hvkttb{t-i} \right\|_{\infty} \nonumber\\
&\leq 2H^2(1+\lambda_k) \sqrt{\frac{\iota}{M}} \label{eqn_bst1}
\end{align}
and to bound Term (iii) we have
\begin{align}
&\left\| \sum_{i=1}^{t} \left( (\gamma P_{\pi^*_k})^{t-i}\pi^*_k -  \prod_{j=1}^{t-i} [\gamma P_{\pi_{t-j}}]\pi_{i} \right)\frac{1}{i} \sum_{j=0}^{i-1} \left[\gamma \hat{P}_j \hvkttb{j} - \gamma P \hvkttb{j}  \right] \right\|_{\infty} \nonumber \\
&\leq  \sum_{i=1}^{t} \gamma^{t-i} \left\| \prod_{j=1}^{t-i} [P_{\pi_{t-j}}]\pi_{i} -  (P_{\pi^*_k})^{t-i}\pi^*_k \right\|_1 \left\|\frac{1}{i} \sum_{j=0}^{i-1} \left[\gamma \hat{P}_j \hvkttb{j} - \gamma P \hvkttb{j}  \right] \right\|_{\infty} \nonumber \\
&\leq  \sum_{i=1}^{t} \gamma^{t-i} \left\|\frac{1}{i} \sum_{j=0}^{i-1} \left[\gamma \hat{P}_j \hvkttb{j} - \gamma P \hvkttb{j}  \right] \right\|_{\infty} \nonumber \\
&\leq \sum_{i=1}^{t} \gamma^{t-i} \sqrt{\frac{4H^2(1+\lambda_k)^2\iota}{iM}} \nonumber\\
&\leq \sum_{i=1}^{t} \gamma^{t-i} \sqrt{\frac{4H^2(1+\lambda_k)^2\iota}{M}} \nonumber\\
&\leq 2H^2(1+\lambda_k) \sqrt{\frac{\iota}{M}} \label{eqn_bst2}
\end{align}
each with probability at least $1-\delta$. Lastly we bound Term (i). For $t \geq 2\log(t)/\gamma$, we have
\begin{align*}
\gamma^t \leq \frac{1}{t^2}
\end{align*}
and thus
\begin{align}
\gamma^tt + \frac{1}{t(t-1)} \leq \frac{1}{t} + \frac{1}{t} = \frac{2}{t}. \label{eqn_bst3}
\end{align}
Combining \cref{eqn_bst0,eqn_bst1,eqn_bst2,eqn_bst3}, we have
\begin{align}
|\vkb - \hvkttb{t}| \leq \left( \frac{2}{t} + 4H\sqrt{\frac{\iota}{M}}  \right)(1+\lambda_k) H\mathbf{1}. \label{eqn_vardec}
\end{align}
Finally, we have
\begin{align*}
\sigma \left( \hvkttb{t} \right) &\leq \sigma \left( \vkb - \hvkttb{t} \right) + \sigma \left( \vkb \right) \tag{By Lemma \ref{lem_VX_VXY}}\\
&\leq |\vkb - \hvkttb{t}| + \sigma \left( \vkb \right) \tag{By Lemma \ref{lem_Popoviciu}}\\
&\leq \left( \frac{2}{t} + 4H\sqrt{\frac{\iota}{M}}  \right) \, (1+\lambda_k) \, H\mathbf{1} + \sigma \left( \vkb \right)
\end{align*}
with probability at least $1-\delta$, which completes the proof.
\end{proof}

\begin{lemma}[Induction Lemma]
\label{lem_inductionlem}
Assume \(X_k, A_k, B_k \geq 0\), \(k = 1, \dots,\) and
$X_{k+1} \leq A_k X_k + B_k,$ {then we have}
$X_{k+1} \leq \prod_{i=1}^{k} A_i X_1 + \sum_{i=1}^{k} \prod_{j=i+1}^{k} A_j B_i.$
\end{lemma}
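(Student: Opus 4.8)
The plan is to prove the bound by induction on $k$, unrolling the one-step recurrence inequality $X_{k+1} \leq A_k X_k + B_k$ into its closed form. The whole argument is elementary: the only ingredients are the nonnegativity assumptions $A_k, B_k, X_k \geq 0$ together with the standard conventions that an empty product equals $1$ and an empty sum equals $0$.

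For the base case $k = 1$, I would simply observe that the claimed bound reads $X_2 \leq \left(\prod_{i=1}^{1} A_i\right) X_1 + \left(\prod_{j=2}^{1} A_j\right) B_1 = A_1 X_1 + B_1$, where the second product is empty and hence equals $1$; this is exactly the hypothesis instantiated at $k = 1$, so nothing further is needed.

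For the inductive step, assume the claim holds at level $k$, namely $X_{k+1} \leq \prod_{i=1}^{k} A_i \, X_1 + \sum_{i=1}^{k} \prod_{j=i+1}^{k} A_j \, B_i$. Applying the one-step inequality gives $X_{k+2} \leq A_{k+1} X_{k+1} + B_{k+1}$. Since $A_{k+1} \geq 0$, multiplying the induction hypothesis through by $A_{k+1}$ preserves the direction of the inequality, and using $A_{k+1} \prod_{j=i+1}^{k} A_j = \prod_{j=i+1}^{k+1} A_j$ yields $A_{k+1} X_{k+1} \leq \prod_{i=1}^{k+1} A_i \, X_1 + \sum_{i=1}^{k} \prod_{j=i+1}^{k+1} A_j \, B_i$. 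Adding $B_{k+1}$ and recognizing it as the $i = k+1$ term of the sum (its associated product $\prod_{j=k+2}^{k+1} A_j$ is empty and equals $1$) gives $X_{k+2} \leq \prod_{i=1}^{k+1} A_i \, X_1 + \sum_{i=1}^{k+1} \prod_{j=i+1}^{k+1} A_j \, B_i$, which is precisely the claim at level $k+1$, completing the induction.

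I do not anticipate any genuine obstacle here, as the statement is a deterministic algebraic identity for linear recurrences rather than a probabilistic or approximation estimate. The only point requiring care is the index bookkeeping: I must ensure that multiplying by $A_{k+1}$ shifts each inner product's upper limit from $k$ to $k+1$ correctly and that the newly added $B_{k+1}$ matches the final summand under the empty-product convention. The nonnegativity of $A_{k+1}$ is the sole analytic ingredient, since it is what permits the induction hypothesis to be multiplied through without reversing the inequality.
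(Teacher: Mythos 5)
Your proof is correct: the induction on $k$, multiplying the inductive hypothesis through by $A_{k+1}\ge 0$ and absorbing $B_{k+1}$ as the $i=k+1$ summand under the empty-product convention, is exactly the standard argument. The paper states this lemma without proof, so your write-up simply supplies the routine verification it implicitly relies on.
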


\subsubsection{Auxiliary Lemmas for Lemma \ref{lem_bern6}}

\begin{lemma}
\label{lem_B4bern}
For any $t \in [T]$,
\begin{align*}
\left\| \hmvkttd{t} - V^{\pi}_{\diamond}\right\|_{\infty}  \leq \tilde{O}\left( \frac{H^2}{\sqrt{M}} + \gamma^tH\right)
\end{align*}
with probability at least $1-\delta$.
\end{lemma}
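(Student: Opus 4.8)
The plan is a standard approximate-value-iteration argument: peel off one Bellman backup per iteration, separate the finite-iteration (bias) error from the sampling (statistical) error, and control the latter by concentration, exploiting the independence afforded by the disjoint buffers $\mathcal{B}_0,\dots,\mathcal{B}_{T-1}$.

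First I would set up an error recursion. Recall from Algorithm \ref{alg_CVI} that $\hmvkttd{t+1} = \pi \hmqktd{t} = \pi(\diamond + \gamma \hat{P}_t \hmvkttd{t})$, whereas the true value function obeys $V^{\pi}_{\diamond} = (\pi\diamond) + \gamma P_{\pi} V^{\pi}_{\diamond}$. Writing $e_t := \hmvkttd{t} - V^{\pi}_{\diamond}$ and subtracting gives
\begin{align*}
e_{t+1} = \gamma \, \pi (\hat{P}_t - P)\hmvkttd{t} + \gamma P_{\pi} e_t,
\end{align*}
which, unrolled from $e_0 = -V^{\pi}_{\diamond}$ (since $\hmvkttd{0}=\mathbf{0}$), yields
\begin{align*}
e_t = (\gamma P_{\pi})^t e_0 + \sum_{i=0}^{t-1} (\gamma P_{\pi})^{t-1-i} \, \gamma \, \pi (\hat{P}_i - P)\hmvkttd{i}.
\end{align*}

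Next I would bound the two terms separately. For the bias term, since $\diamond \in [0,1]$ we have $\|V^{\pi}_{\diamond}\|_{\infty} \leq H$, and $\|(\gamma P_{\pi})^t\|_{1,\infty} \leq \gamma^t$ because $P_{\pi}$ is row-stochastic; hence $\|(\gamma P_{\pi})^t e_0\|_{\infty} \leq \gamma^t H$, which is the source of the $\gamma^t H$ term. For the statistical term, the crucial observation is that $\hmvkttd{i}$ is measurable with respect to $\mathcal{B}_0,\dots,\mathcal{B}_{i-1}$, while $\hat{P}_i$ is built from the disjoint buffer $\mathcal{B}_i$; thus, conditioned on $\hmvkttd{i}$, the quantity $(\hat{P}_i \hmvkttd{i})(s,a)$ is an average of $M$ i.i.d.\ samples with mean $(P \hmvkttd{i})(s,a)$. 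I would then apply Azuma--Hoeffding (Lemma \ref{lem_Azuma_Hoeffding}) together with the uniform bound $\|\hmvkttd{i}\|_{\infty} \leq H$ from Lemma \ref{lem_rangevq2}, union-bounding over $(s,a) \in \mathcal{S}\times\mathcal{A}$ and $i \in [T]$, to obtain $\|(\hat{P}_i - P)\hmvkttd{i}\|_{\infty} \leq \tilde{O}(H/\sqrt{M})$ for all $i$ simultaneously with probability $1-\delta$. Since $\|\pi u\|_{\infty} \leq \|u\|_{\infty}$ and the geometric weights $\gamma^{t-1-i}$ sum to at most $1/(1-\gamma)=H$, the statistical term is bounded by $\gamma H \cdot \tilde{O}(H/\sqrt{M}) = \tilde{O}(H^2/\sqrt{M})$.

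Combining the two bounds via the triangle inequality yields $\|e_t\|_{\infty} \leq \tilde{O}(H^2/\sqrt{M} + \gamma^t H)$, as claimed. The main obstacle is the concentration step: one must verify that $\hat{P}_i$ draws on a fresh, independent buffer so that, after conditioning on $\hmvkttd{i}$, Hoeffding genuinely applies to independent samples; once this dependence structure is in place, the unrolling and the geometric-series summation are routine. I would also note that the crude $H^2/\sqrt{M}$ statistical bound suffices here, since the sharper variance-aware (Bernstein) analysis is reserved for Lemma \ref{lem_bern6}, of which this lemma is an ingredient.
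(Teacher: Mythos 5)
Your proof is correct and follows essentially the same route as the paper: the paper unrolls the update $\hmvkttd{t} = (\pi\diamond) + \gamma P_{\pi}\hmvkttd{t-1} + \gamma(\hat{P}^{t-1}_{\pi}\hmvkttd{t-1} - P_{\pi}\hmvkttd{t-1})$ by induction, identifies $V^{\pi}_{\diamond}$ up to a geometric tail of size $\gamma^t H$, and bounds the accumulated sampling errors by $2H^2\sqrt{\iota/M}$ via Azuma--Hoeffding and the geometric-series factor $H$ — exactly your decomposition, just written on the iterate rather than on the error $e_t$. Your justification of the concentration step (freshness of $\mathcal{B}_i$ relative to $\hmvkttd{i}$, the uniform bound from Lemma \ref{lem_rangevq2}, and the union bound) matches the paper's martingale argument.
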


\begin{proof} 
\begin{align*}
\hmvkttd{t} &= (\pi \hmqktd{t-1})\\
&= (\pi\diamond) + \gamma \hat{P}^{t-1}_{\pi} \hmvkttd{t-1}\\
&=  (\pi\diamond) +  \gamma P_{\pi} \hmvkttd{t-1} + \gamma (\hat{P}^{t-1}_{\pi} \hmvkttd{t-1} -  P_{\pi} \hmvkttd{t-1})\\
&=  \sum_{i=0}^{t} \gamma^i (P_{\pi})^i ( (\pi  \diamond) + \gamma (\hat{P}^{t-i-1}_{\pi} \hmvkttd{t-i-1} -  P_{\pi} \hmvkttd{t-i-1})  ) \tag{By induction on $t$}\\
&=  \sum_{i=0}^{t} \gamma^i (P_{\pi})^i (\pi  \diamond) \sum_{i=0}^{t} \gamma^{i+1} (P_{\pi})^i (\hat{P}^{t-i-1}_{\pi} \hmvkttd{t-i-1} -  P_{\pi} \hmvkttd{t-i-1}) \\
&= V_{\diamond}^{\pi} - \sum_{i=t+1}^{\infty} \gamma^i (P_{\pi})^i (\pi  \diamond) + \sum_{i=0}^{t} \gamma^{i+1} (P_{\pi})^i (\hat{P}^{t-i-1}_{\pi} \hmvkttd{t-i-1} -  P_{\pi} \hmvkttd{t-i-1}).
\end{align*}
Note that with probability at least $1-\delta$
\begin{align*}
\left\| \sum_{i=0}^{t} \gamma^{i+1} (P_{\pi})^i (\hat{P}^{t-i-1}_{\pi} \hmvkttd{t-i-1} -  P_{\pi} \hmvkttd{t-i-1}) \right\|_{\infty} \leq 2H^2\sqrt{\frac{\iota}{M}}
\end{align*}
by a similar argument as in \cref{eqn_bst1}, and
\begin{align*}
\left\|  \sum_{i=t+1}^{\infty} \gamma^i  (P_{\pi})^i \pi \diamond \right\|_{\infty} \leq \gamma^tH.
\end{align*}
We conclude that 
\begin{align*}
\left\| \hmvkttd{t} - V^{\pi}_{\diamond}\right\|_{\infty}  \leq \tilde{O}\left( \frac{H^2}{\sqrt{M}} + \gamma^tH\right)
\end{align*}
with probability at least $1-\delta$.
\end{proof}

\begin{lemma}
\label{lem_varl}
For any $i \in [T]$, we have
\begin{align*}
\sigma(\hmvkid) \leq \tilde{O}\left( \frac{H^2}{\sqrt{M}}+ \gamma^tH \right) \mathbf{1} + \sigma(V^{\pi}_{\diamond})
\end{align*}
with probability at least $1-\delta$.
\end{lemma}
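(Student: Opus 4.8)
The plan is to mirror exactly the final three steps of the proof of Lemma~\ref{lem_bern4}, exploiting the fact that a uniform ($\ell_\infty$) control on the deviation $\hmvkid - V^{\pi}_{\diamond}$ has already been established in Lemma~\ref{lem_B4bern}. The key structural observation is that the standard-deviation operator $\sigma(\cdot) = \sqrt{\mathrm{Var}(\cdot)}$ is subadditive in the sense of Lemma~\ref{lem_VX_VXY}. Writing $\hmvkid = (\hmvkid - V^{\pi}_{\diamond}) + V^{\pi}_{\diamond}$ and applying this subadditivity gives
\[
\sigma(\hmvkid) \;\leq\; \sigma(\hmvkid - V^{\pi}_{\diamond}) + \sigma(V^{\pi}_{\diamond}),
\]
so the entire task reduces to bounding the first term on the right-hand side.

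To control $\sigma(\hmvkid - V^{\pi}_{\diamond})$, I would next invoke Popoviciu's inequality (Lemma~\ref{lem_Popoviciu}), which bounds the pointwise standard deviation of any function by its pointwise absolute value, yielding $\sigma(\hmvkid - V^{\pi}_{\diamond}) \leq |\hmvkid - V^{\pi}_{\diamond}|$ entrywise. It then remains to apply Lemma~\ref{lem_B4bern} with $t = i$, which provides the high-probability bound $\norminf{\hmvkid - V^{\pi}_{\diamond}} \leq \tilde{O}\!\left( \frac{H^2}{\sqrt{M}} + \gamma^i H\right)$. Substituting this into the chain above and replacing the pointwise deviation by its worst-case value, i.e.\ $|\hmvkid - V^{\pi}_{\diamond}| \leq \norminf{\hmvkid - V^{\pi}_{\diamond}}\mathbf{1}$, directly produces the claimed inequality.

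Since all of the analytical heavy lifting---propagating the martingale-difference concentration errors through the $t$-step backup, and separately controlling the truncation term $\gamma^i H$ and the statistical term $H^2/\sqrt{M}$---was already carried out in establishing Lemma~\ref{lem_B4bern}, I do not expect a substantive obstacle here beyond bookkeeping. The one point requiring care is consistency of the failure probability: the deviation bound from Lemma~\ref{lem_B4bern} holds with probability at least $1-\delta$, while the variance decomposition (Lemma~\ref{lem_VX_VXY}) and Popoviciu's inequality (Lemma~\ref{lem_Popoviciu}) are deterministic, so the final statement inherits the same $1-\delta$ guarantee without any additional union bound. (I also note that the index $t$ appearing in the stated $\gamma^t H$ term should read $\gamma^i H$, matching the instantiation $t=i$ of Lemma~\ref{lem_B4bern}.)
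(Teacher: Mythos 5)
Your proposal is correct and follows exactly the same route as the paper's proof: decompose via the subadditivity of $\sigma$ (Lemma~\ref{lem_VX_VXY}), bound $\sigma(\hmvkid - V^{\pi}_{\diamond})$ pointwise by $|\hmvkid - V^{\pi}_{\diamond}|$ via Popoviciu's inequality (Lemma~\ref{lem_Popoviciu}), and then invoke the $\ell_\infty$ deviation bound of Lemma~\ref{lem_B4bern}. Your remark about the index mismatch ($\gamma^t H$ versus $\gamma^i H$) is also a fair observation, as the paper's own statement and proof carry the same notational slip.
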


\begin{proof}
We have
\begin{align*}
\sigma(\hmvkid) &\leq \sigma(V^{\pi}_{\diamond}-\hmvkid) + \sigma(V^{\pi}_{\diamond}) \tag{By Lemma \ref{lem_VX_VXY}} \\
&\leq |V^{\pi}_{\diamond}-\hmvkid| + \sigma(V^{\pi}_{\diamond}) \tag{By Lemma \ref{lem_Popoviciu}}\\
&\leq \tilde{O}\left( \frac{H^2}{\sqrt{M}} + \gamma^tH \right) \mathbf{1} + \sigma(V^{\pi}_{\diamond}) \tag{By Lemma \ref{lem_B4bern}}
\end{align*}
with probability at least $1-\delta$.
\end{proof}

\subsection{Proof of Corollary \ref{thm_main2CT}}
\label{appendix_maincor2T}

\MainresultLCT*
 
\begin{proof}
By Lemma \ref{lem_mdvi} and Lemma \ref{lem_concen}, the sample complexity required to ensure $f(\mathcal{B}) \leq O(\epsilon)$ is $TM|\mathcal{C}| = \tilde{O}\left( \frac{|\mathcal{S}||\mathcal{A}| H^3}{\epsilon^2} \right)$. Therefore, the guarantee for the relaxed feasibility setting follows directly from our meta-theorem (Theorem~\ref{thm_main2}). For the strict feasibility setting, we rescale $\epsilon$ by a factor of $O(\zeta(1 - \gamma))$. Since $\epsilon \leq 1$ and $1-\gamma \leq 1$, the condition of $f(\mathcal{B}) \leq \zeta/6$ in Theorem~\ref{thm_main2} can be satisfied. The rescaling increases the sample complexity by a multiplicative factor of $\frac{1}{\zeta^2 (1 - \gamma)^2}$, thereby completing the proof. 
\end{proof}

\subsection{Instantiating the \texttt{MDP-Solver}: Model-based algorithm~\citep{li2020breaking}}
\label{app:model-based}

Instead of using \texttt{MDVI-Tabular}, the tabular \texttt{MDP-Solver} subroutine in Algorithm~\ref{alg_CMDPL} can be instantiated with any model-based method that computes an optimal policy with respect to the estimated model. In this subsection, we adapt the framework analyzed in \citep{li2020breaking} to show that, when combined with our overall framework, certain model-based \texttt{MDP-Solver} algorithms can recover the near-optimal sample complexity for solving tabular constrained MDPs.  

Since we are using model-based methods, we denote $\hat{P}$ as the probability transition kernel form by 
\begin{align*}
\forall s' \in \mathcal{S}, \quad \widehat{P}(s' \mid s, a) = \frac{1}{N} \sum_{i=1}^N \mathbbm{1} \{s_{s,a}^i = s'\}
\end{align*}
where $(s_{s,a}^i)_{i=1}^N$ are the next-state samples from $\mathcal{B}=\texttt{DataCollection}(\mathsf{Gen}, \mathcal{S}\times \mathcal{A}, N)$.
Denote the perturbed reward by
\begin{align*}
r_{\mathrm{p}}(s,a)=r(s,a)+\zeta(s,a), \quad \zeta(s,a) \sim \text{Unif}(0,\xi)
\end{align*}
where $\text{Unif}(0,\xi)$ denotes the uniform distribution.
For any policy $\pi$, denote $\hat{V}_{\mathrm{p}}^{\pi}$ the corresponding value function of the perturbed empirical MDP $\widehat{\mathcal{M}}_{\mathrm{p}} = (\mathcal{S},\mathcal{A}, \hat{P}, r_{\mathrm{p}}, \gamma)$. Denote $\hat{\pi}_p^*$ the optimal policy w.r.t. $\widehat{\mathcal{M}}_{\mathrm{p}}$ (i.e. $\hat{\pi}_p^* = \argmax_{\pi} \hat{V}^{\pi}_{\mathrm{p}}$). Their main result is stated as follows.

\begin{theorem}[Theorem 1 in \citep{li2020breaking}]
\textit{There exist some universal constants $c_0, c_1 > 0$ such that: for any $\delta > 0$ and any $0 < \varepsilon \leq \frac{1}{1 - \gamma}$, the policy $\hat{\pi}^*_{\textrm{p}}$ defined in (9) obeys}
\[
\forall(s,a) \in \mathcal{S} \times \mathcal{A}, \quad V^{\hat{\pi}^*_{\textrm{p}}}(s) \geq V^*(s) - \varepsilon \quad \text{and} \quad Q^{\hat{\pi}^*_{\textrm{p}}}(s,a) \geq Q^*(s,a) - \gamma \varepsilon, \tag{11}
\]
\textit{with probability at least $1 - \delta$, provided that the perturbation size is $\xi = \frac{c_1 (1 - \gamma)\varepsilon}{|\mathcal{S}|^5 |\mathcal{A}|^5}$ and that the sample size per state-action pair exceeds}
\[
N \geq \frac{c_0 \log \left( \frac{|\mathcal{S}||\mathcal{A}|}{(1 - \gamma)\varepsilon \delta} \right)}{(1 - \gamma)^3 \varepsilon^2}. \tag{12}
\]
\textit{In addition, both the empirical QVI and PI algorithms w.r.t. $\widehat{\mathcal{M}}_{\textrm{p}}$ (cf.~\citep{azar2013minimax}, Algorithms 1-2) are able to recover $\hat{\pi}^*_{\textrm{p}}$ perfectly within $\mathcal{O}\left( \frac{1}{1 - \gamma} \log \left( \frac{|\mathcal{S}||\mathcal{A}|}{(1 - \gamma)\varepsilon \delta} \right) \right)$ iterations.}
\end{theorem}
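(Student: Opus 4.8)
The plan is to reproduce the model-based (plug-in) argument of \citet{li2020breaking}: form the empirical kernel $\hat{P}$ from the $N$ samples per state-action pair, solve the perturbed empirical MDP $\widehat{\mathcal{M}}_{\mathrm{p}}$ exactly to obtain $\hat{\pi}^*_{\mathrm{p}}$, and then bound the gap between $V^{\hat{\pi}^*_{\mathrm{p}}}$ (the value of $\hat{\pi}^*_{\mathrm{p}}$ in the \emph{true} MDP) and $V^*$. First I would record two preliminary facts. (a) Standard Bernstein concentration controls $\hat{P}(\cdot\mid s,a)$ around $P(\cdot\mid s,a)$ with the sample budget in (12). (b) Since the reward perturbation has magnitude $\xi$, replacing $r$ by $r_{\mathrm{p}}$ shifts every value function by at most $\xi/(1-\gamma)$; because $\xi$ is chosen polynomially tiny in $1/(|\mathcal{S}||\mathcal{A}|)$, this shift is dominated by $\varepsilon$ and plays no role in the final bound. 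Its \emph{only} purpose is to make the optimal policy of $\widehat{\mathcal{M}}_{\mathrm{p}}$ \emph{unique}, with a strictly positive action gap, with high probability, a property I exploit below.

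Next I would decompose, for each $s$,
\[
V^*(s) - V^{\hat{\pi}^*_{\mathrm{p}}}(s)
= \big(V^*(s) - \hat{V}^*_{\mathrm{p}}(s)\big)
+ \big(\hat{V}^{\hat{\pi}^*_{\mathrm{p}}}_{\mathrm{p}}(s) - V^{\hat{\pi}^*_{\mathrm{p}}}(s)\big),
\]
using $\hat{V}^*_{\mathrm{p}}=\hat{V}^{\hat{\pi}^*_{\mathrm{p}}}_{\mathrm{p}}$. Both summands have the shape $\hat{V}^{\pi}-V^{\pi}$, so the core engine is the value-difference identity $\hat{V}^{\pi}-V^{\pi}=\gamma(I-\gamma\hat{P}_{\pi})^{-1}(\hat{P}_{\pi}-P_{\pi})V^{\pi}$. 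When $\pi$ (and hence $V^{\pi}$) is \emph{data-independent}, $(\hat{P}_{\pi}-P_{\pi})V^{\pi}$ concentrates by Bernstein at entrywise rate $\sqrt{\mathrm{Var}_{P}(V^{\pi})/N}$, and the total-variance (law-of-total-variance) technique collapses $\norminf{(I-\gamma\hat{P}_{\pi})^{-1}\sigma(V^{\pi})}$ to an $O(1/(1-\gamma)^{3/2})$ factor via Cauchy--Schwarz and the Bellman recursion for the variance. This is precisely the step that turns the naive $1/(1-\gamma)^{4}$ rate into the optimal $1/(1-\gamma)^{3}$ and produces the $\varepsilon$-dependence claimed in (11).

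The genuine obstacle is that neither summand above uses a data-independent object: $\hat{V}^*_{\mathrm{p}}$ is built from the same samples as $\hat{P}$, and $\hat{\pi}^*_{\mathrm{p}}$ is itself data-dependent, so $(\hat{P}-P)\hat{V}^*_{\mathrm{p}}$ is statistically coupled and a crude union bound reintroduces an $|\mathcal{S}|$-sized burn-in. The resolution --- and the step I expect to be hardest --- is a leave-one-out decoupling argument. For each state $s$ I would build an auxiliary MDP $\widehat{\mathcal{M}}^{(s)}_{\mathrm{p}}$ that agrees with $\widehat{\mathcal{M}}_{\mathrm{p}}$ everywhere except that the dynamics at $s$ are replaced by an independent (absorbing) surrogate, so that its optimal value $\hat{V}^{(s)}$ is \emph{independent} of $\hat{P}(\cdot\mid s,a)$. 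Then I (i) apply Bernstein to the now-decoupled $(\hat{P}(\cdot\mid s,a)-P(\cdot\mid s,a))\hat{V}^{(s)}$, and (ii) argue $\hat{V}^{(s)}$ is uniformly close to $\hat{V}^*_{\mathrm{p}}$. Part (ii) is exactly where the perturbation earns its keep: with $\hat{\pi}^*_{\mathrm{p}}$ the \emph{unique} optimizer carrying a positive action gap, modifying a single state cannot flip the optimal policy, so the leave-one-out and empirical value functions agree up to a negligible error. Feeding the decoupled Bernstein bound back through the total-variance collapse controls $\norminf{V^*-\hat{V}^*_{\mathrm{p}}}$, and the same machinery applied to $\hat{\pi}^*_{\mathrm{p}}$ controls the second summand.

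Finally, the stated $Q$-bound follows from the $V$-bound and the Bellman identity $Q^*(s,a)-Q^{\hat{\pi}^*_{\mathrm{p}}}(s,a)=\gamma\,P(\cdot\mid s,a)^{\top}\big(V^*-V^{\hat{\pi}^*_{\mathrm{p}}}\big)\le\gamma\varepsilon$, since $V^*-V^{\hat{\pi}^*_{\mathrm{p}}}\le\varepsilon\mathbf{1}$ and $P(\cdot\mid s,a)$ is a probability vector. For the ``in addition'' claim I would invoke the $\gamma$-contraction of the empirical Bellman optimality operator: empirical QVI/PI converge geometrically toward $\hat{V}^*_{\mathrm{p}}$, and once an iterate lies within the perturbation-induced positive optimality gap, its greedy policy coincides with $\hat{\pi}^*_{\mathrm{p}}$ exactly; solving $\gamma^{t}/(1-\gamma)\le\text{gap}$ yields the $O\big(\tfrac{1}{1-\gamma}\log\big(\tfrac{|\mathcal{S}||\mathcal{A}|}{(1-\gamma)\varepsilon\delta}\big)\big)$ iteration count. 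Assembling the two summands under the budget (12) and the choice of $\xi$ then gives both inequalities of (11) with probability $1-\delta$.
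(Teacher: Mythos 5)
This statement is not proved in the paper at all: it is Theorem~1 of \citet{li2020breaking}, quoted verbatim in \cref{app:model-based} and used as a black box to instantiate the \texttt{MDP-Solver}. There is therefore no in-paper proof to compare your attempt against; the only meaningful comparison is with the argument in the cited source.

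Measured against that source, your sketch is a faithful reconstruction of the actual proof architecture: the two-term decomposition through $\hat{V}^{*}_{\mathrm{p}}=\hat{V}^{\hat{\pi}^{*}_{\mathrm{p}}}_{\mathrm{p}}$, the value-difference identity plus Bernstein and the total-variance collapse to obtain the $1/(1-\gamma)^{3}$ rate, the leave-one-out construction to decouple $\hat{P}(\cdot\mid s,a)$ from the data-dependent $\hat{V}^{*}_{\mathrm{p}}$, the role of the reward perturbation in creating a strictly positive optimality gap so that the leave-one-out and full empirical MDPs share the same optimal policy, the derivation of the $Q$-bound from the $V$-bound via one Bellman step, and the geometric-convergence-plus-gap argument for the exact recovery claim. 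You also correctly identify the decoupling step as the crux. The only caveats are at the level of detail one would expect from a sketch: in \citet{li2020breaking} the leave-one-out MDPs carry a free absorbing-reward parameter and a union bound over an $\varepsilon$-net of that parameter is needed to get uniformity over states, and extending the guarantee to the full range $0<\varepsilon\le 1/(1-\gamma)$ (no sample-size burn-in) requires some additional care in how the variance terms are recursed. Neither omission is a conceptual gap; your plan is the right plan, it is simply not the paper's plan because the paper has none for this statement.
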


Therefore, let $\mathcal{B} = \texttt{DataCollection}(\mathsf{Gen}, \mathcal{S} \times \mathcal{A}, N)$. Then, by instantiating \texttt{MDP-Solver}$(r + \lambda_k c, \mathcal{B}, \phi)$ with any model-based algorithm that returns an optimal policy with respect to the perturbed empirical MDP constructed from $\mathcal{B}$, Assumption~\ref{asmp_b1} can be satisfied with $f_{\mathrm{mdp}}(\mathcal{B}) = O(\epsilon)$. As a consequence, we recover the near-optimal sample complexity bounds for solving tabular constrained MDPs via our meta-theorem (Theorem~\ref{thm_main2}). Furthermore, the limited rage of $\epsilon$ (i.e. $(0,1/H^2]$) in Corollary \ref{thm_main2CT} will be improved to a full range (i.e. $(0,H]$).

\section{Supporting Lemmas}
\label{Appendix_suplem}

\subsection{Concentration Inequalities}

The following lemma is used throughout the paper. In the linear setting, we take $\mathcal{C}$ to be the core set and set $R = (1 + \lambda_k)H$. In the tabular setting, we let $\mathcal{C} = \mathcal{S} \times \mathcal{A}$ and set $R = H$.

\begin{lemma}
\label{lem_concentrate} 
Let $\hat{V}^i$ be an empirical value function with entries bounded in $[0, R]$, and let $\mathcal{C} \subseteq \mathcal{S} \times \mathcal{A}$. Then, for any $t \in {1, \ldots, T}$, the following holds:
\begin{align*}
\mathbb{P}\left( \exists (s,a) \in \mathcal{C} \ \text{ s.t. } \ \frac{1}{t}\sum_{i=0}^{t-1}[(\hat{P}_i \hat{V}^{i})(s,a) - ({P}\hat{V}^{i})(s,a)] \geq 2R \sqrt{{\log(2|\mathcal{C}|/\delta)}/{tM}}\right) \leq \delta
\end{align*}
\end{lemma}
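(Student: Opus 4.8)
The plan is to recognize the inner average as a normalized sum of $tM$ bounded martingale differences and then apply the Azuma--Hoeffding inequality followed by a union bound over $\mathcal{C}$. Fix a pair $(s,a) \in \mathcal{C}$. Recalling that $(\hat{P}_i \hat{V}^i)(s,a) = \frac{1}{M}\sum_{m=1}^M \hat{V}^i(s'_{i,m})$, where $s'_{i,m}$ denotes the $m$-th next-state sample drawn for $(s,a)$ from buffer $\mathcal{B}_i$, I would write
\begin{align*}
\frac{1}{t}\sum_{i=0}^{t-1}\big[(\hat{P}_i \hat{V}^i)(s,a) - (P\hat{V}^i)(s,a)\big] = \sum_{i=0}^{t-1}\sum_{m=1}^M X_{i,m}, \quad X_{i,m} := \tfrac{1}{tM}\big[\hat{V}^i(s'_{i,m}) - (P\hat{V}^i)(s,a)\big].
\end{align*}
Since $\hat{V}^i$ has entries in $[0,R]$, both $\hat{V}^i(s'_{i,m})$ and $(P\hat{V}^i)(s,a)$ lie in $[0,R]$, so each summand obeys $|X_{i,m}| \le R/(tM)$.

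The key step, and the main obstacle, is verifying that $\{X_{i,m}\}$ forms a martingale-difference sequence with respect to an appropriate filtration. Let $\mathcal{F}_{i,m}$ be the $\sigma$-algebra generated by all samples in buffers $\mathcal{B}_0,\dots,\mathcal{B}_{i-1}$ together with the first $m$ next-state samples drawn at iteration $i$. The crucial observation is that $\hat{V}^i$ is $\mathcal{F}_{i,0}$-measurable: unrolling the recursion in Algorithm \ref{alg_MDVICLMDP} (and analogously for the other value-iteration routines), $\hat{V}^i$ is assembled from $\theta^0,\dots,\theta^i$, and each $\theta^j$ is a regression on $\hat{Q}^j$, which depends only on $\hat{V}^{j-1}$ and the samples in $\mathcal{B}_{j-1}$; hence by induction $\hat{V}^i$ depends only on $\mathcal{B}_0,\dots,\mathcal{B}_{i-1}$. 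Consequently, conditioned on $\mathcal{F}_{i,m-1}$, the function $\hat{V}^i$ is fixed while $s'_{i,m}\sim P(\cdot\mid s,a)$ is an independent fresh draw, so $\mathbb{E}[\hat{V}^i(s'_{i,m})\mid \mathcal{F}_{i,m-1}] = (P\hat{V}^i)(s,a)$ and therefore $\mathbb{E}[X_{i,m}\mid \mathcal{F}_{i,m-1}] = 0$. Getting this adaptedness bookkeeping exactly right, so that $\hat{P}_i$ consumes fresh data not already baked into $\hat{V}^i$ via the index conventions, is the only delicate part; everything else is mechanical.

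Given the martingale structure, I would invoke the Azuma--Hoeffding inequality for bounded differences. Since $\sum_{i,m} (R/(tM))^2 = R^2/(tM)$, this yields
\begin{align*}
\mathbb{P}\left(\sum_{i,m} X_{i,m} \ge \lambda\right) \le \exp\left(-\frac{\lambda^2 tM}{2R^2}\right).
\end{align*}
Choosing $\lambda = 2R\sqrt{\log(2|\mathcal{C}|/\delta)/(tM)}$ makes the right-hand side at most $(\delta/(2|\mathcal{C}|))^2 \le \delta/|\mathcal{C}|$, using $\delta \le 1$ and $|\mathcal{C}|\ge 1$. A union bound over the at most $|\mathcal{C}|$ pairs in $\mathcal{C}$ then gives the claimed one-sided deviation with total failure probability at most $\delta$, completing the argument; the factor $2$ in front of $R$ and the $2$ inside the logarithm supply ample slack to absorb the union bound.
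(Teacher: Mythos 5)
Your proposal is correct and follows essentially the same route as the paper's proof: identify the average as a sum of $tM$ martingale differences bounded by $R/(tM)$ with respect to the filtration generated by the disjoint buffers, apply Azuma--Hoeffding, and take a union bound over $\mathcal{C}$. The adaptedness argument you flag as the delicate step (that $\hat{V}^i$ is measurable with respect to the buffers $\mathcal{B}_0,\dots,\mathcal{B}_{i-1}$ and hence fixed when the fresh samples in $\mathcal{B}_i$ are drawn) is exactly what the paper relies on implicitly, and your verification of it is sound.
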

\begin{proof}
Consider a fixed $t \in \{1,\cdots, T\}$ and $(s,a) \in \mathcal{C}$. Denote $y_{t,m,s,a}$ as the $m'$th next-state sample we collect for state-action pair $(s,a)$ at iteration $t$. Since
\begin{align*}
\frac{1}{t}\sum_{i=0}^{t-1} \left[ (\hat{P}_i \hat{V}^{i})(s,a)  -  ({P} \hat{V}^{i})(s,a) \right] &= \frac{1}{t}\sum_{i=0}^{t-1} \frac{1}{M}\sum_{m=1}^{M} \left[ \hat{V}^{i}(y_{t,m,s,a})  -  ({P} \hat{V}^{i})(s,a) \right]\\    
& = \frac{1}{t}\sum_{i=0}^{t-1} \frac{1}{M}\sum_{m=1}^{M}  \left[ \hat{V}^{i}(y_{t,m,s,a})  -  ({P} \hat{V}^{i})(s,a) \right]
\end{align*}
is a sum of bounded martingale differences with respect to the filtration $(\mathcal{F}_{i,m})_{i=0,m=1}^{t-1,M}$. Thus, using the Azuma-Hoeffding inequality (Lemma \ref{lem_Azuma_Hoeffding}),
\begin{align*}\
\mathbb{P}\left( \frac{1}{t}\sum_{i=0}^{t-1} \frac{1}{M}\sum_{m=1}^{M}  \left[ \hat{V}^{i}(y_{t,m,s,a})  -  ({P} \hat{V}^{i})(s,a) \right] \geq 2R \sqrt{\frac{\log(2|\mathcal{C}|/\delta)}{tM}} \right) \leq \frac{\delta}{|\mathcal{C}|}.
\end{align*}
Taking the union bound over $(s,a) \in \mathcal{C} $
\begin{align*}
&\mathbb{P}\left( \max_{(s,a)\in \mathcal{C}} \frac{1}{t}\sum_{i=0}^{t-1} \frac{1}{M}\sum_{m=1}^{M} \left[ \hat{V}^{i}(y_{t,m,s,a})  -  ({P} \hat{V}^{i})(s,a) \right]  \leq 2R \sqrt{\frac{\log(2|\mathcal{C}|/\delta)}{tM}}\right)\\
&\geq 1- \sum_{(s,a)\in \mathcal{C}} \mathbb{P}\left( \frac{1}{t}\sum_{i=0}^{t-1} \frac{1}{M}\sum_{m=1}^{M} \left[ \hat{V}^{i}(y_{t,m,s,a})  -  ({P} \hat{V}^{i})(s,a) \right] \geq 2R \sqrt{\frac{\log(2|\mathcal{C}|/\delta)}{tM}}\right) \\
&\geq 1-\delta,
\end{align*}
which implies the desired result.
\end{proof}

\begin{lemma}[Azuma-Hoeffding Inequality]
\label{lem_Azuma_Hoeffding}
Consider a real-valued stochastic process $(X_n)_{n=1}^N$ adapted to a
filtration $(\mathcal{F}_n)_{n=1}^N$. Assume that $X_n \in [l_n, u_n]$ and $\mathbb{E}_n[X_n]=0$ almost surely, for all $n$. Then,
\begin{align*}
\mathbb{P}\left( \sum_{n=1}^N X_n \geq \sqrt{\sum_{n=1}^N \frac{(u_n - l_n)^2}{2} \log \frac{1}{\delta}} \right) \leq \delta
\end{align*}
for any $\delta \in (0,1)$.
\end{lemma}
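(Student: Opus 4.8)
The plan is to establish the tail bound by the classical exponential-moment (Chernoff) method, reducing the deviation probability for the martingale sum $S_N := \sum_{n=1}^N X_n$ to a bound on its moment generating function, which is then controlled one increment at a time using Hoeffding's lemma. Throughout I interpret $\mathbb{E}_n[\cdot]$ as the conditional expectation given the past $\mathcal{F}_{n-1}$, so that the hypothesis $\mathbb{E}_n[X_n]=0$ makes $(X_n)$ a bounded martingale difference sequence.

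First I would fix an arbitrary $\lambda > 0$ and apply Markov's inequality to the nonnegative random variable $\exp(\lambda S_N)$, giving $\mathbb{P}(S_N \geq t) \leq e^{-\lambda t}\,\mathbb{E}[\exp(\lambda S_N)]$ for every $t > 0$. The heart of the argument is to bound the moment generating function $\mathbb{E}[\exp(\lambda S_N)]$. Peeling off the last increment and using that $S_{N-1}$ is $\mathcal{F}_{N-1}$-measurable, I would write $\mathbb{E}[\exp(\lambda S_N)] = \mathbb{E}\big[\exp(\lambda S_{N-1})\,\mathbb{E}_N[\exp(\lambda X_N)]\big]$. Here I would invoke Hoeffding's lemma: since $X_n \in [l_n,u_n]$ and $\mathbb{E}_n[X_n]=0$ almost surely, the conditional MGF obeys $\mathbb{E}_n[\exp(\lambda X_n)] \leq \exp\!\big(\lambda^2 (u_n-l_n)^2/8\big)$, a deterministic (a.s.) bound that can be pulled outside the outer expectation. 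Substituting for $n=N$ and iterating the conditioning down to $n=1$ yields the product bound $\mathbb{E}[\exp(\lambda S_N)] \leq \exp\!\big(\tfrac{\lambda^2}{8}\sum_{n=1}^N (u_n-l_n)^2\big)$.

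Combining the two steps gives $\mathbb{P}(S_N \geq t) \leq \exp\!\big(-\lambda t + \tfrac{\lambda^2}{8}\sum_{n=1}^N (u_n-l_n)^2\big)$. I would then optimize the free parameter by setting $\lambda = 4t / \sum_{n=1}^N (u_n-l_n)^2$, which minimizes the exponent and produces the sub-Gaussian tail $\mathbb{P}(S_N \geq t) \leq \exp\!\big(-2t^2 / \sum_{n=1}^N (u_n-l_n)^2\big)$. Finally, to match the stated form I would set the right-hand side equal to $\delta$ and solve for the deviation level, obtaining exactly $t = \sqrt{\tfrac{1}{2}\sum_{n=1}^N (u_n-l_n)^2 \,\log(1/\delta)}$; substituting this $t$ reproduces the claimed inequality.

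The routine parts are the Markov step, the tower-property factorization, and the $\lambda$-optimization. The one nontrivial ingredient, and the main obstacle if it is not taken as known, is Hoeffding's lemma itself, i.e.\ the sub-Gaussian MGF bound for a bounded, conditionally mean-zero increment. Its proof needs a short convexity argument: bound $e^{\lambda x}$ on $[l_n,u_n]$ by its chord, reduce to analyzing the cumulant generating function of the resulting two-point distribution, and apply a second-order Taylor expansion to extract the factor $(u_n-l_n)^2/8$. I would either cite this standard lemma or include it as a one-paragraph sub-argument; everything else then follows mechanically.
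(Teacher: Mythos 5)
Your proposal is correct: the Chernoff bound combined with the tower property and Hoeffding's lemma, followed by optimizing $\lambda = 4t/\sum_{n}(u_n-l_n)^2$ and inverting $\exp(-2t^2/\sum_n(u_n-l_n)^2)=\delta$, yields exactly the stated threshold $t=\sqrt{\tfrac{1}{2}\sum_n (u_n-l_n)^2\log(1/\delta)}$. The paper states this lemma without proof as a standard supporting result, so there is no in-paper argument to compare against; your derivation is the canonical one and is complete (noting only that pulling the conditional MGF bound outside the expectation relies on $l_n,u_n$ being deterministic constants, as the statement assumes).
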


\begin{lemma}[Bernstein's Inequality]
\label{lem_Bernstein}
Consider a real-valued stochastic process $(X_n)_{n=1}^N$ adapted to a
filtration $(\mathcal{F}_n)_{n=1}^N$.Suppose that $X_{n} \leq U$ and $\mathbb{E}_{n}\left[X_{n}\right]=0$ almost surely, for all $n$. Then, letting $Z^{\prime}:=\sum_{n=1}^{N} \mathbb{E}_{n}\left[X_{n}^{2}\right]$,
$$
\mathbb{P}\left(\sum_{n=1}^{N} X_{n} \geq \frac{2 U}{3} \log \frac{1}{\delta}+\sqrt{2 Z \log \frac{1}{\delta}} \text { and } Z^{\prime} \leq Z\right) \leq \delta
$$
for any $Z \in[0, \infty)$ and $\delta \in(0,1)$.
\end{lemma}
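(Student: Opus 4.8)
The statement is the standard martingale (Freedman--Bernstein) tail bound, so the plan is to run the exponential-supermartingale / Chernoff argument and then tune the free parameter. First I would record the one-dimensional moment-generating-function estimate: for a mean-zero random variable $X$ with $X \le U$ and any $\lambda \ge 0$, the map $x \mapsto (e^{\lambda x}-1-\lambda x)/x^2$ is nondecreasing, hence $e^{\lambda x}-1-\lambda x \le g(\lambda)\,x^2$ on $\{x\le U\}$ with $g(\lambda):=(e^{\lambda U}-1-\lambda U)/U^2$. Taking expectations and using $\mathbb{E}[X]=0$ gives $\mathbb{E}[e^{\lambda X}] \le 1+g(\lambda)\mathbb{E}[X^2]\le \exp\!\big(g(\lambda)\,\mathbb{E}[X^2]\big)$.

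The next step is to build the supermartingale. Writing $S_n:=\sum_{k=1}^n X_k$ and $V_n:=\sum_{k=1}^n \mathbb{E}_k[X_k^2]$, I would set $M_n:=\exp(\lambda S_n - g(\lambda)V_n)$ with $M_0=1$. Applying the conditional form of the above MGF bound to the increment $X_n$ (legitimate since $X_n\le U$ and $\mathbb{E}_n[X_n]=0$ almost surely) shows $\mathbb{E}[M_n\mid \mathcal{F}_{n-1}]\le M_{n-1}$, so $(M_n)$ is a nonnegative supermartingale and $\mathbb{E}[M_N]\le 1$.

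The key step is to intersect this with the prescribed deterministic bound $Z$ on the predictable variation. On the event $\mathcal{E}:=\{S_N\ge t\}\cap\{V_N\le Z\}$, since $g(\lambda)\ge 0$ we have $-g(\lambda)V_N \ge -g(\lambda)Z$, so $M_N \ge \exp(\lambda t - g(\lambda)Z)$. Markov's inequality applied to $M_N$ then yields $\mathbb{P}(\mathcal{E}) \le \exp(-\lambda t + g(\lambda)Z)\,\mathbb{E}[M_N] \le \exp(-\lambda t + g(\lambda)Z)$. Note that $Z$ being deterministic is exactly what lets the random $V_N$ be replaced by a constant inside the exponent.

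Finally I would optimize over $\lambda$. Using the convexity estimate $g(\lambda)\le \frac{\lambda^2}{2\,(1-\lambda U/3)}$ valid for $0\le \lambda < 3/U$, the exponent is bounded by $-\lambda t + \frac{\lambda^2 Z}{2(1-\lambda U/3)}$; with the target threshold $t=\frac{2U}{3}\log\frac1\delta + \sqrt{2Z\log\frac1\delta}$ and the usual Bernstein choice of $\lambda$, this is at most $-\log(1/\delta)$, giving $\mathbb{P}(\mathcal{E})\le\delta$. The main obstacle is the bookkeeping in this last step: verifying that the exact two-term threshold (the $\frac{2U}{3}\log\frac1\delta$ and $\sqrt{2Z\log\frac1\delta}$ pieces) falls out of the linear-plus-quadratic exponent, and checking that the optimizing $\lambda$ remains in the admissible range $[0,3/U)$. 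The supermartingale construction and its intersection with the event $\{V_N\le Z\}$ are routine once the conditional MGF bound is established.
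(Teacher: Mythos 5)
Your proof is correct. The paper states this lemma as a standard imported result (it is the one-sided Freedman--Bernstein martingale inequality, used e.g.\ in \citet{kozuno2022kl}) and gives no proof of its own, so there is nothing to compare against; your exponential-supermartingale argument --- the conditional MGF bound $\mathbb{E}[e^{\lambda X}]\le \exp(g(\lambda)\mathbb{E}[X^2])$ with $g(\lambda)=(e^{\lambda U}-1-\lambda U)/U^2$, the supermartingale $M_n=\exp(\lambda S_n-g(\lambda)V_n)$, intersection with the deterministic event $\{V_N\le Z\}$ before applying Markov, and the choice $\lambda=t/(Z+Ut/3)$ combined with $\sqrt{a+b}\le\sqrt{a}+\sqrt{b}$ to recover the threshold $\tfrac{2U}{3}\log\tfrac1\delta+\sqrt{2Z\log\tfrac1\delta}$ --- is exactly the standard and complete proof. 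The only cosmetic caveat is the degenerate case $Z=0$, where your optimizing $\lambda$ hits the boundary $3/U$; there $Z'\le Z$ forces $S_N=0$ almost surely, so the bound holds trivially.
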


\begin{lemma}[Conditional Bernstein's Inequality]
\label{lem_cond_Bernstein}
Consider the same notations and assumptions in Lemma \ref{lem_Bernstein}. Furthermore, let $\mathcal{E}$ be an event that implies $Z^{\prime} \leq Z$ for some $Z \in[0, \infty)$ with $\mathbb{P}(\mathcal{E}) \geq 1-\delta^{\prime}$ for some $\delta^{\prime} \in(0,1)$. Then,
$$
\mathbb{P}\left(\left.\sum_{n=1}^{N} X_{n} \geq \frac{2 U}{3} \log \frac{1}{\delta\left(1-\delta^{\prime}\right)}+\sqrt{2 Z \log \frac{1}{\delta\left(1-\delta^{\prime}\right)}} \right\rvert\, \mathcal{E}\right) \leq \delta
$$
for any $\delta \in(0,1)$.
\end{lemma}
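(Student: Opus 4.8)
The plan is to derive the conditional bound directly from the unconditional Bernstein inequality (Lemma~\ref{lem_Bernstein}) by a straightforward conditioning argument. Write $S := \sum_{n=1}^{N} X_n$ and, for a confidence level $p \in (0,1)$, abbreviate the Bernstein threshold as $\tau(p) := \frac{2U}{3}\log\frac{1}{p} + \sqrt{2Z\log\frac{1}{p}}$, so that the statement to be shown is exactly $\mathbb{P}(S \geq \tau(\delta(1-\delta')) \mid \mathcal{E}) \leq \delta$. Since the unconditional inequality already isolates the Bernstein tail, the entire task is to transfer that guarantee through the conditioning on $\mathcal{E}$.

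First I would expand the conditional probability via its definition, $\mathbb{P}(S \geq \tau(\delta(1-\delta')) \mid \mathcal{E}) = \mathbb{P}(\{S \geq \tau(\delta(1-\delta'))\} \cap \mathcal{E}) / \mathbb{P}(\mathcal{E})$. The crucial observation is that, by hypothesis, $\mathcal{E}$ implies $Z' \leq Z$, i.e. $\mathcal{E} \subseteq \{Z' \leq Z\}$ as events; hence $\{S \geq t\} \cap \mathcal{E} \subseteq \{S \geq t\} \cap \{Z' \leq Z\}$ for every threshold $t$. This inclusion replaces the opaque event $\mathcal{E}$ in the numerator by the event $\{Z' \leq Z\}$, which is precisely the event appearing in the conjunction that Lemma~\ref{lem_Bernstein} controls.

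Next I would invoke the unconditional inequality with confidence parameter $\delta(1-\delta')$ in place of $\delta$, using the same $Z$ furnished by the event $\mathcal{E}$. Since Lemma~\ref{lem_Bernstein} gives $\mathbb{P}(\{S \geq \tau(p)\} \cap \{Z' \leq Z\}) \leq p$ for every $p \in (0,1)$, the choice $p = \delta(1-\delta')$ bounds the numerator by $\delta(1-\delta')$. Combining this with $\mathbb{P}(\mathcal{E}) \geq 1-\delta'$ in the denominator yields $\mathbb{P}(S \geq \tau(\delta(1-\delta')) \mid \mathcal{E}) \leq \delta(1-\delta')/(1-\delta') = \delta$, which is exactly the desired conclusion.

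There is no substantive analytic obstacle here; the whole content is bookkeeping that aligns the ``$Z' \leq Z$'' conjunction built into Lemma~\ref{lem_Bernstein} with the conditioning event $\mathcal{E}$. The single point requiring care is that the unconditional inequality controls the \emph{joint} event $\{S \geq \tau\} \cap \{Z' \leq Z\}$ rather than $\{S \geq \tau\}$ alone: it is this joint form that makes the set inclusion $\mathcal{E} \subseteq \{Z' \leq Z\}$ usable, and it is why the confidence level must be deflated from $\delta$ to $\delta(1-\delta')$ so as to absorb the $1/\mathbb{P}(\mathcal{E})$ factor introduced by conditioning.
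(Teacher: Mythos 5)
Your proof is correct. The paper itself states this lemma without proof (it is imported as a standard supporting result, following \citet{kozuno2022kl}), and your conditioning argument --- bounding the numerator $\mathbb{P}(\{S\geq\tau\}\cap\mathcal{E})$ via the inclusion $\mathcal{E}\subseteq\{Z'\leq Z\}$ and Lemma~\ref{lem_Bernstein} at level $\delta(1-\delta')$, then dividing by $\mathbb{P}(\mathcal{E})\geq 1-\delta'$ --- is exactly the intended derivation.
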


\subsection{Lemmas for Variances}

\begin{lemma}[Popoviciu's Inequality for Variances]
\label{lem_Popoviciu}
The variance of any random variable bounded by $x$ is bounded by $x^{2}$.
\end{lemma}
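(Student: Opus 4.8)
The plan is to prove the bound directly from the variance decomposition together with the boundedness hypothesis, since no probabilistic machinery beyond the definition of variance is required. I would first recall the identity
\begin{align*}
\mathrm{Var}(X) = \mathbb{E}[X^2] - (\mathbb{E}[X])^2,
\end{align*}
which holds for any square-integrable random variable $X$. The square-integrability is automatic here because $X$ is bounded.

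Next I would discard the (nonnegative) second moment of the mean. Since $(\mathbb{E}[X])^2 \geq 0$, the identity immediately yields the one-sided estimate
\begin{align*}
\mathrm{Var}(X) \leq \mathbb{E}[X^2].
\end{align*}
This reduces the claim to controlling the raw second moment $\mathbb{E}[X^2]$.

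Finally I would invoke the boundedness assumption pointwise. If $X$ is bounded by $x$, then $X^2 \leq x^2$ almost surely, and monotonicity of expectation gives $\mathbb{E}[X^2] \leq x^2$. Chaining this with the previous display produces $\mathrm{Var}(X) \leq x^2$, as claimed. (If one instead assumes $X$ takes values in an interval of length $x$, the sharper Popoviciu constant $x^2/4$ follows by centering $X$ at the interval midpoint before applying the same two steps, but the stated bound $x^2$ is all that is used downstream, e.g.\ in~\cref{lem_bern4} and~\cref{lem_varl}.)

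I do not anticipate any real obstacle: the entire argument is a two-line consequence of the variance identity and monotonicity of expectation, and the only thing to be careful about is matching the precise meaning of ``bounded by $x$'' to the range $[0,R]$ (tabular) or $[0,(1+\lambda_k)H]$ (linear) of the value functions to which the lemma is applied, so that the pointwise bound $X^2 \leq x^2$ is valid with the intended constant.
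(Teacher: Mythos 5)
Your argument is correct: the chain $\mathrm{Var}(X) = \mathbb{E}[X^2] - (\mathbb{E}[X])^2 \leq \mathbb{E}[X^2] \leq x^2$ is exactly the standard justification for this bound, and the paper itself states the lemma without proof as a known fact, so there is nothing in the paper for your proof to diverge from. Your parenthetical remark is also accurate — the bound stated here is the crude $x^2$ version rather than the sharp Popoviciu constant $(\text{range})^2/4$, and the crude version suffices for its uses in~\cref{lem_bern4} and~\cref{lem_varl}.
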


\begin{lemma}[\cite{azar2013minimax}]
\label{lem_VX_VXY}
Suppose two real-valued random variables $X, Y$ whose variances, $\mathbb{V} X$ and $\mathbb{V} Y$, exist and are finite. Then, $\sqrt{\mathbb{V} X} \leq \sqrt{\mathbb{V}[X-Y]}+\sqrt{\mathbb{V} Y}$.
\end{lemma}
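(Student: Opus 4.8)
The plan is to recognize that this inequality is nothing more than the triangle inequality for the standard deviation, which is a seminorm on the space of square-integrable random variables (equivalently, Minkowski's inequality with exponent $2$ applied to the centered variables). The key observation is to write $X = (X - Y) + Y$, so that proving the claim amounts to showing $\sqrt{\mathbb{V}[(X-Y)+Y]} \leq \sqrt{\mathbb{V}[X-Y]} + \sqrt{\mathbb{V}[Y]}$, i.e. that the standard deviation of a sum is at most the sum of the standard deviations.

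Concretely, first I would expand the variance of the sum using bilinearity of covariance and the identity $\mathbb{V}[Z] = \mathrm{Cov}(Z,Z)$ (valid since both variances are assumed finite):
\[
\mathbb{V}[X] = \mathbb{V}[(X-Y)+Y] = \mathbb{V}[X-Y] + \mathbb{V}[Y] + 2\,\mathrm{Cov}(X-Y,\, Y).
\]
Next I would control the cross term by the Cauchy--Schwarz inequality for covariance, $\mathrm{Cov}(X-Y,\, Y) \leq \sqrt{\mathbb{V}[X-Y]}\,\sqrt{\mathbb{V}[Y]}$, which holds precisely because both variances are finite. Substituting this bound into the expansion gives
\[
\mathbb{V}[X] \leq \mathbb{V}[X-Y] + \mathbb{V}[Y] + 2\sqrt{\mathbb{V}[X-Y]}\,\sqrt{\mathbb{V}[Y]} = \left(\sqrt{\mathbb{V}[X-Y]} + \sqrt{\mathbb{V}[Y]}\right)^2.
\]
Taking (nonnegative) square roots of both sides yields the desired inequality.

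I do not expect any genuine obstacle here, as the statement is elementary: the only step requiring justification beyond algebra is the Cauchy--Schwarz bound on the covariance, which I would invoke as a standard fact rather than reprove. The same argument works verbatim in any probability space, so no finiteness concerns arise beyond the finiteness of $\mathbb{V}X$ and $\mathbb{V}Y$ already assumed in the hypothesis.
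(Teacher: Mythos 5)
Your proof is correct: the decomposition $X=(X-Y)+Y$, the expansion of the variance of the sum, the Cauchy--Schwarz bound on the covariance, and the final square root together give exactly the claimed inequality, and the implicit finiteness of $\mathbb{V}[X-Y]$ follows from the assumed finiteness of $\mathbb{V}X$ and $\mathbb{V}Y$. The paper states this lemma as a cited result from \citet{azar2013minimax} and gives no proof of its own, so there is nothing to compare against; your argument is the standard triangle-inequality proof for the $L^2$ seminorm of centered variables and is entirely adequate.
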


\begin{lemma}[Total variance lemma \cite{azar2013minimax}]
\label{lem_TVL}
For any policy $\pi$, $ \| (I - P_\pi)^{-1}  \sigma(V^{\pi}) \|_{\infty} \leq \sqrt{2H^3}$.
\end{lemma}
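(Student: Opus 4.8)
The plan is to prove the bound by the standard ``total variance'' argument: reduce the $\ell_\infty$ control of the discounted sum of one-step standard deviations to the discounted sum of one-step \emph{variances}, and then bound the latter by $O(H^2)$. Throughout I read $(I-P_\pi)^{-1}$ as the discounted resolvent $(I-\gamma P_\pi)^{-1}=\sum_{t\ge0}\gamma^t P_\pi^t$ (the undiscounted inverse does not exist), and $\sigma(V^\pi)$ as the state-indexed one-step standard deviation, with $\mathrm{Var}(V^\pi)=P_\pi(V^\pi)^2-(P_\pi V^\pi)^2\ge 0$ and $\sigma(V^\pi)=\sqrt{\mathrm{Var}(V^\pi)}$, so that all quantities below are nonnegative vectors on which each $P_\pi^t$ acts as a stochastic matrix.

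First I would fix a state $s$ and expand $[(I-\gamma P_\pi)^{-1}\sigma(V^\pi)](s)=\sum_{t\ge0}\gamma^t (P_\pi^t\sigma(V^\pi))(s)$. Writing $\gamma^t=\gamma^{t/2}\cdot\gamma^{t/2}$ and applying Cauchy--Schwarz over $t$, together with $\sum_t\gamma^t=H$, gives
\begin{align*}
\Big(\sum_{t\ge0}\gamma^t (P_\pi^t\sigma(V^\pi))(s)\Big)^2 \le \Big(\sum_{t\ge0}\gamma^t\Big)\Big(\sum_{t\ge0}\gamma^t (P_\pi^t\sigma(V^\pi))(s)^2\Big) = H\sum_{t\ge0}\gamma^t (P_\pi^t\sigma(V^\pi))(s)^2 .
\end{align*}
Since each $P_\pi^t$ is a probability transition matrix and $x\mapsto x^2$ is convex, Jensen's inequality yields $(P_\pi^t\sigma(V^\pi))^2\le P_\pi^t(\sigma(V^\pi)^2)=P_\pi^t\,\mathrm{Var}(V^\pi)$ pointwise. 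Substituting,
\begin{align*}
\big[(I-\gamma P_\pi)^{-1}\sigma(V^\pi)\big](s)^2 \le H\sum_{t\ge0}\gamma^t (P_\pi^t\,\mathrm{Var}(V^\pi))(s) = H\,\big[(I-\gamma P_\pi)^{-1}\mathrm{Var}(V^\pi)\big](s).
\end{align*}

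The remaining and most delicate step is to show $\big\|(I-\gamma P_\pi)^{-1}\mathrm{Var}(V^\pi)\big\|_\infty\le 2H^2$; plugging this in gives $\big[(I-\gamma P_\pi)^{-1}\sigma(V^\pi)\big](s)^2\le 2H^3$, and the claim follows after taking square roots and the maximum over $s$. I would establish this $O(H^2)$ estimate by a telescoping identity rather than the naive Popoviciu bound $\mathrm{Var}(V^\pi)\le H^2$, which only yields the weaker $O(H^3)$ and loses the crucial factor of $H$. Concretely, decompose $\mathrm{Var}(V^\pi)=\big[P_\pi(V^\pi)^2-(V^\pi)^2\big]+\big[(V^\pi)^2-(P_\pi V^\pi)^2\big]$. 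For the first bracket, $\sum_t\gamma^t P_\pi^t\big(P_\pi(V^\pi)^2-(V^\pi)^2\big)$ telescopes by Abel summation with $b_t:=P_\pi^t(V^\pi)^2\in[0,H^2]$ and is bounded by $O(H^2)$. For the second bracket I would factor $(V^\pi)^2-(P_\pi V^\pi)^2=(V^\pi-P_\pi V^\pi)(V^\pi+P_\pi V^\pi)$ and use the Bellman equation $V^\pi=r_\pi+\gamma P_\pi V^\pi$ (with $r_\pi:=\pi r$) to show $|V^\pi-P_\pi V^\pi|=|r_\pi-(1-\gamma)P_\pi V^\pi|\le 1$ while $V^\pi+P_\pi V^\pi\le 2H$, so this term is at most $2H$ in sup-norm and contributes at most $2H\cdot H=2H^2$ after summation. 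Collecting the two contributions gives the $O(H^2)$ bound; equivalently, one may invoke the law of total variance, which relates the accumulated discounted variance to the variance of the discounted return (a random variable lying in $[0,H]$, hence with variance $\le H^2$).

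The main obstacle is exactly this accumulated-variance estimate: one must avoid the lossy $\mathrm{Var}(V^\pi)\le H^2$ bound and instead exploit the Bellman recursion to save the extra factor of $H$, while being careful that the resolvent discount is $\gamma$ rather than the $\gamma^2$ that appears naturally in the variance-of-return recursion (so the law-of-total-variance route requires a comparison between $\sum_t\gamma^t P_\pi^t$ and $\sum_t\gamma^{2t}P_\pi^t$). Since this is the classical result of \citet{azar2013minimax}, I would import their precise bookkeeping to tighten the constant to the stated $\sqrt{2H^3}$.
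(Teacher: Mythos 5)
The paper does not prove this lemma at all---it is imported by citation from \citet{azar2013minimax}---so there is no internal proof to compare against; what you have done is reconstruct the standard argument, and your reconstruction is essentially sound. Your reading of $(I-P_\pi)^{-1}$ as the discounted resolvent $(I-\gamma P_\pi)^{-1}$ is correct and necessary (the undiscounted inverse does not exist; note also that in the paper's own usage the operator appears as $\gamma(I-\gamma P_\pi)^{-1}\pi$ acting on the $(s,a)$-indexed $\sigma$, which changes nothing since $\pi$ is also row-stochastic and Jensen still applies). The Cauchy--Schwarz-plus-Jensen reduction to the accumulated variance $\|(I-\gamma P_\pi)^{-1}\mathrm{Var}(V^\pi)\|_\infty$ is exactly the classical route, and your telescoping/Bellman treatment of that quantity is valid. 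The only slack is the constant you already flag: as executed, the first bracket contributes $H^2$ and the second $2H\cdot H=2H^2$, so you obtain $3H^2$ rather than $2H^2$ for the accumulated variance, hence $\sqrt{3H^3}$ rather than the stated $\sqrt{2H^3}$; recovering the factor $\sqrt{2}$ requires the law-of-total-variance identity $\Sigma^\pi=\gamma^2(I-\gamma^2P_\pi)^{-1}\mathrm{Var}(V^\pi)$ together with the resolvent comparison $\|(I-\gamma P_\pi)^{-1}(I-\gamma^2 P_\pi)\|_{1,\infty}\le 1+\gamma\le 2$ and Popoviciu's bound on the variance of the return, precisely the bookkeeping of \citet{azar2013minimax} that you propose to import. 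Since the lemma is used in the paper only up to its $\tilde O(\sqrt{H^3})$ order, this constant-factor gap is immaterial.
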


\subsection{Lemmas for Constrained MDPs}

\begin{lemma}[Constraint violation bound, Lemma B.2 in \cite{jain2022towards}]
\label{lemma_10}
For any $C \geq \lambda^*$ and any $\pi$ s.t. $V_r^{*}(\rho) - V_r^{\pi}(\rho) + C[b - V_c^{\pi}(\rho)]_+ \leq \beta$, we have $[b - V_c^{\pi}(\rho)]_+ \leq \frac{\beta}{C-\lambda^*}$.
\end{lemma}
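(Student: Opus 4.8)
The plan is to derive the bound directly from the defining optimality property of the Lagrange multiplier $\lambda^*$, namely strong duality for the CMDP in~\cref{eq:true-CMDP}. Strong duality holds here because the Slater condition is satisfied (the Slater constant $\zeta = \max_\pi V_c^\pi(\rho) - b$ is strictly positive by assumption), so the saddle-point problem in~\cref{eq:saddle} admits a solution $(\pi^*,\lambda^*)$ for which the optimal primal value equals the dual function at $\lambda^*$:
\[
V_r^*(\rho) = \max_{\pi}\left[ V_r^{\pi}(\rho) + \lambda^*\left(V_c^{\pi}(\rho) - b\right)\right].
\]

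The first step is to extract the key inequality from this identity. Since the right-hand side is a maximum over all policies, specializing it to the particular policy $\pi$ in the statement gives
\[
V_r^{\pi}(\rho) + \lambda^*\left(V_c^{\pi}(\rho) - b\right) \leq V_r^*(\rho),
\]
which rearranges to $V_r^*(\rho) - V_r^{\pi}(\rho) \geq -\lambda^*\left(b - V_c^{\pi}(\rho)\right)$. Because $\lambda^* \geq 0$ and $b - V_c^{\pi}(\rho) \leq [b - V_c^{\pi}(\rho)]_+$, I would then weaken this to $V_r^*(\rho) - V_r^{\pi}(\rho) \geq -\lambda^*\,[b - V_c^{\pi}(\rho)]_+$, matching the positive-part operator that appears in the hypothesis.

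Next, I would substitute this lower bound into the assumption $V_r^*(\rho) - V_r^{\pi}(\rho) + C[b - V_c^{\pi}(\rho)]_+ \leq \beta$, which yields
\[
\beta \geq V_r^*(\rho) - V_r^{\pi}(\rho) + C[b - V_c^{\pi}(\rho)]_+ \geq \left(C - \lambda^*\right)[b - V_c^{\pi}(\rho)]_+.
\]
Dividing both sides by $C - \lambda^* > 0$ (positive in the regime $C > \lambda^*$, exactly as invoked in~\cref{lem_pd} with $C = U$) produces the claimed bound $[b - V_c^{\pi}(\rho)]_+ \leq \frac{\beta}{C - \lambda^*}$. Note that no separate nonnegativity assumption on $\beta$ is needed: when $\pi$ is feasible the positive part is zero and the final chain already forces $\beta \geq 0$.

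There is no serious technical obstacle here; the result is essentially a one-line consequence of strong duality. The only points requiring care are the sign bookkeeping around the positive-part operator — using $\lambda^* \geq 0$ to replace $b - V_c^{\pi}(\rho)$ by $[b - V_c^{\pi}(\rho)]_+$ in the correct direction — and recognizing that the statement is only meaningful when $C > \lambda^*$ strictly, so that the final division is well-defined.
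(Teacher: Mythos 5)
Your proof is correct: the paper itself states this lemma without proof (citing Lemma B.2 of Jain et al.), and your argument—using strong duality to get $V_r^{\pi}(\rho) + \lambda^*(V_c^{\pi}(\rho)-b) \le V_r^*(\rho)$, weakening $b - V_c^{\pi}(\rho)$ to its positive part via $\lambda^* \ge 0$, and then dividing by $C-\lambda^*$—is exactly the standard argument behind the cited result. Your observation that the bound is only meaningful for $C > \lambda^*$ strictly is also accurate and consistent with how the lemma is invoked in the proof of Lemma~\ref{lem_pd}, where $U - \lambda^* \ge 1$.
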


\begin{lemma}[Bounding the dual variable, Lemma 4.1 in \cite{jain2022towards}]
\label{lem_bounddualv}
The objective \cref{eq:true-CMDP} satisfies strong duality, and the optimal dual variables are bounded as 
\[
\lambda^* \leq \frac{1}{(1 - \gamma)\zeta}, \quad \text{where } \zeta := \max_{\pi} V_c^{\pi}(\rho) - b > 0.
\]
\end{lemma}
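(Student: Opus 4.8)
The plan is to prove the two assertions in turn: strong duality for the CMDP in \cref{eq:true-CMDP}, and then the explicit bound $\lambda^* \le \frac{1}{(1-\gamma)\zeta}$ obtained from a Slater-point argument once strong duality is in hand.

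For strong duality, I would pass to the occupation-measure reformulation. Each stationary policy $\pi$ induces a discounted state-action occupancy $d^\pi(s,a) := (1-\gamma)\sum_{t=0}^\infty \gamma^t \Prob{s_t=s, a_t=a}$, and the collection of all such occupancies forms a convex polytope $\mathcal{K}$ cut out by the Bellman-flow (balance) constraints. Under this change of variables, $\reward{\pi} = \frac{1}{1-\gamma}\ip{d^\pi, r}$ and $\const{\pi} = \frac{1}{1-\gamma}\ip{d^\pi, c}$ are both linear in $d$, so \cref{eq:true-CMDP} is equivalent to the linear program $\max_{d \in \mathcal{K}} \frac{1}{1-\gamma}\ip{d, r}$ subject to $\frac{1}{1-\gamma}\ip{d,c} \ge b$. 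Since $\zeta > 0$ furnishes a strictly feasible point and $r \in [0,1]$ makes the objective bounded, LP strong duality applies, so the saddle-point value of the Lagrangian in \cref{eq:saddle} equals the primal optimum $\reward{*}$ and an optimal dual $\lambda^*$ is attained.

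With strong duality established, let $g(\lambda) := \max_\pi [\reward{\pi} + \lambda(\const{\pi} - b)]$ denote the dual function, so that $g(\lambda^*) = \reward{*}$. Let $\pitilde$ be a policy attaining the maximal slack, i.e. $\const{\pitilde} = b + \zeta$, which exists by the definition of $\zeta$. Lower-bounding the maximum defining $g$ by the single choice $\pi = \pitilde$ gives $g(\lambda) \ge \reward{\pitilde} + \lambda\zeta$ for every $\lambda \ge 0$; evaluating at $\lambda = \lambda^*$ yields $\lambda^* \zeta \le \reward{*} - \reward{\pitilde}$. Finally, since $r \in [0,1]$ every reward value satisfies $0 \le \reward{\pi} \le \frac{1}{1-\gamma}$, so $\reward{*} - \reward{\pitilde} \le \reward{*} \le \frac{1}{1-\gamma}$; dividing by $\zeta > 0$ gives the claimed bound $\lambda^* \le \frac{1}{(1-\gamma)\zeta}$.

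The main obstacle is the strong-duality step, not the bound: one must verify that maximizing the Lagrangian over the nonconvex set of stationary policies coincides with maximizing the linear objective over the convex polytope $\mathcal{K}$, and that the flow constraints correctly characterize $\mathcal{K}$. This is classical for tabular CMDPs (one can alternatively invoke Slater's condition directly for the convex program), but it is the only genuinely non-elementary ingredient; once it is granted, the dual-variable bound follows immediately from the Slater point and the trivial boundedness of the reward value functions.
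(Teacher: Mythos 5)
Your proof is correct and is essentially the same argument as the paper's: the paper gives no in-text proof, importing the lemma verbatim from Jain et al.\ (2022), whose proof consists of exactly your two steps --- strong duality via the occupancy-measure LP reformulation of \cref{eq:true-CMDP} (equivalently, Slater's condition for the resulting convex program, using $\zeta>0$), followed by the Slater-point bound $\lambda^*\zeta \le g(\lambda^*) - V_r^{\tilde{\pi}}(\rho) = V_r^{\pi^*}(\rho) - V_r^{\tilde{\pi}}(\rho) \le \frac{1}{1-\gamma}$. The one detail worth making explicit is that attainment of $\max_\pi V_c^{\pi}(\rho)$ and of the dual optimum follows from compactness of the occupancy polytope and linearity of the values in $d^\pi$, which your LP formulation already supplies in the tabular case.
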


\begin{lemma}[Bounding the sensitivity error, Lemma 13 in \cite{vaswani2022near}]
\label{lem_boundsense}
If we have
\[
\hat{\pi}^* \in \arg\max_{\pi} {V}^\pi_{r}(\rho) \text{ s.t. } {V}^\pi_{c}(\rho) \geq b + \Delta
\]
\[
\tilde{\pi}^* \in \arg\max_{\pi} {V}^\pi_{r}(\rho) \text{ s.t. } {V}^\pi_{c}(\rho) \geq b - \Delta,
\]
then the sensitivity error term can be bounded by:
\[
\left| {V}^{\hat{\pi}^*}_{r}(\rho) - {V}^{\tilde{\pi}^*}_{r}(\rho) \right| \leq 2 \Delta \lambda^*
\]
where $\lambda^*$ is the optimal Lagrange multiplier (i.e., the solution to \cref{eq:saddle}).
\end{lemma}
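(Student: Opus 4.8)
The plan is to read $\reward{\hat{\pi}^*}$ and $\reward{\tilde{\pi}^*}$ as the optimal reward values of two perturbed CMDPs and to control their gap using the saddle point $(\piopt,\lambda^*)$ of the base problem \cref{eq:saddle}. For a threshold $\beta$ let $V^*(\beta):=\max_{\pi}\reward{\pi}$ subject to $\const{\pi}\ge\beta$, so that $\reward{\hat{\pi}^*}=V^*(b+\Delta)$ and $\reward{\tilde{\pi}^*}=V^*(b-\Delta)$. Because the feasible region of the $(b+\Delta)$-problem is contained in that of the $(b-\Delta)$-problem, $V^*(b+\Delta)\le V^*(b-\Delta)$, so the absolute value equals the one-signed difference $V^*(b-\Delta)-V^*(b+\Delta)$. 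It therefore suffices to produce an upper bound on $\reward{\tilde{\pi}^*}$ and a lower bound on $\reward{\hat{\pi}^*}$, each expressed through $V^*(b)$ and $\lambda^*$, and subtract.

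First I would obtain the upper bound from the right saddle-point inequality at level $b$. Since $(\piopt,\lambda^*)$ solves \cref{eq:saddle}, the policy $\piopt$ maximizes $L(\pi,\lambda^*):=\reward{\pi}+\lambda^*(\const{\pi}-b)$, and complementary slackness gives $\lambda^*(\const{\piopt}-b)=0$; hence $L(\pi,\lambda^*)\le\reward{\piopt}=V^*(b)$ for every $\pi$. Taking $\pi=\tilde{\pi}^*$ and using its feasibility $\const{\tilde{\pi}^*}\ge b-\Delta$ gives $\reward{\tilde{\pi}^*}\le V^*(b)-\lambda^*(\const{\tilde{\pi}^*}-b)\le V^*(b)+\lambda^*\Delta$. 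For the lower bound I would apply strong duality (guaranteed by \cref{lem_bounddualv}, as Slater's condition persists whenever $\Delta<\zeta$) to the tightened problem: with $\hat{\lambda}\ge 0$ its optimal multiplier, $V^*(b+\Delta)=\max_{\pi}[\reward{\pi}+\hat{\lambda}(\const{\pi}-(b+\Delta))]$, and evaluating the maximand at $\pi=\piopt$ together with $\const{\piopt}\ge b$ yields $V^*(b+\Delta)\ge V^*(b)-\hat{\lambda}\Delta$. Subtracting, $V^*(b-\Delta)-V^*(b+\Delta)\le(\lambda^*+\hat{\lambda})\Delta$.

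The main obstacle is collapsing $\lambda^*+\hat{\lambda}$ into $2\lambda^*$, i.e.\ controlling the multiplier $\hat{\lambda}$ of the perturbed problem by that of the base problem. The structural fact I would exploit is that the achievable set $\{(\const{\pi},\reward{\pi})\}$ is convex, so $V^*(\cdot)$ is concave with $-\lambda^*$ a supergradient at $b$; this already places both perturbed values on or below the supporting line $V^*(b)\mp\lambda^*\Delta$, which is precisely the upper-bound side. To secure the matching lower bound directly in terms of $\lambda^*$ I would not rely on concavity alone (which controls $V^*(b+\Delta)$ only from above) but construct an explicit $(b+\Delta)$-feasible competitor by mixing $\piopt$ with a Slater policy of constraint surplus $\zeta$ in occupancy-measure space, and bound its reward loss; invoking $\lambda^*\le \tfrac{1}{\zeta(1-\gamma)}$ from \cref{lem_bounddualv} then lets me express the resulting gap through $\lambda^*$ and conclude the clean $2\Delta\lambda^*$ bound. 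This interpolation step, rather than the surrounding duality bookkeeping, is where the essential inequality is established, and it is also the only place the Slater constant enters.
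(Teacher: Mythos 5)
Your scaffolding is sound as far as it goes: the monotonicity reduction, the saddle-point upper bound $V^{\tilde{\pi}^*}_r(\rho) \leq V^*(b) + \lambda^*\Delta$, and the duality lower bound $V^*(b+\Delta) \geq V^*(b) - \hat{\lambda}\Delta$ (valid when $\Delta < \zeta$ so Slater persists for the tightened problem) are all correct and standard, and you correctly diagnose that everything hinges on replacing the tightened problem's multiplier $\hat{\lambda}$ by $\lambda^*$. You are also right that concavity of the perturbation function $V^*(\cdot)$ only yields \emph{upper} bounds on both perturbed values via the supergradient $-\lambda^*$ at $b$, so it cannot close the gap by itself.

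The proposed closing step, however, contains a genuine direction-of-inequality error. The occupancy-measure mixture of $\pi^*$ with a Slater policy (mixing weight $\alpha = \Delta/\zeta$) gives $V^*(b) - V^*(b+\Delta) \leq \frac{\Delta}{\zeta(1-\gamma)}$, and you then "invoke $\lambda^* \leq \frac{1}{\zeta(1-\gamma)}$ from \cref{lem_bounddualv} to express the resulting gap through $\lambda^*$." But that bound points the wrong way: from $X \leq \frac{\Delta}{\zeta(1-\gamma)}$ and $\lambda^* \leq \frac{1}{\zeta(1-\gamma)}$ one cannot conclude $X \leq \Delta\lambda^*$; one would need the reverse inequality $\lambda^* \geq \frac{1}{\zeta(1-\gamma)}$, which fails in general — e.g., whenever the constraint at level $b$ is slack and $\lambda^* = 0$, your mixture bound is strictly positive while the claimed bound is zero. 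What your argument actually establishes is $\vert V^{\hat{\pi}^*}_r(\rho) - V^{\tilde{\pi}^*}_r(\rho)\vert \leq \Delta\bigl(\lambda^* + \frac{1}{\zeta(1-\gamma)}\bigr) \leq \frac{2\Delta}{\zeta(1-\gamma)}$. That weaker conclusion happens to suffice for the one place the lemma is consumed (the strict-feasibility part of \cref{thm_main2} immediately relaxes $\lambda^*$ to $\frac{1}{\zeta(1-\gamma)}$ anyway), but it is not the stated $2\Delta\lambda^*$. Note also that the paper does not prove this lemma in-house — it imports it as Lemma 13 of \citet{vaswani2022near} — and the substantive content of that sensitivity result is precisely the step you leave unproven: controlling the multiplier of the perturbed problem in terms of the level-$b$ multiplier (equivalently, controlling how the superdifferential of the concave, decreasing function $V^*(\cdot)$ steepens as the constraint tightens), which requires an argument beyond both the supergradient inequality at $b$ and the Slater-mixture bound.
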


\end{document}